\definecolor{darkred}{RGB}{150,0,0}
\definecolor{darkgreen}{RGB}{0,150,0}
\definecolor{darkblue}{RGB}{0,0,200}
\newtheorem{theorem}{Theorem}
\newtheorem{assumption}{Assumption}
\newtheorem{lemma}{Lemma}
\newtheorem{corollary}{Corollary}
\newtheorem{definition}{Definition}
\def \endprf{\hfill {\vrule height6pt width6pt depth0pt}\medskip}
\newenvironment{proof}{\noindent {\bf Proof.} }{\endprf\par}
\newcommand{\qed}{{\unskip\nobreak\hfil\penalty50\hskip2em\vadjust{}
           \nobreak\hfil$\Box$\parfillskip=0pt\finalhyphendemerits=0\par}}
\newcommand{\oo}{{11}}
\newcommand{\red}{\textcolor{red}}
\newcommand{\cln}[1]{\red{}}
\newcommand{\xat}{\ob}
\newcommand{\rfn}{\texttt{SVMeq}}
\newcommand{\ont}{\texttt{1token}}
\newcommand{\xast}{\xat^\st}
\newcommand{\Wf}{{\mtx{W}^\tsc{fin}}}
\newcommand\tr{{{\operatorname{trace}}}}
\newcommand{\eps}{\epsilon}
\newcommand{\epsd}{\varepsilon_\dm}
\newcommand{\hbm}{\vct{\bar{h}}}
\newcommand{\bhbg}{{\bar h}_{\tsc{gap}}}
\newcommand{\hp}{\tilde{\mtx{h}}}
\newcommand{\hb}{\vct{h}}
\newcommand{\al}{\alpha}
\newcommand{\st}{\star}
\newcommand{\dm}{{\diamond}}
\newcommand{\xa}{\x^\bal}
\newcommand{\beq}{\begin{equation}}
\newcommand{\ba}{\begin{align}}
\newcommand{\ea}{\end{align}}
\newcommand{\eeq}{\end{equation}}
\newcommand{\Rcm}{\Rcc_m}
\newcommand{\nn}{\nonumber}
\newcommand{\la}{\lambda}
\newcommand{\K}{\mtx{K}}
\newcommand{\A}{{\mtx{A}}}
\newcommand{\Ub}{{\mtx{U}}}
\newcommand{\kron}{\otimes}
\newcommand{\M}{{\mtx{M}}}
\newcommand{\name}{Att-SVM}
\newcommand{\B}{{{\mtx{B}}}}
\newcommand{\Gb}{{\mtx{G}}}
\newcommand{\diag}[1]{\text{diag}(#1)}
\newcommand{\Lc}{{\cal{L}}}
\newcommand{\Nc}{{\cal{N}}}
\newcommand{\Qb}{{\mtx{Q}}}
\newcommand{\Cb}{{\mtx{C}}}
\newcommand{\F}{{\mtx{F}}}
\newcommand{\Fa}{{\mtx{F}}^\bal}
\newcommand{\bSi}{{\boldsymbol{{\Sigma}}}}
\newcommand{\onebb}{{\mathbf{1}}}
\newcommand{\order}[1]{{\cal{O}}(#1)}
\newcommand{\z}{{\vct{z}}}
\newcommand{\sft}[1]{\mathbb{S}(#1)}
\newcommand{\sftk}[1]{\mathbb{S}_k(#1)}
\newcommand{\sftx}{\mathbb{S}}
\newcommand{\sfp}[1]{\mathbb{S}'(#1)}
\newcommand{\distd}[1]{\texttt{dist}_\dm\left(#1\right)}
\newcommand{\tn}[1]{\|{#1}\|}
\newcommand{\td}[1]{\|{#1}\|_\dm}
\newcommand{\tf}[1]{\|{#1}\|_{F}}
\newcommand{\tnuc}[1]{\|{#1}\|_{\star}}
\newcommand{\dist}[1]{\texttt{dist}\left(#1\right)}
\newcommand{\Cc}{\mathcal{C}}
\newcommand{\Rcc}{\mathcal{R}}
\newcommand{\Bal}{{\boldsymbol{\Delta}}}
\newcommand{\Rc}{\mathcal{O}}
\newcommand{\bal}{{\boldsymbol{\alpha}}}
\newcommand{\bgam}{\boldsymbol{\gamma}}
\newcommand{\bga}{\boldsymbol{\gamma}^\bal}
\newcommand{\gamb}{\bar{\gamma}}
\newcommand{\Sc}{\mathcal{S}}
\newcommand{\Scc}{\bar{\mathcal{S}}}
\newcommand{\Nn}{\mathcal{N}}
\newcommand{\vb}{\vct{v}}
\newcommand{\fb}{\vct{f}}
\newcommand{\Fb}{\vct{F}}
\newcommand{\nei}{\text{support index}\xspace}
\newcommand{\neis}{\text{support indices}\xspace}
\newcommand{\Neis}{\text{Support indices}\xspace}
\newcommand{\NEIS}{\text{Support Indices}\xspace}
\newcommand{\w}{\vct{w}}
\newcommand{\cdm}{c_\dm}
\newcommand{\cop}{c_\texttt{up}}
\newcommand{\cdn}{c_\texttt{dn}}
\newcommand{\ob}{\mtx{o}}
\newcommand{\li}{\left<}
\newcommand{\xdm}{\Xi_\dm}
\newcommand{\ri}{\right>}
\newcommand{\s}{\vct{s}}
\newcommand{\sik}{\s^{(\ik)}}
\newcommand{\sir}{\s^R}
\newcommand{\abik}{\ab^{(\ik)}}
\newcommand{\abr}{\ab^R}
\newcommand{\ab}{{\vct{a}}}
\newcommand{\bgag}{{\gamma}_{\tsc{gap}}}
\newcommand{\bgg}{\gamma^{\tsc{gap}}}
\newcommand{\bggm}{\gamma^{\tsc{gap}}_{\min}}
\newcommand{\bgm}{\bar{\gamma}^{\tsc{gap}}}
\newcommand{\bb}{\vct{b}}
\newcommand{\corr}[1]{{\texttt{corr\_coef}}(#1)}
\newcommand{\Tc}{\mathcal{T}}
\newcommand{\Tcb}{\bar{\mathcal{T}}}
\newcommand{\kb}{\vct{k}}
\newcommand{\cone}{\Sc}
\newcommand{\conb}{\bar{\Cc}}
\newcommand{\con}[1]{\texttt{cone}_{\eps}(#1)}
\newcommand{\low}{\texttt{low}^{\alpha}(\X)}
\newcommand{\high}{\texttt{high}^{\alpha}(\X)}
\newcommand{\lowi}{\texttt{low}_\ik^{\alpha}}
\newcommand{\higi}{\texttt{high}_\ik^{\alpha}}
\newcommand{\bbg}{\bgam^{\tsc{gap}}}
\newcommand{\ps}{\W^\svm}
\newcommand{\Ws}{\W^\svm}
\newcommand{\Wma}{\W^\svm_\bal}
\newcommand{\Wsf}{\W^\svm}
\newcommand{\Wsb}{\bar{\W}^\svm}
\newcommand{\Wcs}{\Wc^\svm}
\newcommand{\ik}{{ik}}
\newcommand{\itt}{{it}}
\newcommand{\ittt}{{i\tau}}
\newcommand{\ikt}{{ikt}}
\newcommand{\iktt}{{ik\tau}}
\newcommand{\ikix}{_{ik=(1,1)}^{(n,k)}}
\newcommand{\inn}[1]{\left<#1\right>}
\newcommand{\Ccd}{\Cc_{\eps,R_0}^\dm}
\newcommand{\aik}{\alpha_{ik}}
\newcommand{\RR}{\bar{R}}
\newcommand{\x}{\vct{x}}
\newcommand{\W}{\mtx{W}}
\newcommand{\Wc}{{\cal{W}}}
\newcommand{\Wcb}{{\cal{W}}}
\newcommand{\bgl}{{~\big |~}}
\definecolor{emmanuel}{RGB}{255,127,0}
\newcommand{\Kb}{{\mtx{K}}}
\newcommand{\Kbb}{{\mtx{\bar{K}}}}
\newcommand{\Qbb}{{\mtx{\bar{Q}}}}
\newcommand{\Wb}{{\mtx{\bar{W}}}}
\newcommand{\pb}{{\vct{p}}}
\newcommand{\wrb}[1]{{\vct{\bar{W}}}(#1)}
\newcommand{\wrt}[1]{{\vct{\bar{W}}}_0(#1)}
\newcommand{\R}{\mathbb{R}}
\newcommand{\Z}{\mtx{Z}}
\newcommand{\V}{\mtx{V}}
\newcommand{\vct}[1]{\bm{#1}}
\newcommand{\mtx}[1]{\bm{#1}}
\newcommand{\X}{{\mtx{X}}}
\newcommand{\Y}{{\mtx{Y}}}
\newcommand{\Vb}{{\mtx{V}}}
\newcommand{\iprod}[2]{\left\langle #1 , #2 \right\rangle}
\newcommand{\mc}{\mathcal}
\renewcommand{\mc}[1]{\ensuremath{\mathcal{#1}}} 
\newcommand{\g}{\vct{g}}
\renewcommand{\qed}{\hfill\blacksquare}
\newcommand{\tsc}{\textsl}
\newcommand{\svm}{\tsc{mm}}
\newcommand{\Wm}{\W^\svm}
\newcommand{\op}{\texttt{opt}}
\newcommand{\opt}{\texttt{opt}}
\newcommand{\Rcb}{\bar{\Rc}}
\title{
Transformers as Support Vector Machines}
\date{}
\begin{document}

\author{\\Davoud Ataee Tarzanagh$^{1\star}$\quad Yingcong Li$^{2\star}$\qquad Christos Thrampoulidis$^{3}$\qquad Samet Oymak$^{4\dagger}$}

\addtocontents{toc}{\protect\setcounter{tocdepth}{0}}
\maketitle

{\let\thefootnote\relax\footnotetext{$^1$ University of Pennsylvania, \texttt{tarzanaq@upenn.edu}. $^2$ University of California, Riverside, \texttt{yli692@ucr.edu}. $^3$ University of British Columbia, \texttt{cthrampo@ece.ubc.ca}. $^4$ University of Michigan, \texttt{oymak@umich.edu}. $^\star$ Equal contribution. $^\dagger$ Corresponding author.}}

\begin{abstract} 
Since its inception in ``Attention Is All You Need'', the transformer architecture has led to revolutionary advancements in natural language processing. The attention layer within the transformer admits a sequence of input tokens $\X$ and makes them interact through pairwise similarities computed as $\texttt{softmax}(\X\Qb\Kb^\top\X^\top)$, where $(\Kb,\Qb)$ are the trainable key-query parameters. In this work, we establish a formal equivalence between the optimization geometry of self-attention and a hard-margin SVM problem that separates optimal input tokens from non-optimal tokens using linear constraints on the outer-products of token pairs. This formalism allows us to characterize the implicit bias of 1-layer transformers optimized with gradient descent, as follows. \textbf{(1)} Optimizing the attention layer, parameterized by $(\Kb,\Qb)$, with vanishing regularization, converges in direction to an SVM solution minimizing the nuclear norm of the combined parameter $\W:=\Kb\Qb^\top$. Instead, directly parameterizing by $\W$ minimizes a Frobenius norm SVM objective. 
We  characterize this convergence, highlighting that it can occur in locally-optimal directions rather than global ones.
\textbf{(2)} Complementing this, for $\W$-parameterization, we prove the local/global directional convergence of gradient descent under suitable geometric conditions. Importantly, we show that over-parameterization catalyzes global convergence by ensuring the feasibility of the SVM problem and by guaranteeing a benign optimization landscape devoid of stationary points. \textbf{(3)} While our theory  applies primarily to linear prediction heads, we propose a more general SVM equivalence that  predicts the implicit bias of 1-layer transformers with nonlinear heads/MLPs. 
Our findings apply to general datasets, trivially extend to cross-attention layer, and their practical validity is verified via thorough numerical experiments. We also introduce open problems and future research directions. We believe these findings inspire a new perspective, interpreting multilayer transformers as a hierarchy of SVMs that separates and selects optimal tokens.
\end{abstract}
\begin{figure}
    \centering
    \begin{minipage}{.58\textwidth}
    \centering
    \hspace{-15pt}
    \subfigure[$\W$-parameterization]{
        \begin{tikzpicture}
        \node at (0,0) {\includegraphics[height=.37\columnwidth]{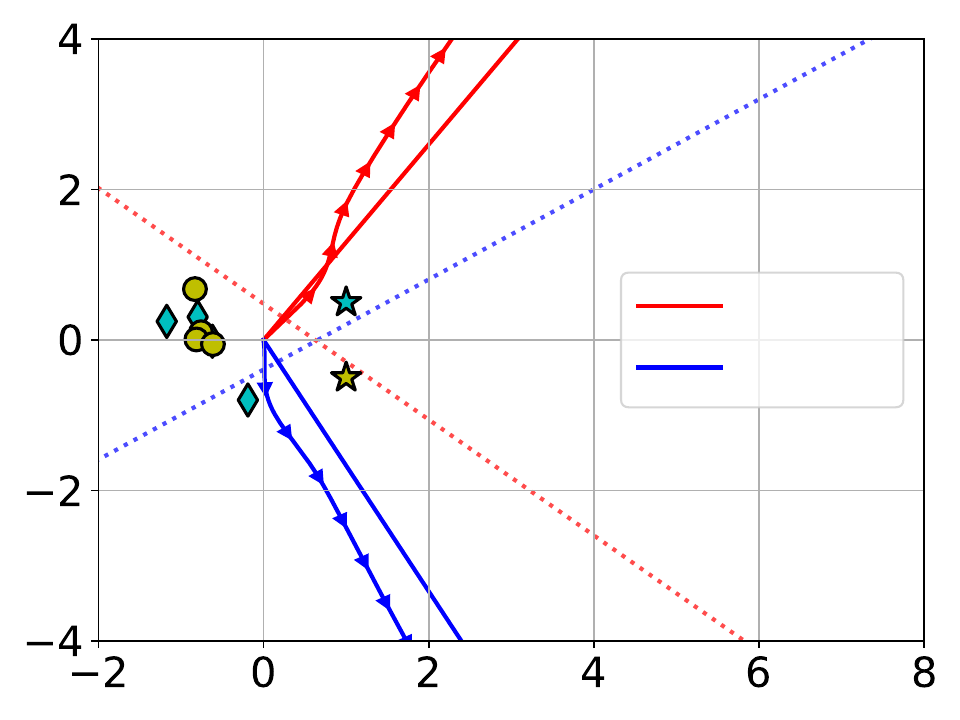}};
        \node[right] at (1.13,0.23) {\scriptsize{$\Wm\z_1$}};
        \node[right] at (1.13,-0.07) {\scriptsize{$\Wm\z_2$}};
        \end{tikzpicture}
        \label{fig path W}
    }
    \hspace{-12pt}
    \subfigure[$(\Kb,\Qb)$-parameterization]{
        \begin{tikzpicture}
        \node at (0,0) {\includegraphics[height=.37\columnwidth]{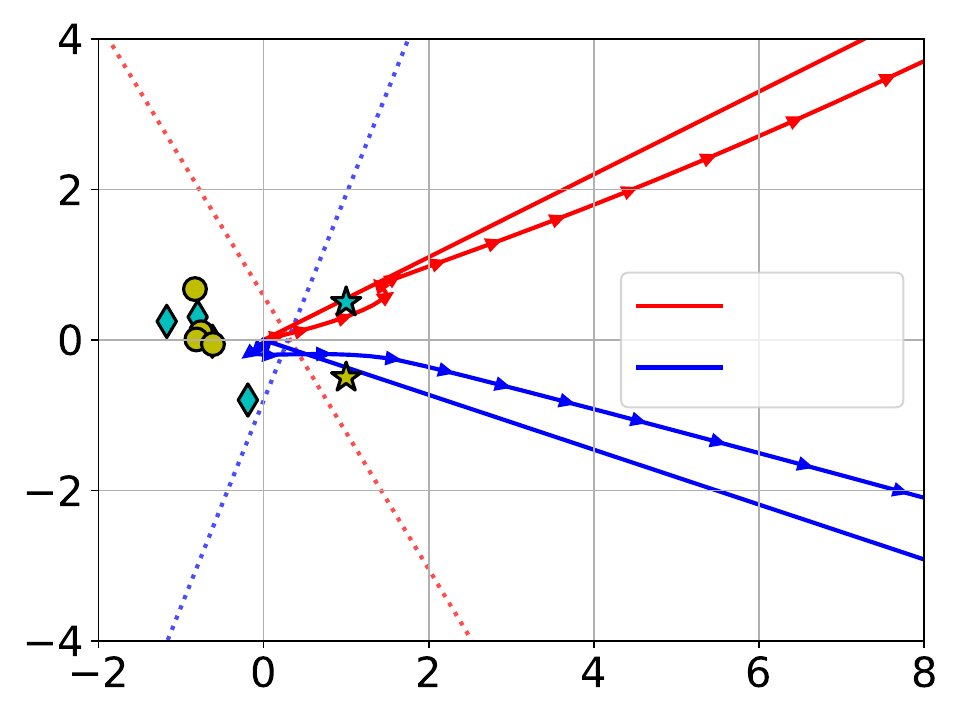}};
        \node[right] at (1.13,0.23) {\scriptsize{$\Wm_\st\z_1$}};
        \node[right] at (1.13,-0.07) {\scriptsize{$\Wm_\st\z_2$}};
        \end{tikzpicture}
        \label{fig path KQ}
    }
    \vspace{0pt}
    \caption{GD convergence during training of cross-attention weight $\W$ or $(\Kb,\Qb)$ with data. Teal and yellow markers represent tokens from $\X_1$ and $\X_2$, while stars mark optimal tokens. Solid lines in Figures {\color{blue}(a)} and {\color{blue}(b)} depict \ref{eqn:sattnsvm} and  \ref{eqn:sattnsvmst}  directions mapped to $\z_1$ (red) and $\z_2$ (blue), respectively.  Arrows illustrating GD trajectories converging towards these SVM directions. Red and blue dotted lines represent the corresponding separating hyperplanes.} 
    \label{fig path}
    \end{minipage}
    \hspace{4pt}
    \begin{minipage}{0.4\textwidth}
    \centering
    \vspace{7pt}
        \begin{tikzpicture}
        \node at (0,0) {\includegraphics[height=.47\columnwidth, trim={1.3cm 1.3cm 3.2cm 0}, clip]{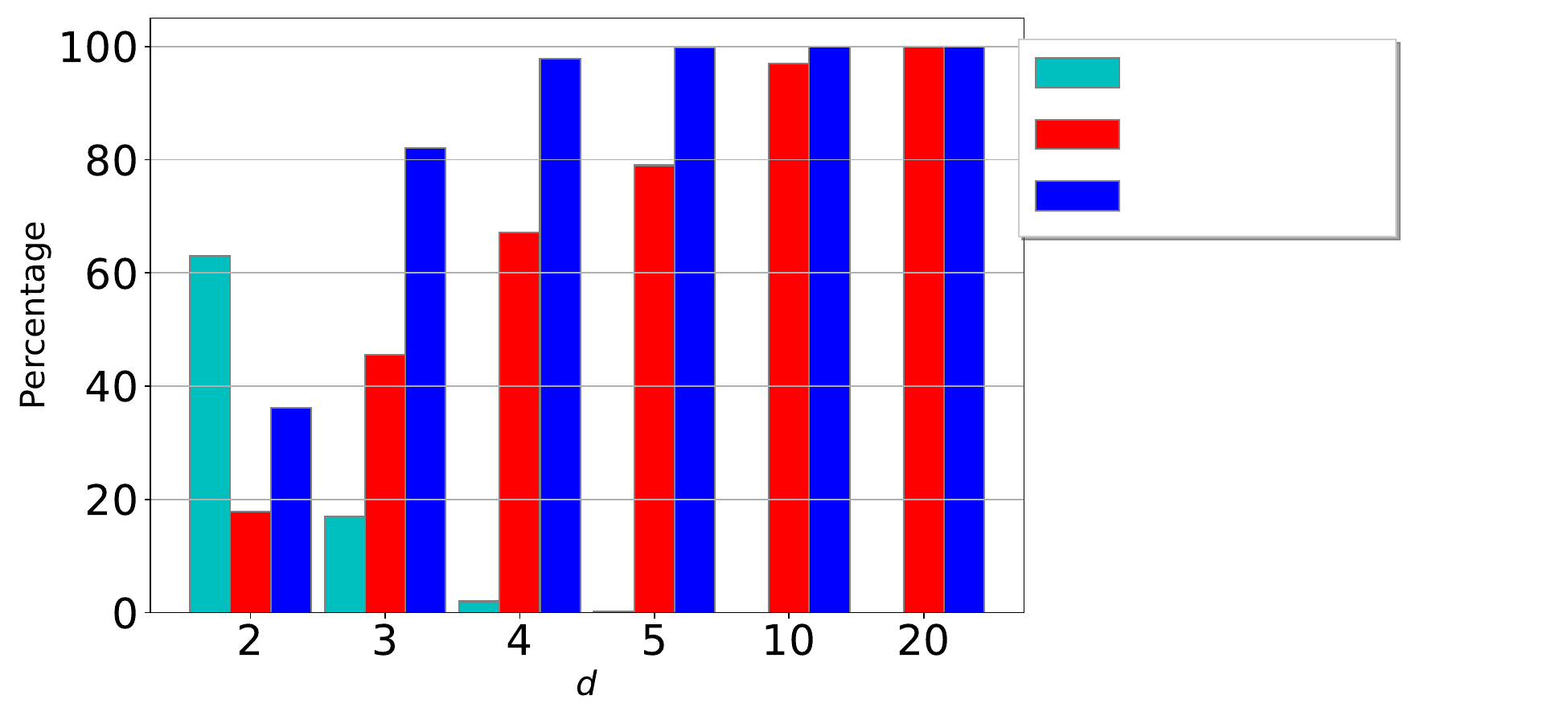}};
        \node[rotate=90] at (-3.3,0) {\footnotesize{Percentage}};
        \node at (-0.7,-1.8){\footnotesize{Varying $d$}};
        \node[right] at (1.8,1.2){\scriptsize{Not Local}};
        \node[right] at (1.8,0.92){\scriptsize{Global}};
        \node[right] at (1.8,0.64){\scriptsize{Local}};
        \end{tikzpicture}
    \vspace{-17pt}
    \caption{
Percentage of different convergence types when training cross-attention weights ($\W$) using GD and varying dimension ($d$).  Red and blue bars represent the percentages of convergence to globally-optimal and locally-optimal (including global) SVM solutions, respectively. Teal bars are complements of the blue bars. 
Larger overparameterization ($d$) increases the likelihood of global convergence.
    } 
    \label{fig overparam W bar}
    \end{minipage}
\end{figure}

\section{Introduction}
Self-attention, the central component of the transformer architecture, has revolutionized natural language processing (NLP) by empowering the model to identify complex dependencies within input sequences \cite{vaswani2017attention}. By assessing the relevance of each token to every other token, self-attention assigns varying degrees of importance to different parts of the input sequence. This mechanism has proven highly effective in capturing long-range dependencies, which is essential for applications arising in NLP ~\cite{kenton2019bert,brown2020language,raffel2020exploring}, computer vision~\cite{fan2021multiscale,liu2021swin,touvron2021training,chen2023jigsaw}, and reinforcement learning~\cite{janner2021offline,chen2021decision,wu2022flowformer}.  
Remarkable success of the self-attention mechanism and transformers has paved the way for the development of sophisticated language models such as GPT4 \cite{gpt4},
Bard \cite{bard}, LLaMA \cite{touvron2023llama}, and  ChatGPT \cite{openai_chatgpt}. 
\begin{quote}
\textbf{Q:}~Can we characterize the optimization landscape and implicit bias of transformers? 
How does the attention layer select and compose tokens when trained with gradient descent?
\end{quote}

We address these questions by rigorously connecting the optimization geometry of the attention layer and a hard max-margin SVM problem, namely \eqref{eqn:sattnsvm}, that separates and selects the optimal tokens from each input sequence. This formalism, which builds on the recent work \cite{tarzanagh2023margin}, is practically meaningful as demonstrated through experiments, and sheds light on the intricacies of self-attention. 
Throughout, given  input sequences $\X,\Z\in\R^{T\times d}$ with length $T$ and embedding dimension $d$, we study the core cross-attention and self-attention models:
\begin{subequations}\label{eqn:sa:obj}
\begin{align}
f_{\texttt{cross}}(\X,\Z)&:= \sftx(\Z \Qb\K^\top\X^\top)\X\V, 
\label{xatt eq}\\
f_{\texttt{self}}(\X)&:= \sftx(\X \Qb\K^\top\X^\top)\X\V.
\label{satt eq}
\end{align}
\end{subequations}
Here, $\Kb, \Qb  \in \R^{d\times m}$, $\V \in \R^{d\times v} $ are the trainable key, query, value matrices respectively; $\sft{\cdot}$ denotes the softmax nonlinearity, which is applied row-wise on  $\X \Qb\K^\top\X^\top$.  Note that self-attention  \eqref{satt eq} is a special instance of the cross-attention \eqref{xatt eq} by setting $\Z\gets \X$. To expose our main results, suppose the first token of $\Z$ -- denoted by $\z$ -- is used for prediction. Concretely, given a training dataset $(Y_i,\X_i, \z_i)_{i=1}^n$ with labels $Y_i\in \{-1,1\}$ and inputs $\X_i\in\R^{T\times d},\z_i\in\R^d$, we consider the empirical risk minimization with a decreasing loss function $\ell(\cdot):\R\rightarrow\R$, represented as follows:
\begin{align}\label{eq:erm:init} 
\Lc(\K,\Qb)=\frac{1}{n}\sum_{i=1}^n \ell \left(Y_i\cdot f(\X_i,\z_i)\right),\quad\text{where}~~f(\X_i,\z_i)=h\left(\X^\top_i \sftx\left(\X_i \K\Qb^\top\z_i\right)\right).
\end{align}
Here, $h(\cdot):\R^{d}\rightarrow\R$ is the prediction head that subsumes the value weights $\Vb$. In this formulation, the model $f(\cdot)$ precisely represents a one-layer transformer where an MLP follows the attention layer. Note that, we recover the  self-attention in \eqref{eq:erm:init} by setting $\z_i\gets \x_{i1}$, where $\x_{i1}$ denotes the first token of the sequence $\X_i$\footnote{Note that for simplicity, we set $\z_i = \x_{i1}$, but it can be any other row of $\X_i$.}. The softmax operation, due to its nonlinear nature, poses a significant challenge when optimizing \eqref{eq:erm:init}. The problem is nonconvex and nonlinear even when the prediction head is fixed and linear. In this study, we focus on optimizing the attention weights ($\Kb,\Qb$ or $\W$) and overcome such challenges to establish a fundamental SVM equivalence.\footnote{We fix $h(\cdot)$ and only optimize the attention weights. This is partly to avoid the degenerate case where $h(\cdot)$ can be used to achieve zero training loss (e.g.~via standard arguments like NTK \cite{jacot2018neural}) without providing any meaningful insight into the functionality of the attention mechanism.}

The paper's main contributions are as follows:

\begin{enumerate}[label=$\bullet$, wide, labelwidth=!,itemindent=!, labelindent=5pt]
\item \textbf{Implicit bias of the attention layer (Secs. \ref{sec:prelim}-\ref{sec:bias}).}  Optimizing the attention parameters $(\Kb,\Qb)$ with vanishing regularization converges in direction towards a max-margin solution of \eqref{eqn:sattnsvmst} with the nuclear norm objective of the combined parameter $\W:=\K\Qb^\top$ (Thm \ref{thm global reg path}). 
In the case of directly parameterizing cross-attention by the combined parameter $\W$, the regularization path (RP) directionally converges to \eqref{eqn:sattnsvm} solution with the Frobenius norm objective. To our knowledge, this is the first result to formally distinguish the optimization dynamics of $\W$ vs $(\Kb,\Qb)$ parameterizations, revealing the low-rank bias of the latter.  Our theory clearly characterizes the \emph{optimality} of selected tokens (Definition~\ref{score def}) and naturally extends to sequence-to-sequence or causal classification settings (see \ref{seqattnsvm} and Theorem \ref{local RP thm} in appendix).

\item \textbf{Convergence of gradient descent (Secs. \ref{provable global}-\ref{sec local}).} Gradient descent (GD) iterates for the combined key-query variable $\W$ converge in direction to a \emph{locally-optimal} solution of \eqref{eqn:sattnsvm} with appropriate initialization and a linear head $h(\cdot)$ (Sec. \ref{sec local}). For local optimality, selected tokens must have higher scores than their neighboring tokens. Locally-optimal directions are not necessarily unique and are characterized in terms of the problem geometry.  As a key contribution, we identify geometric conditions that guarantee convergence to the globally-optimal direction (Sec. \ref{provable global}). Besides these, we show that over-parameterization (i.e.~dimension $d$ being large, and equivalent conditions) catalyzes global convergence by ensuring \textbf{(1)} feasibility of \eqref{eqn:sattnsvm}, and, \textbf{(2)} benign optimization landscape, in the sense that there are no stationary points and no spurious locally-optimal directions (see Sec.~\ref{sec overparam}). These are illustrated in Figures \ref{fig path} and \ref{fig overparam W bar}. 


\item \textbf{Generality of SVM equivalence (Sec. \ref{sec:multi}).} When optimizing with linear $h(\cdot)$, the attention layer is inherently biased towards selecting a single token from each sequence (a.k.a.~hard attention). This is reflected in \eqref{eqn:sattnsvm} and arises from output tokens being convex combinations of the input tokens. In contrast, we show that nonlinear heads necessitate composing multiple tokens, highlighting their importance in the transformer's dynamics (Sec. \ref{sec when}). Using insights gathered from our theory, we propose a more general SVM equivalence. Remarkably, we demonstrate that our proposal accurately predicts the implicit bias of attention trained by gradient descent under general scenarios not covered by theory (e.g.~$h(\cdot)$ being an MLP). Specifically, our general formulae decouple attention weights into two components: A \textbf{directional component} governed by SVM which selects the tokens by applying a 0-1 mask, and a \textbf{finite component} which dictates the precise composition of the selected tokens by adjusting the softmax probabilities.
\end{enumerate}

An important feature of these findings is that they apply to arbitrary datasets (whenever SVM is feasible) and are numerically verifiable. We extensively validate the max-margin equivalence and implicit bias of transformers through enlightening experiments. We hold the view that these findings aid in understanding transformers as hierarchical max-margin token-selection mechanisms, and we hope that our outcomes will serve as a foundation for upcoming studies concerning their optimization and generalization dynamics.

\smallskip
\noindent\textbf{Overview.}~The paper is structured as follows: Section \ref{sec:prelim} introduces preliminaries on self-attention and optimization. Section~\ref{sec:bias} analyzes self-attention's optimization geometry, showing the RP of attention parameters converges to a max-margin solution. Sections \ref{provable global} and \ref{sec local} present global and local gradient descent analyses, respectively, demonstrating convergence of $\W$, the key-query variable, towards the solution of \eqref{eqn:sattnsvm}. Section \ref{sec:multi} provides our results on nonlinear prediction heads and generalized SVM equivalence. 
Section~\ref{sec:related} discusses relevant literature. Finally, Section~\ref{sec:conc} concludes the paper with open problems and future research directions inspired by our findings. All proofs are deferred to the appendix.

\section{Preliminaries}\label{sec:prelim}

\begin{wrapfigure}{r}{0.31\linewidth}\vspace{-0.5cm}
	\centering
	\includegraphics[scale=0.57]{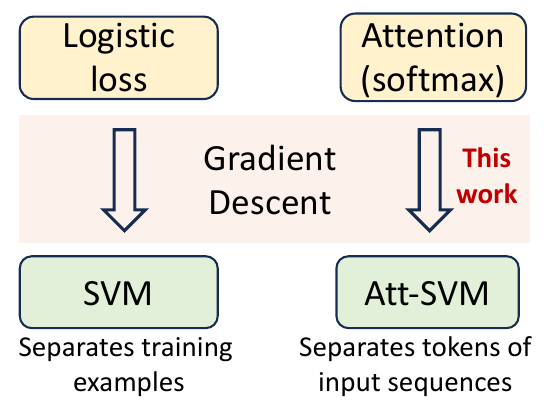}
 \label{fig:main_bias}\vspace{-0.8cm}\caption{Implicit biases of the attention layer and logistic regression.}\vspace{-0.5cm}
\end{wrapfigure}

\textbf{Unveiling the relationship between attention and linear SVMs.}  For linear classification, it is well-established that GD iterations on logistic loss and separable datasets converge towards the hard margin SVM solution, which effectively separates the two classes within the data \cite{soudry2018implicit,rosset2003margin,zhang2005boosting}. 
The softmax nonlinearity employed by the attention layer exhibits an exponentially-tailed behavior similar to the logistic loss; thus, attention may also be biased towards margin-maximizing solutions. However, the attention layer operates on tokens within an input sequence, rather than performing classification directly. Therefore, its bias is towards an SVM, specifically \eqref{eqn:sattnsvm}, which aims to separate the tokens of input sequences by selecting the relevant ones and suppressing the rest.  Nonetheless, formalizing this intuition is considerably more challenging: The presence of the highly nonlinear and nonconvex softmax operation renders the analysis of standard GD algorithms intricate. Additionally, the 1-layer transformer in \eqref{eq:erm:init} does not inherently exhibit a singular bias towards a single \eqref{eqn:sattnsvm} problem, even when using a linear head $h$. Instead, it can result in multiple locally optimal directions induced by their associated SVMs. We emphasize that \cite{tarzanagh2023margin} is the first work to make this attention$\leftrightarrow$SVM connection. Here, we augment their framework to transformers by developing the first guarantees for self/cross-attention layer, nonlinear prediction heads, and global convergence.

\smallskip
\noindent \textbf{Notation.} 
For any integer $N \geq 1$, let $[N]:=\{1, \dots, N\}$. We use lowercase and uppercase bold letters (e.g., $\ab$ and $\A$) to represent vectors and matrices, respectively. The entries of a vector $\ab$ are denoted as $\ab_i$. For a matrix $\A$, $\tn{\A}$ denotes the spectral norm, i.e.~maximum singular value, $\tnuc{\A}$ denotes the nuclear norm, i.e.~summation of all singular values, and $\tf{\A} := \sqrt{ \tr(\A^\top \A)}$ denotes the Frobenius norm. $\dist{\cdot,\cdot}$ denotes the Euclidean distance between two sets.
The minimum / maximum of two numbers $a$, $b$ is denoted as $a\wedge b$ / $a\vee b$. The big-O notation $\mc{O}(\cdot)$ hides the universal constants. 
%
%

\smallskip
\noindent\textbf{Optimization problem definition.} 
We use a linear head $h(\x)=\vb^\top\x$ for most of our theoretical exposition. Given dataset $(Y_i,\X_i,\z_i)_{i=1}^n$, we minimize the empirical risk of an 1-layer transformer using combined weights $\W\in\R^{d\times d}$ or individual weights $\Kb,\Qb\in\R^{d\times m}$ for a fixed head and decreasing loss function:\vspace{-3pt}
\begin{align}\label{eqn:erm:w}
\Lc(\W)&=\frac{1}{n}\sum_{i=1}^n \ell\left(Y_i\cdot \vb^\top\X_i^\top \sft{\X_i\W\z_{i}}\right), \tag{W-ERM}\\
\Lc(\Kb,\Qb)&=\frac{1}{n}\sum_{i=1}^n \ell\left(Y_i\cdot \vb^\top\X_i^\top \sft{\X_i\Kb\Qb^\top\z_{i}}\right). \label{eqn:erm:kq} \tag{KQ-ERM}
\end{align}
%
We can recover the self-attention model by setting $\z_i$ to be the first token of $\X_i$, i.e.,~$\z_i\gets \x_{i1}$. While the above formulation regresses a single label $Y_i$ per $(\X_i,\z_i)$, in Sections \ref{sec:multi} and  \ref{sec multioutput}, we show that our findings gracefully extend to the sequence-to-sequence classification setting where we output and classify $T$ tokens per inputs $\X_i,\Z_i\in\R^{T\times d}$. See Sections \ref{sec:multi} and \ref{sec multioutput} for results on nonlinear prediction heads.

\smallskip
\noindent\textbf{Optimization algorithms.} 
Given a parameter $R>0$, we consider an $\ell_2$-norm bound $R$, and define the regularized path solution associated with  Objectives \eqref{eqn:erm:w} and \eqref{eqn:erm:kq}, respectively as \eqref{RP-W} and \eqref{RP-QK}. These update rules allow us to find the solution within a constrained region defined by the norm bound. The RP illustrates the evolution of $\Wb_R$ as $R$ increases, capturing the essence of GD where the ridge constraint serves as an approximation for the number of iterations. Previous studies, including \cite{rosset2003margin,suggala2018connecting,ji2020gradient,tarzanagh2023margin}, have examined the implicit bias of logistic regression and established a connection between the directional convergence of the RP (i.e., $\lim_{R\rightarrow \infty} \Wb_R/R$) and GD. In \cite{tarzanagh2023margin}, the concept of a local RP was also employed to investigate implicit bias along local directions. For GD, with appropriate initialization and step size $\eta>0$, we describe the optimization process associated with \eqref{eqn:erm:w} and \eqref{eqn:erm:kq} as \eqref{GD-W} and \eqref{GD-QK}, respectively.

\smallskip
\tcbset{colback=white!5!white,colframe=black!5!black,colback=green!1!white}
\begin{tcolorbox}[height=2cm, sidebyside,righthand width=7cm]
\begin{small}
\vspace{-.25cm}
\hspace{-.2cm} Given $\W(0) \in \R^{d\times d}$, $\eta>0$, for $k\geq 0$ do:
\begin{equation}\tag{\small{W-GD}}
\W(k+1) = \W(k) -\eta \nabla \Lc(\W(k)).
\label{GD-W}    
\end{equation}
\end{small}
\tcblower
\begin{small}
\hspace{-.2cm} Given $\Qb(0), \Kb(0)  \in \R^{d\times m}$, $\eta>0$, for $k\geq 0$ do:
\begin{equation}\label{GD-QK}
\hspace{-.1cm}
\begin{bmatrix}
\Kb(k+1)   \\
 \Qb(k+1) \\
\end{bmatrix}
= \begin{bmatrix}
\Kb(k)   \\
 \Qb(k) \\
\end{bmatrix} 
-\eta 
\begin{bmatrix}
    \nabla_{\Kb} \Lc\left(\Kb(k), \Qb(k)\right)\\
    \nabla_{\Qb} \Lc\left(\Kb(k), \Qb(k)\right)
\end{bmatrix}.
\tag{\small{KQ-GD}}
\end{equation}
\end{small}
\end{tcolorbox}
\begin{tcolorbox}[height=1.8cm, sidebyside,righthand width=7cm]
\begin{small}
\hspace{-.2cm} Given $R>0$, find $d\times d$ matrix:
\begin{align}\tag{W-RP}
\Wb_R=\underset{\tf{\W}\leq R}{\arg\min}~\Lc(\W). 
\label{RP-W}
\end{align}
\end{small}
\tcblower
\begin{small}
\hspace{-.2cm} Given $R>0$, find $d\times m$ matrices:
\begin{align}\tag{KQ-RP}
(\Kbb_R,\Qbb_R)=\underset{\tf{\Kb}^2+\tf{\Qb}^2\leq 2R}{\arg\min}\Lc(\Kb,\Qb).
\label{RP-QK}
\end{align}
\end{small}
\end{tcolorbox}

%
%
\subsection{Optimal tokens and hard-margin SVM problem for cross attention} 
Given $\X_i\in\R^{T\times d},\z_i\in\R^d$,  we present a convex hard-margin SVM problem, denoted as \eqref{eqn:sattnsvm}, that aims to separate a specific token from the remaining tokens in the input sequence $\X_i$. This problem is jointly solved for all inputs, allowing us to examine the optimization properties of cross-attention. To delve deeper, we introduce the concept of optimal tokens, which are tokens that minimize the training objective under the decreasing loss function $\ell(\cdot)$ as shown in Lemma~\ref{lem min risk}. This exploration will introduce the notions of token scores and optimality, providing insights into the underlying principles of self-attention mechanisms \cite{tarzanagh2023margin}.

\begin{definition}[Token Score and Optimality]\label{score def}
Given a prediction head $\vb\in\R^d$, the score of a token $\x_{it}$ of input $\X_i$ is defined as $\bgam_{it} = Y_i \cdot \vb^\top\x_{it}$. The optimal token for each input $\X_i$ is given by the index $\op_i \in \arg\max_{t \in [T]} \bgam_{it}$ for all $i \in [n]$.
\end{definition}
By introducing token scores and identifying optimal tokens, we can better understand the importance of individual tokens and their impact on the overall objective. The token score quantifies the contribution of a token to the prediction or classification task, while the optimal token represents the token that exhibits the highest relevance within the corresponding input sequence. 

\smallskip
\noindent$\bullet$ \textbf{Hard-margin SVM for $\W$-parameterization.} Equipped with the set of optimal indices  $\opt:=(\opt_i)_{i=1}^n$ as per Definition~\ref{score def}, we introduce the following SVM formulation associated to \eqref{eqn:erm:w}:

\begin{tcolorbox}[colback=white!5!white,colframe=black!5!black,colback=green!1!white]
\vspace{5pt}
\begin{equation}\tag{\name}
\Wm=\arg\min_{\W}\tf{\W}
\quad \text{subj. to} \quad (\x_{i\op_i}-\x_\itt)^\top\W\z_i\geq 1\quad  \text{for all} \quad t \neq \op_i, \quad  i\in[n]
\label{eqn:sattnsvm}.
\end{equation}
\end{tcolorbox}

The existence of matrix $\Wm$ implies the separability of tokens $(\opt_i)_{i=1}^n$ from the others. The terms $a_{\itt}:=\x_\itt^\top\W\z_i$ represent the dot product between the key-query features before applying the softmax nonlinearity. This dot product is a crucial characteristic of self-attention and we can express the SVM constraints in \eqref{eqn:sattnsvm} as $a_{i\op_i}\geq a_\itt+1$. Thus,  \eqref{eqn:sattnsvm} finds the most efficient direction that ensures the optimal token $\x_{i\op_i}$ achieves the highest similarity with query $\z_i$ among all key embeddings. Our first result shows that \eqref{eqn:sattnsvm} is feasible under mild over-parameterization. 

\begin{theorem}\label{thm:separation} Suppose $d\geq \max(T-1,n)$. Then, almost all datasets $(Y_i,\X_i,\z_i)_{i=1}^n$ -- including the self-attention setting with $\z_i\gets\x_{i1}$ -- obey the following: 
\eqref{eqn:sattnsvm} is feasible i.e.,~$\Wm$ separates the desired tokens $\opt=(\opt_i)_{i=1}^n$.
\end{theorem}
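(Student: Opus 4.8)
The plan is to show that the linear system defining \eqref{eqn:sattnsvm} is generically solvable once $d$ is large enough, by a dimension-counting / genericity argument. Fix the optimal indices $\opt=(\opt_i)_{i=1}^n$ (which are themselves determined by $(Y_i,\X_i,\vb)$, but that is irrelevant to feasibility: for \emph{any} fixed choice of one index $\op_i$ per sequence, I will argue the constraints can be met). The constraints $(\x_{i\op_i}-\x_\itt)^\top\W\z_i\geq 1$ for $t\neq\op_i$, $i\in[n]$ are linear in the $d^2$ unknown entries of $\W$; writing $\mtx{g}_{it}:=(\x_{i\op_i}-\x_\itt)\z_i^\top\in\R^{d\times d}$ and using $\langle \mtx{g}_{it},\W\rangle = (\x_{i\op_i}-\x_\itt)^\top\W\z_i$, feasibility is equivalent to the existence of $\W$ with $\langle \mtx{g}_{it},\W\rangle\geq 1$ for all the $m:=\sum_i(T-1)\le n(T-1)$ constraint pairs.

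The key step is to show that for almost every dataset the family $\{\mtx{g}_{it}\}$, viewed as vectors in $\R^{d^2}$, is linearly independent — in fact I only need the weaker statement that $\mtx{0}$ is not in their convex hull, but linear independence is cleaner and holds generically. Linear independence of $\{\mtx{g}_{it}\}$ immediately gives feasibility: if the $\mtx{g}_{it}$ are linearly independent, the linear map $\W\mapsto(\langle\mtx{g}_{it},\W\rangle)_{it}$ is surjective onto $\R^m$, so in particular some $\W$ maps to the all-ones vector $\onebb\in\R^m$, and that $\W$ satisfies all constraints with equality. So it remains to establish generic linear independence of $\{(\x_{i\op_i}-\x_\itt)\z_i^\top\}$. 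Here is where the hypotheses $d\ge \max(T-1,n)$ enter. Within a single sequence $i$, the matrices $(\x_{i\op_i}-\x_\itt)\z_i^\top$ for $t\ne\op_i$ all share the rank-one "column" direction $\z_i$, and they are linearly independent iff the $T-1$ vectors $\{\x_{i\op_i}-\x_\itt\}_{t\ne\op_i}$ are linearly independent in $\R^d$, which needs $d\ge T-1$ and holds for generic $\X_i$. Across sequences, two blocks $i\ne j$ live in $\R^d\otimes\{\z_i\}$ and $\R^d\otimes\{\z_j\}$ respectively; generically $\z_1,\dots,\z_n$ are linearly independent (needs $d\ge n$), so these subspaces of $\R^{d\times d}$ are in direct sum, and the union of the blocks is linearly independent. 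Thus under $d\ge\max(T-1,n)$ the full family $\{\mtx{g}_{it}\}_{t\ne\op_i,\,i\in[n]}$ is generically linearly independent.

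To make "almost all datasets" precise I would phrase it as: the set of $(\X_i,\z_i)_{i=1}^n$ for which either some $\{\x_{i\op_i}-\x_\itt\}_{t\ne\op_i}$ fails to be linearly independent, or $\{\z_i\}_{i=1}^n$ fails to be linearly independent, is contained in a finite union of proper algebraic subvarieties of the ambient Euclidean space (each defined by the vanishing of an appropriate minor / determinant that is a nonzero polynomial), hence has Lebesgue measure zero. In the self-attention case $\z_i=\x_{i1}$ the same argument applies verbatim — one just checks the relevant determinants are still not identically zero as polynomials in the entries of $\X_1,\dots,\X_n$, which holds because one can exhibit a single configuration (e.g.\ suitable coordinate vectors) making all of them nonzero simultaneously. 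The $Y_i$ play no role beyond fixing which index is $\op_i$, and for each of the finitely many possible index assignments the exceptional set is still measure zero, so the union over assignments remains measure zero.

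The main obstacle I anticipate is the bookkeeping in the last paragraph: cleanly arguing that the determinants/minors are nonzero polynomials — especially in the self-attention coupling $\z_i=\x_{i1}$, where the query and a key vector are the same object, so one must verify that forcing $\z_i=\x_{i1}$ does not accidentally make every relevant minor vanish identically. The clean way is to produce one explicit witness dataset (coordinate-vector tokens with $\x_{i\op_i},\z_i$ chosen so the blocks manifestly split and each block is manifestly independent), which certifies each polynomial is not the zero polynomial; everything else is the standard "nonzero polynomial $\Rightarrow$ zero set has measure zero" fact.
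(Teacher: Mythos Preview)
Your proposal is correct and follows essentially the same approach as the paper: reduce feasibility to linear independence of the rank-one constraint matrices $(\x_{i\op_i}-\x_{it})\z_i^\top$ and exploit the tensor structure to split the problem along the $\z_i$ directions (requiring $d\ge n$) and within each block (requiring $d\ge T-1$). The paper carries this out inductively on $n$ with Gaussian perturbations and an auxiliary projection lemma, whereas you argue the direct-sum decomposition $\bigoplus_i \R^d\otimes\{\z_i\}$ in one shot and invoke algebraic genericity; both are valid and the underlying idea is the same.
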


We note that the convex formulation \eqref{eqn:sattnsvm} does not  fully capture the GD geometry on \eqref{eqn:erm:w}. In a more general sense, GD can provably converge to an SVM solution over locally-optimal tokens, as detailed in Section~\ref{local GD sec}. For deeper insight into \eqref{eqn:sattnsvm}, consider that the attention layer's output is a convex mixture of input tokens. Thus, if minimizing the training loss involves choosing the optimal token $\x_{i\op_i}$, the softmax similarities should eventually converge to a one-hot vector, precisely including $\x_{i\op_i}$ (assigned 1), while ignoring all other tokens (assigned 0s). This convergence requires the attention weights $\W$ to diverge in norm to saturate the softmax probabilities.  Due to the exponential-tailed nature of the softmax function, the weights converge directionally to the max-margin solution. This phenomenon resembles the implicit bias of logistic regression on separable data \cite{soudry2018implicit,ji2018risk}. Lemma \ref{lem min risk} formalizes this intuition and rigorously motivates optimal tokens.

\smallskip
\noindent$\bullet$ \textbf{Non-convex SVM for $(\Kb,\Qb)$-parameterization.} The objective function \eqref{eqn:erm:kq} has an extra layer of nonconvexity compared to \eqref{eqn:erm:w} as $(\Kb,\Qb)$ corresponds to a matrix factorization of $\W$. To study this, we introduce the following nonconvex SVM problem over $(\Kb,\Qb)$ akin to \eqref{eqn:sattnsvm}:

\smallskip
\begin{tcolorbox}
\vspace{-7pt}
\begin{equation}\tag{KQ-SVM}
\min_{\Kb,\Qb}~\frac{1}{2} \left(\|\Kb\|_F^2+\|\Qb\|_F^2\right)
\quad \text{subj. to} \quad (\x_{i\op_i}-\x_\itt)^\top\Kb\Qb^\top\z_i\geq 1\quad \text{for all} \quad t \neq \op_i, \quad  i\in[n]
\label{eqn:qk:svm}.
\end{equation}
\end{tcolorbox}
Even if the direction of GD is biased towards the SVM solution, it does not have to converge to the global minima of \eqref{eqn:qk:svm}. Instead, it can  converge towards a Karush–Kuhn–Tucker (KKT) point of the max-margin SVM. Such KKT convergence has been studied by \cite{lyu2019gradient,ji2020directional} in the context of other nonconvex margin maximization problems. Fortunately, \eqref{eqn:erm:kq} may not be as daunting as it may initially seem: Our experiments in Figures~\ref{fig path} and \ref{fig general} reveal that GD is indeed biased towards the global minima of \eqref{eqn:qk:svm}. This global minima is achieved by setting $\W:=\Kb\Qb^\top$ and finding the factorization of $\W$ that minimizes the quadratic objective, yielding the following $\W$-parameterized SVM with nuclear norm objective:

\smallskip
\begin{tcolorbox}[colback=white!5!white,colframe=black!5!black,colback=green!1!white]
\vspace{5pt}
\begin{equation}\tag{Att-SVM$_\star$}
\Wm_\st\in\underset{\texttt{rank}(\W)\leq m}{\arg\min}\tnuc{\W}
\quad \text{subj. to} \quad (\x_{i\op_i}-\x_\itt)^\top\W\z_i\geq 1\quad   \text{for all} \quad t \neq \op_i, \quad  i\in[n]
\label{eqn:sattnsvmst}.
\end{equation}
\vspace{-10pt}
\end{tcolorbox}

Above, the nonconvex rank constraint arises from the fact that the rank of $\W = \Kb\Qb^\top$ is at most $m$. However, under the condition of the full parameterization where $m \geq d$, the rank constraint disappears, leading to a convex nuclear norm minimization problem. Besides, the nuclear norm objective inherently encourages a low-rank solution \cite{recht2010guaranteed,fazel2002matrix,srebro2004maximum}. 
Lemma \ref{lem:rank}, presented below, demonstrates that this guarantee holds whenever $n \leq m$. This observation is further supported by our experiments (see Fig.~\ref{fig rank}). Thus, it offers a straightforward rationale for why setting $m < d$ is a reasonable practice, particularly in scenarios involving limited data.

\begin{lemma}\label{lem:rank} Any optimal solution of \eqref{eqn:sattnsvm} or \eqref{eqn:sattnsvmst} is at most rank $n$. More precisely, the  row space of $\Ws$ or $\Ws_\st$ lies within $\texttt{span}(\{\z_i\}_{i=1}^n)$.
\end{lemma}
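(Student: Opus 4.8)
The plan is to observe that the SVM constraints $(\x_{i\op_i}-\x_\itt)^\top\W\z_i\geq 1$ only involve $\W$ through the products $\W\z_i$, i.e.~only through the action of $\W$ on the subspace $V:=\texttt{span}(\{\z_i\}_{i=1}^n)$. Hence any component of the row space of $\W$ orthogonal to $V$ is irrelevant to feasibility but strictly increases both $\tf{\W}$ and $\tnuc{\W}$. This forces the optimal solution to have its row space contained in $V$, which has dimension at most $n$, giving $\texttt{rank}(\Ws)\leq n$ (and likewise $\texttt{rank}(\Ws_\st)\leq n$).

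Concretely, I would carry out the following steps. First, let $P$ denote the orthogonal projection onto $V=\texttt{span}(\{\z_i\}_{i=1}^n)$, so that $P\z_i=\z_i$ for all $i\in[n]$. Given any feasible $\W$, set $\W':=\W P$. Then for every $i$ and every $t\neq\op_i$,
\begin{align*}
(\x_{i\op_i}-\x_\itt)^\top\W'\z_i=(\x_{i\op_i}-\x_\itt)^\top\W P\z_i=(\x_{i\op_i}-\x_\itt)^\top\W\z_i\geq 1,
\end{align*}
so $\W'$ is also feasible for \eqref{eqn:sattnsvm} (and for \eqref{eqn:sattnsvmst}, since $\texttt{rank}(\W')\leq\texttt{rank}(\W)\leq m$). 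Second, I invoke the fact that multiplying by an orthogonal projection cannot increase any unitarily invariant norm: $\tf{\W P}\leq\tf{\W}$ and $\tnuc{\W P}\leq\tnuc{\W}$, with strict inequality in the Frobenius case unless $\W=\W P$ (and, for the nuclear norm, the minimum can always be attained at such a $\W'$). This is a standard consequence of, e.g., the singular value inequalities for products, or can be seen directly by writing $\tf{\W}^2=\tf{\W P}^2+\tf{\W(\Iden-P)}^2$. Third, I conclude that at any optimizer $\Ws$ we must have $\Ws=\Ws P$, i.e.~the row space of $\Ws$ lies in $V$; since $\dim V\leq n$, this yields $\texttt{rank}(\Ws)\leq n$, and identically for $\Ws_\st$.

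There is no real obstacle here; the only point requiring a line of care is the nuclear-norm case, where $\tnuc{\W P}\leq\tnuc{\W}$ need not be strict (e.g.~if $\W$ already has row space in $V$), so the argument shows existence of an optimal solution of the claimed form rather than that every optimizer has this form — which matches the ``any optimal solution'' phrasing once one notes the Frobenius objective of \eqref{eqn:sattnsvm} is strictly convex in $\W$ hence has a unique minimizer, and for \eqref{eqn:sattnsvmst} one argues that the set of minimizers all satisfy the row-space containment by the same projection argument applied to an arbitrary minimizer together with strict decrease of $\tf{\cdot}$ as a tie-breaker, or simply by noting $\tnuc{\W(\Iden-P)}>0$ whenever $\W(\Iden-P)\neq 0$. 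I would also remark that this immediately recovers and sharpens Theorem~\ref{thm:separation}'s parameter regime intuition, and explains the empirical low-rank bias reported in Figure~\ref{fig rank}.
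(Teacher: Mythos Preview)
Your approach is the same as the paper's: project the rows of a putative optimizer onto $V=\texttt{span}(\{\z_i\}_{i=1}^n)$, observe feasibility is preserved, and argue the objective strictly drops unless the row space was already in $V$. For the Frobenius objective this is complete, exactly as you wrote, via $\tf{\W}^2=\tf{\W P}^2+\tf{\W(\Iden-P)}^2$.

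The gap is in the nuclear-norm case. You correctly flag that $\tnuc{\W P}\le\tnuc{\W}$ need not be obviously strict, but neither of your two proposed fixes closes the argument for \emph{every} minimizer. The Frobenius tie-breaker only shows that \emph{some} nuclear-norm minimizer has row space in $V$, not all of them. And ``noting $\tnuc{\W(\Iden-P)}>0$ whenever $\W(\Iden-P)\neq 0$'' does not help: the decomposition $\W=\W P+\W(\Iden-P)$ has orthogonal \emph{row} spaces but not, in general, orthogonal \emph{column} spaces, so you cannot conclude $\tnuc{\W}\ge\tnuc{\W P}+\tnuc{\W(\Iden-P)}$ (indeed the triangle inequality goes the other way).

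The paper establishes the needed strict inequality by a short case analysis on the SVD $\W P=\Ub\bSi\Vb^\top$ (so $\texttt{column\_span}(\Vb)\subseteq V$): if $\Ub_\perp^\top\W\Vb_\perp\neq 0$ one invokes a pinching-type singular value inequality to get $\tnuc{\W}\ge\tnuc{\Ub^\top\W\Vb}+\tnuc{\Ub_\perp^\top\W\Vb_\perp}>\tnuc{\W P}$; otherwise one passes through $\W':=\Ub\Ub^\top\W$ and uses the equality case of von Neumann's trace inequality to force $\tnuc{\W P}<\tnuc{\W'}\le\tnuc{\W}$. Either route (or a direct subdifferential argument showing any dual certificate $Z^\star$ for $\W P$ can be taken with $Z^\star=Z^\star P$, hence cannot witness $\tnuc{\W}$ unless the right singular vectors of $\W$ already lie in $V$) would complete your proof; as written, the nuclear-norm half is not yet justified.
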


Figure~\ref{fig rank} illustrates rank range of solutions for  \eqref{eqn:sattnsvm} and \eqref{eqn:sattnsvmst}, denoted as $\Ws$ and $\Ws_{\star}$, solved using optimal tokens $(\opt_i)_{i=1}^n$ and setting  $m=d$ (the rank constraint is eliminated). Each result is averaged over 100 trials, and for each trial, ${\x}_{it}$, ${\z}_i$, and linear head ${\vb}$ are randomly sampled from the unit sphere. In Fig.~\ref{fig rank svm n}, we fix $T=5$ and vary $n$ across $\{5,10,15\}$. Conversely, in Fig.~\ref{fig rank svm T}, we keep $n=5$ constant and alter $T$ across $\{5,10,15\}$.  Both figures confirm rank of $\Ws$ and $\Ws_\star$ are bounded by $\max(n,d)$, validating Lemma~\ref{lem:rank}.

\begin{figure}
    \centering
    \hspace{-10pt}
    \subfigure[Rank of attention SVM solutions with fixed $T=5$]{
        \begin{tikzpicture}
        \node at (0,0) {\includegraphics[height=.25\columnwidth, trim={1.3cm 1.4cm 0 0}, clip]{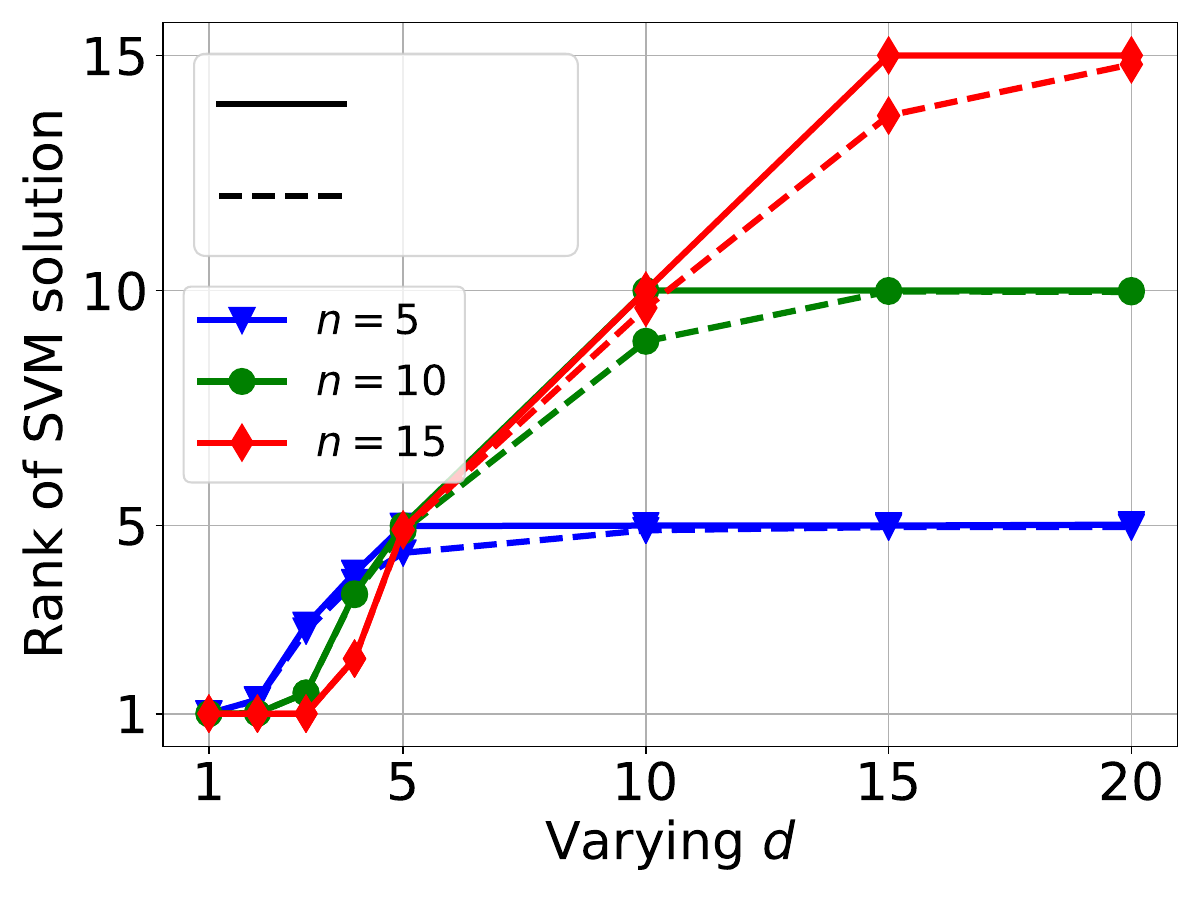}};
        \node at (-0.95,1.5) {\small{$\Ws$}};
        \node at (-0.95,1.) {\small{$\Ws_\star$}};
        \node at (0,-2.2) {\small{Varying $d$}};
        \node[rotate=90] at (-3,0) {\small{Rank of SVM solution}};
        \end{tikzpicture}
        \label{fig rank svm n}
    }
    \hspace{30pt}
    \subfigure[Rank of attention SVM solutions with fixed $n=5$]{
        \begin{tikzpicture}
        \node at (0,0) {\includegraphics[height=.25\columnwidth, trim={1.3cm 1.4cm 0 0}, clip]{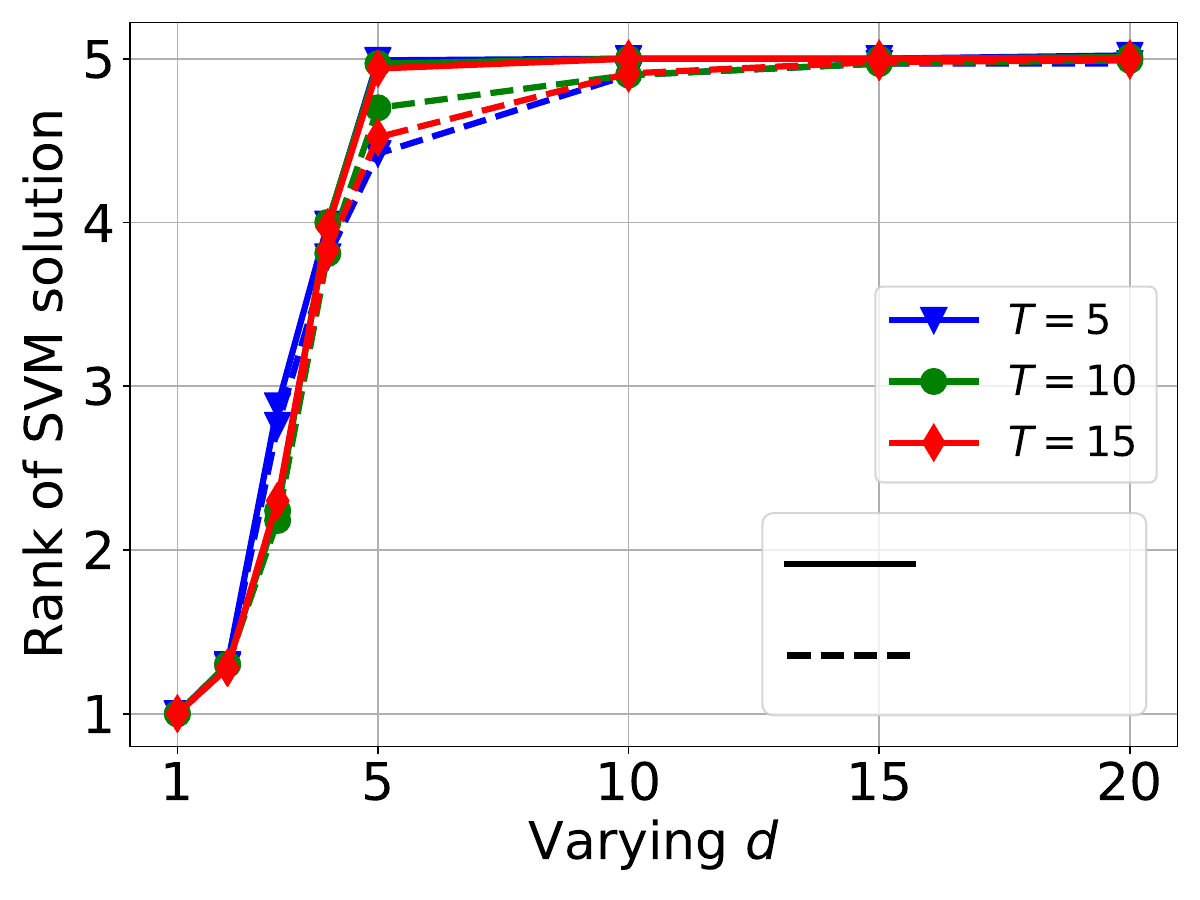}};
        \node at (1.85,-0.8) {\small{$\Ws$}};
        \node at (1.85,-1.28) {\small{$\Ws_\star$}};
        \node at (0,-2.2) {\small{Varying $d$}};
        \node[rotate=90] at (-3.1,0) {\small{Rank of SVM solution}};
        \end{tikzpicture}
        \label{fig rank svm T}
    }
    \caption{Rank range of solutions for  \eqref{eqn:sattnsvm} and \eqref{eqn:sattnsvmst}, denoted as $\Ws$ and $\Ws_{\star}$, solved using optimal tokens $(\opt_i)_{i=1}^n$ and setting  $m=d$ (the rank constraint is eliminated). Both figures confirm ranks of $\Ws$ and $\Ws_\star$ are bounded by $\max(n,d)$, validating Lemma~\ref{lem:rank}.}
        \label{fig rank}
\end{figure}
\section{Understanding the Implicit Bias of Self-Attention}\label{sec:bias}

We start by motivating the optimal token definition and establishing the global convergence of RPs which shed light on the implicit bias of attention parameterizations. Throughout, we maintain the following assumption regarding the loss function.
\begin{assumption}\label{assum:loss:prope}
Over any bounded interval $[a,b]$: (i) $\ell:\R\rightarrow\R$ is strictly decreasing; (ii) The derivative $\ell^{\prime}$ is bounded as $|\ell^{\prime}(u)|\leq M_1$; (iii) $\ell'$ is $M_0$-Lipschitz continuous.
\end{assumption}
Assumption~\ref{assum:loss:prope} encompasses many common loss functions, e.g., logistic  $\ell\left(u\right)=\log\left(1+e^{-u}\right)$, exponential  $\ell\left(u\right)=e^{-u}$, and correlation $\ell(u)=-u$ losses.  
\begin{lemma}[Optimal Tokens Minimize Training Loss]\label{lem min risk} Suppose Assumption \ref{assum:loss:prope} (i)-(ii) hold, and not all tokens are optimal per Definition~\ref{score def}. Then, training risk obeys $\Lc(\W)>\Lc_\st:=\frac{1}{n}\sum_{i=1}^n \ell(\bgam_{i\op_i})$. Additionally, suppose there are optimal indices $(\op_i)_{i=1}^n$ for which \eqref{eqn:sattnsvm} is feasible, i.e.~there exists a $\W$ separating optimal tokens. This $\W$ choice obeys $\lim_{R\rightarrow\infty}\Lc(R\cdot\W)=\Lc_\st$.
\end{lemma}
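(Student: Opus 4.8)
The plan is to lean on one structural fact: with the linear head $h(\x)=\vb^\top\x$, the prediction is a convex combination of the token scores. Write $\s_i(\W):=\sftx(\X_i\W\z_i)$; this vector lies in the probability simplex and, because softmax outputs are strictly positive, has every entry $>0$. Moreover
\[
Y_i f(\X_i,\z_i)=Y_i\,\vb^\top\X_i^\top\s_i(\W)=\sum_{t=1}^{T}\s_{it}(\W)\,\bgam_{it}\;\le\;\max_{t\in[T]}\bgam_{it}=\bgam_{i\op_i},
\]
with the inequality strict as soon as $\s_i(\W)$ assigns positive mass to some token whose score is strictly below the maximum. This single observation drives both halves of the statement.

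For the first claim, I would note that strict monotonicity of $\ell$ (Assumption~\ref{assum:loss:prope}(i)) turns $Y_i f(\X_i,\z_i)\le\bgam_{i\op_i}$ into $\ell\bigl(Y_i f(\X_i,\z_i)\bigr)\ge\ell(\bgam_{i\op_i})$ for every $i$. Since $\s_{it}(\W)>0$ for all $t$, this inequality is \emph{strict} for any $i$ that owns a token with score $<\bgam_{i\op_i}$, and the hypothesis ``not all tokens are optimal'' furnishes at least one such index $i$. Averaging over $i\in[n]$ then yields $\Lc(\W)>\Lc_\st$; as $\W$ was arbitrary, $\Lc_\st=\inf_\W\Lc(\W)$ and this infimum is never attained.

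For the second claim, take a feasible $\W$ for \eqref{eqn:sattnsvm}. Scaling by $R$ multiplies every attention logit gap by $R$, so the constraints give $\x_{i\op_i}^\top(R\W)\z_i-\x_{it}^\top(R\W)\z_i\ge R$ for all $t\ne\op_i$, whence $\s_{it}(R\W)\le e^{-R}$ and $\sum_{t\ne\op_i}\s_{it}(R\W)\le (T-1)e^{-R}$. Therefore
\begin{align*}
\bigl|Y_i f(\X_i,\z_i)-\bgam_{i\op_i}\bigr|
&=\Bigl|\sum_{t\ne\op_i}\s_{it}(R\W)\,(\bgam_{it}-\bgam_{i\op_i})\Bigr|\\
&\le (T-1)\,e^{-R}\,\max_{i,t}\bigl|\bgam_{it}-\bgam_{i\op_i}\bigr|\;\longrightarrow\;0\quad\text{as }R\to\infty.
\end{align*}
Since $Y_i f(\X_i,\z_i)$ is always a convex combination of the finitely many fixed scores $\{\bgam_{it}\}_{t\in[T]}$, it stays inside one bounded interval for all $R$; on that interval $\ell$ is Lipschitz by Assumption~\ref{assum:loss:prope}(ii), hence continuous, so $\ell(Y_i f(\X_i,\z_i))\to\ell(\bgam_{i\op_i})$. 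Summing over $i$ gives $\lim_{R\to\infty}\Lc(R\W)=\Lc_\st$.

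I do not expect a serious obstacle; the only point requiring care is bookkeeping around \emph{ties} in token scores. When $\arg\max_t\bgam_{it}$ is not a singleton, feasibility of \eqref{eqn:sattnsvm} only enforces a strict margin between the \emph{chosen} index $\op_i$ and every other token (including the other score-maximizers), and one must observe that this still forces the softmax mass to concentrate fully on $\op_i$, so the limit is $\bgam_{i\op_i}$ irrespective of the tie. Symmetrically, it is the strict positivity of softmax outputs — no token ever receives a genuine zero weight — that upgrades ``$\ge$'' to ``$>$'' in the first claim and makes the infimum unattained.
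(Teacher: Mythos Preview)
Your proposal is correct and follows essentially the same approach as the paper: both exploit that the prediction is a convex combination of token scores (hence bounded above by $\bgam_{i\op_i}$, strictly when some non-optimal token receives positive softmax mass), use strict monotonicity of $\ell$ for the first claim, and use the margin constraints plus Lipschitzness of $\ell$ on a bounded interval for the limit. The only cosmetic difference is that the paper routes the second half through $\ab_i\to\x_{i\op_i}$ and Lipschitzness of $h$, whereas you bound the score gap directly via the explicit softmax estimate $\s_{it}(R\W)\le e^{-R}$; these are equivalent.
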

The result presented in Lemma~\ref{lem min risk} originates from the observation that the output tokens of the attention layer constitute a convex combination of the input tokens. Consequently, when subjected to a strictly decreasing loss function, attention optimization inherently leans towards the selection of a singular token, specifically, the optimal token $(\op_i)_{i=1}^n$. Our following theorem unveils the implicit bias ingrained within both attention parameterizations through RP analysis.

\begin{theorem}\label{thm global reg path}
Suppose Assumptions \ref{assum:loss:prope} holds, optimal indices $(\op_i)_{i=1}^n$ are unique, and \eqref{eqn:sattnsvm} is feasible. Let $\Wm$ be the unique solution of \eqref{eqn:sattnsvm}, and let $\Wc^\svm_\star$ be the solution set of \eqref{eqn:sattnsvmst} with nuclear norm achieving objective $C_\st$. Then, Algorithms~\ref{RP-W} and \ref{RP-QK}, respectively, satisfy:
\begin{itemize}
\item $\W$-parameterization has Frobenius norm bias: $\underset{R\rightarrow\infty}{\lim} \frac{\Wb_R}{R}=\frac{\Wm}{\tf{\Wm}}$.
\item $(\Kb,\Qb)$-parameterization has nuclear norm bias: $\underset{R\rightarrow\infty}{\lim} \dist{\frac{\Kbb_R\Qbb_R^\top}{R},\frac{\Wc^\svm_\star}{C_\st}}=0$.
\begin{itemize}
\item Setting $m=d$: \eqref{eqn:sattnsvmst} is a convex problem without rank constraints.
\end{itemize} 
\end{itemize}
\end{theorem}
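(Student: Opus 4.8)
Set $\A_{it}:=(\x_{i\op_i}-\x_\itt)\z_i^\top$, so every margin constraint of \eqref{eqn:sattnsvm} and \eqref{eqn:sattnsvmst} reads $\inr{\A_{it},\W}\ge 1$, and note that \eqref{eqn:erm:kq} equals $\Lc(\Kb\Qb^\top)$; hence both claims reduce to one regularization-path analysis of $\Lc(\W)$ --- once over the Frobenius ball, once over the nuclear-norm ball intersected with $\{\rank{\W}\le m\}$. The analytic backbone is the exact expansion
\[
\textstyle\Lc(\W)-\Lc_\st=\tfrac1n\sum_{i=1}^n\big[\ell\big(\bgam_{i\op_i}-\sum_{t\neq\op_i}s_{it}(\W)(\bgam_{i\op_i}-\bgam_{it})\big)-\ell(\bgam_{i\op_i})\big],
\]
where $s_{it}(\W)$ is the softmax weight of token $t$ at input $i$ and $\bgam_{i\op_i}-\bgam_{it}>0$ because the optimal indices are unique. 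Combined with Lemma~\ref{lem min risk} this gives $\Lc(\W)\ge\Lc_\st$, with equality only as $\tf{\W}\to\infty$, and --- using $|\ell'|\le M_1$ and $s_{it}(\W)\le e^{-\inr{\A_{it},\W}}$ --- the upper estimate $0\le\Lc(\W)-\Lc_\st\le M_1\Gamma_{\max}\tfrac1n\sum_{i,t\neq\op_i}e^{-\inr{\A_{it},\W}}$, with $\Gamma_{\max}:=\max_{i,t}(\bgam_{i\op_i}-\bgam_{it})$. Its counterpart, $\Lc(\W)-\Lc_\st\ge c_1 e^{-\min_{i,t}\inr{\A_{it},\W}}$, holds on the ``good region'' where the optimal token dominates every softmax, so that $s_{it}(\W)\ge\tfrac1T e^{-\inr{\A_{it},\W}}$; this uses a Taylor expansion of $\ell$ around $\bgam_{i\op_i}$ quantified by the $M_0$-Lipschitzness of $\ell'$. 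I expect this to be the main obstacle: one must \emph{bootstrap} --- $\Lc(\W)$ near $\Lc_\st$ forces every $s_{it}(\W)$ small, which forces $\W$ into the good region where the lower estimate applies --- and deal with the benign edge case $\ell'(\bgam_{i\op_i})=0$ by invoking strict monotonicity in place of $-\ell'>0$.

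\noindent\textbf{$\W$-parameterization.} The path iterate $\Wb_R$ of \eqref{RP-W} is well-defined by compactness of the ball and continuity of $\Lc$. Plugging the feasible $\tfrac{R}{\tf{\Wm}}\Wm$ (for which $\inr{\A_{it},\cdot}\ge R/\tf{\Wm}$) into the upper estimate gives $\Lc(\Wb_R)-\Lc_\st\le M_1\Gamma_{\max}(T-1)e^{-R/\tf{\Wm}}\to 0$; hence $\tf{\Wb_R}\to\infty$ (otherwise $\Lc(\Wb_R)$ is bounded away from $\Lc_\st$ by Lemma~\ref{lem min risk}), $\Wb_R$ enters the good region, and the lower estimate yields $\min_{i,t}\inr{\A_{it},\Wb_R}\ge R/\tf{\Wm}-\mathcal{O}(1)$. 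Dividing by $R$, any subsequential limit $\overline{\W}$ of $\Wb_R/R$ has $\tf{\overline{\W}}\le 1$ and $\min_{i,t}\inr{\A_{it},\overline{\W}}\ge 1/\tf{\Wm}$. By the support-vector duality $\max\{\min_{i,t}\inr{\A_{it},\W}:\tf{\W}\le 1\}=1/\tf{\Wm}$, attained only at $\Wm/\tf{\Wm}$ (uniqueness of $\Wm$ follows since the feasible set of \eqref{eqn:sattnsvm} is a nonempty polyhedron and $\tf{\cdot}^2$ is strictly convex), we conclude $\overline{\W}=\Wm/\tf{\Wm}$; as all subsequential limits coincide, $\Wb_R/R\to\Wm/\tf{\Wm}$.

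\noindent\textbf{$(\Kb,\Qb)$-parameterization.} Use the balanced-factorization identity $\min\{\tfrac12(\tf{\Kb}^2+\tf{\Qb}^2):\Kb\Qb^\top=\W,\ \Kb,\Qb\in\R^{d\times m}\}=\tnuc{\W}$ valid for $\rank{\W}\le m$: the lower bound is AM--GM together with $\tnuc{\Kb\Qb^\top}\le\tf{\Kb}\tf{\Qb}$, and it is attained at $\W=\Ub\bSi\V^\top\mapsto(\Ub\bSi^{1/2},\V\bSi^{1/2})$. Since $\Lc(\Kb,\Qb)=\Lc(\Kb\Qb^\top)$, one gets $\min_{\tf{\Kb}^2+\tf{\Qb}^2\le 2R}\Lc(\Kb,\Qb)=\min\{\Lc(\W):\rank{\W}\le m,\ \tnuc{\W}\le R\}$, attained by $\Kbb_R\Qbb_R^\top$. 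Now rerun the previous paragraph with $\tnuc{\cdot}$ in place of $\tf{\cdot}$ and the constraint $\rank{\W}\le m$ carried throughout --- the argument used no uniqueness of the minimizer ---: the upper estimate uses the feasible $\tfrac{R}{C_\st}\Wm_\st$, which has rank $\le m$ by definition of \eqref{eqn:sattnsvmst}; the lower estimate is identical; and the relevant duality is $\max\{\min_{i,t}\inr{\A_{it},\W}:\tnuc{\W}\le 1,\ \rank{\W}\le m\}=1/C_\st$. Hence every subsequential limit $\overline{\W}$ of $\Kbb_R\Qbb_R^\top/R$ satisfies $\tnuc{\overline{\W}}=1$, $\rank{\overline{\W}}\le m$ and $\min_{i,t}\inr{\A_{it},\overline{\W}}=1/C_\st$, i.e.\ $C_\st\overline{\W}\in\Wc^\svm_\star$; since the nuclear-norm minimizer need not be unique, this gives convergence to the \emph{set}, which is precisely $\dist{\Kbb_R\Qbb_R^\top/R,\ \Wc^\svm_\star/C_\st}\to 0$. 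Finally, when $m\ge d$ the constraint $\rank{\W}\le m$ is vacuous on $\R^{d\times d}$, so \eqref{eqn:sattnsvmst} reduces to minimizing the convex function $\tnuc{\W}$ subject to the linear inequalities $\inr{\A_{it},\W}\ge 1$, completing the plan.
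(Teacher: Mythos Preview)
Your proposal is correct and follows essentially the same route as the paper: the balanced-factorization identity reducing $(\Kb,\Qb)$ to a rank-constrained nuclear-norm path is exactly the paper's Lemma~\ref{kqw mapping}, and your upper/lower loss estimates via exponential softmax bounds mirror the paper's Step~1/Step~2 score decomposition. The only cosmetic difference is that you argue directly---combining the two loss estimates into a margin lower bound $\min_{i,t}\inr{\A_{it},\Wb_R}\ge R/\tf{\Wm}-\mathcal{O}(1)$ and then invoking the max-margin duality---whereas the paper runs the contrapositive, positing a $\delta$-margin violation and deriving $\Lc(R\Wsb)<\Lc(\Wb_R)$ as a contradiction; the underlying inequalities are identical.
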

Theorem~\ref{thm global reg path} demonstrates that the RP of the $(\Kb,\Qb)$-parameterization converges to a max-margin solution of \eqref{eqn:sattnsvmst} with nuclear norm objective on $\W = \Kb \Qb^\top$. When self-attention is directly parameterized by $\W$, the RP converges to the solution of  \eqref{eqn:sattnsvm} with a Frobenius norm objective. This result is the first to distinguish the optimization dynamics of $\W$ and $(\Kb,\Qb)$ parameterizations, revealing the low-rank bias of the latter. These findings also provide a clear characterization of token optimality (Definition \ref{score def}) and extend naturally to the setting with  multiple optimal tokens per sequence (Theorem \ref{local RP thm} in appendix). By definition, the RP captures the global geometry and cannot be used for the implicit bias of GD towards locally-optimal directions. Sections \ref{provable global} and \ref{sec local} accomplish this goal through gradient-descent and localized RP analysis to obtain locally-applicable SVM equivalences. Note that, this theorem requires each input sequence has a unique optimal token per Definition~\ref{score def}. Fortunately, this is a very mild condition as it holds for almost all datasets, namely, as soon as input features are slightly perturbed.

Theorem \ref{thm global reg path} establishes the implicit bias of attention from the perspective of RP analysis. This leads to the question: To what extent is this RP theory predictive of the implicit bias exhibited by GD?  To delve into this, we examine the gradient paths of $\W(k)$ or $(\Kb(k),\Qb(k))$ and present the findings in Figure~\ref{fig path}. We consider a scenario where $n=d=m=2$ and $T=5$, and employ cross-attention, where tokens $(\z_1,\z_2)$ are generated independently of the inputs $(\X_1,\X_2)$. The teal and yellow markers correspond to tokens from $\X_1$ and $\X_2$, respectively. The stars indicate the optimal token for each input. To provide a clearer view of the gradient convergence path, we illustrate the outcomes of training the attention weight $\W$ or $(\Kb,\Qb)$ in the form of $\W\z_i$ or $\Kb\Qb^\top\z_i$, where $i={1,2}$. With reference to Equations (\ref{eqn:sattnsvm}) and (\ref{eqn:sattnsvmst}), the red and blue solid lines in Fig.~\ref{fig path W} delineate the directions of $\Wsf\z_1$ and $\Wsf\z_2$, correspondingly. Conversely, the red and blue solid lines in Fig.~\ref{fig path KQ} show the directions of $\Ws_\star\z_1$ and $\Ws_\star\z_2$. The red/blue arrows denote the corresponding directions of gradient evolution with the dotted lines representing the corresponding separating hyperplanes. Figure~\ref{fig path} provides a clear depiction of the incremental alignment of $\W(k)$ and $\Kb(k)\Qb(k)^\top$ with their respective attention SVM solutions as $k$ increases. This strongly supports the assertions of Theorem~\ref{thm global reg path}.


It is worth noting that \eqref{eqn:sattnsvmst} imposes a nonconvex rank constraint, i.e., $\texttt{rank}(\W)\leq m$. Nevertheless, this constraint becomes inconsequential if the unconstrained problem, with $m$ set to be greater than or equal to $d$, admits a low-rank solution, as demonstrated in Lemma \ref{lem:rank}. Consequently, in our experimental endeavors, we have the flexibility to employ the unconstrained attention SVM for predicting the implicit bias. This concept is succinctly summarized by the following lemma.

\begin{lemma} 
Let $\Wc^\svm_\star$ be the solution set of \eqref{eqn:sattnsvmst} with nuclear norm achieving objective $C_\st$. Further let $\Wcs_{\texttt{cvx}}$ be the solution set of \eqref{eqn:sattnsvmst} with $m=d$ achieving objective $C_{\texttt{cvx}}$. If $\Wcs_\st\cap \Wcs_{\texttt{cvx}}\neq\emptyset$, then $C_\st=C_{\texttt{cvx}}$ and $\Wcs_\st\subseteq \Wcs_{\texttt{cvx}}$. Also, if the elements of $\Wcs_{\texttt{cvx}}$ have rank at most $m$, then,  $\Wcs_\st=\Wcs_{\texttt{cvx}}$.  
\end{lemma}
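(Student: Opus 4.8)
The plan is to exploit the fact that \eqref{eqn:sattnsvmst} with rank constraint $\texttt{rank}(\W)\le m$ is a restriction of the same problem with $m=d$ (which drops the rank constraint), so the two feasible sets are nested: any $\W$ feasible for the rank-$m$ problem is feasible for the convex ($m=d$) problem. First I would record this nesting of feasible sets, which immediately yields $C_{\texttt{cvx}}\le C_\st$, since minimizing the nuclear norm over a larger feasible set cannot increase the optimal value. Next, under the hypothesis $\Wcs_\st\cap\Wcs_{\texttt{cvx}}\neq\emptyset$, pick any $\W_0$ in the intersection. Because $\W_0\in\Wcs_\st$ it is feasible with rank at most $m$ and achieves nuclear norm $C_\st$; because $\W_0\in\Wcs_{\texttt{cvx}}$ it achieves the convex optimum $C_{\texttt{cvx}}$. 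Hence $C_\st=\tnuc{\W_0}=C_{\texttt{cvx}}$, establishing the first claimed equality.

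For the inclusion $\Wcs_\st\subseteq\Wcs_{\texttt{cvx}}$, take any $\W\in\Wcs_\st$. It is feasible for \eqref{eqn:sattnsvm}'s constraints, has rank at most $m\le d$, so it is feasible for the convex problem; and $\tnuc{\W}=C_\st=C_{\texttt{cvx}}$, so it attains the convex minimum, i.e.\ $\W\in\Wcs_{\texttt{cvx}}$. For the final assertion, suppose every element of $\Wcs_{\texttt{cvx}}$ has rank at most $m$. Given the inclusion already proved, it remains to show $\Wcs_{\texttt{cvx}}\subseteq\Wcs_\st$: take $\W\in\Wcs_{\texttt{cvx}}$; by hypothesis $\texttt{rank}(\W)\le m$ so $\W$ is feasible for the rank-constrained problem, and $\tnuc{\W}=C_{\texttt{cvx}}=C_\st$ shows it is optimal there, hence $\W\in\Wcs_\st$. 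Combining the two inclusions gives $\Wcs_\st=\Wcs_{\texttt{cvx}}$.

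I do not anticipate a genuine obstacle here — the statement is essentially a formal bookkeeping argument about nested optimization problems sharing a common optimizer. The only point requiring a little care is making sure the rank bound inequality $m\le d$ is used correctly so that rank-$m$ feasibility implies rank-$d$ feasibility (trivially true), and that in the last step one does not need the intersection hypothesis separately, since the rank-at-most-$m$ hypothesis on $\Wcs_{\texttt{cvx}}$ together with nonemptiness of $\Wcs_{\texttt{cvx}}$ already forces the intersection to be nonempty, so $C_\st=C_{\texttt{cvx}}$ is automatic in that case as well. I would phrase the write-up so that the two bullet hypotheses of the lemma are treated in order, reusing the equality $C_\st=C_{\texttt{cvx}}$ once established.
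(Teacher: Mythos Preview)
Your argument is correct and is exactly the natural one: exploit the nesting of feasible sets to get $C_{\texttt{cvx}}\le C_\st$, use a common optimizer to force equality, and then push elements back and forth using the rank hypothesis. The paper in fact states this lemma without giving a proof, treating it as an immediate observation; your write-up supplies precisely the routine verification one would expect, including the minor subtlety you flag about the second hypothesis automatically forcing the intersection to be nonempty.
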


\section{Global Convergence of Gradient Descent}\label{provable global}

In this section, we will establish conditions that guarantee the global convergence of GD. Concretely, we will investigate when GD solution selects the \emph{optimal token within each input sequence} through the softmax nonlinearity and coincides with the solution of the RP. Section~\ref{sec local} will complement this with showing that self-attention can more generally converge to locally-optimal max-margin directions. We identify the following conditions as provable catalysts for global convergence: (i) Optimal tokens have relatively large scores; (ii) Initial gradient direction is favorable; (iii) Overparameterization, i.e.~$d$ is appropriately large. 

\subsection{Properties of optimization landscape}
We start by establishing some fundamental properties of Objectives \eqref{eqn:erm:w} and \eqref{eqn:erm:kq}.
\begin{lemma}\label{lem:lip}
Under Assumption~\ref{assum:loss:prope}, $ \nabla\Lc(\W)$,   $ \nabla_{\Kb} \Lc(\Kb,\Qb)$, and  $\nabla_{\Qb} \Lc(\Kb,\Qb)$ are $L_{\W}$,  $L_{\Kb}$, $L_{\Qb}$--Lipschitz continuous, respectively, where $a_i=\|\vb\|~\|\z_i\|^2 \|\X_i \|^3$,  $b_i= M_0\|\vb\|~\|\X_i\|+ 3  M_1 $ for all $i\in[n]$,
\begin{align}\label{eqn:lip:cons:erm}
L_{\W}:=\frac{1}{n}\sum_{i=1}^{n} a_i b_i, \quad L_{\Kb}:= \|\Qb\|L_{\W}, \quad \textnormal{and} \quad L_{\Qb}:= \|\Kb\|L_{\W}.
\end{align}
\end{lemma}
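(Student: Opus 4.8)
The plan is to reduce both claims to two elementary smoothness properties of the softmax map and then propagate norms through the composition that defines $\Lc$. Write $\vct{a}_i(\W):=\X_i\W\z_i$, $\vct{s}_i(\W):=\mathbb{S}(\vct{a}_i(\W))$, $\vct{c}_i:=\X_i\vb$, and $f_i(\W):=\vb^\top\X_i^\top\vct{s}_i(\W)=\vct{c}_i^\top\vct{s}_i(\W)$, so that $\Lc(\W)=\frac1n\sum_{i=1}^n\ell(\gamma_i(\W))$ with $\gamma_i(\W):=Y_if_i(\W)$. A first observation I would record is that $\vct{s}_i(\W)$ lies in the probability simplex, hence $\gamma_i(\W)=\sum_t(\vct{s}_i(\W))_t\,\bgam_{it}$ is a convex combination of the token scores and so $|\gamma_i(\W)|\le\|\X_i\|\,\|\vb\|$ \emph{uniformly in} $\W$ (and uniformly over $\Kb\Qb^\top$); therefore the arguments of $\ell$ always lie in one fixed compact interval and the constants $M_0,M_1$ of Assumption~\ref{assum:loss:prope} are legitimately global for this problem.

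Next I would establish the two softmax facts. (a) The Jacobian $J(\vct{a})=\diag{\mathbb{S}(\vct{a})}-\mathbb{S}(\vct{a})\mathbb{S}(\vct{a})^\top$ is positive semidefinite with $\|J(\vct{a})\|\le\tr(J(\vct{a}))=1-\|\mathbb{S}(\vct{a})\|^2\le1$, so $\mathbb{S}$ is $1$-Lipschitz. (b) Writing $\vct{s}=\mathbb{S}(\vct{a})$ and differentiating $J$ along a direction $\vct{h}$ gives $D_{\vct{h}}J=\diag{J\vct{h}}-(J\vct{h})\vct{s}^\top-\vct{s}(J\vct{h})^\top$, whence $\|D_{\vct{h}}J\|\le3\|J\vct{h}\|\le3\|\vct{h}\|$; i.e.\ $\vct{a}\mapsto J(\vct{a})$ is $3$-Lipschitz in operator norm. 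These are the only properties of $\mathbb{S}$ I will use.

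For \eqref{eqn:erm:w} I would compute $\nabla f_i(\W)=\X_i^\top J(\vct{a}_i(\W))\vct{c}_i\z_i^\top$, hence $\nabla\Lc(\W)=\frac1n\sum_i Y_i\ell'(\gamma_i(\W))\,\X_i^\top J(\vct{a}_i(\W))\vct{c}_i\z_i^\top$, and then bound $\|\nabla\Lc(\W)-\nabla\Lc(\W')\|_F$ by splitting each summand into a term where $\ell'$ changes and a term where $\nabla f_i$ changes. Using (a): $|\ell'(\gamma_i(\W))-\ell'(\gamma_i(\W'))|\le M_0|f_i(\W)-f_i(\W')|\le M_0\|\vct{c}_i\|\,\|\vct{s}_i(\W)-\vct{s}_i(\W')\|\le M_0\|\vb\|\,\|\z_i\|\,\|\X_i\|^2\|\W-\W'\|_F$ and $\|\nabla f_i(\W)\|_F\le\|\X_i\|\cdot1\cdot\|\vct{c}_i\|\cdot\|\z_i\|\le\|\vb\|\,\|\z_i\|\,\|\X_i\|^2$; using (b) together with $\|\vct{a}_i(\W)-\vct{a}_i(\W')\|\le\|\X_i\|\,\|\z_i\|\,\|\W-\W'\|_F$: $\|\nabla f_i(\W)-\nabla f_i(\W')\|_F\le\|\X_i\|\,\|J(\vct{a}_i(\W))-J(\vct{a}_i(\W'))\|\,\|\vct{c}_i\|\,\|\z_i\|\le3\|\vb\|\,\|\z_i\|^2\|\X_i\|^3\|\W-\W'\|_F$. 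Multiplying the first two bounds, using $|\ell'|\le M_1$ on the last, and summing gives $\|\nabla\Lc(\W)-\nabla\Lc(\W')\|_F\le\frac1n\sum_i a_ib_i\|\W-\W'\|_F$ with $a_i=\|\vb\|\,\|\z_i\|^2\|\X_i\|^3$ and $b_i=M_0\|\vb\|\,\|\X_i\|+3M_1$, exactly as stated.

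For \eqref{eqn:erm:kq} I would use $\Lc(\Kb,\Qb)=\Lc(\Kb\Qb^\top)$ and the chain rule $\nabla_{\Kb}\Lc(\Kb,\Qb)=\nabla\Lc(\Kb\Qb^\top)\,\Qb$, $\nabla_{\Qb}\Lc(\Kb,\Qb)=\nabla\Lc(\Kb\Qb^\top)^\top\Kb$; since perturbing $\Kb$ moves $\W=\Kb\Qb^\top$ by at most $\|\Qb\|$ times the perturbation (in Frobenius norm), the $\W$-smoothness bound transfers and yields the stated $L_{\Kb}=\|\Qb\|L_{\W}$ (and symmetrically $L_{\Qb}=\|\Kb\|L_{\W}$) for the variation of the loss-gradient at $\W$ along each factor. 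The only step that is not routine bookkeeping is fact (b) — the Lipschitz continuity of the softmax Jacobian, equivalently the second-order smoothness of $\mathbb{S}$ — which is the source of the constant $3$ appearing in $b_i$; everything else is the triangle inequality together with submultiplicativity of operator/Frobenius norms, plus the uniform boundedness of $\gamma_i(\W)$ flagged in the first paragraph that makes Assumption~\ref{assum:loss:prope} applicable.
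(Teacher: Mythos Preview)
Your proof is correct and follows essentially the same route as the paper's: the same gradient formula, the same $1$-Lipschitz bound on $\mathbb{S}$ and $3$-Lipschitz bound on its Jacobian (the paper obtains the latter via the decomposition $\|\text{diag}(s-\dot s)\|+\|ss^\top-\dot s\dot s^\top\|_F\le 3\|s-\dot s\|$ rather than by differentiating $J$, but the constant is identical), and the same add-and-subtract splitting of the gradient difference; your explicit remark that $\gamma_i(\W)$ stays in a fixed compact interval is a nice justification the paper leaves implicit. One small caveat on the $(\Kb,\Qb)$ paragraph: carried out fully, your chain-rule bound for $\nabla_{\Kb}$ picks up \emph{two} factors of $\|\Qb\|$ --- one from the outer multiplication $\nabla\Lc(\W)\Qb$ and one from $\|(\Kb-\dot\Kb)\Qb^\top\|_F\le\|\Qb\|\,\|\Kb-\dot\Kb\|_F$ --- so it naturally yields $\|\Qb\|^2 L_{\W}$ rather than $\|\Qb\|L_{\W}$; the paper's own proof makes exactly the same elision in its final line, so you are matching the paper either way.
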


The next assumption will play an important role ensuring the attention layer has a benign optimization landscape.
\begin{assumption}\label{assum:token}
Optimal tokens' indices $(\op_i)_{i=1}^n$ are unique and one of the following conditions on the tokens holds:
\begin{enumerate}[label={\textnormal{{\textbf{B.\arabic*}}}}, wide, labelwidth=!,itemindent=!, labelindent=1pt]
\item \label{assum:token:supp} All tokens are support vectors, i.e., $ (\x_{i\op_i}-\x_{it})^\top\Ws\z_i= 1$ for all $t\neq \op_i$ and $i\in[n]$.
\item \label{assum:opt:token} The tokens' scores, as defined in Definition~\ref{score def}, satisfy $\bgam_{it}=\bgam_{i\tau}<\bgam_{i\op_i}$,
for all $t,\tau\neq \op_i$ and $i\in[n]$.
\end{enumerate}
\end{assumption}

Assumption \ref{assum:token:supp} is directly linked to overparameterization and holds practical significance. In scenarios such as classical SVM classification, where the goal is to separate labels, overparameterization leads to the situation where \emph{all training points become support vectors}. Consequently, the SVM solution aligns with the least-squares minimum-norm interpolation, a concept established in \cite{muthukumar2021classification, hsu2021proliferation} under broad statistical contexts. Assumption \ref{assum:token:supp} represents an analogous manifestation of this condition. Therefore, in cases involving realistic data distributions with sufficiently large $d$, we expect the same phenomena to persist, causing the SVM solution $\Wm$ to coincide with \eqref{eqn:sattnsvm}. 

Drawing on insights from \cite[Theorem 1]{hsu2021proliferation} and our  Theorem \ref{thm:separation}, we expect that the necessary degree of overparameterization remains moderate. Specifically, in instances where input sequences follow an independent and identically distributed (IID) pattern and tokens exhibit IID isotropic distributions, we posit that $d\gtrsim (T+n)\log(T+n)$ will suffice. More generally, the extent of required overparameterization will be contingent on the covariance of tokens \cite{bartlett2020benign, muthukumar2021classification} and the distribution characteristics of input sequences \cite{wang2022binary}. 

Assumption \ref{assum:opt:token} stipulates that non-optimal tokens possess identical scores which constitutes a relatively stringent assumption that we will subsequently relax. Under Assumption~\ref{assum:token}, we establish that when optimization problem \eqref{eqn:erm:w} is trained using GD, the norm of parameters will diverge. 

\begin{theorem}
\label{diverg:norm:w}
Suppose Assumption~\ref{assum:loss:prope} on the loss function $\ell$ and Assumption \ref{assum:token} on the tokens hold.  

\begin{itemize}
\item 
There is no $\W\in\R^{d\times d}$ satisfying $\nabla \Lc(\W)=0$.
    \item  Algorithm~\ref{GD-W} with the step size $\eta \leq 1 /L_{\W}$ and any starting point $\W(0)$ satisfies 
$\lim_{k\rightarrow\infty} \tf{\W(k)}=\infty$.
\end{itemize}
\end{theorem}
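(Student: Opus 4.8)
The plan is to prove the two bullets in sequence, since the first (no finite stationary point) is the technical heart and the second (divergence of the iterate norm along GD) follows from the first by a standard descent-lemma argument.

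\textbf{Step 1: No finite stationary point.} First I would compute $\nabla\Lc(\W)$ explicitly. Writing $\s_i(\W):=\sft{\X_i\W\z_i}\in\R^T$ and $\gamma_{it}=Y_i\vb^\top\x_{it}$, the chain rule gives
\[
\nabla\Lc(\W)=\frac1n\sum_{i=1}^n \ell'(Y_i\vb^\top\X_i^\top\s_i)\cdot\sum_{t=1}^T \gamma_{it}\,\s_{it}\Big(\x_{it}-\sum_{\tau}\s_{i\tau}\x_{i\tau}\Big)\z_i^\top .
\]
The idea is that if $\nabla\Lc(\W)=0$, I can exhibit a descent direction, contradicting stationarity. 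Consider perturbing in the direction of the \name\ solution $\Ws$: evaluate $\langle\nabla\Lc(\W),\Ws\rangle$. Using the SVM constraints $(\x_{i\op_i}-\x_{it})^\top\Ws\z_i\geq 1$ for $t\neq\op_i$ (with equality under \ref{assum:token:supp}, or with the score structure under \ref{assum:opt:token}), each inner summand, after pushing $\Ws$ through, becomes a sum over $t\neq\op_i$ of $\s_{it}\cdot(\text{negative quantity})$, because the optimal token has the largest score and $\Ws$ separates it. More precisely, $\sum_t\gamma_{it}\s_{it}(\x_{it}-\bar\x_i)^\top\Ws\z_i$ where $\bar\x_i=\sum_\tau\s_{i\tau}\x_{i\tau}$ can be rewritten as a variance-type expression; combining the score ordering $\gamma_{i\op_i}>\gamma_{it}$ with $(\x_{i\op_i}-\x_{it})^\top\Ws\z_i\geq1$ forces this to be strictly positive whenever $\s_i$ is not the one-hot vector at $\op_i$, and since $\ell'<0$ (strictly decreasing loss, Assumption~\ref{assum:loss:prope}(i)), we get $\langle\nabla\Lc(\W),\Ws\rangle<0$ at any finite $\W$ (where $\s_i$ cannot be exactly one-hot). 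Hence $\nabla\Lc(\W)\neq 0$. The main obstacle here is handling the two cases of Assumption~\ref{assum:token} uniformly and making the ``variance-type expression is strictly negative'' step rigorous — I expect \ref{assum:opt:token} (equal non-optimal scores) to make the sum collapse cleanly to $\gamma_{i\op_i}\s_{i\op_i}(1-\s_{i\op_i})$ minus a single common term, while \ref{assum:token:supp} requires invoking tightness of all constraints to bound cross terms.

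\textbf{Step 2: Divergence of the norm.} By Lemma~\ref{lem:lip}, $\nabla\Lc$ is $L_\W$-Lipschitz, so with $\eta\leq 1/L_\W$ the standard descent lemma gives $\Lc(\W(k+1))\leq\Lc(\W(k))-\tfrac{\eta}{2}\tf{\nabla\Lc(\W(k))}^2$, hence $\Lc(\W(k))$ is nonincreasing and convergent, and $\sum_k\tf{\nabla\Lc(\W(k))}^2<\infty$, so $\nabla\Lc(\W(k))\to 0$. Now suppose for contradiction that $\tf{\W(k)}$ stays bounded along a subsequence; then by compactness $\W(k_j)\to\W_\infty$ for some finite $\W_\infty$, and by continuity $\nabla\Lc(\W_\infty)=0$, contradicting Step 1. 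Therefore $\tf{\W(k)}\to\infty$. One should also confirm $\Lc$ is bounded below — it is, since $\ell$ applied to the bounded quantity $Y_i\vb^\top\X_i^\top\s_i$ (a convex combination of tokens) is bounded on the relevant interval by Assumption~\ref{assum:loss:prope} — and that Lemma~\ref{lem min risk} identifies $\lim_k\Lc(\W(k))=\Lc_\st$, which pins down that the divergence direction is toward separating the optimal tokens, though that refinement is not strictly needed for the stated claim.

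I expect Step 1 to be the crux; Step 2 is routine once it is in hand. A subtlety worth flagging is that the gradient formula involves $\z_i^\top$ as a right factor, so $\langle\nabla\Lc(\W),\Ws\rangle$ naturally produces the bilinear forms $(\x_{it}-\bar\x_i)^\top\Ws\z_i$ appearing in the SVM constraints — this is exactly why \name\ is the ``right'' object to test against, and keeping track of this alignment is what makes the contradiction go through.
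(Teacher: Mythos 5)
Your proposal matches the paper's proof: the first bullet is exactly the paper's Lemma on negative gradient correlation with $\Wm$ (computing $\langle\nabla\Lc(\W),\Wm\rangle$ as a covariance-type expression $\hbm^\top\mathrm{diag}(\s)\bgam-\hbm^\top\s\s^\top\bgam$ and using Assumption~\ref{assum:token} to kill the cross terms between non-optimal tokens — under \ref{assum:opt:token} because the scores coincide, under \ref{assum:token:supp} because the tight constraints make $\hbm_{it}$ constant over $t\neq\op_i$, so "vanish" rather than merely "bound"), and the second bullet is the same descent-lemma-plus-no-stationary-points contradiction. Your explicit subsequence/compactness step is a slightly more careful rendering of the paper's terser argument, but the route is identical.
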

The feasibility of SVM (per Theorem \ref{thm:separation}) is a necessary condition for the convergence of GD to the $\Wm$ direction. However, it does not inform us about the optimization landscape. Two additional criteria are essential for convergence: the absence of stationary points $\nabla\Lc(\W)=0$ and divergence of parameter norm to infinity. Theorem \ref{diverg:norm:w} above precisely guarantees both of these criteria under Assumption \ref{assum:token}. 
\begin{figure}[t]
    \centering
    \subfigure[Convergence behaviour of  GD for $\W$-parameterization]{
        \begin{tikzpicture}
        \node at (0,0) {\includegraphics[height=.23\columnwidth, trim={1.3cm 1.2cm 3.2cm 0}, clip]{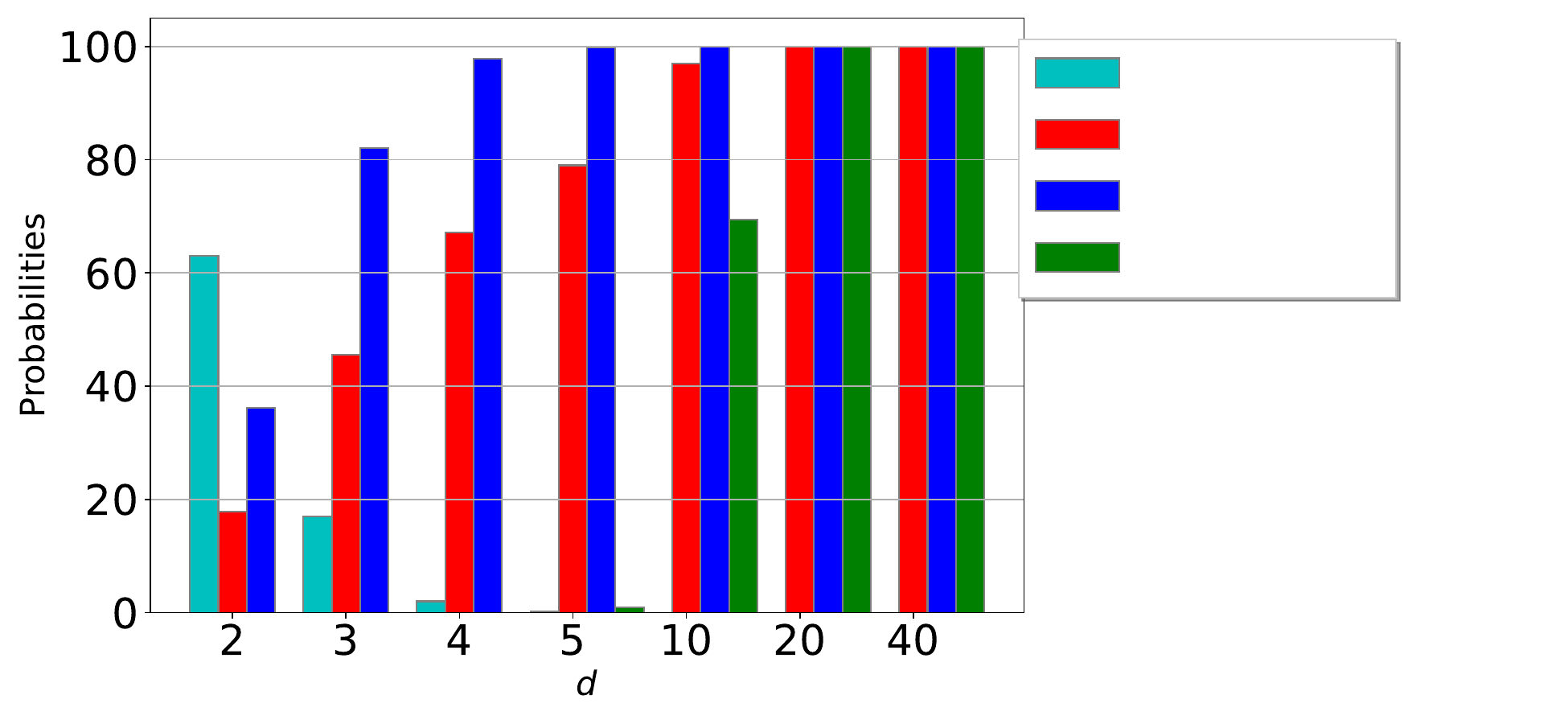}};
        \node[rotate=90] at (-4.,0) {\small{Percentage}};
        \node at (-1,-2.2){\small{Varying $d$}};
        \node at (2.88,1.5){\footnotesize{Not Local}};
        \node at (2.71,1.15){\footnotesize{Global}};
        \node at (2.66,0.8){\footnotesize{Local}};
        \node at (2.95,0.45){\footnotesize{Assum B.1}};
        \end{tikzpicture}
        \label{fig overparam W bar all}
    }
    \hspace{-16pt}
    \subfigure[Convergence  behaviour of GD for $(\Kb,\Qb)$-parameterization]{
        \begin{tikzpicture}
        \node at (0,0) {\includegraphics[height=.23\columnwidth, trim={1.3cm 1.2cm 3.2cm 0}, clip]{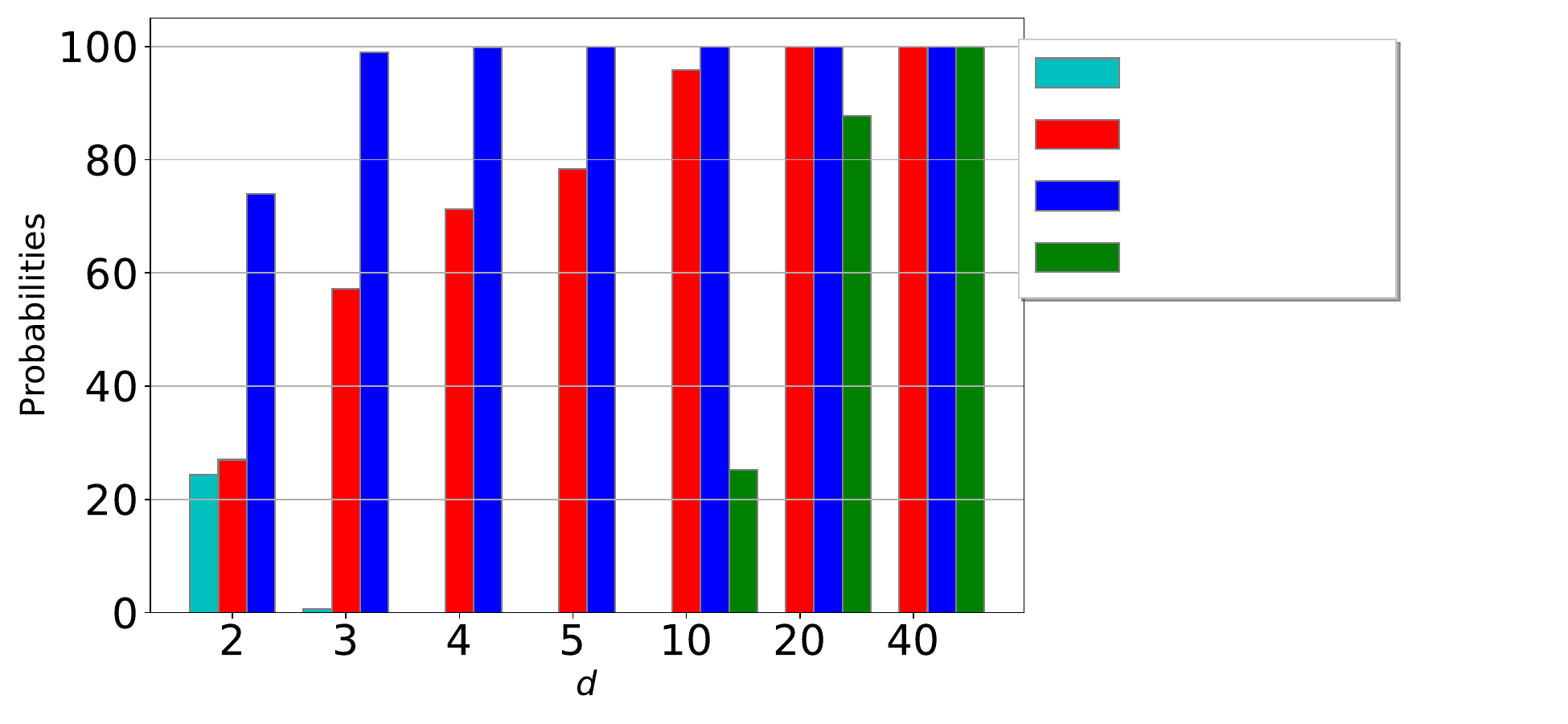}};
        \node[rotate=90] at (-4.,0) {\small{Percentage}};
        \node at (-1,-2.2){\small{Varying $d$}};
        \node at (2.88,1.5){\footnotesize{Not Local}};
        \node at (2.71,1.15){\footnotesize{Global}};
        \node at (2.66,0.8){\footnotesize{Local}};
        \node at (2.95,0.45){\footnotesize{Assum B.1}};
        \end{tikzpicture}
        \label{fig overparam KQ bar all}
    }
    \caption{Percentage of different convergence types of GD when training cross-attention weights \textbf{(a)}: $\W$ or \textbf{(b)}: $(\Kb,\Qb)$ with varying $d$. In both figures, red, blue, and teal bars represent the percentages of Global, Local (including Global), and Not Local convergence, respectively. The green bar corresponds to Assumption \ref{assum:token:supp} where all tokens act as support vectors. Larger overparameterization ($d$) relates to a higher percentage of globally-optimal SVM convergence.}
    \label{fig overparam bar}
\end{figure}
\subsection{Provable global convergence of 1-layer transformer}
In the quest for understanding the global convergence of a 1-layer transformer,  \cite[Theorem 2]{tarzanagh2023margin} provided the first global convergence analysis of the attention in a restrictive scenario where $n=1$ and under the assumption \ref{assum:opt:token}. Here, we present two new conditions for achieving global convergence towards the max-margin direction $\Wm$ based on: \textbf{(I)} the initial gradient direction, and \textbf{(II)} over-parameterization. For the first case, we provide precise theoretical guarantees. For the second, we offer strong empirical evidence, supported by Theorem \ref{diverg:norm:w}, and a formal conjecture described in Section \ref{sec overparam}. We remind the reader that we optimize the attention weights $\W$ while fixing the linear prediction head $h(\x)=\vb^\top\x$. This approach avoids trivial convergence guarantees where an over-parameterized $h(\cdot)$—whether it is a linear model or an MLP—can be used to achieve zero training loss without providing any meaningful insights into the functionality of the attention mechanism.
%
%
%
%

\paragraph{(I) Global convergence under good initial gradient.} To ensure global convergence, we identify an assumption that prevents GD from getting trapped at suboptimal tokens that offer no scoring advantage compared to other choices. To establish a foundation for providing the convergence of GD to the globally optimal solution $\Ws$, we need the following definitions.  For parameters $\mu >0$ and $R>0$, define
\begin{align}\label{eqn:con:nabla0:main}
\conb_{\mu,R}:=\Bigg\{ \tf{\W}\geq R~\Big|~  \li(\x_{i\op_i}-\x_\itt)\z_i^\top, \frac{\W}{\tf{\W}}\ri\geq \mu \quad \textnormal{for all}\quad t\neq \op_i,~  i\in[n]\Bigg\}.
\end{align}
This is the set of all $\W$s that separate the optimal tokens from the rest with margin $\mu$. We will show that, for any $\mu>0$, the optimization landscape of this set is favorable and, if the updates remain in the set, the gradient descent will maximize the margin and find $\Wm$.

\begin{assumption}[First GD Step is Separating]\label{assum:nabla0} For some $\iota>0$ and all $t\neq \op_i,~i\in[n]$: $ (\x_{it}-\x_{i\op_i})^\top\nabla\Lc(0)\z_i\geq \iota$.
\end{assumption}

\begin{theorem}\label{conv:gd:w:global:nabla0}
Suppose Assumption~\ref{assum:loss:prope} on the loss function $\ell$ and Assumption \ref{assum:nabla0} on the initial gradient hold. 
\begin{itemize}
\item  For any $\mu>0$, there exists  $R>0$ such  that   $\conb_{\mu,R}$ does not contain any  stationary points. 
\item Fix any $\mu \in  (0, \iota/\tf{\nabla \Lc(0)})$. Consider GD iterations with $\W(0)=0$, $\W(1)=-R\nabla\,\Lc(0)/\tf{\nabla\Lc(0)}$, and $\W(k+1)=\W(k)-\eta\nabla\Lc(\W(k))$ for $k\ge 1$, where $\eta\le 1/L_{\W}$ and $R$ sufficiently large. If all iterates remain within $\conb_{\mu,R}$, then $\lim_{k\rightarrow\infty} \tf{\W(k)}=\infty$ and $\lim_{k\rightarrow\infty}\frac{\W(k)}{\tf{\W(k)}}=\frac{\Wm}{\tf{\Wm}}$.
\end{itemize}
\end{theorem}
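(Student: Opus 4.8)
\textbf{Setup: gradient identity, softmax saturation, feasibility.} I would work from the exact gradient of \eqref{eqn:erm:w}. With $\s_i=\s_i(\W):=\sft{\X_i\W\z_i}\in\R^T$ and $f_i=f_i(\W):=\vb^\top\X_i^\top\s_i$, differentiating through the softmax gives
\begin{equation*}
\nabla\Lc(\W)=\frac1n\sum_{i=1}^n\ell'(Y_if_i)\,Y_i\,\X_i^\top\big(\diag{\s_i}-\s_i\s_i^\top\big)\X_i\vb\,\z_i^\top,
\end{equation*}
hence for any $\M\in\R^{d\times d}$,
\begin{equation*}
\langle\nabla\Lc(\W),\M\rangle=\frac1n\sum_{i=1}^n\ell'(Y_if_i)\,\mathrm{Cov}_{\s_i}\!\big(\bgam_i,\,\X_i\M\z_i\big),
\end{equation*}
where $\bgam_i=(\bgam_{it})_{t\in[T]}$ is the score vector (Definition~\ref{score def}) and $\mathrm{Cov}_{\s_i}$ is covariance under the probability vector $\s_i$. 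Next, any $\W\in\conb_{\mu,R}$ obeys $\x_{i\op_i}^\top\W\z_i-\x_\itt^\top\W\z_i\ge\mu\tf{\W}\ge\mu R$ for $t\ne\op_i$, so $\s_{it}\le e^{-\mu R}$; thus $\s_{i\op_i}\to1$, $f_i\to\vb^\top\x_{i\op_i}$ and $\ell'(Y_if_i)\to\ell'(\bgam_{i\op_i})<0$, all uniformly on $\conb_{\mu,R}$ as $R\to\infty$. Finally, Assumption~\ref{assum:nabla0} makes $-\nabla\Lc(0)/\iota$ a feasible point of \eqref{eqn:sattnsvm}, so $\Wm$ exists and is the unique minimizer of a strongly convex problem. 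Throughout I use the hypothesis that $\W(k)\in\conb_{\mu,R}$ for all $k\ge1$, and I take the optimal tokens $\op_i$ to be unique, as elsewhere in the paper.

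\textbf{First bullet: no stationary points in $\conb_{\mu,R}$.} Apply the covariance identity with $\M=\Wm$ and subtract the constant $(\X_i\Wm\z_i)_{\op_i}$ inside each covariance (legitimate since $\sum_t\s_{it}(\bgam_{it}-\bar\bgam_i)=0$, writing $\bar\bgam_i$ for the $\s_i$-mean of $\bgam_i$). The $t=\op_i$ term vanishes, and each $t\ne\op_i$ term equals $\s_{it}\,(\bgam_{it}-\bar\bgam_i)\,\big((\x_\itt-\x_{i\op_i})^\top\Wm\z_i\big)$. Feasibility of \eqref{eqn:sattnsvm} gives $(\x_\itt-\x_{i\op_i})^\top\Wm\z_i\le-1<0$, and the saturation bound makes $\bar\bgam_i>\max_{t\ne\op_i}\bgam_{it}$ once $R$ is large, so $\bgam_{it}-\bar\bgam_i<0$; hence each term is strictly positive and $\mathrm{Cov}_{\s_i}(\bgam_i,\X_i\Wm\z_i)>0$. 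Since $\ell'<0$, $\langle\nabla\Lc(\W),\Wm\rangle<0$ for every $\W\in\conb_{\mu,R}$, which forbids stationary points and pins down the $R$ of the first bullet. The same calculation yields the quantitative bound $-\langle\nabla\Lc(\W),\Wm\rangle\gtrsim\sum_i\sum_{t\ne\op_i}\s_{it}\gtrsim\tf{\nabla\Lc(\W)}$, reused below.

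\textbf{Second bullet, norm divergence.} First, $\W(1)=-R\,\nabla\Lc(0)/\tf{\nabla\Lc(0)}\in\conb_{\mu,R}$: by Assumption~\ref{assum:nabla0} and $\mu<\iota/\tf{\nabla\Lc(0)}$, $\langle(\x_{i\op_i}-\x_\itt)\z_i^\top,\W(1)/\tf{\W(1)}\rangle=(\x_\itt-\x_{i\op_i})^\top\nabla\Lc(0)\z_i/\tf{\nabla\Lc(0)}\ge\iota/\tf{\nabla\Lc(0)}>\mu$. By Lemma~\ref{lem:lip} the gradient is globally $L_\W$-Lipschitz, so for $\eta\le1/L_\W$ the descent lemma gives $\Lc(\W(k+1))\le\Lc(\W(k))-\tfrac{\eta}{2}\tf{\nabla\Lc(\W(k))}^2$ for $k\ge1$; since \eqref{eqn:sattnsvm} is feasible, Lemma~\ref{lem min risk} bounds $\Lc$ below by $\Lc_\st$, so $\Lc(\W(k))$ converges and $\nabla\Lc(\W(k))\to0$. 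If $\tf{\W(k)}$ stayed bounded, a subsequential limit $\W^\star$ would lie in the closed set $\conb_{\mu,R}$ with $\nabla\Lc(\W^\star)=0$, contradicting the first bullet; hence $\limsup_k\tf{\W(k)}=\infty$. Using the KKT expansion $\Wm=\sum_{i,\,t\ne\op_i}\nu_{it}(\x_{i\op_i}-\x_\itt)\z_i^\top$ with $\nu_{it}\ge0$ (not all zero), the cone condition gives $\langle\W,\Wm\rangle\ge\mu\tf{\W}\sum\nu_{it}$, i.e. $\langle\W(k),\Wm\rangle\ge c\tf{\W(k)}$ for a fixed $c>0$; moreover $\langle\W(k+1)-\W(k),\Wm\rangle=-\eta\langle\nabla\Lc(\W(k)),\Wm\rangle>0$ for $k\ge1$, so $\langle\W(k),\Wm\rangle$ is increasing, whence $\langle\W(k),\Wm\rangle\to\infty$ and therefore $\tf{\W(k)}\ge\langle\W(k),\Wm\rangle/\tf{\Wm}\to\infty$. (This parallels Theorem~\ref{diverg:norm:w}, with the cone replacing Assumption~\ref{assum:token}.)

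\textbf{Second bullet, directional convergence, and the main obstacle.} It remains to upgrade to $\W(k)/\tf{\W(k)}\to\Wm/\tf{\Wm}$. I would follow the implicit-bias template of \cite{soudry2018implicit,ji2020gradient,tarzanagh2023margin}: let $\rho(\W):=\min_{i,\,t\ne\op_i}(\x_{i\op_i}-\x_\itt)^\top\W\z_i$ be the margin attained by $\W$; optimality of \eqref{eqn:sattnsvm} gives the a priori bound $\rho(\W)\le\tf{\W}/\tf{\Wm}$ with equality only at $\W\propto\Wm$, so by uniqueness of $\Wm$ it suffices to prove $\rho(\W(k))\tf{\Wm}/\tf{\W(k)}\to1$, equivalently $\cos(\W(k),\Wm):=\langle\W(k),\Wm\rangle/(\tf{\W(k)}\tf{\Wm})\to1$. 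For this I would combine the correlation bound $-\langle\nabla\Lc(\W),\Wm/\tf{\Wm}\rangle\gtrsim\tf{\nabla\Lc(\W)}$ from the previous paragraph with the sharper fact that inside $\conb_{\mu,R}$, as $\tf{\W}\to\infty$,
\begin{equation*}
-\nabla\Lc(\W)=\frac1n\sum_{i=1}^n|\ell'(\bgam_{i\op_i})|\sum_{t\ne\op_i}\s_{it}\,(\x_{i\op_i}-\x_\itt)\z_i^\top+(\text{lower order}),
\end{equation*}
with the softmax mass $\s_{it}$ concentrating on the tokens whose \eqref{eqn:sattnsvm} constraint is active; feeding these into $\tf{\W(k+1)}^2-\langle\W(k+1),\Wm/\tf{\Wm}\rangle^2$ shows this gap is eventually non-increasing up to summable $O(\eta^2\tf{\nabla\Lc(\W(k))}^2)$ errors (summability from the descent lemma), which forces $\cos(\W(k),\Wm)\to1$. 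The hard part is this alignment step: unlike the single-input case of \cite[Theorem~2]{tarzanagh2023margin}, the gradient aggregates contributions from every pair $(i,t)$, each damped by a softmax weight whose decay rate depends on the only a priori bounded gap $(\x_{i\op_i}-\x_\itt)^\top\W\z_i\in[\mu\tf{\W},\,O(\tf{\W})]$; one must show, without circularity, that these gaps self-organize so the gradient direction converges to $\Wm$ rather than to some other ray of the conic hull $\{\sum_{i,\,t\ne\op_i}\lambda_{it}(\x_{i\op_i}-\x_\itt)\z_i^\top:\lambda\ge0\}$ — i.e., that the normalized margin is maximized along the trajectory — while uniformly controlling the softmax curvature $\diag{\s_i}-\s_i\s_i^\top$ and the variation of $\ell'$ over $\conb_{\mu,R}$.
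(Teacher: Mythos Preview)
Your treatment of the first bullet and of the norm divergence is essentially correct and close to the paper's: the paper proves the absence of stationary points via Lemma~\ref{glocal cond} (a quantitative version of your covariance computation), verifies $\W(1)\in\conb_{\mu,R}$ exactly as you do, and combines the descent lemma with the strict negativity of $\langle\nabla\Lc(\W),\Wm\rangle$ on $\conb_{\mu,R}$ to force $\tf{\W(k)}\to\infty$. Your KKT argument for $\langle\W(k),\Wm\rangle\to\infty$ is a nice addition the paper does not state explicitly.

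The gap is in the directional-convergence step, and you yourself flag it. The bound $-\langle\nabla\Lc(\W),\Wm/\tf{\Wm}\rangle\gtrsim\tf{\nabla\Lc(\W)}$ that you derive only says the angle between $-\nabla\Lc(\W)$ and $\Wm$ is bounded away from $\pi/2$; it does \emph{not} compare the component of $-\nabla\Lc(\W)$ along $\Wm$ with its component along $\W$, and that comparison is what drives alignment. Concretely, your proposal to show $\tf{\W(k+1)}^2-\langle\W(k+1),\Wm/\tf{\Wm}\rangle^2$ is eventually non-increasing requires controlling the cross term $-2\eta\langle\nabla\Lc(\W(k))_{\perp},\W(k)_{\perp}\rangle$, where $(\cdot)_{\perp}$ denotes the component orthogonal to $\Wm$; nothing in your expansion pins the sign of this term, because the softmax weights $\s_{it}$ along the trajectory can concentrate on \emph{any} subset of non-optimal tokens, not just the SVM support vectors, unless you have already proved alignment. (Also, ``$\rho(\W(k))\tf{\Wm}/\tf{\W(k)}\to1$'' and ``$\cos(\W(k),\Wm)\to1$'' are not literally equivalent, though each implies directional convergence.)

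The paper closes this gap with a dedicated correlation-comparison lemma (Lemma~\ref{lem:glocal:corr}): for every $\pi>0$ there exists $R_\pi$ such that for all $\W\in\conb_{\mu,R_\pi}$,
\[
\Big\langle -\nabla\Lc(\W),\frac{\Wm}{\tf{\Wm}}\Big\rangle\ \geq\ (1-\epsilon)\,\Big\langle -\nabla\Lc(\W),\frac{\W}{\tf{\W}}\Big\rangle,\qquad \frac{1}{1+\pi}=1-\epsilon.
\]
The proof is a two-scenario analysis on the normalized iterate $\Wb=\tf{\Wm}\W/\tf{\W}$: if $\tf{\Wb-\Wm}$ is small the margins $(\x_{i\op_i}-\x_{it})^\top\Wb\z_i$ are all within $O(\epsilon)$ of $1$ and the inequality is immediate; if $\tf{\Wb-\Wm}$ is bounded below, then by SVM optimality some constraint is violated by a definite amount $1-2\nu$, the corresponding token carries exponentially more softmax mass $e^{-(1-2\nu)R\Theta}$ than any token with margin $\geq 1+\pi/2$, and this single term dominates the gradient sums on both sides. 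This is exactly the ``self-organization'' you identify as the obstacle. Once the comparison is in hand, the paper telescopes
\[
\Big\langle \W(k+1)-\W(k),\frac{\Wm}{\tf{\Wm}}\Big\rangle\ \geq\ (1-\epsilon)\Big(\tf{\W(k+1)}-\tf{\W(k)}\Big)-O\big(\eta\,(\Lc(\W(k))-\Lc(\W(k+1)))\big),
\]
sums over $k\geq k_\epsilon$, divides by $\tf{\W(k)}\to\infty$, and sends $\epsilon\downarrow0$. Your outline is missing precisely this lemma and the margin-violation dichotomy that proves it.
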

Note that the second result of Theorem~\ref{conv:gd:w:global:nabla0}, i.e.,
the divergence of parameter norm to infinity and directional convergence requires that all GD iterations remain within $\conb_{\mu,R}$ defined in \eqref{eqn:con:nabla0:main}. In Appendix~\ref{app B4}, we show that if for all $\W \in \conb_{\mu,R}(\Ws)$, $\min_{i \in [n]}\li(\x_{i\op_i}-\x_\itt)\z_i^\top, \W-\eta\nabla\Lc(\W) \ri-\min_{i \in [n]}\li(\x_{i\op_i}-\x_\itt)\z_i^\top, \W \ri$ is lower bounded by $(2\eta\mu/\tf{\Wm})\iprod{-\nabla\mc{L}(\W)}{\Wm}$, then all GD iterations $\W(k)$ remain within $\conb_{\mu,R}$. While this condition may appear complicated, it is essentially a tight requirement for updates to remain within $\conb_{\mu,R}$. Finally, it is worth mentioning that, if a stronger correlation condition between initial gradient $\nabla\Lc(0)$ and $\Wm$ holds, one can also prove that updates remain within a tighter cone around $\Wm$ through ideas developed in Theorem \ref{thm:local:gd} by landing $\W(1)$ around $\Wm$ direction. However, we opt to state here the result for the milder condition  $\conb_{\mu,R}$. 


 \begin{figure}[t]
    \centering
    \subfigure[Global convergence for varying $n,d$]{
        \begin{tikzpicture}
        \node at (0,0) {\includegraphics[height=.25\columnwidth, trim={1.3cm 1.3cm 0 0}, clip]{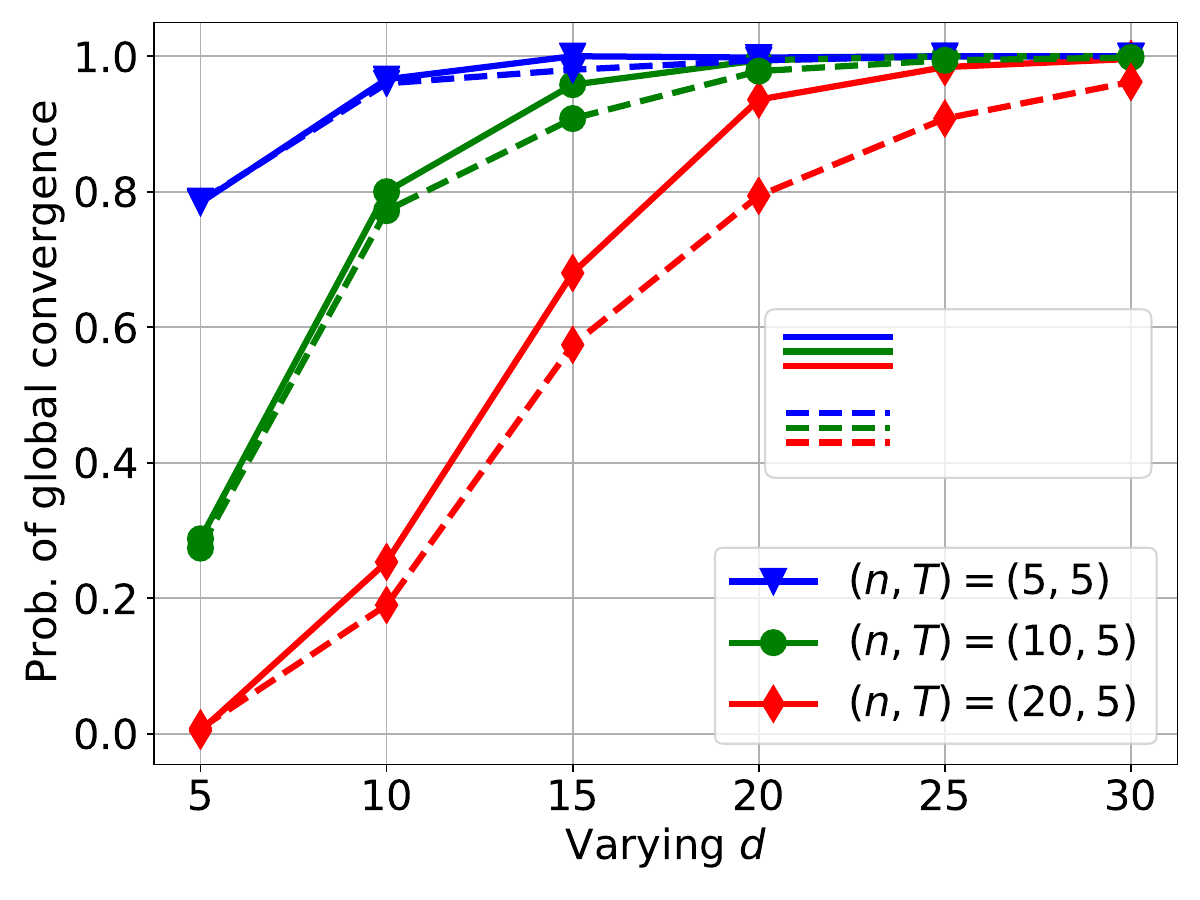}};
        \node at (0,-2.3) {\small{Varying $d$}};
        \node[rotate=90] at (-3.1,0) {\small{Prob. of global convergence}};
        \node at (1.7,0.3){\small{{$\W$}}};
        \node at (1.9,-.1){\small{{$(\Kb,\Qb)$}}};
        \end{tikzpicture}
        \label{fig overparam diff n}
    }
    \hspace{30pt}
    \subfigure[Global convergence for varying $T,d$]{
        \begin{tikzpicture}
        \node at (0,0) {\includegraphics[height=.25\columnwidth, trim={1.3cm 1.3cm 0 0}, clip]{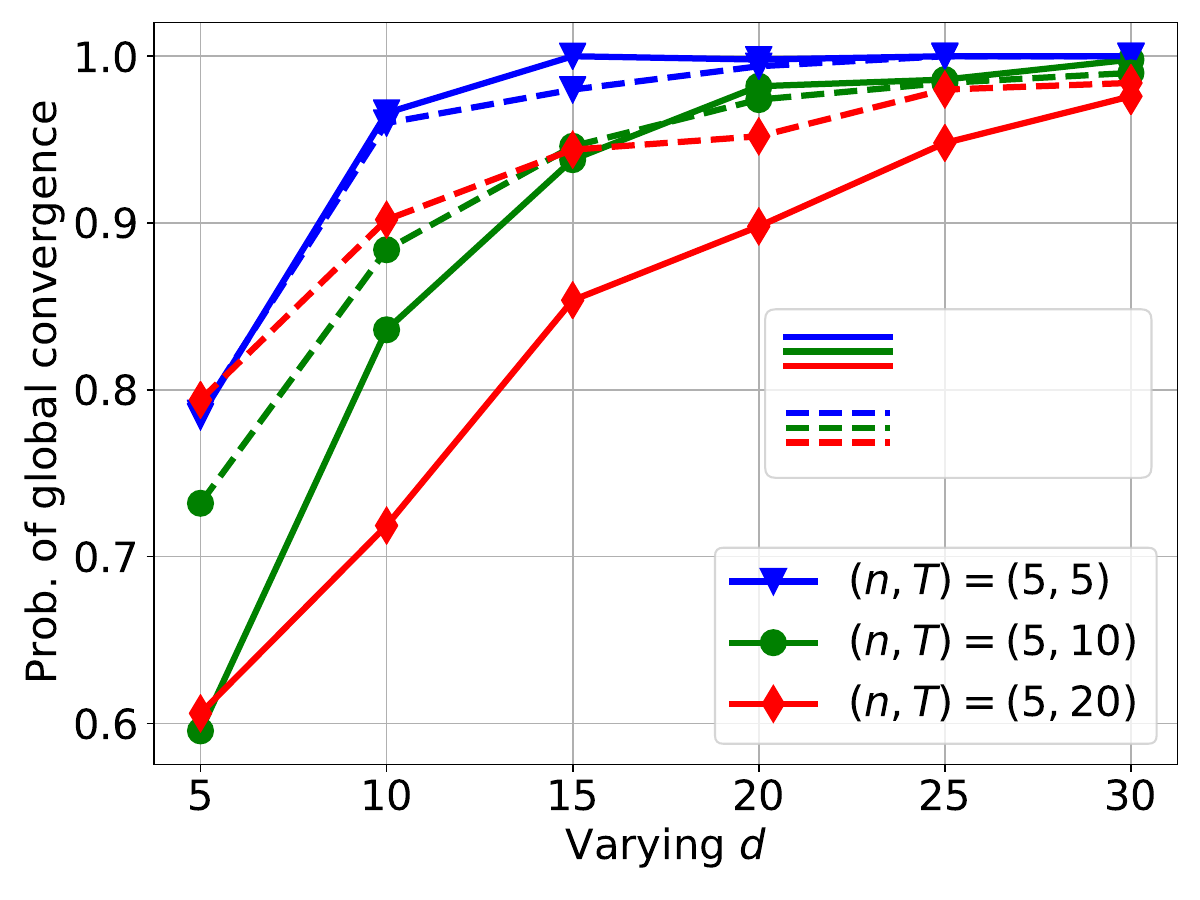}};
        \node at (0,-2.3) {\small{Varying $d$}};
        \node[rotate=90] at (-3.1,0) {\small{Prob. of global convergence}};
        \node at (1.7,0.3){\small{{$\W$}}};
        \node at (1.9,-.1){\small{{$(\Kb,\Qb)$}}};
        \end{tikzpicture}
        \label{fig overparam diff T}
    }
    \caption{Global convergence behavior of GD when training cross-attention weights $\W$ (solid) or $(\Kb,\Qb)$ (dashed) with random data. The blue, green, and red curves represent the probabilities of global convergence for \textbf{(a)}: fixing $T=5$ and varying $n \in \{5,10,20\}$ and \textbf{(b)}: fixing $n=5$ and varying $T \in\{5,10,20\}$. Results demonstrate that for both attention models, as $d$ increases (due to over-parameterization), attention weights tend to select optimal tokens $(\opt_i)_{i=1}^n$.
}
   \label{fig overparam}
\end{figure}

\paragraph{(II) Global convergence via overparameterization.} 
In the context of standard neural networks, overparameterization has been recognized as a pivotal factor for the global convergence of GD \cite{du2018gradient,allen2019convergence,li2018learning,oymak2019overparameterized}. However, conventional approaches like the neural tangent kernel \cite{jacot2018neural} do not seamlessly apply to our scenario, given our assumption on fixed $h$ and the avoidance of achieving zero loss by trivially fitting $h$. Furthermore, even when we train $h$ to achieve zero loss, it doesn't provide substantial insights into the implicit bias of the attention weights $\W$. Conversely, Theorem \ref{diverg:norm:w} illustrates the benefits of over-parameterization in terms of convergence. Considering that Assumption \ref{assum:token:supp} is anticipated to hold as the dimension $d$ increases, the norm of the GD solution is bound to diverge to infinity. This satisfies a prerequisite for converging towards the globally-optimal SVM direction $\Ws$.

The trend depicted in Figure \ref{fig overparam bar}, where the percentage of global convergence (red bars) approaches $100\%$ and Assumption \ref{assum:token:supp} holds with higher probability (green bars) as $d$ grows, reinforces this insight. Specifically, Fig.~\ref{fig overparam W bar all} is similar to Figure~\ref{fig overparam W bar} but with additional green bars representing the percentage of the scenarios where almost all tokens act as support vectors (Assumption~\ref{assum:token:supp}), and Fig.~\ref{fig overparam KQ bar all} displays the same evaluation over $(\Kb,\Qb)$-parameterization setting. In both experiments, and for each chosen $d$ value, a total of $500$ random instances are conducted under the conditions of $n=T=5$. The outcomes are reported in terms of the percentages of Not Local, Local, and Global convergence, represented by the teal, blue, and red bars, respectively. We validate Assumption~\ref{assum:token:supp} as follows: Given a problem instance, we compute the average margin over all non-optimal tokens of all inputs and declare that problem satisfies Assumption~\ref{assum:token:supp}, if the average margin is below 1.1 (where 1 is the minimum). 

%
%

Furthermore, the observations in Figure \ref{fig overparam} regarding the percentages of achieving global convergence reaching 100 with larger $d$ reaffirm that overparameterization leads the attention weights to converge directionally towards the optimal max-margin direction outlined by \eqref{eqn:sattnsvm} and \eqref{eqn:sattnsvmst}. 

In the upcoming section, we will introduce locally-optimal directions, to which GD can be proven to converge when appropriately initialized. We will then establish a condition that ensures the \emph{globally-optimal direction is the sole viable locally-optimal direction}. This culmination will result in a formal conjecture detailed in Section \ref{sec overparam}.

\section{Understanding Local Convergence of 1-Layer Transformer}\label{sec local}

So far, we have primarily focused on the convergence to the global direction dictated by \eqref{eqn:sattnsvm}. In this section, we investigate and establish the local directional convergence of GD as well as RP.
\begin{figure}[t]
    \centering
    \hspace{-10pt}
    \subfigure[Evolution of softmax probabilities]{
        \begin{tikzpicture}
        \node at (0,0) {\includegraphics[height=.22\columnwidth, trim={1.3cm 1.3cm 0 0}, clip]{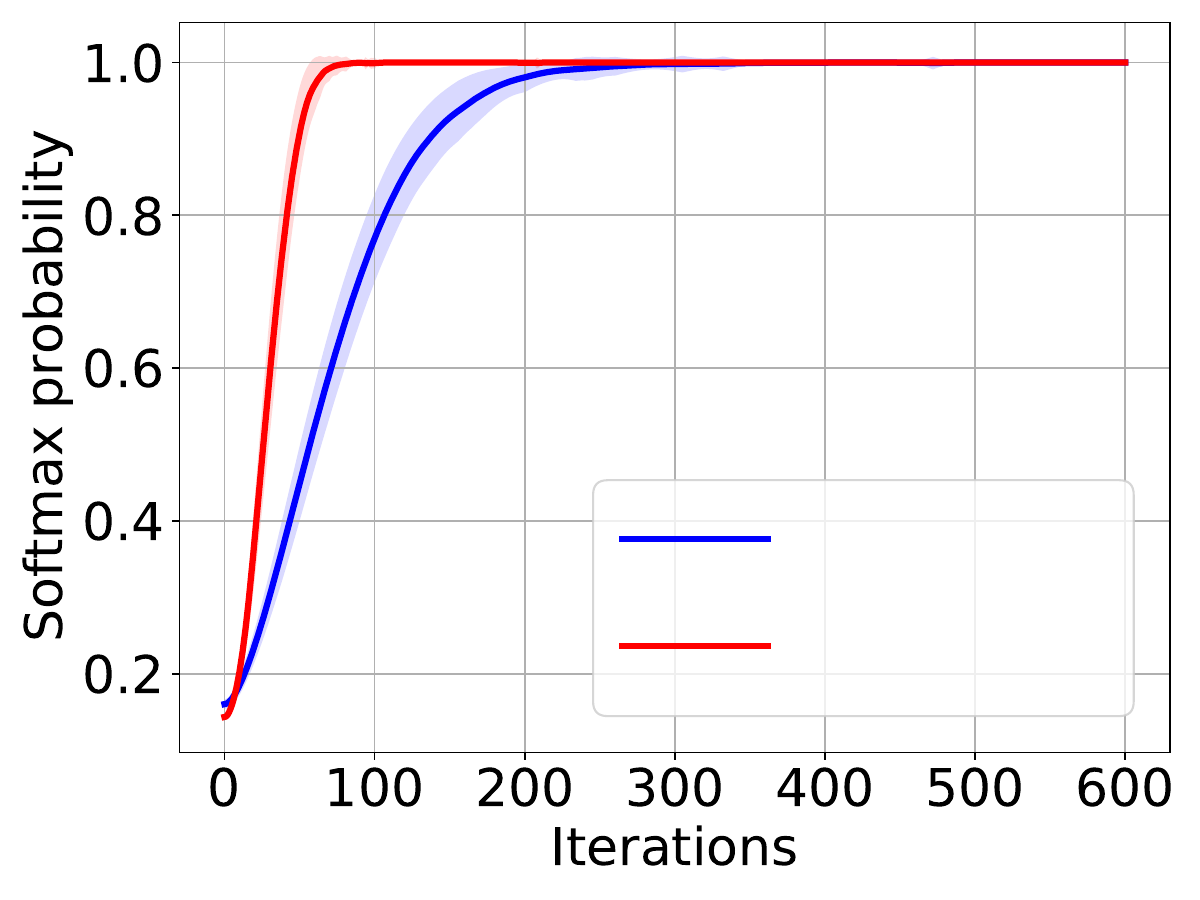}};
        \node at (1.1,-0.55) {\small{$\W$}};
        \node at (1.3,-1.05) {\small{$(\Kb,\Qb)$}};
        \node at (0.2,-2.) {\small{Iterations}};
        \node[rotate=90] at (-2.65,0) {\small{Softmax probability}};
        \end{tikzpicture}
        \label{fig general sfx prob}
    }
    \hspace{-10pt}
    \subfigure[ Corr. coeff. of GD and $\Ws_\bal$]{
        \begin{tikzpicture}
        \node at (0,0) {\includegraphics[height=.22\columnwidth, trim={1.3cm 1.3cm 0 0}, clip]{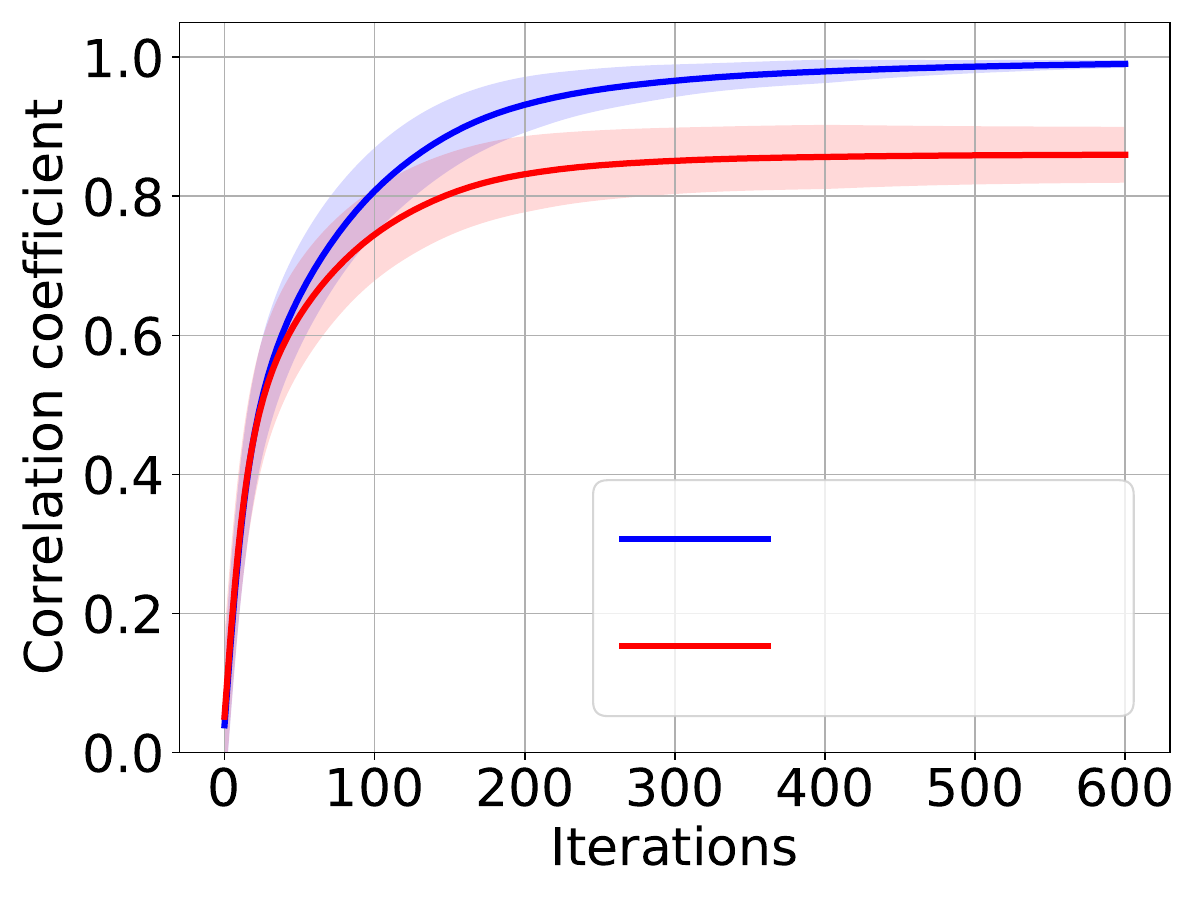}};
        \node at (1.1,-0.55) {\small{$\W$}};
        \node at (1.3,-1.05) {\small{$(\Kb,\Qb)$}};
        \node at (0.3,-2.) {\small{Iterations}};
        \node[rotate=90] at (-2.65,0) {\small{Correlation coefficient}};
        \end{tikzpicture}
        \label{fig general fro corr}
    }
    \hspace{-10pt}
    \subfigure[ Corr. coeff. of GD and $\Ws_{\star,\bal}$]{
        \begin{tikzpicture}
        \node at (0,0) {\includegraphics[height=.22\columnwidth, trim={1.3cm 1.3cm 0 0}, clip]{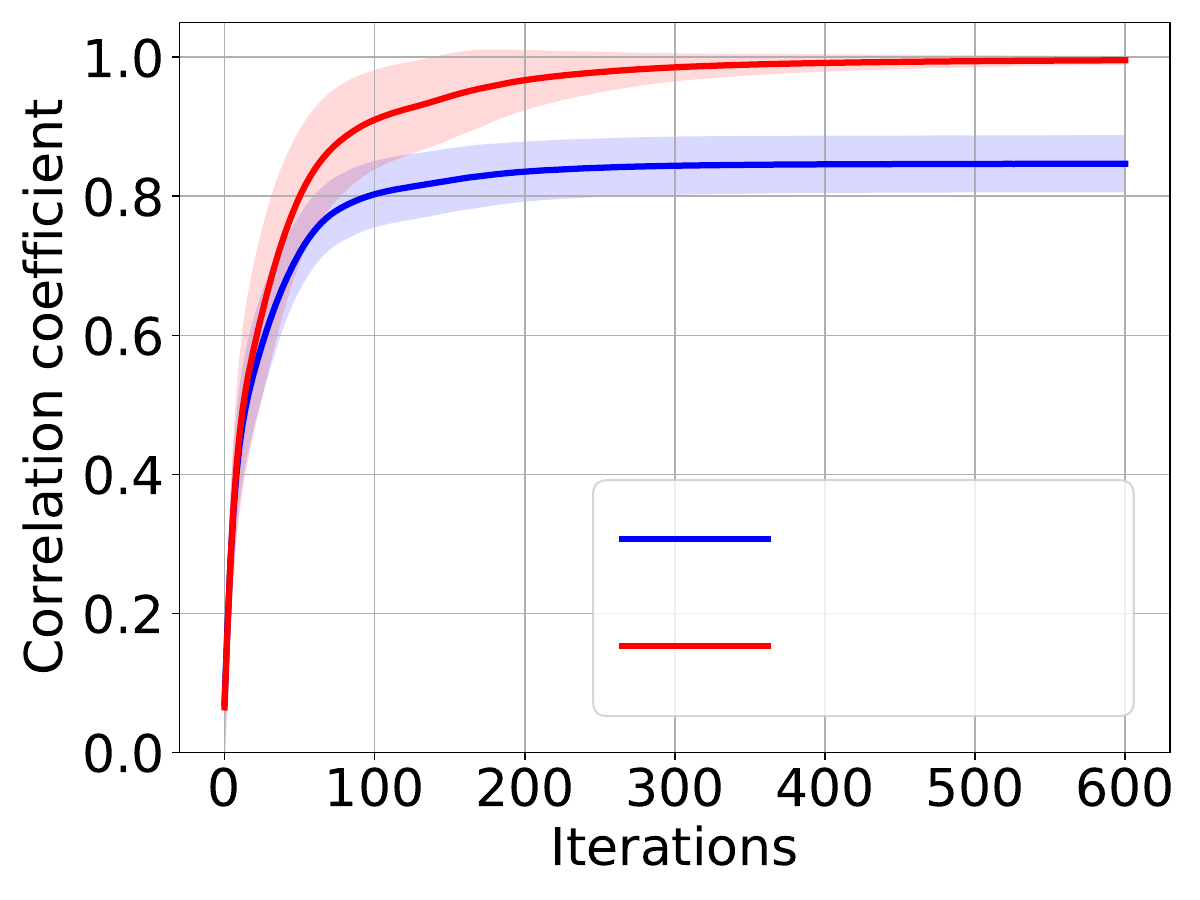}};
        \node at (1.1,-0.55) {\small{$\W$}};
        \node at (1.3,-1.05) {\small{$(\Kb,\Qb)$}};
        \node at (0.3,-2.) {\small{Iterations}};
        \node[rotate=90] at (-2.65,0) {\small{Correlation coefficient}};
        \end{tikzpicture}
        \label{fig general nuc corr}
    }
   \caption{Local convergence behaviour of GD when training cross-attention weights $\W$ (blue) or $(\Kb,\Qb)$ (red) with random data: \textbf{  (a)} displays the largest entry of the softmax outputs averaged over the dataset;  \textbf{(b\&c)} display the Pearson correlation coefficients of GD trajectories and the SVM solutions \textbf{(b)} with the Frobenius norm objective $\Ws_\bal$ (solution of \eqref{eqn:sattnsvm}) and \textbf{(c)} with the nuclear norm objective $\Ws_{\st,\bal}$ 
 (solution of \eqref{eqn:sattnsvmst}). These demonstrate the Frobenius norm bias of $\W(k)$ and the nuclear norm bias of $\Kb(k)\Qb(k)^\top$.}
     \label{fig general}
\end{figure}
\subsection{Local convergence of gradient descent}\label{local GD sec}

To proceed, we introduce locally-optimal directions by adapting Definition 2 of \cite{tarzanagh2023margin}. 

\begin{definition}[\NEIS and Locally-Optimal Direction]\label{def loc opt} 
Fix token indices $\bal=(\alpha_i)_{i=1}^n$. Solve \eqref{eqn:sattnsvm} with $ (\opt_i)_{i=1}^n$ replaced with $\boldsymbol{\alpha} = (\alpha_i)_{i=1}^n$ to obtain $\Wma$. Consider the set $\Tc_i\subset[T]$ such that $(\x_{i\alpha_i}-\x_{it})^\top \Wma \z_i=1$ for all $t\in\Tc_i$. We refer to $(\Tc_i)_{i=1}^n$ as the \neis of $\bal$. Additionally, if for all $i\in[n]$ and $t\in\Tc_i$ scores per Definition~\ref{score def} obey $\bgam_{i\alpha_i}>\bgam_{it}$, indices $\bal=(\alpha_i)_{i=1}^n$ are called \emph{locally-optimal} and $\Wma$ is called a \emph{locally-optimal direction}.
\end{definition}

In words,  the concept of local optimality requires that the selected tokens denoted as $\bal$ should have scores that are higher than the scores of their neighboring tokens referred to as \neis. It is important to observe that the tokens defined as $\op=(\op_i)_{i=1}^n$, which we term as the optimal tokens, inherently satisfy the condition of local optimality. Moving forward, we will provide Theorem $\ref{thm:local:gd}$ which establishes that when the process of GD is initiated along a direction that is locally optimal, it gradually converges in that particular direction, eventually aligning itself with $\Wma$. This theorem immediately underscores the fact that if there exists a direction of local optimality (apart from the globally optimal direction $\Wm$), then when GD commences from any arbitrary starting point, it does not achieve global convergence towards $\Wm$. 


To provide a basis for discussing local convergence of GD, we establish a cone centered around $\Wma$ 
using the following construction. For parameters $\mu \in (0,1)$ and $R>0$, we define $\Cc_{\mu,R}(\Wma)$ as the set of matrices $\W \in\R^{d\times d}$ such that $\tf{\W}\geq R$ and  the correlation coefficient between $\W$ and $\Wma$ is at least $1-\mu$:
%
%
\begin{align}\label{eqn:coneofw:r:main}
\Cc_{\mu,R}({\Wma})=\left\{ \tf{\W}\geq R~\Big|~  \left\langle \frac{\W}{\tf{\W}},\frac{\Wma}{\tf{{\Wma}}} \right\rangle \geq 1-\mu \right\}.
\end{align}
%
%

\begin{theorem}
\label{thm:local:gd} 
Suppose Assumption~\ref{assum:loss:prope} on the loss $\ell$ holds, and let $\bal=(\alpha_i)_{i=1}^n$ be locally optimal tokens according to Definition \ref{def loc opt}. Let $ \Wma$ denote the SVM solution obtained via \eqref{eqn:sattnsvm} by  replacing $(\opt_i)_{i=1}^n$ with $\boldsymbol{\alpha} = (\alpha_i)_{i=1}^n$. 
\begin{itemize}
    \item \label{lem:cond:t1}  There exist parameters $\mu=\mu(\bal) \in (0,1)$ and  $R>0$ such  that   $ \Cc_{\mu,R} (\Wma)$ does not contain any  stationary points.
    \item  Algorithm~\ref{GD-W} with $\eta \leq 1 /L_{\W}$ and any $\W(0) \in \Cc_{\mu,R}(\Wma)$ satisfies $\lim_{k\rightarrow\infty} \tf{\W(k)} = \infty$  and $\lim_{k\rightarrow\infty} \frac{\W(k)}{\tf{\W(k)}} = \frac{\Wma}{\tf{\Wma}}$.
\end{itemize}
\end{theorem}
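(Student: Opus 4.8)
\textbf{Proof proposal for Theorem~\ref{thm:local:gd}.}

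The plan is to mirror the structure of the global-convergence argument (Theorem~\ref{conv:gd:w:global:nabla0}), but localized to the cone $\Cc_{\mu,R}(\Wma)$ around the locally-optimal direction $\Wma$. The key structural fact I will exploit is that, within $\Cc_{\mu,R}(\Wma)$ with $\mu$ small and $R$ large, the softmax $\sft{\X_i\W\z_i}$ is essentially supported on $\alpha_i$ together with the \neis $\Tc_i$: for $t\notin\Tc_i\cup\{\alpha_i\}$ the margin $(\x_{i\alpha_i}-\x_{it})^\top\Wma\z_i$ is strictly greater than $1$, so those tokens' softmax weights are exponentially smaller than those of the \neis. Among the surviving \neis tokens $t\in\Tc_i$, local optimality gives $\bgam_{i\alpha_i}>\bgam_{it}$, so $-\ell'$ is positive and the gradient contribution of each such pair $(\x_{i\alpha_i}-\x_{it})\z_i^\top$ has a strictly positive correlation with any separator of $\alpha$ from $\Tc_i$, in particular with $\Wma$ itself. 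This is exactly the ingredient that made the global proof work, with $\opt$ replaced by $\bal$ and ``all $t\neq\alpha_i$'' replaced by ``$t\in\Tc_i$''.

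Concretely, I would proceed in four steps. \emph{(1) No stationary points in the cone.} Choose $\mu=\mu(\bal)$ small enough that for every $\W\in\Cc_{\mu,R}(\Wma)$ the ordering of the relevant inner products $\langle(\x_{i\alpha_i}-\x_{it})\z_i^\top,\W\rangle$ is preserved (tokens outside $\Tc_i\cup\{\alpha_i\}$ stay strictly below the \neis), then show $\langle-\nabla\Lc(\W),\Wma\rangle>0$: each term $-\ell'(\cdot)$ is positive on the bounded relevant interval by Assumption~\ref{assum:loss:prope}(i)-(ii), and the softmax-weighted sum of $\langle(\x_{i\alpha_i}-\x_{it})\z_i^\top,\Wma\rangle$ is at least $1$ over $\Tc_i$ minus an exponentially small correction from the other tokens; for $R$ large this is bounded below by a positive constant times $\min_i(-\ell'(\bgam_{i\alpha_i}))$ times the dominant softmax mass. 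Hence $\nabla\Lc(\W)\neq0$ on $\Cc_{\mu,R}(\Wma)$. \emph{(2) Norm divergence.} Using Lemma~\ref{lem:lip} (descent lemma with $\eta\le1/L_\W$) and the fact that $\Lc$ is bounded below by $\Lc_{\bal}:=\frac1n\sum_i\ell(\bgam_{i\alpha_i})$ — the analogue of Lemma~\ref{lem min risk} for the locally-optimal token set, which still holds since the attention output is a convex combination of tokens — I get $\sum_k\|\nabla\Lc(\W(k))\|^2<\infty$, so $\nabla\Lc(\W(k))\to0$; combined with step (1) this forces $\|\W(k)\|\to\infty$, provided the iterates stay in the cone. \emph{(3) Cone-invariance.} This is the delicate part: show $\W(k)\in\Cc_{\mu,R}(\Wma)\Rightarrow\W(k+1)\in\Cc_{\mu,R}(\Wma)$. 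One tracks the evolution of the correlation $\langle\W(k)/\|\W(k)\|,\Wma/\|\Wma\|\rangle$; the gradient step adds $-\eta\nabla\Lc(\W(k))$, whose normalized correlation with $\Wma$ exceeds that of $\W(k)$ because $-\nabla\Lc$ points ``into'' the max-margin cone (its dominant piece is a positive combination of the support directions $(\x_{i\alpha_i}-\x_{it})\z_i^\top$, $t\in\Tc_i$, which is precisely what $\Wma$ is the minimum-norm separator of). A cone/almost-monotonicity argument — as in the referenced Appendix~\ref{app B4} construction and in \cite{tarzanagh2023margin} — shows the correlation is non-decreasing up to an error that vanishes as $R\to\infty$, so for $R$ large enough the cone is forward-invariant. \emph{(4) Directional convergence.} Once $\|\W(k)\|\to\infty$ and the iterates are confined to $\Cc_{\mu,R}(\Wma)$, I run the standard margin-maximization argument: define $\overline\W(k)=\W(k)/\|\W(k)\|$; since $-\ell'$ is exponentially tailed (or can be made so on the relevant interval; correlation loss is handled separately), the only way $\Lc(\W(k))\to\Lc_{\bal}$ with diverging norm is for $\overline\W(k)$ to align with the maximizer of the minimal margin $\min_{i,t\in\Tc_i}(\x_{i\alpha_i}-\x_{it})^\top\W\z_i$ subject to $\|\W\|\le1$ — which by definition is $\Wma/\|\Wma\|$. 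Making this rigorous uses a lower bound on the loss in terms of the margin and the fact that $\Wma$ is the unique such maximizer (uniqueness of the norm-minimizing separator under Assumption~\ref{assum:loss:prope} and the locally-optimal tokens being fixed).

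The main obstacle is step (3), cone-invariance: unlike the pure SVM/logistic setting, here the ``labels'' are the implicitly-chosen tokens, and early in training (moderate $\|\W\|$) the softmax has not yet saturated, so the gradient is not cleanly a positive combination of the support directions — there are $O(\text{(number of non-\neis tokens)})$ correction terms. I would control these by taking $R$ (equivalently $\mu$) so that $\|\W(0)\|$ is already large enough that these corrections are dominated; the cost is that the basin of attraction shrinks with the margin gap $\min_{i,t\notin\Tc_i\cup\{\alpha_i\}}[(\x_{i\alpha_i}-\x_{it})^\top\Wma\z_i-1]$ between the \neis and the remaining tokens, which is fine since the statement only asserts existence of \emph{some} $\mu,R$. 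A secondary technical point is that Assumption~\ref{assum:loss:prope} only gives a decreasing, Lipschitz-gradient loss, not literally exponential tails, so in step (4) I must either invoke the exponential-tail refinement used elsewhere in the paper or argue directly that the normalized iterate's limit points are KKT points of \eqref{eqn:sattnsvm} with tokens $\bal$, which by uniqueness is $\Wma/\|\Wma\|$.
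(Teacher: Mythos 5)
Your Steps (1) and (2) match the paper's argument (its Lemma on the absence of stationary points in $\Cc_{\mu,R}(\Wma)$, plus the descent lemma forcing $\nabla\Lc(\W(k))\to 0$ and hence $\tf{\W(k)}\to\infty$). The genuine gap is in your Step (4). You argue that, once the norm diverges inside the cone, ``the only way $\Lc(\W(k))\to\Lc_{\bal}$ is for $\overline{\W}(k)$ to align with the maximizer of the minimal margin.'' This is false: \emph{every} direction in the interior of $\Cc_{\mu,R}(\Wma)$ that separates $\alpha_i$ from its neighbors drives the loss to the same limit $\Lc_{\bal}$ as the norm diverges, so the asymptotic loss value carries no information about which separating direction is selected. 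The loss landscape is asymptotically flat over all separating directions; what singles out $\Wma$ is a property of the gradient field at \emph{finite} norm, not of the loss limit. Your fallback — exponential tails — is not available under Assumption~\ref{assum:loss:prope}, which admits e.g.\ the correlation loss $\ell(u)=-u$. The paper's mechanism is instead a gradient-correlation inequality: for every $\pi>0$ there is $R_\pi$ such that for all $\W\in\Cc_{\mu,R_\pi}(\Wma)$,
\begin{align*}
\Big\langle -\nabla\Lc(\W),\tfrac{\Wma}{\tf{\Wma}}\Big\rangle \;\geq\; \tfrac{1}{1+\pi}\,\Big\langle -\nabla\Lc(\W),\tfrac{\W}{\tf{\W}}\Big\rangle,
\end{align*}
proved by a two-scenario analysis (if $\W/\tf{\W}$ is $\eps$-close to $\Wma/\tf{\Wma}$ the margins are within $O(\eps)$ of $1$; if it is $\eps$-far, some constraint is violated by $2\nu$ and the corresponding softmax weights dominate, making $\langle-\nabla\Lc(\W),\W\rangle$ strictly smaller). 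Telescoping $\langle\W(k+1)-\W(k),\Wma/\tf{\Wma}\rangle\geq(1-\eps)(\tf{\W(k+1)}-\tf{\W(k)}-\tf{\W(k+1)-\W(k)}^2)$ and dividing by $\tf{\W(k)}\to\infty$ then yields $\liminf_k\langle\overline{\W}(k),\overline{\Wma}\rangle\geq1-\eps$ for every $\eps$, which is the directional convergence. Without this inequality (or an equivalent quantitative comparison of the gradient's alignment with $\Wma$ versus with $\W$ itself), your Step (4) does not close.

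A secondary point on Step (3): you appeal to the Appendix~\ref{app B4} construction, but that argument handles the \emph{margin} cone $\conb_{\mu,R}$ and is conditional on the extra hypothesis \eqref{assum:extra}. The local theorem uses the \emph{correlation} cone $\Cc_{\mu,R}(\Wma)$, for which the paper proves forward-invariance unconditionally: the same correlation inequality above lower-bounds the growth of $\langle\W(k+1),\Wma\rangle$, while the descent property upper-bounds the growth of $\tf{\W(k+1)}$, and the ratio of the two stays above $1-\mu$ provided $\eta\leq 1/L_{\W}$ and $R$ is large enough. Your high-level description of Step (3) is compatible with this, but the argument you would actually need is the one for the correlation cone, not the conditional margin-cone argument you cite.
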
    
This theorem establishes the existence of positive parameters $\mu=\mu(\bal)>0$ and $R>0$ such that there are no stationary points within  $\Cc_{\mu,R}(\Wma)$. Furthermore, if GD is initiated within $\Cc_{\mu,R}(\Wma)$, it will converge in the direction of $\Wma/\tf{\Wma}$. It is worth mentioning that stronger Theorem \ref{diverg:norm:w} (e.g. global absence of stationary points) is applicable whenever all tokens are support i.e.~$\Tcb_i=\emptyset$ for all $i\in[n]$. 

In Figure~\ref{fig general}, we consider setting where $n=6$, $T=8$, and $d=10$. The displayed results are averaged from $100$ random trials. We train cross-attention models with $\x_{it},\z_{i},\vb\in\R^d$ randomly sampled from unit sphere, and apply the normalized GD approach with fixed step size $\eta=0.1$.  In Figure~\ref{fig general sfx prob} we calculate the softmax probability via $\frac{1}{n}\sum_{i=1}^n\max_{t\in[T]}\sft{\X_i\tilde\W(k)\z_i}_t$ for either $\tilde\W=\W$ or $\Kb\Qb^\top$ at each iteration. Both scenarios result in probability $1$, which indicates that attention weights succeed in selecting one token per input. Then following Definition~\ref{def loc opt} let $\bal=(\alpha_i)_{i=1}^n$ be the token indices selected by GD and denote $\Ws_{\star,\bal}$ as the corresponding SVM solution of \eqref{eqn:sattnsvmst}. Define the correlation coefficient of two matrices as 
$\texttt{corr\_coef}(\W_1,\W_2):=\li\W_1,\W_2\ri/\|\W_1\|_F\|\W_2\|_F$. 
Figures~\ref{fig general fro corr}
 and \ref{fig general nuc corr} illustrate the correlation coefficients of attention weights ($\W(k)$ and $\Kb(k)\Qb(k)^\top$) with respect to $\Ws_\bal$ and $\Ws_{\star,\bal}$. The results demonstrate that $\W$ ($\Kb\Qb^\top$) ultimately reaches a $1$ correlation with $\Wsf_\bal$ ($\Ws_{\star,\bal}$), which suggests that $\W$ ($\Kb\Qb^\top$) converges in the direction of $\Wsf_\bal$ ($\Ws_{\star,\bal}$). This further validates Theorem~\ref{thm:local:gd}. 

\subsection{Overparameterization conjecture: When local-optimal directions disappear}\label{sec overparam}

In Section \ref{provable global} we demonstrated that larger $d$ serves as a catalyst for global convergence to select the optimal indices $\op=(\op_i)_{i=1}^n$. However, Section \ref{local GD sec} shows that the convergence can be towards locally-optimal directions rather than global ones. How do we reconcile these? Under what precise conditions, can we expect global convergence?

The aim of this section is gathering these intuitions and stating a concrete conjecture on the global convergence of the attention layer under geometric assumptions related to overparameterization. To recap, Theorem \ref{thm:separation} characterizes when \eqref{eqn:sattnsvm} is feasible and Theorem \ref{diverg:norm:w} characterizes when the parameter norm provably diverges to infinity, i.e. whenever all tokens are support vectors of \eqref{eqn:sattnsvm} (Assumption \ref{assum:token:supp} holds). On the other hand, this is not sufficient for global convergence, as GD can converge in direction to locally-optimal directions per Section \ref{local GD sec}. Thus, to guarantee global convergence, we need to ensure that \textbf{globally-optimal direction is the only viable one}. Our next assumption is a fully-geometric condition that precisely accomplishes this.

\begin{assumption}[There is always an optimal \nei]\label{assume:all_opt_supp} For any choice of $\bal=(\alpha_i)_{i=1}^n$ with $\bal\neq \op$ when solving \eqref{eqn:sattnsvm} with $\op\gets\bal$, there exists $i\in[n]$ such that $\alpha_i\neq\op_i$ and $\op_i\in \Tc_i$.
\end{assumption}


This guarantees that no $\bal\neq \opt$ can be locally-optimal because it has a \nei with higher score at the input $i$ with $\alpha_i\neq \op_i$. Thus, this ensures that global direction $\Wm$ is the unique locally-optimal direction obeying Def.~\ref{def loc opt}. Finally, note  that local-optimality in Def.~\ref{def loc opt} is one-sided: GD can provably converge to locally-optimal directions, while we do not provably exclude the existence of other directions. Yet, Theorem 4 of \cite{tarzanagh2023margin} shows that local RPs (see Section \ref{sec:local reg path} for details) can only converge to locally-optimal directions for almost all datasets\footnote{To be precise, they prove this for their version of Def.~\ref{def loc opt}, which is stated for an attention model $f(\pb)=\vb^\top\X^\top\sft{\X\W\pb}$ admitting an analogous SVM formulation. }. This and Figure \ref{fig overparam bar} provide strong evidence that Def.~\ref{def loc opt} captures all possible convergence directions of GD, and as a consequence, that Assumption \ref{assume:all_opt_supp} guarantees that $\Wm$ is the only viable direction to converge.\smallskip

\noindent\ding{225} \textbf{Integrating the results and global convergence conjecture.} Combining Assumptions \ref{assum:token:supp} and \ref{assume:all_opt_supp}, we have concluded that gradient norm diverges and $\Wm$ is the only viable direction to converge. Thus, we conclude this section with the following \textbf{conjecture}: Suppose $\op=(\op_i)_{i=1}^n$ are the unique optimal indices with strictly highest score per sequence and Assumptions \ref{assum:token:supp} and \ref{assume:all_opt_supp} hold. Then, for almost all datasets (e.g.~add small IID gaussian noise to input features), GD with a proper constant learning rate converges to $\Wm$ of \eqref{eqn:sattnsvm} in direction.


To gain some intuition, consider Figure \ref{fig overparam bar}: Here, red bars denote the frequency of global convergence whereas green bars denote the frequency of Assumption \ref{assum:token:supp} holding over random problem instances. In short, this suggests that Assumption \ref{assum:token:supp} occurs less frequently than global convergence, which is consistent with our conjecture. 
On the other hand, verifying Assumption \ref{assume:all_opt_supp} is more challenging due to its combinatorial nature. A stronger condition that implies Assumption \ref{assume:all_opt_supp} is when \emph{all optimal indices $(\op_i)_{i=1}^n$ are support vectors of the SVM. That is, either $\op_i=\alpha_i$ or $\op_i\in\Tc_i$, $\forall~i\in[n]$.} When the data follows a statistical model, this stronger condition could be verified through probabilistic tools building on our earlier ``all training points are support vectors'' discussion \cite{muthukumar2021classification}. More generally, we believe a thorough statistical investigation of \eqref{eqn:sattnsvm} is a promising direction for future work.

\begin{figure}[t]
    \centering
    \hspace{-10pt}
    \begin{tikzpicture}
        \node at (-2.8,0) {\includegraphics[height=0.45\textwidth, trim={0 0 745 0}, clip]{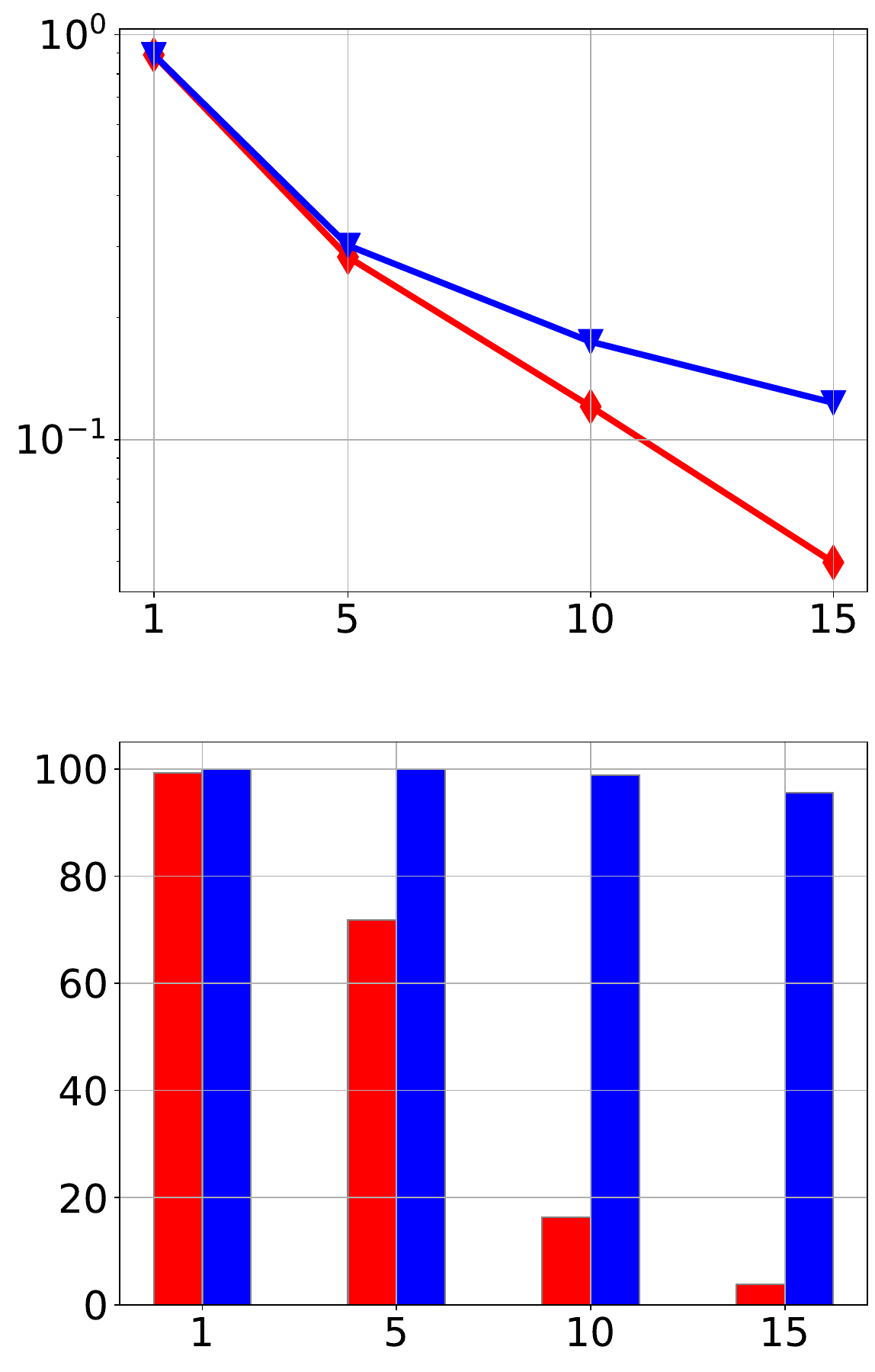}};
        \node at (3.3,0) {\includegraphics[height=0.45\textwidth, trim={548 0 0 0}, clip]{figs/svm_margin.pdf}};
        \node at (-2.5,0.) {\small{Varying $n$}};
        \node at (2.2,0.) {\small{Varying $d$}};
        \node[rotate=90] at (-5.25, 2) {\small{SVM margin}};
        \node[rotate=90] at (-5.25, -1.8) {\small{Percentage}};

        \node[right] at (5.2,3.08) {\small{Global}};
        \node[right] at (5.2,2.6) {\small{Local}};

        \node at (-2.5,-3.8) {\small{Varying $n$}};
        \node at (2.2,-3.8) {\small{Varying $d$}};
        \node[right] at (5.2,-.77) {\small{Global}};
        \node[right] at (5.2,-1.25) {\small{Local}};
    \end{tikzpicture}
    \caption{Performance of GD convergence and corresponding SVM margin. \textbf{Upper:} The SVM margins correspond to globally-optimal (red) and locally-optimal (blue) token indices, denoted as $1/\tf{\Wm}$ and $1/\tf{\Wma}$, respectively. \textbf{Lower:} Percentages of global convergence (when $\bal=\opt$, red) and local convergence (when $\bal\neq\opt$, blue).
    }
    \label{fig svm margin}
\end{figure}
\subsection{Investigation on SVM objectives and GD convergence}
Until now, we have discussed the global and local convergence performances of gradient descent (GD). Theorem~\ref{thm:local:gd} suggests that, without specific restrictions on tokens, when training with GD, the attention weight $\W$ converges towards $\Wma$. Here, the selected token indices $\bal = (\alpha_{i})_{i=1}^n$ may not necessarily be identical to $\opt = (\op_i)_{i=1}^n$. Experiments presented in Figures~\ref{fig overparam W bar}, \ref{fig overparam bar}, and \ref{fig overparam} also support this observation. In this section, we focus on scenarios where $\bal \neq \opt$ (e.g., when $\Wm$ is not feasible) and investigate the question: Towards which local direction is GD most likely to converge?

To this goal, in Figure~\ref{fig svm margin}, we consider SVM margin, which is defined by $1/\tf{\Wma}$, and investigate its connection to the convergence performance of GD.  On the left, we set $T=d=5$ and vary $n$ among $1, 5, 10, 15$; on the right, we fix $T=n=5$ and change $d$ to $2, 5, 10, 15$. All tokens are randomly generated from the unit sphere. The SVM margins corresponding to the selected tokens $\bal$ are depicted as blue curves in the upper subfigures, while the SVM margins corresponding to the globally--optimal token indices ($\bal=\opt$) are shown as red curves. The red and blue bars in the lower subfigures represent the percentages of global and local convergence, respectively. Combining all these findings empirically demonstrates that when the global SVM objective yields a solution with a small margin (i.e., $1/\tf{\Wm}$ is small, and 0 when global SVM is not feasible), GD tends to converge towards a local direction with a comparatively larger margin.
\subsection{Guarantees on local regularization path}\label{sec:local reg path}

In this section, we provide a \emph{localized} regularization path analysis for general objective functions. As we shall see, this will also allow us to predict the local solutions of gradient descent described in Section \ref{local GD sec}. Let $\dm$ denote a general norm objective. Given indices $\bal=(\alpha_i)_{i=1}^{n}$, consider the formulation
\begin{equation}\tag{$\dm$-SVM}
 \Wm_{\dm, \alpha}=\underset{\texttt{rank}(W)\leq m}{\arg\min}\|\W\|_{\diamond}
\quad \text{subj. to} \quad (\x_{i\alpha_{i}}-\x_\itt)^\top\W\z_i\geq 1\quad   \text{for all} \quad t \neq \alpha_{i}, \quad 
 i\in[n]\label{dmattnsvm}.
\end{equation}

In this section, since $\dm$ is clear from the context, we will use the shorthand $\Wm_{\alpha} := \Wm_{\dm, \alpha}$ and denote the optimal solution set of \eqref{dmattnsvm} as $\Wcs := \Wcs_{\dm, \alpha}$. It is important to note that if the $\dm$-norm is not strongly convex, $\Wcs$ may not be a singleton. Additionally, when $m=d$, the rank constraint becomes vacuous, and the problem becomes convex. The following result is a slight generalization of Theorem \ref{thm:separation} and demonstrates that choosing a large $d$ ensures the feasibility of \eqref{dmattnsvm} uniformly over all choices of $\bal$. The proof is similar to that of Theorem~\ref{thm:separation}, as provided in Appendix~\ref{app sep}.
\begin{theorem}\label{separation thm} Suppose $d\geq \max(T-1,n)$ and $m=d$. Then, almost all datasets\footnote{Here, \emph{``almost all datasets''} means that adding i.i.d.~gaussian noise, with arbitrary nonzero variance, to the input features will almost surely result in SVM's feasibility.} $(Y_i,\X_i,\z_i)_{i=1}^n$ -- including the self-attention setting with $\z_i\gets\x_{i1}$ -- obey the following: For any choice of indices $\bal=(\alpha_i)_{i=1}^n\subset[T]$, \eqref{dmattnsvm} is feasible,  i.e.~the attention layer can separate and select indices $\bal$.
\end{theorem}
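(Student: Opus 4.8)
The plan is to imitate the proof of Theorem~\ref{thm:separation}, but carrying out the reduction \emph{uniformly over all index choices} $\bal=(\alpha_i)_{i=1}^n\in[T]^n$, of which there are only finitely many. Fix one such $\bal$. Since $m=d$, the rank constraint in \eqref{dmattnsvm} is vacuous, so it suffices to exhibit any $\W\in\R^{d\times d}$ satisfying the margin constraints. Writing $(\x_{i\alpha_i}-\x_\itt)^\top\W\z_i=\langle (\x_{i\alpha_i}-\x_\itt)\z_i^\top,\W\rangle$, the constraints are linear in $\W$, and I will decouple them across $i$ as follows: it is enough to produce vectors $\mu_1,\dots,\mu_n\in\R^d$ with
\[
(\x_{i\alpha_i}-\x_\itt)^\top\mu_i\geq 1\quad\text{for all }t\neq\alpha_i,\ i\in[n],
\]
together with a matrix $\W$ realizing $\W\z_i=\mu_i$ for every $i\in[n]$; the composition is then feasible for \eqref{dmattnsvm}.

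The two subproblems are each a pure linear-algebra solvability question. First, for each $i$, the $T-1$ difference vectors $\{\x_{i\alpha_i}-\x_\itt\}_{t\neq\alpha_i}$ are linearly independent for generic tokens whenever $d\geq T-1$ (equivalently, $\x_{i1},\dots,\x_{iT}$ are affinely independent); on that event the system $(\x_{i\alpha_i}-\x_\itt)^\top\mu_i=1$, $t\neq\alpha_i$, is solvable, and any solution meets the ``$\geq 1$'' requirement with equality. Second, stacking $Z=[\z_1~\cdots~\z_n]\in\R^{d\times n}$ and $M=[\mu_1~\cdots~\mu_n]\in\R^{d\times n}$, I need $\W$ with $\W Z=M$; when $d\geq n$ and $\z_1,\dots,\z_n$ are linearly independent, $Z$ has full column rank and $\W:=M(Z^\top Z)^{-1}Z^\top$ works, since $\W Z=M(Z^\top Z)^{-1}(Z^\top Z)=M$. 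Hence, on the event that all these independence conditions hold, \eqref{dmattnsvm} is feasible for the fixed $\bal$.

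It remains to pass from ``generic'' to ``almost all datasets.'' Each independence condition above is the non-vanishing of a fixed polynomial in the data entries (a Gram determinant or a maximal minor). Provided that polynomial is not identically zero, its zero set is a proper algebraic subvariety of the data space, hence Lebesgue-null and closed; non-identical-vanishing is established by exhibiting a single non-degenerate configuration (tokens in general position and $\{\z_i\}$ orthonormal suffices, using $d\geq\max(T-1,n)$). Taking the union over the finitely many $\bal\in[T]^n$ keeps the exceptional set a finite union of proper varieties, still null; and since it is contained in a proper variety, perturbing the input features by i.i.d.\ noise of any nonzero variance lands outside it almost surely, which is exactly the ``almost all datasets'' statement, including the robustness-to-noise clause.

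The main point requiring care — and the only place this differs from Theorem~\ref{thm:separation} — is the self-attention specialization $\z_i\gets\x_{i1}$: there the vectors $\z_i$ and the difference vectors $\x_{i\alpha_i}-\x_\itt$ are built from the same tokens (and $\z_i$ coincides with $\x_{i\alpha_i}$ when $\alpha_i=1$), so the relevant non-degeneracy polynomial is now a polynomial in the $\{\X_i\}$ alone and one must verify it is not identically zero before invoking the variety argument. This is again handled by producing one explicit witness family $\{\X_i\}$ for which every needed minor is nonzero, which exists precisely because $d\geq\max(T-1,n)$. I do not anticipate any genuine difficulty beyond this bookkeeping.
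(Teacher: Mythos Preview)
Your proposal is correct but takes a genuinely different route from the paper. The paper's argument (which it says follows its proof of Theorem~\ref{thm:separation}) vectorizes $\W$ and shows that the $n(T-1)$ constraint features $\fb_{it}=(\x_{i\alpha_i}-\x_{it})\otimes\z_i\in\R^{d^2}$ are linearly independent for almost all data, by induction on $n$: at step $n$ one projects the new block of rows onto a subspace of the null space of the already-established matrix $\M_{n-1}$ (built from the orthogonal complement of $\text{span}(\z_1,\dots,\z_{n-1})$) and invokes a rank-stacking lemma (Lemma~\ref{lem add up}). You instead decouple feasibility into two separate linear systems: per-sequence solvability of $(\x_{i\alpha_i}-\x_{it})^\top\mu_i=1$ (affine independence of the tokens of $\X_i$, generic when $d\geq T-1$) and cross-sequence interpolation $\W\z_i=\mu_i$ (linear independence of $\{\z_i\}$, generic when $d\geq n$), then intersect the finitely many polynomial non-vanishing conditions. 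Your argument is more elementary, avoiding Kronecker products and the inductive rank lemma entirely, and it makes explicit that the per-sequence condition---affine independence of $\x_{i1},\dots,\x_{iT}$---does not depend on the anchor $\alpha_i$, so the union bound over $\bal$ is only needed formally. The paper's route, by contrast, delivers the slightly stronger conclusion that all $n(T-1)$ constraint rows are linearly independent in $\R^{d^2}$, and its inductive step absorbs the self-attention coupling $\z_i=\x_{i1}$ directly rather than via a separate witness construction.
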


To proceed, we define the \emph{local regularization path}, which is obtained by solving the $\dm$-norm-constrained problem over a $\bal$-dependent cone denoted as $\con{\bal}$. This cone has a simple interpretation: it prioritizes tokens with a lower score than $\bal$ over tokens with a higher score than $\bal$. This interpretation sheds light on the convergence towards locally optimal directions: lower-score tokens create a barrier for $\bal$ and prevent optimization from moving towards higher-score tokens.
\begin{definition}[Low\&High Score Tokens and Separating Cone]\label{HL cone def main} Given $\al\in[T]$, input sequence $\X$ with label $Y$, $h(\cdot):\R^d\rightarrow\R$, and score $\bgam_t=Y\cdot h(\x_t)$ for all $t\in[T]$, define the low and high score tokens as
\[
\low:=\left\{t\in[T]\bgl \bgam_t<\bgam_\al\right\},\quad \high:=\left\{t\in[T]-\{\alpha\}\bgl \bgam_t\geq \bgam_\al\right\}.
\]
For input $\X_i$ and index $\alpha_i$, we use the shorthand notations $\texttt{low}^\alpha_i$ and $\texttt{high}^\alpha_i$. Finally define $\con{\bal}$ as
\begin{align}
\con{\bal}:=\left\{\texttt{rank}(W)\leq m\bgl \min_{i\in[n]}\max_{t\in\texttt{low}^\alpha_i}\min_{\tau\in\texttt{high}^\alpha_i} (\x_\itt-\x_\ittt)^\top\W\z_i\geq \eps\tf{\W}\right\}.\label{cone alpha eq1}
\end{align}
\end{definition}

Our next lemma relates this cone definition to locally-optimal directions of Definition \ref{def loc opt}.
\begin{lemma} \label{lemma cone main}Suppose \eqref{dmattnsvm} is feasible. If indices $\bal$ are locally-optimal, $\Wma\in \con{\bal}$ for all sufficiently small $\eps>0$. Otherwise, $\Wma\not\in \con{\bal}$ for all $\eps>0$. Additionally, suppose optimal indices $\op_i\in\arg\max_{t\in[T]}\bgam_\itt$ are unique and set $\bal\gets\op$. Then, $\con{\opt}$ is the set of all rank-$\leq$$m$ matrices (i.e.~global set).
\end{lemma}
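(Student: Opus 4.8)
All three parts follow from the active-constraint structure of $\Wma$. Abbreviate the pre-softmax logit of token $t$ at input $i$ by $a_{it}:=\x_{it}^\top\Wma\z_i$; since $\vct{0}$ is infeasible for \eqref{dmattnsvm} we have $\tf{\Wma}>0$. Feasibility together with the definition of $\Tc_i$ in Definition \ref{def loc opt} gives the dichotomy: for $t\neq\alpha_i$, $a_{i\alpha_i}-a_{it}=1$ when $t\in\Tc_i$ and $a_{i\alpha_i}-a_{it}>1$ when $t\notin\Tc_i$; hence, whenever $\Tc_i\neq\emptyset$, $\Tc_i=\arg\max_{t\neq\alpha_i}a_{it}$. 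Throughout I adopt $\max_{\emptyset}=-\infty$, $\min_{\emptyset}=+\infty$ and write $q_i(\W):=\max_{t\in\texttt{low}^{\alpha}_i}\min_{\tau\in\texttt{high}^{\alpha}_i}(\x_{it}-\x_{i\tau})^\top\W\z_i$ for the $i$-th term of the min-max-min defining $\con{\bal}$, so that $\con{\bal}=\{\texttt{rank}(\W)\le m:\min_i q_i(\W)\ge\eps\tf{\W}\}$.

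\textbf{The three claims.} \emph{(i) $\bal$ locally optimal $\Rightarrow\Wma\in\con{\bal}$ for small $\eps$.} Fix $i$. If $\texttt{high}^{\alpha}_i=\emptyset$ then $q_i(\Wma)=+\infty$. Otherwise local optimality forces $\Tc_i\subseteq\texttt{low}^{\alpha}_i$, hence $\Tc_i\cap\texttt{high}^{\alpha}_i=\emptyset$; picking $t^\star\in\Tc_i$ and subtracting the tight identity $a_{i\alpha_i}-a_{it^\star}=1$ from the strict inequalities $a_{i\alpha_i}-a_{i\tau}>1$ (valid for $\tau\in\texttt{high}^{\alpha}_i$, since such $\tau\notin\Tc_i\cup\{\alpha_i\}$) yields $a_{it^\star}-a_{i\tau}>0$, so $q_i(\Wma)\ge\min_{\tau\in\texttt{high}^{\alpha}_i}(a_{it^\star}-a_{i\tau})>0$. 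Minimizing over the finitely many $i$ gives $\delta:=\min_i q_i(\Wma)\in(0,+\infty]$, and any $\eps\le\delta/\tf{\Wma}$ certifies $\Wma\in\con{\bal}$. \emph{(ii) $\bal$ not locally optimal $\Rightarrow\Wma\notin\con{\bal}$ for all $\eps>0$.} Negating Definition \ref{def loc opt} gives $i_0$ and $t_0\in\Tc_{i_0}$ with $\bgam_{i_0t_0}\ge\bgam_{i_0\alpha_{i_0}}$, so $t_0\in\texttt{high}^{\alpha}_{i_0}$ and, $t_0$ being a support vector, $a_{i_0t_0}=\max_{t\neq\alpha_{i_0}}a_{i_0t}$; hence $a_{i_0t}-a_{i_0t_0}\le 0$ for all $t\neq\alpha_{i_0}$, and taking $\tau=t_0$ in the inner minimum gives $q_{i_0}(\Wma)\le\max_{t\in\texttt{low}^{\alpha}_{i_0}}(a_{i_0t}-a_{i_0t_0})\le 0<\eps\tf{\Wma}$, so $\Wma\notin\con{\bal}$. \emph{(iii) $\bal=\op$.} If each $\op_i$ is the unique maximizer of $t\mapsto\bgam_{it}$, then $\bgam_{it}<\bgam_{i\op_i}$ for every $t\neq\op_i$, so $\texttt{high}^{\op}_i=\emptyset$ and $\texttt{low}^{\op}_i=[T]\setminus\{\op_i\}\neq\emptyset$; thus $q_i(\W)=+\infty$ for every $i$ and every rank-$\le m$ matrix $\W$, giving $\con{\op}\supseteq\{\texttt{rank}(\W)\le m\}$, and the reverse inclusion holds by construction — consistently with (i), since $\op$ is itself locally optimal.

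\textbf{Main obstacle.} The one delicate point is in (i) at an input with \emph{no active SVM constraint} ($\Tc_i=\emptyset$): there $\arg\max_{t\neq\alpha_i}a_{it}$ need not lie in $\texttt{low}^{\alpha}_i$, $q_i(\Wma)$ is not evidently positive, and if moreover $\texttt{low}^{\alpha}_i=\emptyset$ the cone inequality genuinely fails. This degeneracy is irrelevant to the way the lemma is used — gradient descent on the ERM never converges to selecting a low-score token — so I would dispose of it by adding the mild standing hypothesis that every input supplies at least one support vector, i.e.\ $\Tc_i\neq\emptyset$ for all $i$ (which is automatic for the token selections $\bal$ produced by GD and matches the setting of \cite{tarzanagh2023margin}), after which the argument of (i) applies unchanged. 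This is where I expect whatever genuine technical care the appendix proof requires to reside; the remaining steps are the elementary bookkeeping above.
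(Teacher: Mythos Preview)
Your proof is correct and follows essentially the same route as the paper's (Lemma~\ref{lemma cone} in the appendix): use the inclusions $\Tc_i\subseteq\texttt{low}^{\alpha}_i$ and $\texttt{high}^{\alpha}_i\subseteq[T]\setminus(\Tc_i\cup\{\alpha_i\})$ together with the fact that support tokens achieve the largest logit among $t\neq\alpha_i$, then read off each of the three claims. Regarding the $\Tc_i=\emptyset$ edge case you single out as the ``main obstacle'': the paper's proof does not treat it either --- it simply writes $\min_{t\in\Tc_i}\langle\cdot\rangle$ and proceeds as if the set were nonempty --- so there is no additional technical device there for you to import; your proposed standing hypothesis is a reasonable patch and, if anything, you have been more careful than the paper on this point.
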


Lemma~\ref{lemma cone main} can be understood as follows: Among the SVM solutions $\Wma$, only those that are locally optimal demonstrate a barrier of low-score tokens, effectively acting as a protective shield against higher-score tokens. Moreover, in the case of globally optimal tokens (with the highest scores), the global set $\con{\opt}$ can be chosen, as they inherently do not require protective measures. The subsequent result introduces our principal theorem, which pertains to the regularization path converging towards the locally-optimal direction over $\con{\bal}$ whenever $\bal$ is locally optimal.

\begin{theorem} [Convergence of Local Regularization Path]\label{local RP thm1} Suppose Assumption \ref{assum:loss:prope} holds. Fix locally-optimal token indices $\bal=(\al_i)_{i=1}^n$ and $R_0,\eps>0$. Consider the norm-constrained variation of \eqref{cone alpha eq1} defined as 
\[
\Ccd:=\con{\bal}\bigcap \left\{\W\bgl \td{\W}\geq R_0\right\}.
\]
Define local RP as $\Wb_R=\min_{\Ccd,\td{\W}\leq R}\Lc(\W)$ where $\Lc(\W)$ is given by \eqref{eqn:erm:w}. Let $\Wcs$ be the set of minima for \eqref{dmattnsvm} and $\xdm>0$ be the associated margin i.e.~$\xdm=1/\td{\Wma}$. For any sufficiently small $\eps>0$ and sufficiently large $R_0= \order{1/\eps}>0$, $\lim_{R\rightarrow\infty} \dist{\frac{\Wb_R}{R\xdm},\Wcs}=0$. Additionally, suppose optimal indices $\op=(\op_i)_{i=1}^n$ are unique and set $\bal\gets\op$. Then, the same convergence guarantee on regularization path holds by setting $\Ccd$ as the set of rank-$\leq$$m$ matrices.
\end{theorem}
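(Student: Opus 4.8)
The plan is to adapt the regularization-path arguments of \cite{rosset2003margin,ji2020gradient,tarzanagh2023margin} to the cone-constrained objective, with the separating cone $\con{\bal}$ playing the role a unique global max-margin direction plays in the unconstrained analysis. Throughout write $\Lc_\bal:=\tfrac1n\sum_{i=1}^n\ell(\bgam_{i\alpha_i})$, $s_{it}(\W):=\sft{\X_i\W\z_i}_t$, $g_i(\W):=Y_i\vb^\top\X_i^\top\sft{\X_i\W\z_i}=\sum_{t\in[T]}\bgam_{it}\,s_{it}(\W)$, and for $t\neq\alpha_i$ set $\delta_{it}(\W):=(\x_{i\alpha_i}-\x_\itt)^\top\W\z_i$ and $\delta_i(\W):=\min_{t\neq\alpha_i}\delta_{it}(\W)$; thus $\Lc(\W)=\tfrac1n\sum_i\ell(g_i(\W))$, the \eqref{dmattnsvm} constraints read $\delta_{it}(\Wma)\geq1$, and $s_{it}(\W)=e^{-\delta_{it}(\W)}s_{i\alpha_i}(\W)$. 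Since $\Wma$ has minimal norm some constraint is tight, so $\min_{i,t}\delta_{it}(\Wma)=1$ and $\xdm=1/\td{\Wma}$ equals the largest achievable normalized margin: $\min_{i,t}\delta_{it}(\W)\leq\xdm\td{\W}$ for every $\W$. Feasibility of \eqref{dmattnsvm} is part of $\bal$ being locally optimal and is generic by Theorem~\ref{separation thm}. When $\bal=\op$, Lemma~\ref{lemma cone main} makes $\con{\op}$ the set of all rank-$\leq m$ matrices (no high-score indices exist), so $\Ccd$ becomes just the norm lower bound and every estimate below specializes by dropping the high-score terms; that case recovers and localizes Theorem~\ref{thm global reg path}.

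First I would record a cone analog of the convexity inequality behind Lemma~\ref{lem min risk}: \emph{if $R_0=\order{1/\eps}$ is large enough, then $g_i(\W)\leq\bgam_{i\alpha_i}$ for every $\W\in\Ccd$ and every $i$.} Indeed, the cone inequality supplies a low-score index $t_i$ with $\x_{it_i}^\top\W\z_i\geq\x_\itt^\top\W\z_i+\eps\td{\W}$ for all high-score $t$, whence $s_{it_i}(\W)\geq e^{\eps\td{\W}}s_{it}(\W)$ there; comparing the negative (low-score) and positive (high-score) parts of $\bgam_{i\alpha_i}-g_i(\W)=\sum_{t\neq\alpha_i}(\bgam_{i\alpha_i}-\bgam_{it})s_{it}(\W)$ shows the positive part is at most $(\text{const})\,e^{-\eps\td{\W}}$ times the negative part, which is $\leq\tfrac12$ once $\td{\W}\geq R_0$. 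Since $\Wma\in\con{\bal}$ for small $\eps$ (Lemma~\ref{lemma cone main}) and $\con{\bal}$ is a cone (homogeneous defining inequality), $R\,\Wma/\td{\Wma}\in\Ccd$ for $R\geq R_0$; this comparator has $\delta_{it}(R\Wma/\td{\Wma})\geq R\xdm$, so by Assumption~\ref{assum:loss:prope} $\Lc(R\Wma/\td{\Wma})-\Lc_\bal\asymp e^{-R\xdm}\to0$. The cone analog, together with the Lemma~\ref{lem min risk} argument, gives $\Lc(\W)>\Lc_\bal$ on all of $\Ccd$ with $\inf_{\Ccd}\Lc=\Lc_\bal$ unattained; hence $\Lc(\Wb_R)\leq\Lc(R\Wma/\td{\Wma})\to\Lc_\bal$. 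A quantitative refinement --- $\Lc(\W)-\Lc_\bal\asymp\tfrac1n\sum_i d_i e^{-\delta_i(\W)}\asymp e^{-\min_i\delta_i(\W)}$ with $d_i>0$, valid on $\Ccd$ once all $\delta_i(\W)>0$ (again because the high-score contribution is a $1+O(e^{-\eps\td{\W}})$ perturbation), and $\Lc$ bounded away from $\Lc_\bal$ wherever some $\delta_i(\W)\leq0$ --- then forces $\min_i\delta_i(\Wb_R)\to\infty$. By norm equivalence $\delta_{it}(\W)\leq c\,\td{\W}$ this yields $\td{\Wb_R}\to\infty$, and $s_{it}(\Wb_R)\leq e^{-\delta_{it}(\Wb_R)}\to0$ for all $t\neq\alpha_i$: the softmax of $\Wb_R$ asymptotically selects $\alpha_i$.

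The crux is margin matching. From $\Lc(\Wb_R)\leq\Lc(R\Wma/\td{\Wma})$ and the two displayed asymptotics, $e^{-\min_i\delta_i(\Wb_R)}\lesssim e^{-R\xdm}$, i.e.\ $\min_i\delta_i(\Wb_R)\geq R\xdm-\order{1}$; since also $\min_i\delta_i(\Wb_R)\leq\xdm\td{\Wb_R}\leq\xdm R$, we obtain simultaneously $\td{\Wb_R}/R\to1$ and $\min_{i,t}\delta_{it}(\Wb_R)/\td{\Wb_R}\to\xdm$. Consequently every limit point $\bar{\W}$ of $\Wb_R/(R\xdm)$ satisfies $\td{\bar{\W}}\leq\td{\Wma}$, $\texttt{rank}(\bar{\W})\leq m$, and $\delta_{it}(\bar{\W})\geq1$ for all $t\neq\alpha_i,\,i\in[n]$ --- so $\bar{\W}$ is feasible for \eqref{dmattnsvm} with minimal norm, whence $\bar{\W}\in\Wcs$. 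A compactness argument (the family $\{\Wb_R/(R\xdm)\}$ is bounded) upgrades this to $\dist{\Wb_R/(R\xdm),\Wcs}\to0$. The optimal-index statement follows verbatim, with all high-score sums empty and $\Ccd$ replaced by $\{\texttt{rank}(\W)\leq m\}\cap\{\td{\W}\geq R_0\}$.

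I expect the main obstacle to be the uniform two-sided loss estimate of the second paragraph: unlike a linear model, the attention loss couples all tokens of a sequence through the softmax, so one must quantitatively control the high-score tokens --- which carry \emph{larger} scores and would otherwise dominate --- and this is exactly where the $\eps$-barrier in $\con{\bal}$ is used, with keeping the implied constants uniform over $\Ccd$ being precisely what forces $R_0=\order{1/\eps}$. A secondary, minor issue is that Assumption~\ref{assum:loss:prope} permits $\ell'$ to vanish at the isolated values $\bgam_{i\alpha_i}$ (this never happens for the logistic, exponential, or correlation loss); there one replaces $e^{-\delta_i(\W)}$ throughout by $\phi(e^{-\delta_i(\W)})$ for a common modulus of decay $\phi$, and the margin conclusion is unaffected because $\phi$ cancels between the comparator's upper bound and $\Wb_R$'s lower bound.
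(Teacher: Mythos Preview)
Your proposal is correct and mirrors the paper's proof (given as Theorem~\ref{local RP thm} in the appendix, of which Theorem~\ref{local RP thm1} is the single-output, linear-head instance): both use the cone inequality to bound the high-score softmax mass by an $e^{-\eps\td{\W}}$ multiple of the low-score mass, yielding $g_i(\W)<\bgam_{i\alpha_i}$ on $\Ccd$ once $R_0=\order{1/\eps}$, and both compare $\Lc(\Wb_R)$ to the SVM comparator $R\Wma/\td{\Wma}$. The only difference is packaging: the paper argues by contradiction (a $\delta$-margin violation along a subsequence gives $\Lc(\Wb_R)-\Lc_\bal\gtrsim e^{-(1-\delta)R\xdm}\gg e^{-R\xdm}\gtrsim\Lc(R\Wma/\td{\Wma})-\Lc_\bal$), whereas you read off $\min_i\delta_i(\Wb_R)\geq R\xdm-O(1)$ directly from the same two-sided loss estimate and close by compactness---this is the same argument, and your observation that within $\con{\bal}$ the minimum $\delta_{it}$ is attained at a low-score token is exactly what makes your uniform lower bound go through.
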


Note that when setting $m=d$, the rank constraint is eliminated. Consequently, specializing this theorem to the Frobenius norm aligns it with Theorem \ref{thm:local:gd}. On the other hand, by assigning $\dm$ as the nuclear norm and $\bal\gets\op$, the global inductive bias of the nuclear norm is recovered, as stated in Theorem \ref{thm global reg path}.

We would like to emphasize that both this theorem and Theorem \ref{thm global reg path} are specific instances of Theorem \ref{local RP thm} found in Appendix \ref{sec multioutput}. It is worth noting that, within this appendix, we establish all regularization path results for sequence-to-sequence classification, along with a general class of \emph{monotonicity-preserving} prediction heads outlined in Assumption \ref{ass cvx seq}. The latter significantly generalizes linear heads, highlighting the versatility of our theory. The following section presents our discoveries concerning general nonlinear heads.

\section{Toward A More General SVM Equivalence for Nonlinear Prediction Heads}\label{sec:multi}
So far, our theory has focused on the setting where the attention layer selects a single optimal token within each sequence. As we have discussed, this is theoretically well-justified under linear head assumption and certain nonlinear generalizations. On the other hand, for arbitrary nonconvex $h(\cdot)$ or multilayer transformer architectures, it is expected that attention will select multiple tokens per sequence. This motivates us to ask:
\begin{quote}
    \textbf{Q:}~What is the implicit bias and the form of $\W(k)$ when the GD solution is composed by multiple tokens?
\end{quote}

In this section, our goal is to derive and verify the generalized behavior of GD. Let $\xat_i=\X_i^\top \s^{\W}_i$ denote the composed token generated by the attention layer where $\s^{\W}_i=\sft{\X_i\W\z_i}$ are the softmax probabilities corresponding to $\W$. Suppose GD trajectory converges to achieve the risk $\Lc_\star=\min_{\W}\Lc(\W)$, and the eventual token composition achieving $\Lc_\star$ is given by 
\[
\xast_i=\X_i^\top \s^\st_i,
\]
where $\s^\st_i$ are the eventual softmax probability vectors that dictate the token composition. Since attention maps are sparse in practice, we are interested in the scenario where $\s^\st_i$ is sparse i.e.~it contains some zero entries. This can only be accomplished by letting $\tf{\W}\rightarrow\infty$. However, unlike the earlier sections, we wish to allow for arbitrary $\s^\st_i$ rather than a one-hot vector which selects a single token. 

To proceed, we aim to understand the form of GD solution $\W(k)$ responsible for composing $\xast_i$ via the softmax map $\s^\st_i$ as $\tf{\W}\rightarrow\infty$. Intuitively, $\W(k)$ should be decomposed into two components via
\begin{align}
\W(k)\approx \Wf+\tf{\W(k)}\cdot \Wsb,\label{multi-token soln}
\end{align}
where $\Wf$ is the {finite component} and $\Wsb$ is the {directional component} with $\tf{\Wsb}=1$. Define the {selected set} $\Rc_i\subseteq[T]$ to be the indices $\s^\st_\itt\neq 0$ and the {masked (i.e.~suppressed) set} as $\Rcb_i=[T]-\Rc_i$ where softmax entries are zero. In the context of earlier sections, we could also call these the \emph{optimal set} and the \emph{non-optimal set}, respectively.
\begin{itemize}[label=$\bullet$, wide, labelwidth=!,itemindent=!, labelindent=5pt]
\item \textbf{Finite component:} The job of $\Wf$ is to assign nonzero softmax probabilities within each $\s^\st_i$. This is accomplished by ensuring that, $\Wf$ induces the probabilities of $\s^\st_i$ over $\Rc_i$ by satisfying the softmax equations
\[
\frac{e^{\x_\itt^\top \Wf\z_i}}{e^{\x_\ittt^\top \Wf\z_i}}=e^{(\x_\itt-\x_\ittt)^\top \Wf\z_i}=\s^\st_\itt/\s^\st_\ittt,
\]
for $t,\tau\in\Rc_i$. Consequently, this $\Wf$ should satisfy the following linear constraints
\begin{equation}
(\x_\itt-\x_\ittt)^\top \Wf\z_i=\log(\s^\st_\itt/\s^\st_\ittt)\quad\text{for all}\quad t,\tau\in\Rc_i,~i\in[n].\label{smax eqn}
\end{equation}
\item \textbf{Directional component:} While $\Wf$ creates the composition by allocating the nonzero softmax probabilities, it does not explain sparsity of attention map. This is the role of $\Wsb$, which is responsible for selecting the selected tokens $\Rc_i$ and suppressing the masked ones $\Rcb_i$ by assigning zero softmax probability to them. To predict direction component, we build on the theory developed in earlier sections. Concretely, there are two constraints $\Wsb$ should satisfy
\begin{enumerate}
\item \textbf{Equal similarity over selected tokens:} For all $t,\tau\in\Rc_i$, we have that $(\x_\itt-\x_\ittt)^\top \W\z_i=0$. This way, softmax scores assigned by $\Wf$ are not disturbed by the directional component and $\Wf+R\cdot\Wsb$ will still satisfy the softmax equations \eqref{smax eqn}.
\item \textbf{Max-margin against masked tokens:} For all $t\in\Rc_i,\tau\in\Rcb_i$, enforce the margin constraint $(\x_\itt-\x_\ittt)^\top \W\z_i\geq 1$ subject to minimum norm $\tf{\W}$. 
\end{enumerate}
Combining these yields the following convex generalized SVM formulation
 \begin{tcolorbox}[colback=white!5!white,colframe=black!5!black,colback=green!1!white]
 \vspace{-7pt}
\begin{align}\tag{Gen-SVM}
\Wm=\arg\min_{\W}\tf{\W}
\quad \text{subj. to} \quad\begin{cases} \forall~t\in\Rc_i,\tau\in\Rcb_i:~(\x_\itt-\x_\ittt)^\top\W\z_i\geq 1,\\
\forall~t,\tau\in\Rc_i:~\quad\quad(\x_\itt-\x_\ittt)^\top\W\z_i=0,\end{cases}\quad  \forall  1\leq i\leq n.
\label{eqn:mattnsvm}
\end{align}
\end{tcolorbox}
\noindent and set the normalized direction in \eqref{multi-token soln} to $\Wsb=\Ws/\tf{\Ws}$.
\end{itemize}
It is important to note that  \eqref{eqn:mattnsvm} offers a substantial generalization beyond the scope of the previous sections, where the focus was on selecting a single token from each sequence, as described in the main formulation \eqref{eqn:sattnsvm}. This broader solution class introduces a more flexible approach to the problem.
\begin{figure}
    \centering
    \hspace{-10pt}
    \begin{tikzpicture}
        \node at (0,0) {\includegraphics[width=0.985\textwidth, trim={1.3cm 1.5cm 0 0}, clip]{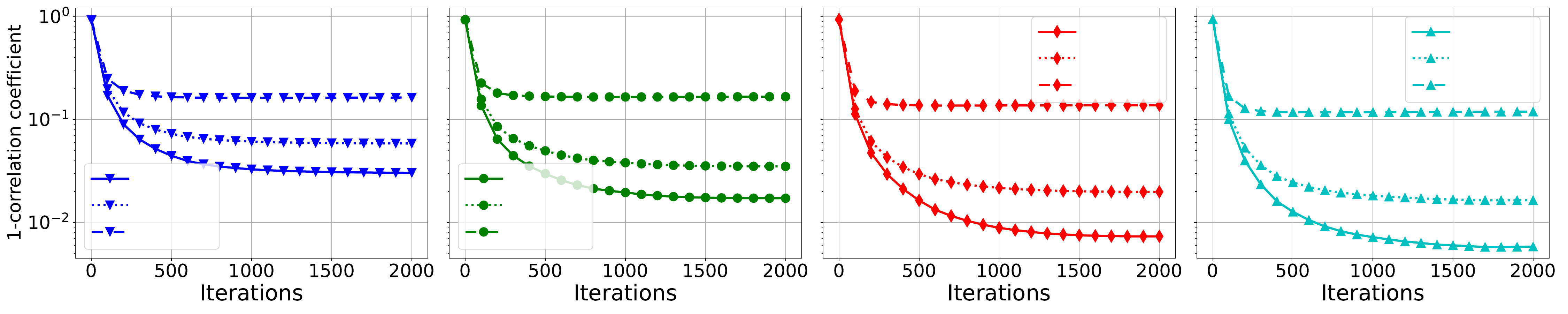}};
        \node at (-5.75,-1.7) {\small{Iterations}};
        \node at (-5.75,1.7) {\small{$d=4$}};
        \node at (-1.8,-1.7) {\small{Iterations}};
        \node at (-1.8,1.7) {\small{$d=6$}};
        \node at (2.15,-1.7) {\small{Iterations}};
        \node at (2.15,1.7) {\small{$d=8$}};
        \node at (6.15,-1.7) {\small{Iterations}};
        \node at (6.15,1.7) {\small{$d=10$}};
        \node[rotate=90] at (-8.35, 0) {\small{$1-$correlation coefficient}};

        \node at (-6.63,-0.4) {\scriptsize{$\W^\rfn$}};
        \node at (-6.74,-0.7) {\scriptsize{$\Ws$}};
        \node at (-6.6,-.95) {\scriptsize{$\W^\ont$}};

        \node at (-2.67,-0.4) {\scriptsize{$\W^\rfn$}};
        \node at (-2.78,-0.7) {\scriptsize{$\Ws$}};
        \node at (-2.64,-.95) {\scriptsize{$\W^\ont$}};


        \node at (3.39,1.15) {\scriptsize{$\W^\rfn$}};
        \node at (3.28,0.85) {\scriptsize{$\Ws$}};
        \node at (3.42,0.6) {\scriptsize{$\W^\ont$}};

        \node at (7.33,1.15) {\scriptsize{$\W^\rfn$}};
        \node at (7.22,0.85) {\scriptsize{$\Ws$}};
        \node at (7.36,0.6) {\scriptsize{$\W^\ont$}};
    \end{tikzpicture}
    \begin{tikzpicture}
        \node at (0,0) {\includegraphics[width=0.985\textwidth, trim={1.3cm 1.5cm 0 0}, clip]{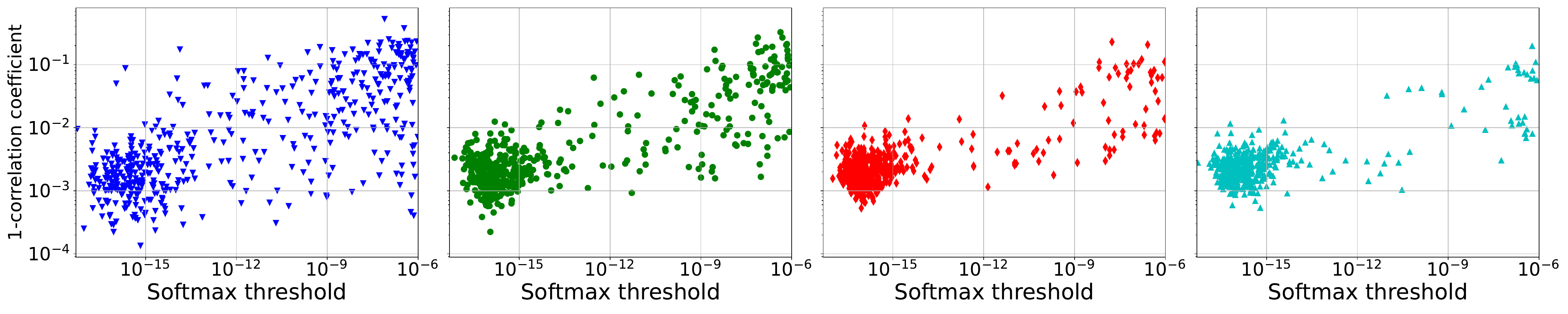}};
        \node at (-5.75,-1.7) {\small{$\max_{i,\tau}s_{i\tau},~\tau\in\Rcb_i$}};
        \node at (-1.8,-1.7) {\small{$\max_{i,\tau}s_{i\tau},~\tau\in\Rcb_i$}};
        \node at (2.15,-1.7) {\small{$\max_{i,\tau}s_{i\tau},~\tau\in\Rcb_i$}};
        \node at (6.15,-1.7) {\small{$\max_{i,\tau}s_{i\tau},~\tau\in\Rcb_i$}};
        \node[rotate=90] at (-8.35, 0) {\small{$1-$correlation coefficient}};
    \end{tikzpicture}
    \caption{ Behavior of GD  with nonlinear nonconvex prediction head and multi-token compositions. \textbf{Upper:} The correlation between GD solution and three distinct baselines: ({\color{black}{\textbf{$\cdots$}}}) $\Ws$ obtained from \eqref{eqn:mattnsvm};  ({\color{black}{\textbf{---}}}) $\W^\rfn$ obtained by calculating $\Wf$ and determining the best linear combination $\Wf+\gamma \Wsb$ that maximizes correlation with the GD solution; and  ({\color{black}{\textbf{-~-}}}) $\W^\ont$ obtained by solving \eqref{eqn:sattnsvm} and selecting the highest probability token from the GD solution.  \textbf{Lower:} Scatterplot of the largest softmax probability over masked tokens (per our $s_{i\tau}\leq 10^{-6}$ criteria) vs correlation coefficient.}
    \label{fig nn diff d}
\end{figure}

We present experiments showcasing the predictive power of the \eqref{eqn:mattnsvm} equivalence in nonlinear scenarios. We conducted these experiments on random instances using an MLP denoted as $h(\cdot)$, which takes the form of $\onebb^\top\texttt{ReLU}(\x)$. We begin by detailing the preprocessing step and our setup. For the attention SVM equivalence analytical prediction, clear definitions of the selected and masked sets are crucial. These sets include token indices with nonzero and zero softmax outputs, respectively. However, practically, reaching a precisely zero output is not feasible. Hence, we define the selected set as tokens with softmax outputs exceeding $10^{-3}$, and the masked set as tokens with softmax outputs below $10^{-6}$. We also excluded instances with softmax outputs falling between $10^{-6}$ and $10^{-3}$ to distinctly separate the concepts of \emph{selected} and \emph{masked} sets, thereby enhancing the predictive accuracy of the attention SVM equivalence. In addition to the filtering process, we focus on scenarios where the label $Y=-1$ exists to enforce \emph{non-convexity} of prediction head $Y_i\cdot h(\cdot)$. It is worth mentioning that when all labels are $1$, due to the convexity of $Y_i\cdot h(\cdot)$, GD tends to select one token per input, and Equations \eqref{eqn:mattnsvm} and \eqref{eqn:sattnsvm} yield the same solutions.  The results are displayed in Figure~\ref{fig nn diff d}, where $n=3$, $T=4$, and $d$ varies within ${4, 6, 8, 10}$. We conduct 500 random trials for different choices of $d$, each involving ${\x}_{it}$, ${\z}_i$, and ${\vb}$ randomly sampled from the unit sphere. We apply normalized GD with a step size $\eta=0.1$ and run $2000$ iterations for each trial.

\begin{enumerate}[label=$\bullet$, wide, labelwidth=!,itemindent=!, labelindent=5pt]
\item Figure \ref{fig nn diff d} (upper) illustrates the correlation evolution between the GD solution and three distinctive baselines: ({\color{black}{\textbf{$\cdots$}}}) $\Ws$ obtained from \eqref{eqn:mattnsvm};  ({\color{black}{\textbf{---}}}) $\W^\rfn$ obtained by calculating $\Wf$ and determining the best linear combination $\Wf+\gamma \Wsb$ that maximizes correlation with the GD solution; and  ({\color{black}{\textbf{-~-}}}) $\W^\ont$ obtained by solving \eqref{eqn:sattnsvm} and selecting the highest probability token from the GD solution.  For clearer visualization, the logarithmic scale of correlation misalignment is presented in Figure~\ref{fig nn diff d}. In essence, our findings show that $\W^\ont$ yields unsatisfactory outcomes, whereas $\Ws$ attains a significant correlation coefficient in alignment with our expectations. Ultimately, our comprehensive SVM-equivalence $\W^\rfn$ further enhances correlation, lending support to our analytical formulas. It's noteworthy that SVM-equivalence displays higher predictability in a larger $d$ regime (with an average correlation exceeding $0.99$). This phenomenon might be attributed to more frequent directional convergence in higher dimensions, with overparameterization contributing to a smoother loss landscape, thereby expediting optimization. 
\item Figure \ref{fig nn diff d} (lower) offers a scatterplot overview of the $500$ random problem instances that were solved. The $x$-axis represents the largest softmax probability over the masked set, denoted as $\max_{i,\tau}s_{i\tau}$ where $\tau\in\Rcb_i$. Meanwhile, the $y$-axis indicates the predictivity of the SVM-equivalence, quantified as $1-\texttt{corr\_coef}(\W,\W^\rfn)$. From this analysis, two significant observations arise. Primarily, there exists an inverse correlation between softmax probability and SVM-predictivity. This correlation is intuitive, as higher softmax probabilities signify a stronger divergence from our desired \emph{masked set} state (ideally set to $0$). Secondly, as dimensionality ($d$) increases, softmax probabilities over the masked set tend to converge towards the range of $10^{-15}$ (effectively zero). Simultaneously, attention SVM-predictivity improves, creating a noteworthy correlation.
\end{enumerate}

\subsection{When does attention select multiple tokens?}\label{sec when}
In this section, we provide a concrete example where the optimal solution indeed requires combining multiple tokens in a nontrivial fashion. Here, by nontrivial we mean that, we select more than 1 tokens from an input sequence but we don't select all of its tokens. Recall that, for linear prediction head, attention will ideally select the single token with largest score for almost all datasets. Perhaps not surprisingly, this behavior will not persist for nonlinear prediction heads. For instance in Figure~\ref{fig nn diff d}, the GD output $\W$ aligned better in direction with $\Ws$ than $\W^\ont$. Specifically, here we prove that if we make the function $h_Y(\x):=Y\cdot h(\x)$ concave, then optimal softmax map can select multiple tokens in a controllable fashion. $h_Y(\x)$ can be viewed as generalization of the linear score function $Y\cdot \vb^\top\x$. In the example below, we induce concavity by incorporating a small $-\la\tn{\x}^2$ term within a linear prediction head and setting $h(\x)=\vb^\top\x-\la\tn{\x}^2$ with $Y=1$.

\begin{lemma}\label{example dataset} Given $\vb\in\R^d$, recall the score vector $\bgam=\X\vb$. Without losing generality, assume $\bgam$ is non-increasing. Define the vector of score gaps $\bbg\in\R^{T-1}$ with entries $\bbg_t=\bgam_{t}-\bgam_{t+1}$. Suppose all tokens within the input sequence are orthonormal and for some $\tau\geq 2$, we have that 
\begin{align}
\tau\bbg_\tau/2>\bbg_1.\label{tau description}
\end{align}
Set $h(\x)=\vb^\top\x-\la\tn{\x}^2$ where $\tau\bbg_\tau/2>\la>\bbg_1$, $\ell(x)=-x$, and $Y=1$. Let $\Bal_T$ denote the $T$-dimensional simplex. Define the unconstrained softmax optimization associated to the objective $h$ where we make $\s:=\sft{\X\W\z}$ a free variable, namely,
\begin{align} 
\min_{\s\in\Bal_T}\ell(h(\X\s))=\min_{\s\in\Bal_T}\la \tn{\X^\top \s}^2-\vb^\top\X^\top \s.\label{direct opt}
\end{align}
Then, the optimal solution $\s^\st$ contains at least $2$ and at most $\tau$ nonzero entries.
\end{lemma}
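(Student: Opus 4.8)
The plan is to use orthonormality of the tokens to turn \eqref{direct opt} into a strictly convex quadratic program over the simplex, and then read off the two support-size bounds $2$ and $\tau$ from the hypotheses $\la>\bbg_1$ and $\la<\tau\bbg_\tau/2$, respectively.

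First I would simplify the objective. Since the rows $\x_1,\dots,\x_T$ of $\X$ are orthonormal, $\X\X^\top=\Iden$, so $\tn{\X^\top\s}^2=\s^\top\X\X^\top\s=\tn{\s}^2$ and $\vb^\top\X^\top\s=(\X\vb)^\top\s=\bgam^\top\s$. Hence \eqref{direct opt} is equivalent to minimizing $F(\s):=\la\tn{\s}^2-\bgam^\top\s$ over $\s\in\Bal_T$. As $\la>\bbg_1\ge 0$ we have $\la>0$, so $F$ is strictly convex and has a unique minimizer $\s^\st$. Recalling that (WLOG, as in the statement) $\bgam_1\ge\dots\ge\bgam_T$, I would first argue that the support of $\s^\st$ is an initial segment $\{1,\dots,k\}$: if $\s^\st_t=0<\s^\st_{t'}$ with $t<t'$, then shifting mass $\eps$ from coordinate $t'$ to coordinate $t$ changes $F$ by $-\eps\,(2\la\s^\st_{t'}+\bgam_t-\bgam_{t'})+2\la\eps^2$, which is negative for small $\eps>0$ because $\bgam_t\ge\bgam_{t'}$ and $\s^\st_{t'}>0$; this contradicts optimality. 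The target is then exactly $2\le k\le\tau$.

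Next I would write down the KKT conditions, valid by Slater's condition since the simplex has nonempty relative interior: there is a scalar multiplier $\nu$ for $\onebb^\top\s=1$ such that $\s^\st_t=(\bgam_t+\nu)/(2\la)$ for $t$ in the support and $\bgam_t+\nu\le 0$ otherwise, so $k$ equals the number of indices $t$ with $\bgam_t+\nu>0$. For the lower bound, suppose $k=1$; then $\s^\st=\eb_1$, and moving mass $\eps$ from token $1$ to token $2$ changes $F$ by $\eps(\bbg_1-2\la)+2\la\eps^2$, which is negative for small $\eps$ since $\la>\bbg_1\ge 0$ forces $\bbg_1-2\la<0$; this contradicts optimality, so $k\ge 2$. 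For the upper bound, suppose $k\ge\tau+1$. Then $\tau+1$ is in the support, so $\bgam_{\tau+1}+\nu>0$, and by monotonicity $\bgam_t+\nu>0$ for all $t\le\tau$ as well. Summing the stationarity identities over $t=1,\dots,\tau$ and using $\sum_{t\le\tau}\s^\st_t\le 1$ gives $\sum_{t=1}^{\tau}(\bgam_t+\nu)\le 2\la$, i.e. $\nu\le(2\la-\sum_{t\le\tau}\bgam_t)/\tau$; combining with $\nu>-\bgam_{\tau+1}$ yields $\sum_{t=1}^{\tau}(\bgam_t-\bgam_{\tau+1})<2\la$. But $\bgam_t-\bgam_{\tau+1}\ge\bgam_\tau-\bgam_{\tau+1}=\bbg_\tau$ for every $t\le\tau$, so the left side is at least $\tau\bbg_\tau$, giving $\tau\bbg_\tau<2\la$ --- contradicting $\la<\tau\bbg_\tau/2$. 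Hence $k\le\tau$, and together with $k\ge 2$ this proves the lemma.

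The step I expect to be the crux is the upper bound: converting ``the support is too large'' into the scalar inequality $\tau\bbg_\tau<2\la$. The route that works is to restrict attention to the $\tau$ highest-scoring coordinates --- bounding their total mass by $1$ to get an upper bound on $\nu$ --- while exploiting positivity of the $(\tau{+}1)$-st coordinate ($\bgam_{\tau+1}+\nu>0$) for a matching lower bound on $\nu$, and finally lower-bounding each telescoped difference $\bgam_t-\bgam_{\tau+1}$ by the single gap $\bbg_\tau$. The reduction to the simplex QP, the interval structure of the support, and the lower bound $k\ge 2$ are all routine perturbation/KKT computations.
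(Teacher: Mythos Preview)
Your argument is correct and complete. The reduction via orthonormality to the strictly convex quadratic program $\min_{\s\in\Bal_T}\la\tn{\s}^2-\bgam^\top\s$ is right, the initial-segment structure of the support follows from the exchange argument you give, and both the lower bound (perturbation $\eb_1\to(1-\eps)\eb_1+\eps\eb_2$ combined with $\la>\bbg_1$) and the upper bound (KKT stationarity summed over the first $\tau$ coordinates, paired with $\nu>-\bgam_{\tau+1}$ and the monotone lower bound $\bgam_t-\bgam_{\tau+1}\ge\bbg_\tau$) are clean and tight.

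The paper itself states Lemma~\ref{example dataset} without proof, so there is no in-paper argument to compare against; your KKT/perturbation route is the natural one and would be a welcome addition.
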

\begin{figure}[t]
    \centering
    \hspace{-10pt}
    \subfigure[$\lambda$ vs \# selected tokens]{
        \begin{tikzpicture}
        \node at (0,0) {\includegraphics[height=.22\columnwidth, trim={1.3cm 1.3cm 0 0}, clip]{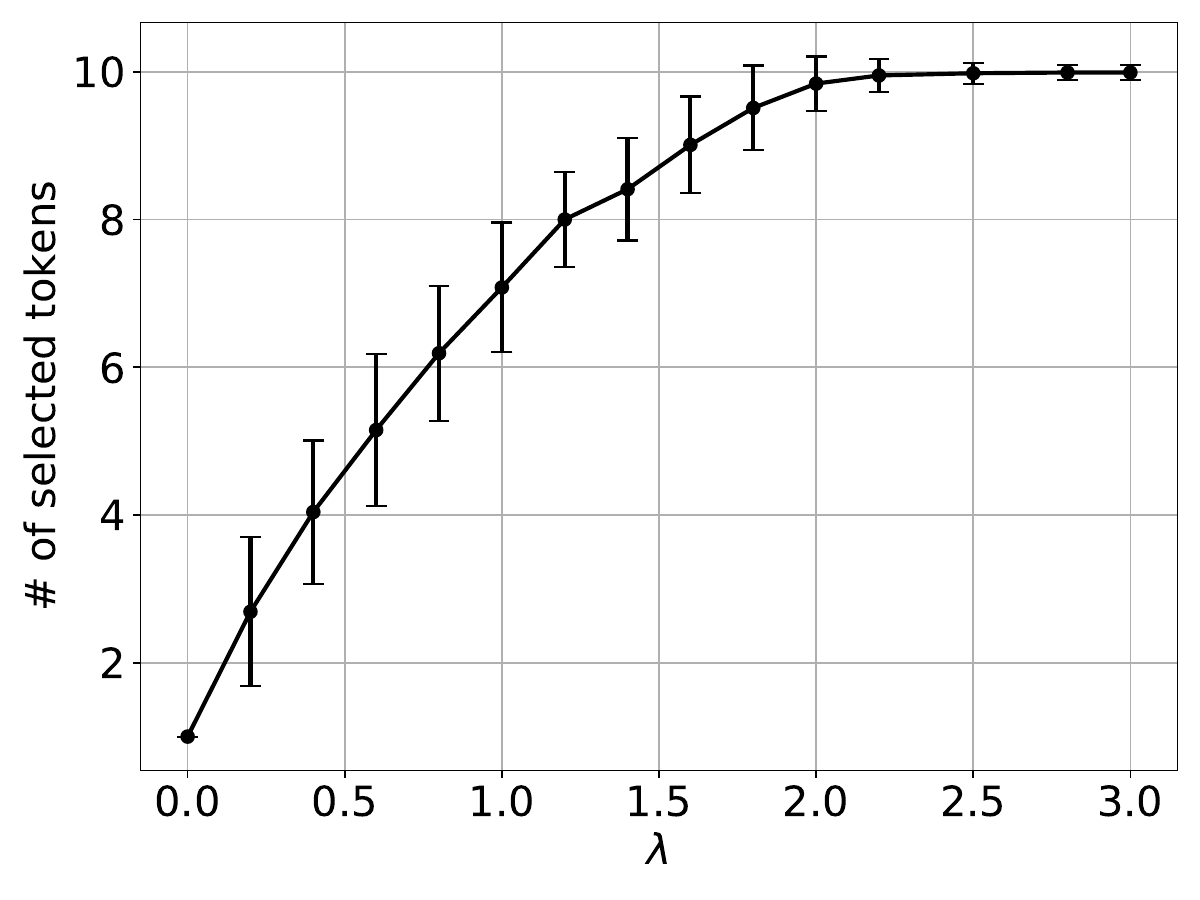}};
        \node at (0,-2.) {\small{$\lambda$}};
        \node[rotate=90] at (-2.65,0) {\small{\# of selected tokens}};
        \end{tikzpicture}
        \label{fig multi ns diff lambda}
    }
    \hspace{-10pt}
    \subfigure[$\lambda$ vs correlation coefficient]{
        \begin{tikzpicture}
        \node at (0,0) {\includegraphics[height=.22\columnwidth, trim={1.3cm 1.3cm 0 0}, clip]{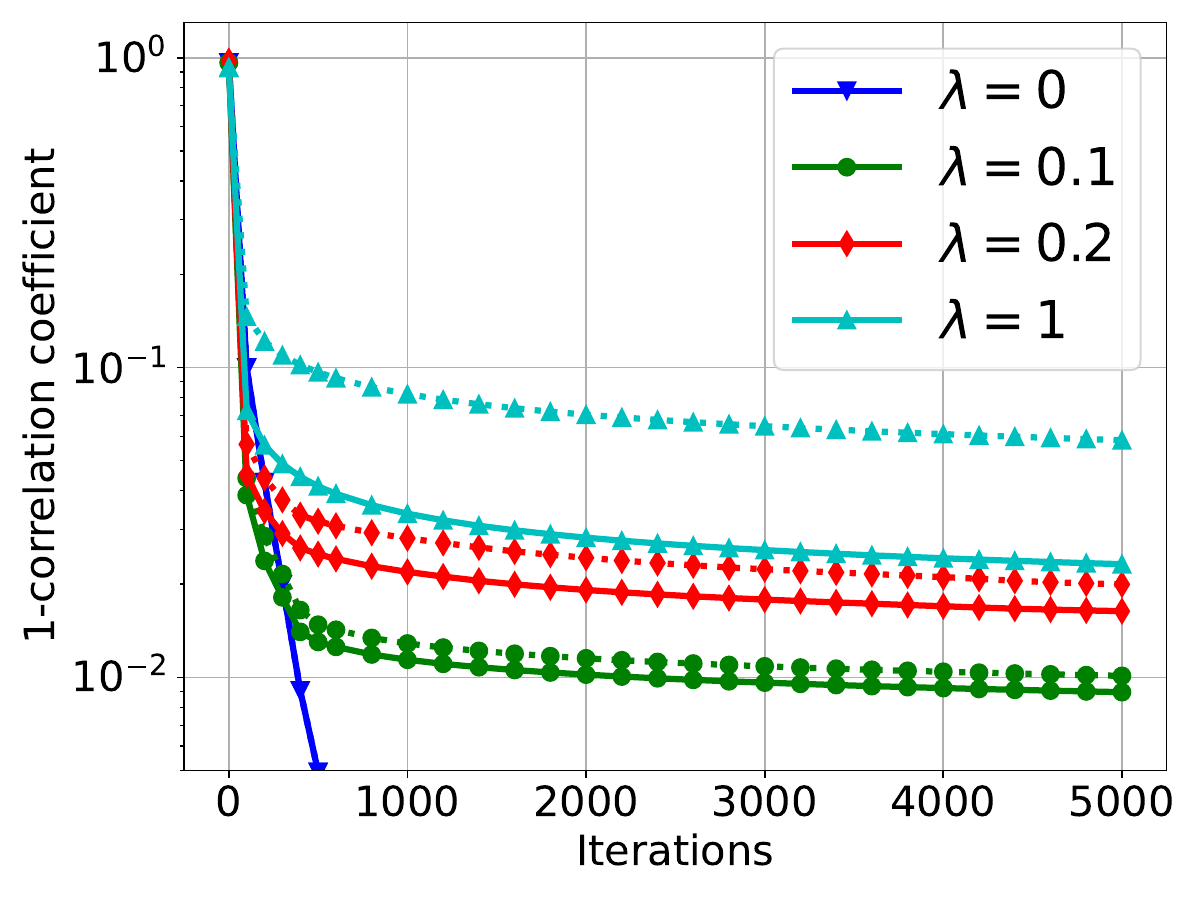}};
        \node[rotate=90] at (-2.7,0) {\small{$1-$correlation coefficient}};
        \node at (0,-2.){\small{Iterations}};
        \end{tikzpicture}
        \label{fig multi corr diff lambda}
    }
    \hspace{-10pt}
    \subfigure[\# selected tokens vs correlation coefficient]{
        \begin{tikzpicture}
        \node at (0,0) {\includegraphics[height=.22\columnwidth, trim={1.3cm 1.3cm 0 0}, clip]{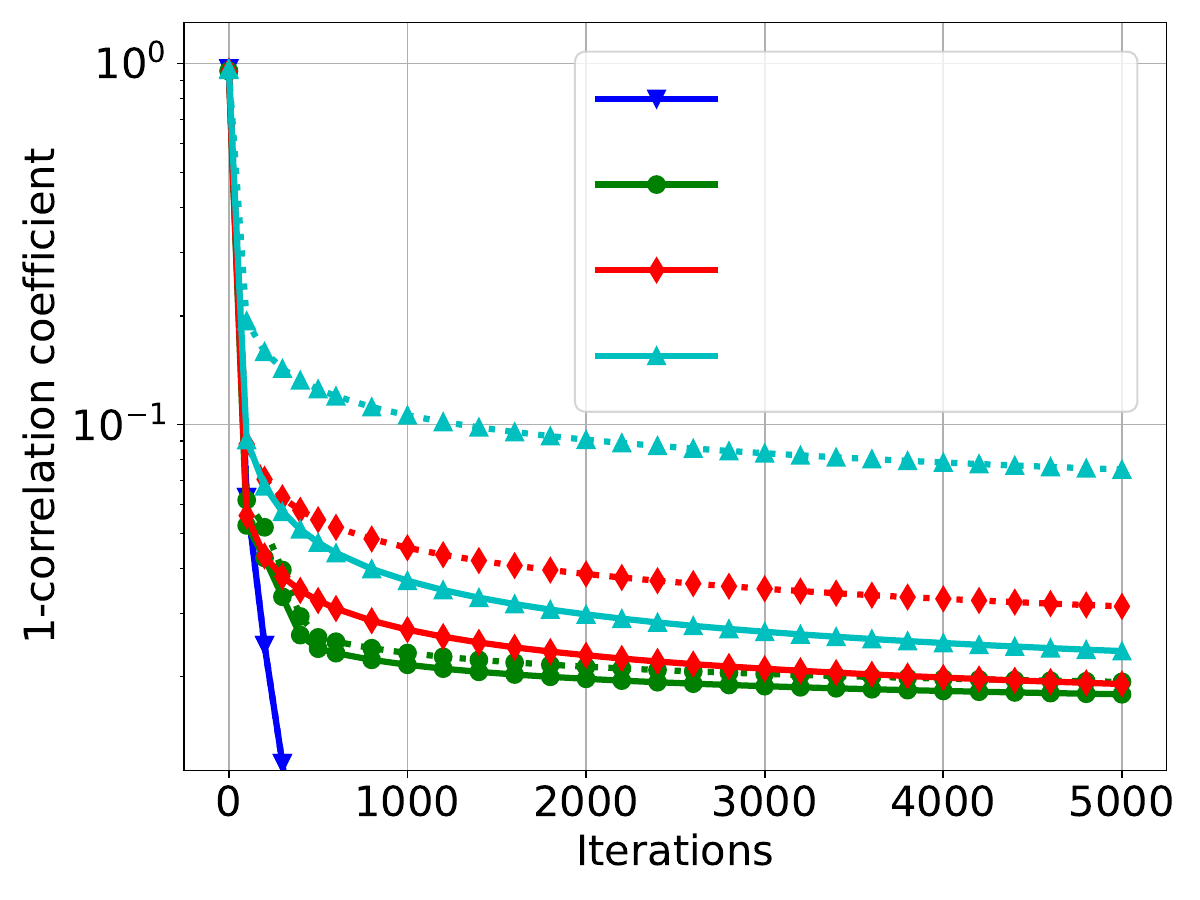}};
        \node at (0,-2.) {\small{Iterations}};
        \node[rotate=90] at (-2.65,0) {\small{$1-$correlation coefficient}};
        \node at (1.25,1.4) {\footnotesize{$\text{\# selected}=1$}};
        \node at (1.25,1.02) {\footnotesize{$\text{\# selected}=2$}};
        \node at (1.25,0.64) {\footnotesize{$\text{\# selected}=5$}};
        \node at (1.25,0.26) {\footnotesize{$\text{\# selected}=9$}};
        \end{tikzpicture}
        \label{fig multi corr diff ns}
    }
\caption{ Behavior of GD when selecting multiple tokens. \textbf{(a)} The number of selected tokens increases with $\lambda$. \textbf{(b)} Predictivity of attention SVM solutions for varying $\lambda$; Dotted curves depict the correlation corresponding to $\Ws$ calculated via \eqref{eqn:mattnsvm} and solid curves represent the correlation to $\W^\rfn$, which incorporates the $\Wf$ correction. \textbf{(c)} Similar to (b), but evaluating correlations over different numbers of selected tokens.}
    \label{fig multi corrs}
\end{figure}
Figure \ref{fig multi corrs} presents experimental findings concerning Lemma \ref{example dataset} across random problem instances. For this experiment, we set $n=1$, $T=10$, and $d=10$. The results are averaged over $100$ random trials, with each trial involving the generation of randomly orthonormal vectors $\x_{1t}$ and the random sampling of vector $\vb$ from the unit sphere. Similar to the processing step in Figure~\ref{fig nn diff d}, and following Figure~\ref{fig nn diff d} (lower) which illustrates that smaller softmax outputs over masked sets correspond to higher correlation coefficients, we define the selected and masked token sets. Specifically, tokens with softmax outputs $>10^{-3}$ are considered selected, while tokens with softmax outputs $<10^{-8}$ are masked. Instances with softmax outputs between $10^{-8}$ and $10^{-3}$ are filtered out.

Figure \ref{fig multi ns diff lambda} shows that the number of selected tokens grows alongside $\lambda$, a prediction consistent with Lemma \ref{example dataset}. When $\lambda=0$, the head $h(\x)=\vb^\top\x$ is linear, resulting in the selection of only one token per input. Conversely, as $\lambda$ exceeds a certain threshold (e.g., $\lambda>2.0$ based on our criteria), the optimization consistently selects all tokens. Figure \ref{fig multi corr diff lambda} and \ref{fig multi corr diff ns} delve into the predictivity of attention SVM solutions for varying $\lambda$ and different numbers of selected tokens. The dotted curves in both figures represent $1-\corr{\W,\Ws}$, while solid curves indicate $1-\corr{\W,\W^{\rfn}}$, where $\W$ denotes the GD solution. Overall, the SVM-equivalence demonstrates a strong correlation with the GD solution (consistently above $0.95$). However, selecting more tokens (aligned with larger $\lambda$ values) leads to reduced predictivity.

To sum up, we have showcased the predictive capacity of the generalized SVM equivalence regarding the inductive bias of 1-layer transformers with nonlinear heads. Nevertheless, it's important to acknowledge that this section represents an initial approach to a complex problem, with certain caveats requiring further investigation (e.g., the use of filtering in Figures \ref{fig nn diff d} and \ref{fig multi corrs}, and the presence of imperfect correlations). We aspire to conduct a more comprehensive investigation, both theoretically and empirically, in forthcoming work.

\section{Extending the Theory to Sequential and Causal Predictions}\label{sec seq extend}


While our formulations (\ref{eqn:erm:w} \& \ref{eqn:erm:kq}) regress a single label $Y_i$ per $(\X_i,\z_i)$, we extend in Appendix \ref{sec multioutput}  our findings to the sequence-to-sequence classification setting, where we output and classify all $T$ tokens per input $\X_i,\Z_i\in\R^{T\times d}$. In this scenario, we prove that all of our RP guarantees remain intact after introducing a slight generalization of \eqref{eqn:sattnsvm}. Concretely, consider the following ERM problems for sequential and causal settings:
\begin{align}
\Lc^{\texttt{seq}}(\W)=\frac{1}{n}\sum_{i=1}^n\sum_{k=1}^T \ell(Y_\ik\cdot h( \X_i^\top \sft{\X_i\W\z_\ik})),\quad\text{and}\quad
\Lc^{\texttt{csl}}(\W)=\frac{1}{n}\sum_{i=1}^n\sum_{k=1}^T \ell(Y_\ik\cdot h( \X_i^\top \sftk{\X_i\W\z_\ik})).
\end{align}
Both equations train $T$ tokens per input $(\X_i,\Z_i)$ and, as usual, we recover self-attention via $\Z_i\gets \X_i$. For the causal setting, we use the masked attention $\sftk{\cdot}$ which calculates the softmax probabilities over the first $k$ entries of its input and sets the remaining $T-k$ entries to zero. This way, the $k$'th prediction of the transformer only utilizes tokens from $1$ to $k$ and not the future tokens.

Let $\bal=(\alpha_\ik)\ikix$ be tokens to be selected by attention (e.g.~locally-optimal indices, see Def.~\ref{seq loc opt}). Then, the sequential generalization of \eqref{eqn:sattnsvm} corresponding to $\Lc^{\texttt{seq}}(\W)$ is given by
\begin{equation}
 \min_{\W}\tf{\W}
\quad \text{subj. to} \quad  (\x_{i\aik}-\x_\itt)^\top\W\z_\ik\geq 1\quad\textnormal{for all}\quad {t\neq \aik},~~k\in[T],~~ i\in[n]~~\label{seqattnsvm1}.
\end{equation}
We refer the reader to Appendix \ref{sec multioutput} which rigorously establishes all RP results for this sequential classification setting. On the other hand, for the causal inference setting SVM should reflect the fact that the model is not allowed to make use of future tokens. Note that the SVM constraints directly arise from softmax calculations. Thus, since attention is masked over the indices $t\leq k$ and $k\in[T]$, the SVM constraints should apply over the same mask. Thus, we can consider the straightforward generalization of global-optimality where $\op_\ik$ is the token index with highest score over indices $t\leq k$ and introduce an analogous definition for local-optimality. This leads to the following variation of \eqref{seqattnsvm1}, which aims to select indices $\alpha_\ik\in [k]$ from the first $k$ tokens
\begin{equation*}
 \min_{\W}\tf{\W}
\quad \text{subj. to} \quad   (\x_{i\aik}-\x_\itt)^\top\W\z_\ik\geq 1\quad \textnormal{for all} \quad  t\neq \aik,~~t\leq k, ~~k \in[T],~~i\in[n]\label{cslattnsvm}.
\end{equation*}

Causal attention is a special case of a general attention mask which can restrict softmax to arbitrary subset of the tokens. Such general masking can be handled similar to \eqref{cslattnsvm} by enforcing SVM constraints over the nonzero support of the mask. Finally, the discussion so far extends our main theoretical results and focuses on selecting single token per sequence. It can further be enriched by the generalized SVM equivalence developed in Section \ref{sec:multi} to select and compose multiple tokens by generalizing \eqref{eqn:mattnsvm}.
\section{Related work}\label{sec:related}
\subsection{Implicit regularization, matrix factorization, and sparsity}
Extensive research has delved into gradient descent's implicit bias in separable classification tasks, often using logistic or exponentially-tailed losses for margin maximization \cite{soudry2018implicit,gunasekar2018characterizing,nacson2019convergence,ji2021characterizing,kini2021label,moroshko2020implicit,ji2020directional}. The findings have also been extended to non-separable data using gradient-based techniques \cite{ji2018risk,ji2019implicit,ji2020gradient}. Implicit bias in regression problems and losses has been investigated, utilizing methods like mirror descent \cite{woodworth2020kernel, gunasekar2018characterizing,
yun2020unifying, vaskevicius2019implicit, amid2020winnowing, amid2020reparameterizing,azizan2021stochastic,sun2022mirror}. Stochastic gradient descent has also been a subject of interest regarding its implicit bias \cite{li2019towards,blanc2020implicit,liang2020just, haochen2020shape, li2022what, damian2021label, zou2021benefits}. This extends to the implicit bias of adaptive and momentum-based methods \cite{qian2019implicit, wang2021momentum, wang2021implicit, ji2021fast}.

In linear classification, GD iterations on logistic loss and separable datasets converge to the hard margin SVM solution \cite{soudry2018implicit,rosset2003margin,zhang2005boosting}. The attention layer's softmax nonlinearity behaves similarly, potentially favoring margin-maximizing solutions. Yet, the layer operates on tokens in input sequences, not for direct classification. Its bias leans toward an \eqref{eqn:sattnsvm}, selecting relevant tokens while suppressing others. However, formalizing this intuition presents significant challenges: Firstly, our problem is nonconvex (even in terms of the $\W$-parameterization), introducing new challenges and complexities. Secondly, it requires the introduction of novel concepts such as locally-optimal tokens, demanding a tailored analysis focused on the cones surrounding them. Our findings on the implicit bias of $(\Kb,\Qb)$-parameterization share conceptual similarities with \cite{srebro2004maximum}, which proposes and analyzes a max-margin matrix factorization problem. Similar problems have also been studied more recently   in the context of neural-collapse phenomena \cite{papyan2020prevalence} through an analysis of the implicit bias and regularization path of the unconstrained features model with cross-entropy  loss \cite{thrampoulidis2022imbalance}. However, a fundamental distinction from these works lies in the fact that attention solves a different max-margin problem that separate tokens. Moreover, our results on $(\Kb,\Qb)$-parameterization are inherently connected to the rich literature on low-rank factorization \cite{gunasekar2017implicit,arora2019implicit,timor2023implicit,tu2016low,stoger2021small}, stimulating further research. \cite{tarzanagh2023margin} is the first work to establish the connection between attention and SVM, which is closest to our work. Here, we augment their framework, initially developed for a simpler attention model, to transformers by providing the first guarantees for self/cross-attention layers, nonlinear prediction heads, and realistic global convergence guarantees. While our Assumption \ref{assum:opt:token} and local-convergence analysis align with \cite{tarzanagh2023margin}, our contributions in global convergence analysis, benefits of overparameterization, and the generalized SVM-equivalence in Section \ref{sec:multi} are unique to this work.

It is well-known that attention map (i.e.~softmax outputs) act as a feature selection mechanism and reveal the tokens that are relevant to classification. On the other hand, sparsity and lasso regression (i.e.~$\ell_1$ penalization) \cite{donoho2006compressed,tibshirani1996regression,tropp2007signal,chen2001atomic,candes2006robust} have been pivotal tools in the statistics literature for feature selection. Softmax and lasso regression exhibit interesting parallels: The Softmax output $\s=\sft{\X\W\z}$ obeys $\|\s\|_{\ell_1}=1$ by design. Softmax is also highly receptive to being sparse because decreasing the temperature (i.e.~scaling up the weights $\W$) eventually leads to a one-hot vector unless all logits are equal. We (also, \cite{tarzanagh2023margin}) have used these intuitions to formalize attention as a \emph{token selection mechanism}. This aspect is clearly visible in our primary SVM formulation \eqref{eqn:sattnsvm} which selects precisely one token from each input sequence (i.e.~hard attention). Section \ref{sec:multi} has also demonstrated how  \eqref{eqn:mattnsvm} can explain more general sparsity patterns by precisely selecting desired tokens and suppressing others. We hope that this SVM-based token-selection viewpoint will motivate future work and deeper connections to the broader feature-selection and compressed sensing literature.
\subsection{Attention mechanism and transformers}

Transformers, as highlighted by \cite{vaswani2017attention}, revolutionized the domains of NLP and machine translation. Prior work on self-attention \cite{cheng2016long,parikh2016decomposable,paulus2017deep,lin2017structured} laid the foundation for this transformative paradigm. In contrast to conventional models like MLPs and CNNs, self-attention models employ global interactions to capture feature representations, resulting in exceptional empirical performance.

Despite their achievements, the mechanisms and learning processes of attention layers remain enigmatic. Recent investigations \cite{edelman2022inductive,sahiner2022unraveling,ergen2022convexifying,baldi2022quarks,dong2021attention} have concentrated on specific aspects such as sparse function representation, convex relaxations, and expressive power. Expressivity discussions concerning hard-attention \cite{hahn2020theoretical} or attention-only architectures \cite{dong2021attention} are connected to our findings when $h(\cdot)$ is linear. In fact, our work reveals how linear $h$ results in attention's optimization dynamics to collapse on a single token whereas nonlinear $h$ provably requires attention to select and compose multiple tokens. This supports the benefits of the MLP layer for expressivity of transformers. There is also a growing body of research aimed at a theoretical comprehension of in-context learning and the role played by the attention mechanism \cite{akyurek2022learning,li2023transformers,ahn2023transformers,zhang2023trained,bai2023transformers,giannou2023looped}. \cite{sahiner2022unraveling} investigate self-attention with linear activation instead of softmax, while \cite{ergen2022convexifying} approximate softmax using a linear operation with unit simplex constraints. Their primary goal is to derive convex reformulations for training problems grounded in empirical risk minimization (ERM). In contrast, our methodologies, detailed in equations \eqref{eqn:erm:w} and \eqref{eqn:erm:kq}, delve into the nonconvex domain.

\cite{merrill2020effects,boix2023transformers} offer insights into the implicit bias of optimizing transformers. Specifically, \cite{merrill2020effects} provide empirical evidence that an increase in attention weights results in a sparser softmax, which aligns with our theoretical framework. \cite{boix2023transformers} study incremental learning and furnish both theory and numerical evidence that increments of the softmax attention weights ($\Kb\Qb^\top$) are low-rank. Our theory aligns with this concept, as the SVM formulation \eqref{eqn:qk:svm} of $(\Kb,\Qb)$ parameterization inherently exhibits low-rank properties through the nuclear norm objective, rank-$m$ constraint, and implicit constraint induced by Lemma \ref{lem:rank}.

Several recent works \cite{jelassi2022vision,li2023theoretical,tian2023scan,noci2023shaped,oymak2023role,nguyen2023primal,fu2023can} aim to delineate the optimization and generalization dynamics of transformers. However, their findings usually apply under strict statistical assumptions about the data, while our study offers a comprehensive optimization-theoretic analysis of the attention model, establishing a formal linkage to max-margin problems and SVM geometry. This allows our findings to encompass the problem geometry and apply to diverse datasets. Overall, the max-margin equivalence  provides a fundamental comprehension of the optimization geometry of transformers, offering a framework for prospective research endeavors, as outlined in the subsequent section.
\section{Discussion, Future Directions, and Open Problems}\label{sec:conc}

Our optimization-theoretic characterization of the self-attention model provides a comprehensive understanding of its underlying principles. The developed framework, along with the research presented in \cite{tarzanagh2023margin}, introduces new avenues for studying transformers and language models. The key findings include:
\begin{enumerate}[label=$\checkmark$, wide, labelwidth=!,itemindent=!, labelindent=1pt]
\item The optimization geometry of self-attention exhibits a fascinating connection to  hard-margin SVM problems. By leveraging linear constraints formed through outer products of token pairs, optimal input tokens can be effectively separated from non-optimal ones.
\item When gradient descent is employed without early-stopping, implicit regularization and convergence of self-attention naturally occur. This convergence leads to the maximum margin solution when minimizing specific requirements using logistic loss, exp-loss, or other smooth decreasing loss functions. Moreover, this implicit bias is unaffected by the step size, as long as it is sufficiently small for convergence, and  remains independent of the initialization process.
\end{enumerate}
The fact that gradient descent leads to a maximum margin solution may not be surprising to those who are familiar with the relationship between regularization path and gradient descent in linear and nonlinear neural networks  \cite{soudry2018implicit,
gunasekar2018characterizing, nacson2019convergence, ji2021characterizing,moroshko2020implicit,ji2020directional}. However, there is a lack of prior research or discussion regarding this connection to the  attention mechanism. Moreover, there has been no rigorous analysis or investigation into the exactness and independence of this bias with respect to the initialization and step size. Thus, we believe our findings and insights deepen our understanding of transformers and language models, paving the way for further research in this domain. Below, we discuss some notable directions and highlight open problems that are not resolved by the existing theory.
\begin{itemize}
\item \textbf{Convergence Rates}: The current paper establishes asymptotic convergence of gradient descent; nonetheless, there is room for further exploration to characterize non-asymptotic convergence rates. Indeed, such an exploration can also provide valuable insights into the choice of learning rate, initialization, and the optimization method. 

\item \textbf{Gradient descent on $(\Kb,\Qb)$ parameterization:} We find it remarkable that regularization path analysis was able to predict the implicit bias of gradient descent. Complete analysis of gradient descent is inherently connected to the fundamental question of low-rank factorization \cite{gunasekar2017implicit,li2018algorithmic}. We believe formalizing the implicit bias of gradient descent under margin constraints presents an exciting open research direction for further research.

\item \textbf{Generalization analysis:} An important direction is the generalization guarantees for gradient-based algorithms. The established connection to hard-margin SVM can facilitate this because the SVM problem is amenable to statistical analysis. This would be akin to how kernel/NTK analysis for deep nets enabled a rich literature on generalization analysis for traditional deep learning.
\item \textbf{Global convergence of gradient descent:} We lack a complete characterization of the directional convergence of gradient descent. We ask: \emph{Where does gradient descent directionally-converge from arbitrary initialization for 1-layer self-attention?} The role of over-parameterization as conjectured in Section \ref{sec overparam} and the notion of locally-optimal directions discussed in Section \ref{sec local} constitute important pieces of this puzzle (also see the discussion in \cite{tarzanagh2023margin}).
\item \textbf{Realistic architectures:} Naturally, we wish to explore whether max-margin equivalence can be extended to more realistic settings: Can the theory be expanded to handle multi-head attention, multi-layer architectures, and MLP nonlinearities? We believe the results in Section \ref{sec:multi} take an important step towards this direction by including analytical formulae for the implicit bias of the attention layer under nonlinear prediction heads. 

\item \textbf{Jointly optimizing attention and prediction head:} It would be interesting to study the joint optimization dynamics of attention weights and prediction head $h(\cdot)$. This problem can be viewed as a novel low-rank factorization type problem where $h(\cdot)$ and $\W$ are factors of the optimization problem, only, here, $\W$ passes through the softmax nonlinearity. To this aim, \cite{tarzanagh2023margin} provides a preliminary geometric characterization of the implicit bias for a simpler attention model using regularization path analysis. Such findings can potentially be generalized to the analysis of gradient methods and full transformer block.
\end{itemize}
\section*{Acknowledgements}

This work was supported by the NSF grants CCF-2046816 and CCF-2212426, Google Research Scholar award, and Army Research Office grant W911NF2110312. The authors thank Xuechen Zhang, Ankit Singh Rawat, Mahdi Soltanolkotabi, Jason Lee, Arkadas Ozakin, Ramya Korlakai Vinayak, and Babak Hassibi for helpful suggestions and discussion.
\bibliographystyle{alpha}
\bibliography{refs}
\newpage
\appendix

\paragraph{Roadmap.} The appendix is organized as follows:

\begin{itemize}
\item Appendix \ref{app sep} provides the proof of Theorem \ref{thm:separation}. 
\item Appendix~\ref{app:sec:aux} provides auxiliary lemmas about the training risk. 
\item Appendix~\ref{app:sec:gd:global} presents the proofs for the global convergence of gradient descent (Section \ref{provable global}). 
\item Appendix \ref{app local proofs} presents the proofs for the local convergence of gradient descent (Section \ref{sec local}). 
\item Appendix~\ref{sec multioutput} provides a general regularization path analysis. This analysis addresses the inductive bias of the attention layer for general norm objectives and beyond-linear prediction heads under a sequence-to-sequence classification model. The seq2seq aspect also goes beyond our results in the main body where we predict using single output token (Sections \ref{sec:bias} and \ref{sec:local reg path}).
\item Appendix \ref{app supp exp} provides additional experiments and their discussion.

\end{itemize}

\addtocontents{toc}{\protect\setcounter{tocdepth}{3}}
\tableofcontents
\section{Proof of Theorem \ref{thm:separation}: Separability Under Mild Over-Parameterization}\label{app sep}


We denote Kronecker product of two matrices via $\kron$. Additionally, given $\W\in\R^{d\times d}$, let us denote its vectorization $\w=\text{vec}(\W)\in\R^{d^2}$. We first note that separation is implied by the linear independence of the constraints. Specifically, we are interested in guaranteeing
\[
\li\w,\fb_\itt\ri\geq 1\quad \text{for all}\quad i\in[n],~~t\neq\bal_i, ~~ \textnormal{where} ~~\fb_\itt:=(\x_{i\bal_i}-\x_\itt)\kron \z_i.
\]
Note that, the inequality constraints above are feasible as soon as $\fb_\itt$'s are linearly independent. Thus, we will instead prove linear independence of the vectors $(\fb_\itt)_{i\in[n],t\neq\bal_i}$. Also note that, since there are finitely many $\bal$ choices, if we show almost sure separation for a fixed but arbitrary $\bal$ choice, through union bound, we recover the result for all $\bal$. Thus, we prove the result for a fixed $\bal$ choice.

We will prove this result inductively.  Let $\M_{n-1}\in\R^{(n-1)(T-1)\times d^2}$ denote the matrix whose rows are given by the features $(\fb_\itt)_{i\in[n-1],t\neq\bal_i}$. Suppose the result is correct for $n-1$, thus, $\M_{n-1}$ is full row-rank almost surely (post random Gaussian perturbation). Now, fix $\M_{n-1}$ and, conditioned on $\M_{n-1}$ being full row-rank, let us show that $\M_n$ is also full row-rank almost surely. To prove this, consider the $n$'th example $(\X_n,\z_n)$. Let $(\g_t)_{t=1}^T,\hb\in\R^d$ be random vectors with i.i.d.~$\Nn(0,\sigma^2)$ entries. Consider the perturbed input $\X'_n \in\R^{T\times d}$ with tokens $\x'_{nt}=\x_{nt}+\g_t$ and $\z'_n=\z_n+\hb$. Note that for self-attention, we set $\z_n=\x_{n1}$ and $\hb=\g_1$. From these, create the matrix $\tilde{\M}_n \in \R^{(T-1)\times d^2}$ with rows $(\fb'_{nt})_{t\neq\bal_n}$ where $\fb'_{nt}=(\x'_{n\bal_n}-\x'_{nt})\kron \z'_n$. Observe that $\M_n=\begin{bmatrix}\tilde{\M}_n\\\M_{n-1}\end{bmatrix}$. To conclude with the result, we will apply Lemma \ref{lem add up}. To apply this lemma, we have two claims.

\noindent\textbf{Claim 1:} Let $\bar{\z}_n$ be the projection of $\z'_n$ on the orthogonal complement of $(\z_i)_{i=1}^{n-1}$. Consider the matrix $\bar{\M}_n$ with rows $\bar{\fb}_{nt}=(\x'_{n\bal_n}-\x'_{nt})\kron \bar{\z}_n$ for $t\neq \bal_n$. $\bar{\M}_n$ is rank $T-1$ almost surely whenever $d\geq \max(T-1,n)$.

To see this claim, first denote the orthogonal complement of the span of the vectors $(\z_i)_{i=1}^{n-1}$ by $Z_{n-1}$. The span of the vectors $(\z_i)_{i=1}^{n-1}$ is at most $n-1$ dimensional and, since $d\geq n$, $\text{dim}(Z_{n-1})\geq 1$. Consequently, $\bar{\z}_n\neq 0$ almost surely because the Gaussian variable $\z_n+\hb$ will have nonzero projection on $Z_{n-1}$ almost surely. Secondly, let $\bar{\X}\in\R^{(T-1)\times d}$ be the matrix whose rows are equal to $\x'_{n\bal_n}-\x'_{nt}$ for $t\neq \bal_n$. $\bar{\X}$ is full row-rank almost surely, this is because conditioned on $\g_{\bal_n}$, the matrix $\bar{\X}$ is written as $\bar{\X}=\tilde{\X}+\Gb$ where $\tilde{\X}$ is deterministic and $\Gb$ is i.i.d. Gaussian. The latter perturbation ensures full row-rank almost surely whenever $T-1\leq d$. Finally, note that $\bar{\M}_n=\bar{\X}\kron \bar{\z}_n$. Since the rank of the Kronecker product is multiplicative, we conclude with the claim.

\noindent\textbf{Claim 2:} Let $S_{n-1}\subset\R^{d^2}$ be the null space of $\M_{n-1}$. There exists a subspace $P\subseteq S_{n-1}$ such that rows of $\bar{\M}_n$ are projections of the rows of $\M_n$ on $P$, that is, $\bar{\fb}_{nt}=\Pi_{P}(\fb'_{nt})$ where $\Pi$ denotes set projection.

To show this claim, let us consider the matrix forms of the vectorized features i.e.~let us work with $\R^{d\times d}$ rather than $\R^{d^2}$. Denote the notation change as $\Fb_\itt=(\x_{i\bal_i}-\x_\itt) \z_i^\top\leftrightarrow \fb_\itt=(\x_{i\bal_i}-\x_\itt)\kron \z_i$. Recall that $Z_{n-1}$ denotes the orthogonal complement of $(\z_i)_{i=1}^{n-1}$. Define $Q$ to be the set of matrices in $\R^{d\times d}$ whose column space lies in $Z_{n-1}$ and $P$ to be the vectorization of $Q$. We first show that $P$ is a subset of the null space of $S_{n-1}$. To see this, fix any matrix $\A\in P$ and a row $\fb_\itt$ from $\M_{n-1}$. Matricized $\fb_\itt$ can be written as $\Fb_\itt=\ab\z_i^\top$ for $\z_i\in Z_{n-1}^\perp$. Since $\A\in Q$, this implies $\li\Fb_\itt,\A\ri=\ab^\top\A\z_i=0$ as $\A\z_i=0$. This holds for all $\Fb_\itt$, thus, $\texttt{vectorized}(\A)\in\texttt{null}(S_{n-1})$.

Next, we need to show that $\bar{\fb}_{nt}$ is the projection of $\fb'_{nt}$ on $P$. To see this, we will show that $\bar{\fb}_{nt}\in P$ whereas $\fb'_{nt}-\bar{\fb}_{nt}\in P^\perp$ for all $t$. Write $\Fb'_{nt}=\texttt{matricized}(\fb'_{nt})=\ab{\z'}_n^\top$. We have that $\bar{\Fb}_{nt}=\texttt{matricized}(\bar{\fb}_{nt})=\ab\bar{\z}_n^\top$ where $\bar{\z}_n=\Pi_{Z_{n-1}}(\z'_n)$. This implies $\bar{\Fb}_{nt}\in Q$ and $\bar{\fb}_{nt}\in P$. Similarly, since $\z'_n-\bar{\z}_n\in Z_{n-1}^\perp$ which implies $\Fb'_{nt}-\bar{\Fb}_{nt}\in Q^\perp$.  

To conclude with the proof, observe that, through Claims 1 and 2, $\M_n=\begin{bmatrix}\tilde{\M}_n\\\M_{n-1}\end{bmatrix}$ satisfies the requirements of Lemma \ref{lem add up} almost surely, namely, projection of $\tilde{\M}_n$ onto a subset of the null space of $\M_{n-1}$ being full rank. Thus, $\M_n$ is full rank almost surely.
 $\qed$ 
\begin{lemma}\label{lem add up}Let $\A\in\R^{n\times p},\B\in\R^{m\times p}$. Suppose $n+m\leq p$ and $\A$ is full row-rank. Denote the null space of $\A$ by $S_\A^\perp$. Let $P$ be a subspace that is its subset i.e.~$P\subseteq S_\A^\perp$. Let $\B'$ be the matrix obtained by projecting each of row of $\B$ on $P$ and suppose $\B'$ is full rank. Then, the concatenation $\Cb= [\A; \B ]$
is full row-rank.
\end{lemma}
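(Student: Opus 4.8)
The plan is to prove that $\Cb=[\A;\B]$ has full row rank by showing that the only way a linear combination of its rows can vanish is the trivial one. Concretely, I would suppose $\vct{a}\in\R^n$ and $\vct{b}\in\R^m$ satisfy $\vct{a}^\top\A+\vct{b}^\top\B=\mathbf{0}$ as a row vector in $\R^p$, and aim to conclude $\vct{a}=\mathbf{0}$ and $\vct{b}=\mathbf{0}$, which is exactly linear independence of the rows of $\Cb$.

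The key device is the orthogonal projection $\Pi_P\in\R^{p\times p}$ onto the subspace $P$. I would first record two elementary observations. First, since $P\subseteq S_\A^\perp=\texttt{null}(\A)$, every vector of $P$ is orthogonal to the row space of $\A$, so projecting any row of $\A$ onto $P$ yields $\mathbf{0}$; equivalently (using that $\Pi_P$ is symmetric, so right-multiplication by $\Pi_P$ acts row-wise as projection onto $P$) we get $\A\Pi_P=\mathbf{0}$. Second, by the definition of $\B'$ — its rows are the projections of the rows of $\B$ onto $P$ — the same symmetry gives $\B\Pi_P=\B'$.

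Now right-multiply the relation $\vct{a}^\top\A+\vct{b}^\top\B=\mathbf{0}$ by $\Pi_P$. The first term vanishes by the first observation, and the relation collapses to $\vct{b}^\top\B'=\mathbf{0}$. Since $\B'$ has full row rank $m$ (which is the only reading consistent with the hypotheses, as $\dim P\le p-n$ forces $m\le p-n$, matching $n+m\le p$), this forces $\vct{b}=\mathbf{0}$. Substituting back yields $\vct{a}^\top\A=\mathbf{0}$, and since $\A$ has full row rank $n$ we obtain $\vct{a}=\mathbf{0}$. Hence the rows of $\Cb$ are linearly independent, i.e.\ $\Cb$ is full row-rank.

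I do not anticipate a genuine obstacle here; the argument is a short projection trick. The only care needed is bookkeeping of orientations — rows versus columns of $\A,\B$, and the fact that symmetry of $\Pi_P$ is what lets right-multiplication implement the row-wise projection used to identify $\B\Pi_P$ with $\B'$ and to kill $\A\Pi_P$ — together with fixing the convention that "full rank" for the wide matrices $\A$ and $\B'$ means full row rank, which is the interpretation forced by the dimension count $n+m\le p$.
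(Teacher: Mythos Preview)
Your proposal is correct and follows essentially the same approach as the paper: both arguments suppose a nontrivial linear combination of rows vanishes, project onto $P$ to kill the contribution from $\A$ and isolate the $\B'$ part, use full row rank of $\B'$ to force the $\B$-coefficients to zero, and then use full row rank of $\A$ to finish. The only cosmetic difference is that you package the projection step as right-multiplication by the symmetric projector $\Pi_P$, whereas the paper writes the same thing row-by-row.
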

\begin{proof} Let $(\ab_i)_{i=1}^n$, $(\bb_i)_{i=1}^m$, $(\bb'_i)_{i=1}^m$ be the rows of $\A,\B,\B'$, respectively. Suppose the set of rows of $\A$ and $\B$ are linearly dependent. Then, for some $(c_i)_{i=1}^{n},(c'_i)_{i=1}^{m}$ (which are not all-zeros), we have that
\begin{align}
\sum_{i=1}^n c_i\ab_i+\sum_{i=1}^m c'_i\bb_i=0.\label{cs are nonzero}
\end{align}
We now rewrite this as follows to decouple $P$ and $P^\perp$:
\[
\sum_{i=1}^n c_i\ab_i+\sum_{i=1}^m c'_i\bb'_i+\sum_{i=1}^m c'_i(\bb'_i-\bb_i)=0.
\]
Projecting above inequality to $P$, we find that $\sum_{i=1}^m c'_i\bb'_i=0$. Since $(\bb'_i)_{i=1}^m$ are linearly independent, we find $c'_i=0$ for all $i\in[m]$. This implies $\sum_{i=1}^n c_i\ab_i=0$. Since $(\ab_i)_{i=1}^n$ are linearly independent, this implies $c_i=0$ for all $i\in[n]$. Thus, \eqref{cs are nonzero} can only hold if all coefficients are zero which is a contradiction.
\end{proof}

\section{Auxiliary Lemmas}\label{app:sec:aux}

\subsection{Proof of Lemma \ref{lem:rank}}\label{app low rank proof}
Let $\W^\svm_\dm$ denote either solution of \eqref{eqn:sattnsvm} or \eqref{eqn:sattnsvmst}. We claim that $\W^\svm_\dm$ is at most rank $n$.
Suppose the claim is wrong and row space of $\W^\svm_\dm$ does not lie within $\Sc=\texttt{span}(\{\z_i\}_{i=1}^n)$. Let $\W=\Pi_{\Sc}(\W^\svm_\dm)$ denote the matrix obtained by projecting the rows of $\W^\svm_\dm$ on $\Sc$. Observe that $\W$ satisfies all SVM constraints since $\W\z_i=\W^\svm_\dm\z_i$ for all $i\in[n]$. For Frobenius norm, using $\W^\svm_\dm\neq \W$, we obtain a contradiction via $\tf{\W^\svm_\dm}^2=\tf{\W}^2+\tf{\W^\svm_\dm-\W}^2>\tf{\W}^2$. For nuclear norm, we can write $\W=\Ub\bSi\Vb^\top$ with $\bSi\in\R^{r\times r}$ where $r$ is dimension of $\Sc$ and $\texttt{column\_span}(\Vb)= \Sc$. 
\\
To proceed, we split the problem into two scenarios.
\\
\noindent\textbf{Scenario 1:} Let $\Ub_\perp,\Vb_\perp$ be orthogonal complements of $\Ub,\Vb$ -- viewing matrices with orthonormal columns as subspaces. Suppose $\Ub_\perp^\top \W^\svm_\dm\Vb_\perp\neq 0$. Then, singular value inequalities (which were also used in earlier works on nuclear norm analysis \cite{recht2011null,oymak2010new,oymak2011simplified}) guarantee that $\tnuc{\W^\svm_\dm}\geq \tnuc{\Ub^\top \W^\svm_\dm\Vb}+\tnuc{\Ub_\perp^\top \W^\svm_\dm\Vb_\perp}>\tnuc{\W}$.
\\
\noindent\textbf{Scenario 2:} Now suppose $\Ub_\perp^\top \W^\svm_\dm\Vb_\perp= 0$. Since $\W^\svm_\dm\Vb_\perp\neq 0$, this implies $\Ub^\top \W^\svm_\dm\Vb_\perp\neq 0$. Let $\W'=\Ub\Ub^\top\W^\svm_\dm$ which is a rank-$r$ matrix. Since $\W'$ is a subspace projection, we have $\tnuc{\W'}\leq \tnuc{\W^\svm_\dm}$. Next, observe that $\tnuc{\W}=\texttt{trace}(\Ub^\top \W\Vb)=\texttt{trace}(\Ub^\top \W'\Vb)$. On the other hand, $\texttt{trace}(\Ub^\top \W'\Vb)<\tnuc{\W'}$ because the equality in \emph{von Neumann's trace inequality} happens if and only if the two matrices we are inner-producting, namely $(\W',\Ub\Vb^\top)$, share a joint set of singular vectors \cite{carlsson2021neumann}. However, this is not true as the row space of $\W^\svm_\dm$ does not lie within $\Sc$. Thus, we obtain $\tnuc{\W}<\tnuc{\W'}\leq \tnuc{\W^\svm_\dm}$ concluding the proof via contradiction. $\qed$ 

\subsection{Proof of Lemma \ref{lem min risk}}
We first show that  $\Lc(\W)>\Lc_\st=\frac{1}{n}\sum_{i=1}^n \ell(\bgam_{i\op_i})$. The token at the output of the attention layer is given by $\ab_i=\X_i^\top \s_i$, where $\sft{\X_i\W\z_i}=\s_i$. Here, $\ab_i$ can be written as $\ab_i=\sum_{t\in[T]}\s_\itt\x_\itt$, where $\s_\itt\geq 0$ and $\sum_{t\in[T]}\s_\itt=1$. 
To proceed, using the linearity of $h(\x)=\vb^\top\x$, we find that
\begin{align}\label{eqn:w:low}
\nonumber
  \Lc(\W)= \frac{1}{n} \sum_{i=1}^n  \ell(Y_i\cdot  h(\ab_i)) &= \frac{1}{n} \sum_{i=1}^n  \ell( Y_i\cdot  \sum_{t\in[T]}\s_\itt h(\x_\itt))\\
  &\geq  \frac{1}{n} \sum_{i=1}^n  \ell( Y_i\cdot  h(\x_{i\op_i})) =\frac{1}{n} \sum_{i=1}^n  \ell(\bgam_{i\op_i})=\Lc_\st. 
\end{align}
Here, the inequality follows since  $\bgam_{it} = Y_i \cdot h(\x_\itt)=  Y_i 
 \cdot \vb^\top\x_{it} \leq \bgam_{i\op_i}$ by Definition \ref{score def} and strictly-decreasing nature of the loss $\ell$ due to Assumption~\ref{assum:loss:prope}.  

On the other hand, since not all tokens are optimal, there exists a token index $(i,t)$ for which $Y_i\cdot h(\x_\itt)<Y_i\cdot h( \x_{i\op_i})$. Since all softmax entries obey $\s_\itt>0$ for finite $\W$, this implies the strict inequality $\ell(Y_i\cdot h(\ab_i)) >\ell(Y_i\cdot h(\x_{i\op_i}))$. This leads to the desired conclusion $\Lc(\W)>\Lc_\st$.

Next, we show that if \eqref{eqn:sattnsvm} is feasible i.e.~there exists a $\W$ separating some optimal indices $(\op_i)_{i=1}^n$ from the other tokens, then  $\lim_{R\rightarrow\infty}\Lc(R\cdot\W)=\Lc_\st$. Note that, this assumption does not exclude the existence of other optimal indices. This implies that, letting $\lim_{R\rightarrow\infty} \sft{\X_i(R\cdot\W)\z_i}$ saturates the softmax and will be equal to the indicator function at $\op_i$ for all inputs $i\in[n]$. Thus, $\s_\itt\rightarrow 0$ for $t\neq\op_i$ and $\s_\itt\rightarrow 1$ for $t=\op_i$. Using $M_1$-Lipschitzness of $\ell$, we can write 
\[
\left|\ell(Y_i\cdot h(\x_{i\op_i}))-\ell( Y_i\cdot h(\ab_i))\right|\leq M_1 \left|h(\ab_i)-h(\x_{i\op_i})\right|.
\]
Since $h$ is linear, it is $\tn{\vb}$-Lipschitz implying 
\begin{align*}
 \left|\ell(Y_i\cdot h(\x_{i\op_i}))-\ell(Y_i\cdot h(\ab_i))\right|\leq M_1\tn{\vb}\cdot\tn{\ab_i-\x_{i\op_i}}.   
\end{align*}
Since $\ab_i\rightarrow\x_{i\op_i}$ as $R\rightarrow\infty$, \eqref{eqn:w:low} gives $\lim_{R\rightarrow\infty}\Lc(R\cdot\W)=\Lc_\st$.
$\qed$

\subsection{Proof of Lemma~\ref{lem:lip}}
Let
\begin{equation*}
\bgam_i=Y_i\cdot \X_i\vb, \quad 
 \hb_i=\X_i\W \z_{i}.
\end{equation*}
From Assumption~\ref{assum:loss:prope}, $\ell:\R\rightarrow\R$ is differentiable. Hence,  the gradient evaluated at $\W$ is given by
\begin{equation}\label{grad def}
\nabla\Lc(\W)=\frac{1}{n}\sum_{i=1}^n \ell' \left(\bgam_i^\top \sft{\hb_i}\right)\cdot \X_i^\top  \sfp{\hb_i}  \bgam_i  \z_{i}^\top,
\end{equation}
where
\begin{equation}\label{eqn:der:soft}
\sfp{\hb} = \text{diag}\left(\sft{\hb}\right) - \sft{\hb} \sft{\hb}^\top \in \R^{T\times T}.    
\end{equation}
Note that 
\begin{equation}\label{eqn:sprime:bnorm}
 \| \sfp{\hb} \| \leq \| \sfp{\hb} \|_F \leq  1.  
\end{equation}
Hence,  for any $\W,\dot{\W}\in \R^{d\times d}$, $i\in[n]$,  we have
\begin{subequations}
\begin{align}\label{eqn:soft:lipcons1}
\left\|\sft{\hb_i}-\sft{\dot{\hb}_i}\right\| \leq \left\|\hb_i-\dot{\hb}_i\right\| \leq \|\X_i\|~\|\z_i\|~\left\|\W-\dot{\W}\right\|_F,
\end{align}
where $\dot{\hb}_i=\X_i\dot{\W} \z_{i}$.

Similarly, 
\begin{align}\label{eqn:soft:lipcons2}
\nonumber
\left\|\sfp{\hb_i}-\sfp{\dot{\hb}_i}\right\|_F & \leq \left\|\sft{\hb_i} - \sft{\dot{\hb_i}}\right\| +   \left\|\sft{\hb_i} \sft{\hb_i}^\top- \sft{\dot{\hb_i}} \sft{ \dot{\hb_i}}^\top\right\|_F
\\
 & \leq 3 \|\X_i\|~\|\z_i\|~\left\|\W-\dot{\W}\right\|_F.
\end{align}
\end{subequations}
Next, for any $\W,\dot{\W}\in\R^{d\times d}$, we get
\begin{align}\label{eqn:obj:lipcons}
  \nonumber
 \left\|\nabla \mc{L}(\W)-\nabla \mc{L}(\dot{\W})\right\|_F
  &\leq \frac{1}{n}  \sum_{i=1}^n\left\| \ell' \left(\bgam_i^\top \sft{\hb_i}\right) \cdot \z_{i}  \bgam_i^\top \sfp{\hb_i} \X_i - \ell' \left(\bgam_i^\top \sft{\dot{\hb}_i}\right) \cdot \z_{i}  \bgam_i^\top \sfp{\dot{\hb}_i} \X_i \right\|_F\\ 
    \nonumber
       & \le \frac{1}{n}\sum_{i=1}^{n}  \left\|\z_{i}  \bgam_i^\top \sfp{\dot{\hb}_i} \X_i \right\|_F~\left| \ell' \left(\bgam_i^\top \sft{\hb_i}\right) - \ell' \left(\bgam_i^\top \sft{\dot{\hb}_i}\right)  \right| \\
         \nonumber
       &+ \frac{1}{n}\sum_{i=1}^{n} \left|  \ell' \left(\bgam_i^\top \sft{\hb_i}\right)\right|~\left\| \z_{i}  \bgam_i^\top \sfp{\hb_i} \X_i - \z_{i}  \bgam_i^\top \sfp{\dot{\hb}_i} \X_i \right\|_F \\
         \nonumber
       & \le \frac{1}{n}\sum_{i=1}^{n}  M_0 ~\|\bgam_i\|^2~\|\z_i\| ~\|\X_i\|~\left\|\sft{\hb_i}-\sft{\dot{\hb}_i}\right\| \\
       & +   \frac{1}{n}\sum_{i=1}^{n}  M_1  ~\|\bgam_i\|~\|\z_i\|~\|\X_i \|~\left\|\sfp{\hb_i}-\sfp{\dot{\hb}_i}\right\|_F,
\end{align}
where the second inequality follows from the fact that $|ab - cd| \leq |d||a-c|+ |a||b-d|$ and the third inequality uses Assumption~\ref{assum:loss:prope} and \eqref{eqn:sprime:bnorm}.

Substituting \eqref{eqn:soft:lipcons1}  and \eqref{eqn:soft:lipcons2} into \eqref{eqn:obj:lipcons}, we get
\begin{align*}
\left\|\nabla \mc{L}(\W)-\nabla \mc{L}(\dot{\W})\right\|_F &\leq  \frac{1}{n}\sum_{i=1}^{n} \left( M_0  ~\|\bgam_i\|^2\|\z_i\|^2\|\X_i\|^2+ 3  M_1 \|\bgam_i\|~\|\z_i\|^2~\|\X_i \|^2\right)  \|\W-\dot{\W}\|_F\\
 &\leq  \frac{1}{n}\sum_{i=1}^{n} \left( M_0  ~\|\vb\|^2\|\z_i\|^2\|\X_i\|^4+ 3  M_1 \|\vb\|~\|\z_i\|^2~\|\X_i \|^3\right)  \|\W-\dot{\W}\|_F\\
&\leq  L_{\W}~\|\W-\dot{\W}\|_F,
\end{align*}
where $L_{\W}$ is defined in \eqref{eqn:lip:cons:erm}.

Let $\g_{i}=\X_i\Kb \Qb^\top\z_{i}$. We have
\begin{subequations}\label{grad def KQ}
\begin{align}
\nabla_{\Kb} \Lc(\Kb,\Qb)=\frac{1}{n}\sum_{i=1}^n \ell' \left(\bgam_i^\top \sft{\g_i}\right) \cdot  \z_{i}  \bgam_i^\top \sfp{\g_i} \X_i \Qb,\\
\nabla_{\Qb} \Lc(\Kb,\Qb)=\frac{1}{n}\sum_{i=1}^n \ell' \left(\bgam_i^\top \sft{\g_i}\right) \cdot \X_i^\top  \sfp{\g_i}  \bgam_i  \z_{i}^\top \Kb.
\end{align}
\end{subequations}
By the similar argument as in \eqref{eqn:obj:lipcons}, for any $\Qb$ and $\dot\Qb\in\R^{d\times m}$, we have
\begin{align}\label{eqn:objqk:lipcons}
  \nonumber
 \left\|\nabla_{\Qb} \mc{L}(\Kb,\Qb)-\nabla_{\Qb} \mc{L}(\Kb,\dot{\Qb})\right\|_F
  &\leq \frac{\|\Kb\|}{n}  \sum_{i=1}^n\left\| \ell' \left(\bgam_i^\top \sft{\hb_i}\right) \cdot \z_{i}  \bgam_i^\top \sfp{\hb_i} \X_i - \ell' \left(\bgam_i^\top \sft{\dot{\hb}_i}\right) \cdot \z_{i}  \bgam_i^\top \sfp{\dot{\hb}_i} \X_i \right\|_F\\ 
       & \leq L_{\W} \|\Kb\| ~\|\Qb-\dot{\Qb}\|_F.
\end{align}
Similarly, for any $\Kb,\dot\Kb\in\R^{d\times m}$, we get   
$$
\left\|\nabla_{\Kb} \mc{L}(\Kb,\Qb)-\nabla_{\Kb} \mc{L}(\dot{\Kb},\Qb)\right\|_F \leq  L_{\W} \|\Qb\| ~\|\Kb-\dot{\Kb}\|_F.
$$
$\qed$
%
\subsection{A useful lemma for gradient descent analysis}
\begin{lemma}\label{lem:q_reduce} For any $\X \in\R^{T\times d}$, $\W,\V \in \R^{d\times d}$ and $\z, \vb \in \R^{d}$, let $\ab= \X\V \z$, $\s=\sft{\X\W\z}$, and $\bgam=\X\vb$. Set
\begin{equation*}
\Gamma=\sup_{t,\tau\in[T]}|\bgam_t-\bgam_\tau|~~~\textnormal{and}~~~A=\sup_{t\in[T]}\tn{\ab_t}.
\end{equation*}
We have that
  \[
    \left|\ab^\top\textnormal{diag}(\s) \bgam-\ab^\top\s\s^\top\bgam-\sum_{t\geq 2}^T (\ab_1-\ab_t)\s_t(\bgam_1-\bgam_t)\right|\leq 2\Gamma A(1-\s_1)^2.
  \]
\end{lemma}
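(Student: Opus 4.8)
The plan is to expand the quantity $\ab^\top\textnormal{diag}(\s)\bgam - \ab^\top\s\s^\top\bgam$ as a double sum over token indices, identify the terms that match $\sum_{t\geq 2}(\ab_1-\ab_t)\s_t(\bgam_1-\bgam_t)$, and bound the remainder. First I would write $\ab^\top\textnormal{diag}(\s)\bgam = \sum_{t=1}^T \s_t \bgam_t \ab_t$ and $\ab^\top\s\s^\top\bgam = \left(\sum_{t=1}^T \s_t \ab_t\right)\left(\sum_{\tau=1}^T \s_\tau \bgam_\tau\right)$, so that the difference equals $\sum_{t=1}^T \s_t\bgam_t\ab_t - \sum_{t,\tau} \s_t\s_\tau \bgam_\tau \ab_t$. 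Using $\sum_\tau \s_\tau = 1$, rewrite the first sum as $\sum_{t,\tau}\s_t\s_\tau\bgam_t\ab_t$, so the difference becomes $\sum_{t,\tau}\s_t\s_\tau(\bgam_t-\bgam_\tau)\ab_t$. Symmetrizing over $t\leftrightarrow\tau$ gives $\tfrac{1}{2}\sum_{t,\tau}\s_t\s_\tau(\bgam_t-\bgam_\tau)(\ab_t-\ab_\tau)$.

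Next I would isolate the terms involving index $1$. Split the double sum into: (i) $t=1$ or $\tau=1$ (with the other index $\geq 2$), and (ii) both $t,\tau\geq 2$. In case (i), by symmetry the contribution is $\s_1\sum_{t\geq 2}\s_t(\bgam_1-\bgam_t)(\ab_1-\ab_t)$, which equals the target sum $\sum_{t\geq 2}(\ab_1-\ab_t)\s_t(\bgam_1-\bgam_t)$ minus a correction of $(1-\s_1)\sum_{t\geq 2}\s_t(\bgam_1-\bgam_t)(\ab_1-\ab_t)$ (since $\s_1 = 1-(1-\s_1)$). So the error to bound is the magnitude of $-(1-\s_1)\sum_{t\geq 2}\s_t(\bgam_1-\bgam_t)(\ab_1-\ab_t)$ coming from case (i), plus the entirety of case (ii), namely $\tfrac12\sum_{t,\tau\geq 2}\s_t\s_\tau(\bgam_t-\bgam_\tau)(\ab_t-\ab_\tau)$.

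For the bound, I would use $|\bgam_t-\bgam_\tau|\leq\Gamma$ and $\tn{\ab_t-\ab_\tau}\leq 2A$ throughout. The case (i) error is at most $(1-\s_1)\cdot \Gamma \cdot 2A \cdot \sum_{t\geq 2}\s_t = 2\Gamma A (1-\s_1)^2$, using $\sum_{t\geq 2}\s_t = 1-\s_1$. The case (ii) term is at most $\tfrac12 \cdot \Gamma\cdot 2A \cdot \sum_{t,\tau\geq2}\s_t\s_\tau = \Gamma A (1-\s_1)^2$. Adding these gives $3\Gamma A(1-\s_1)^2$, which is worse than the claimed $2\Gamma A(1-\s_1)^2$, so the main obstacle is getting the constant right: I expect one needs to be more careful and combine cases (i) and (ii) rather than bounding them separately — e.g., note that the full error is exactly $\tfrac12\sum_{t,\tau\geq2}\s_t\s_\tau(\bgam_t-\bgam_\tau)(\ab_t-\ab_\tau) - (1-\s_1)\sum_{t\geq2}\s_t(\bgam_1-\bgam_t)(\ab_1-\ab_t)$, and one can re-center the $\ab$ and $\bgam$ vectors (replace $\ab_t$ by $\ab_t-\ab_1$, which does not change any difference $\ab_t-\ab_\tau$) to write everything in terms of $\tn{\ab_t-\ab_1}\leq 2A$ and combine the two pieces into a single sum that telescopes to the tighter constant $2$. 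Alternatively, a slightly sharper bound $\tn{\ab_t-\ab_\tau}\leq 2A$ can sometimes be replaced by the observation $\sum_{t,\tau\geq 2}\s_t\s_\tau = (1-\s_1)^2$ together with a mean-centering trick to shave the factor. This constant-chasing is the only delicate part; the algebraic identity itself is routine.
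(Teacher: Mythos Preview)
Your symmetrization identity $\tfrac12\sum_{t,\tau}\s_t\s_\tau(\bgam_t-\bgam_\tau)(\ab_t-\ab_\tau)$ is correct, but it is not the route the paper takes, and your bound only reaches the constant $3$, not $2$. Your proposed fixes (re-centering $\ab$ at $\ab_1$, ``combining'' the two error pieces) are not carried out, and it is not clear they recover the constant~$2$: once you symmetrize you are committed to bounding $|\ab_t-\ab_\tau|\le 2A$ somewhere, which is exactly where the extra factor creeps in.

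The paper avoids symmetrization entirely. It centers only $\bgam$ at its softmax mean $\bar\gamma:=\sum_t\s_t\bgam_t$, writing the quadratic form as $\sum_t\s_t\ab_t(\bgam_t-\bar\gamma)=\ab_1\s_1(\bgam_1-\bar\gamma)-\sum_{t\ge2}\ab_t\s_t(\bar\gamma-\bgam_t)$. Two approximations are then made, each costing \emph{exactly} $A\Gamma(1-\s_1)^2$: first replace $\bar\gamma$ by $\bgam_1$ in the sum over $t\ge2$ (error $\le \sum_{t\ge2}|\ab_t|\s_t\cdot|\bgam_1-\bar\gamma|\le A(1-\s_1)\cdot\Gamma(1-\s_1)$), then expand $\bgam_1-\bar\gamma=\sum_{t\ge2}(\bgam_1-\bgam_t)\s_t$ and replace $\ab_1\s_1$ by $\ab_1$ (error $\le (1-\s_1)|\ab_1|\cdot\Gamma(1-\s_1)$). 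The point is that both steps bound a single $|\ab_t|$ by $A$, never a difference by $2A$. That asymmetric treatment of $\ab$ versus $\bgam$ is precisely what gives the constant~$2$; your symmetrized decomposition forfeits this.
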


 \begin{proof}
The proof is similar to \cite[Lemma~4]{tarzanagh2023margin}, but for the sake of completeness, we provide it here.  Set $\gamb=\sum_{t=1}^T \bgam_t\s_t$.  We have 
\begin{align*}
\bgam_1-\gamb=\sum_{t\geq 2}^T (\bgam_1-\bgam_t)\s_t,~~\textnormal{and}~~|\bgam_1-\gamb|\leq \Gamma (1-\s_1).
\end{align*}    
Then,
  \begin{align} 
  \nonumber 
    \ab^\top\diag{\s}\bgam-\ab^\top\s\s^\top\bgam&=\sum_{t=1}^T \ab_t\bgam_t\s_t-\sum_{t=1}^T \ab_t\s_t\sum_{t=1}^T \bgam_t\s_t\\
    &=\ab_1\s_1(\bgam_1-\gamb)-\sum_{t\geq 2}^T\ab_t\s_t(\gamb-\bgam_t). \label{grad def step3}
  \end{align}
Since 
$$
\left|\sum_{t\geq 2}^T\ab_t\s_t(\gamb-\bgam_t)-\sum_{t\geq 2}^T\ab_t\s_t(\bgam_1-\bgam_t)\right|\leq A\Gamma (1-\s_1)^2,
$$
we obtain\footnote{For simplicity, we use $\pm$ on the right hand side to denote the upper and lower bounds.}
  \begin{align*}  
    \ab^\top\diag{\s}\bgam-\ab^\top\s\s^\top\bgam&=\ab_1\s_1(\bgam_1-\gamb)-\sum_{t\geq 2}^T\ab_t\s_t(\bgam_1-\bgam_t)\pm A\Gamma (1-\s_1)^2\\
    &=\ab_1\s_1\sum_{t\geq 2}^T (\bgam_1-\bgam_t)\s_t-\sum_{t\geq 2}^T\ab_t\s_t(\bgam_1-\bgam_t)\pm A\Gamma (1-\s_1)^2\\
    &=\sum_{t\geq 2}^T (\ab_1\s_1-\ab_t)\s_t(\bgam_1-\bgam_t)\pm A\Gamma (1-\s_1)^2\\
    &=\sum_{t\geq 2}^T (\ab_1-\ab_t)\s_t(\bgam_1-\bgam_t)\pm 2A\Gamma (1-\s_1)^2.
  \end{align*}
Here,  $\pm$ on the right handside uses the fact that
  \[
  \left|\sum_{t\geq 2}^T (\ab_1\s_1-\ab_1)\s_t(\bgam_1-\bgam_t)\right|\leq (1-\s_1)A\Gamma\sum_{t\geq 2}^T\s_t=(1-\s_1)^2A\Gamma.
  \]
\end{proof}

\section{Global Convergence of Gradient Descent}\label{app:sec:gd:global}

\subsection{Divergence of norm of the iterates}

The next lemma establishes the descent property of gradient descent for $\mathcal{L}(\W)$ under Assumption \ref{assum:loss:prope}. 

\begin{lemma}[Descent Lemma]\label{lem:grad:descent}
Under Assumption \ref{assum:loss:prope}, if $\eta \leq 1/L_{\W}$, then for any initialization $\W(0)$, Algorithm~\ref{GD-W} satisfies:
\begin{align}\label{eq:descent:obj new}
\mathcal{L}(\W(k+1))-\mathcal{L}(\W(k))\leq-\frac{\eta}{2} \tf{\nabla \mathcal{L}(\W(k))}^2,
\end{align}
for all $k\ge0$. Additionally, it holds that $\sum_{k=0}^{\infty} \tf{\nabla\mathcal{L}\left(\W(k)\right)}^{2}<\infty$, and $\lim_{k\rightarrow \infty}
\tf{\nabla\mathcal{L}\left(\W\left(k\right)\right)}^{2}=0$.
\end{lemma}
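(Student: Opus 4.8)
\textbf{Proof proposal for the Descent Lemma (Lemma~\ref{lem:grad:descent}).}

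The plan is the standard ``descent lemma'' argument exploiting the $L_{\W}$-Lipschitz continuity of $\nabla\Lc(\W)$ established in Lemma~\ref{lem:lip}. First, I would invoke the quadratic upper bound implied by Lipschitz gradients: for any $\W,\dot\W\in\R^{d\times d}$,
\[
\Lc(\dot\W)\le \Lc(\W)+\langle\nabla\Lc(\W),\dot\W-\W\rangle+\frac{L_{\W}}{2}\tf{\dot\W-\W}^2.
\]
This follows by writing $\Lc(\dot\W)-\Lc(\W)=\int_0^1\langle\nabla\Lc(\W+s(\dot\W-\W)),\dot\W-\W\rangle\,\d s$ and bounding the integrand via Cauchy--Schwarz and the Lipschitz estimate $\tf{\nabla\Lc(\W+s(\dot\W-\W))-\nabla\Lc(\W)}\le L_{\W}s\tf{\dot\W-\W}$. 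Applying this with $\W=\W(k)$ and $\dot\W=\W(k+1)=\W(k)-\eta\nabla\Lc(\W(k))$ gives
\[
\Lc(\W(k+1))\le \Lc(\W(k))-\eta\tf{\nabla\Lc(\W(k))}^2+\frac{L_{\W}\eta^2}{2}\tf{\nabla\Lc(\W(k))}^2
=\Lc(\W(k))-\eta\Bigl(1-\frac{L_{\W}\eta}{2}\Bigr)\tf{\nabla\Lc(\W(k))}^2.
\]
Using $\eta\le 1/L_{\W}$ yields $1-L_{\W}\eta/2\ge 1/2$, which establishes \eqref{eq:descent:obj new}.

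Next I would derive the summability and vanishing-gradient conclusions. Summing \eqref{eq:descent:obj new} over $k=0,\dots,K-1$ telescopes to
\[
\frac{\eta}{2}\sum_{k=0}^{K-1}\tf{\nabla\Lc(\W(k))}^2\le \Lc(\W(0))-\Lc(\W(K))\le \Lc(\W(0))-\inf_{\W}\Lc(\W).
\]
Since $\ell$ is bounded below — being strictly decreasing with values in $\R$, one still needs the loss composed with the model to be bounded below on the relevant range; here the attention output $\X_i^\top\sft{\X_i\W\z_i}$ lies in the convex hull of the (bounded) tokens, so $Y_i\cdot\vb^\top\X_i^\top\sft{\cdot}$ ranges over a bounded interval and $\ell$ is therefore bounded below on it, hence $\Lc(\cdot)\ge 0$ up to a constant, so $\inf_\W\Lc>-\infty$. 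Thus the partial sums are bounded, and letting $K\to\infty$ gives $\sum_{k=0}^\infty\tf{\nabla\Lc(\W(k))}^2<\infty$, which in turn forces $\lim_{k\to\infty}\tf{\nabla\Lc(\W(k))}^2=0$ since a convergent series has vanishing terms.

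The only subtlety — and the step I'd be most careful about — is ensuring that all iterates stay in a region where the Lipschitz constant $L_{\W}$ from Lemma~\ref{lem:lip} is valid and that $\Lc$ is bounded below; both are handled by the observation that $\Lc(\W(k))$ is nonincreasing (immediate from \eqref{eq:descent:obj new}), so the iterates live in the sublevel set $\{\W:\Lc(\W)\le\Lc(\W(0))\}$, and that the attention output is always a convex combination of the fixed tokens, making the relevant scalar arguments of $\ell$ range over a fixed bounded interval independent of $k$. On that interval Assumption~\ref{assum:loss:prope} guarantees the constants $M_0,M_1$ and hence $L_{\W}$ are uniform, so the descent inequality applies at every step. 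Everything else is the routine telescoping argument above.
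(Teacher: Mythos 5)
Your proof is correct and is exactly the standard descent-lemma argument that the paper invokes (it simply defers to \cite[Lemma~6]{tarzanagh2023margin} rather than spelling it out): quadratic upper bound from the $L_{\W}$-Lipschitz gradient of Lemma~\ref{lem:lip}, telescoping, boundedness below of $\Lc$ because the softmax output keeps the arguments of $\ell$ in a fixed bounded interval, and hence summability and vanishing of the gradient norms. Your added care about the uniformity of $L_{\W}$ and the lower bound on $\Lc$ is exactly the right point to check and is handled correctly.
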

\begin{proof}
The proof is similar to \cite[Lemma~6]{tarzanagh2023margin}.
\end{proof}

The lemma below reveals that the correlation between the training loss's gradient at any arbitrary matrix $\W$ and the attention SVM solution  $ \Wm$ is negative. Consequently, for any finite $\W$, $\li\nabla\Lc(\W), \Wm\ri$ cannot be equal to zero.

\begin{lemma}\label{global des lem} 
Let $ \Wm$ be the SVM solution of \eqref{eqn:sattnsvm}. Suppose Assumptions \ref{assum:loss:prope} and \ref{assum:token} hold.  Then,  for all $\W\in\R^{d\times d}$, the training loss \eqref{eqn:erm:w} obeys $\li\nabla\Lc(\W),\Wm\ri<0$. 
\end{lemma}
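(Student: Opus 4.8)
The plan is to compute the inner product $\langle \nabla \Lc(\W), \Wm \rangle$ explicitly using the gradient formula \eqref{grad def} and to show each term in the resulting sum is strictly negative. Recall that
\[
\nabla\Lc(\W)=\frac{1}{n}\sum_{i=1}^n \ell'\!\left(\bgam_i^\top \sft{\hb_i}\right)\cdot \X_i^\top \sfp{\hb_i}\bgam_i\z_i^\top,
\]
where $\hb_i = \X_i\W\z_i$, $\bgam_i = Y_i\cdot\X_i\vb$, and $\sfp{\hb}=\diag{\sft{\hb}}-\sft{\hb}\sft{\hb}^\top$. Taking the Frobenius inner product with $\Wm$ and using $\langle \X_i^\top\sfp{\hb_i}\bgam_i\z_i^\top, \Wm\rangle = \bgam_i^\top \sfp{\hb_i}\X_i\Wm\z_i$, I would write $\langle \nabla\Lc(\W),\Wm\rangle = \frac1n\sum_i \ell'(\cdot)\cdot \bgam_i^\top\sfp{\hb_i}(\X_i\Wm\z_i)$. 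Since $\ell'<0$ by Assumption~\ref{assum:loss:prope}(i) and the bound $|\ell'|\le M_1$, it suffices to show that each quadratic-form term $\bgam_i^\top\sfp{\hb_i}(\X_i\Wm\z_i)$ is strictly positive, i.e.\ $\langle \sfp{\hb_i}\bgam_i, \X_i\Wm\z_i\rangle > 0$.

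The key identity is that for a softmax-derivative matrix $\sfp{\hb} = \diag{\s}-\s\s^\top$ with $\s=\sft{\hb}$, and for any vectors $\ab,\bb$, one has $\ab^\top\sfp{\hb}\bb = \sum_{t}\s_t \ab_t\bb_t - \big(\sum_t\s_t\ab_t\big)\big(\sum_t\s_t\bb_t\big) = \tfrac12\sum_{t,\tau}\s_t\s_\tau(\ab_t-\ab_\tau)(\bb_t-\bb_\tau)$. Applying this with $\ab=\bgam_i$ and $\bb=\X_i\Wm\z_i$ (whose $t$-th entry is $\x_{it}^\top\Wm\z_i$) gives
\[
\bgam_i^\top\sfp{\hb_i}(\X_i\Wm\z_i)=\frac12\sum_{t,\tau\in[T]}\s_{it}\s_{i\tau}\,(\bgam_{it}-\bgam_{i\tau})\,\big(\x_{it}^\top\Wm\z_i-\x_{i\tau}^\top\Wm\z_i\big).
\]
Now I use the structure of the optimal token and the SVM constraints. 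By Definition~\ref{score def}, $\bgam_{i\op_i}\ge\bgam_{it}$ for all $t$, and by the constraints of \eqref{eqn:sattnsvm}, $(\x_{i\op_i}-\x_{it})^\top\Wm\z_i\ge 1>0$ for all $t\ne\op_i$, so $\x_{i\op_i}^\top\Wm\z_i > \x_{it}^\top\Wm\z_i$. For a general pair $(t,\tau)$ I would invoke Assumption~\ref{assum:token}: under \ref{assum:token:supp} all constraints are tight with margin exactly $1$, and under \ref{assum:opt:token} all non-optimal scores are equal, so in both cases the sign of $(\bgam_{it}-\bgam_{i\tau})$ matches the sign of $(\x_{it}^\top\Wm\z_i - \x_{i\tau}^\top\Wm\z_i)$ for every pair that contributes — i.e.\ each summand is nonnegative, and at least one summand involving $t=\op_i$ (or a maximal-score/minimal-margin token) is strictly positive because $\s_{it}>0$ for all $t$ at finite $\W$ and the scores are not all equal (not all tokens are optimal). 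Hence $\bgam_i^\top\sfp{\hb_i}(\X_i\Wm\z_i)>0$ for each $i$, so $\langle\nabla\Lc(\W),\Wm\rangle = \frac1n\sum_i \ell'(\cdot)\cdot(\text{positive}) < 0$.

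The main obstacle is handling the cross terms $(t,\tau)$ with $t,\tau\ne\op_i$: a priori the score difference $\bgam_{it}-\bgam_{i\tau}$ and the similarity difference $\x_{it}^\top\Wm\z_i-\x_{i\tau}^\top\Wm\z_i$ need not have the same sign for an arbitrary feasible $\Wm$, which is exactly why Assumption~\ref{assum:token} is needed — it forces a consistent ordering (either all non-optimal tokens are interchangeable in score, or all are support vectors so their similarities are pinned at a common value $\x_{i\op_i}^\top\Wm\z_i - 1$). I would therefore split the double sum as the diagonal-in-$\op_i$ part (manifestly positive) plus the remaining part, and argue term-by-term that under each branch of Assumption~\ref{assum:token} the remaining summands are $\ge 0$ — the case \ref{assum:opt:token} makes $\bgam_{it}-\bgam_{i\tau}=0$ so those terms vanish, and the case \ref{assum:token:supp} makes $\x_{it}^\top\Wm\z_i - \x_{i\tau}^\top\Wm\z_i = 0$ so those terms vanish too. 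This reduces everything to the $t=\op_i$ (or $\tau=\op_i$) terms, which are strictly positive. Finally, strictness overall follows because, since not all tokens are optimal (guaranteed by feasibility of \eqref{eqn:sattnsvm} together with uniqueness of $\op_i$ in Assumption~\ref{assum:token}), there is at least one index pair with strictly positive contribution and positive softmax weights.
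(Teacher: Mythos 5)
Your proof is correct and follows the same overall strategy as the paper's: expand $\li\nabla\Lc(\W),\Wm\ri$ via the gradient formula, reduce to showing the per-sample quadratic form $\bgam_i^\top\sfp{\hb_i}(\X_i\Wm\z_i)>0$, and then combine the SVM margin constraints (similarity gap $\geq 1$), the uniqueness of the optimal index (strict score gap), and the strict positivity of softmax entries at finite $\W$, with Assumption~\ref{assum:token} disposing of the problematic non-optimal-versus-non-optimal cross terms. The one genuine difference is the algebraic organization: you use the symmetric covariance identity $\ab^\top(\diag{\s}-\s\s^\top)\bb=\tfrac12\sum_{t,\tau}\s_t\s_\tau(\ab_t-\ab_\tau)(\bb_t-\bb_\tau)$, whereas the paper isolates the optimal token directly and carries out the computation under Assumption~\ref{assum:opt:token} (equal non-optimal scores), handling Assumption~\ref{assum:token:supp} only by the closing remark that $\x_{it}^\top\Wm\z_i$ is then constant over $t\neq\op_i$ ``by similar steps.'' Your decomposition buys a cleaner, fully symmetric treatment of the two branches — under \ref{assum:opt:token} the score differences kill the cross terms, under \ref{assum:token:supp} the similarity differences do — and makes the nonnegativity of every summand and the strict positivity of the $t=\op_i$ terms transparent in one pass. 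Both arguments share the same (paper-inherited) reading of Assumption~\ref{assum:loss:prope}(i) as $\ell'<0$ pointwise, so no additional gap is introduced on your side.
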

\begin{proof}
Let
\begin{equation}
\hbm_i=  \X_{i} \Wm \z_i, ~~~\bgam_i=Y_i\cdot \X_i\vb,~~~\textnormal{and}~~~
 \hb_i=\X_i\W \z_{i}.    
\end{equation}
Let us recall the gradient evaluated at $\W$ which is given by 
\begin{align}\label{grad def new}
\nabla\Lc(\W)=\frac{1}{n}\sum_{i=1}^n\ell' \left(\bgam_i^\top \sft{\hb_i}\right) \cdot \X_i^\top  \sfp{\hb_i}  \bgam_i  \z_{i}^\top,
\end{align}
 which implies that 
\begin{equation}\label{eqn:grad:prod:p}
    \begin{split}
\li\nabla\Lc(\W),\Wm\ri&= \frac{1}{n}\sum_{i=1}^n \ell' \left(\bgam_i^\top \sft{\hb_i}\right)  \cdot \iprod{ \X_i^\top  \sfp{\hb_i}  \bgam_i  \z_{i}^\top}{\Wm}\\
&= \frac{1}{n}\sum_{i=1}^n
\ell'_i  \cdot \tr\left(  (\Wm)^\top  \X_{i}^\top \sfp{\hb_i} \bgam_i  \z_{i}^\top\right)\\
&= \frac{1}{n}\sum_{i=1}^n
\ell'_i \cdot \hbm_{i}^\top \sfp{\hb_i} \bgam_i \\
&=  \frac{1}{n}\sum_{i=1}^n
\ell'_i \cdot  \left(\hbm^\top_i\diag{\s_i}\bgam_i-\hbm^\top_i\s_i\s^\top_i\bgam_i\right).        
    \end{split}
\end{equation}
Here, let $\ell'_i:=\ell'(\bgam_i^\top\sft{\hb_i})$, $\s_i=\sft{\hb_i}$ and the third equality uses $\tr\left(\bb\ab^\top\right) = \ab^\top \bb$.

In order to move forward, we will establish the following result, with a focus on the equal score condition (Assumption~\ref{assum:opt:token}): Let $\gamma=\bgam_{t\geq 2}$ be a constant, and let $\bgam_1$ and $\bar{\hb}_1$ represent the largest indices of vectors $\bgam$ and $\hbm$ respectively. For any vector $\s$ that satisfies $\sum_{t\in[T]}\s_t=1$ and $\s_t> 0$, we aim to prove that $\hbm^\top\diag{\s}\bgam-\hbm^\top\s\s^\top\bgam>0$. To demonstrate this, we proceed by writing the following:
\begin{equation}\label{grad def2}
\begin{split}
\hbm^\top\diag{\s}\bgam-\hbm^\top\s\s^\top\bgam&=\sum_{t=1}^T \bar{\hb}_t\bgam_t \s_t-\sum_{t=1}^T  \bar{\hb}_t \s_t\sum_{t=1}^T \bgam_t \s_t\\
&=\left(\bar{\hb}_1\bgam_1\s_1+\gamma\sum_{t\geq 2}^T\bar{\hb}_t\s_t\right)-\Big(\bgam_1\s_1+\gamma(1-\s_1)\Big)\left(\bar{\hb}_1\s_1+\sum_{t\geq 2}^T \bar{\hb}_t\s_t\right)\\
&=\bar{\hb}_1(\bgam_1-\gamma) \s_1(1-\s_1)-(\bgam_1-\gamma)\s_1\sum_{t\geq 2}^T \bar{\hb}_t \s_t\\
&=(\bgam_1-\gamma)(1- \s_1) \s_1\left[\bar{\hb}_1-\frac{ \sum_{t\geq 2}^T \bar{\hb}_t \s_t}{\sum_{t\geq 2}^T\s_t}\right]\\
&\geq(\bgam_1-\gamma)(1- \s_1) \s_1 (\bar{\hb}_1-\max_{t\geq 2}\bar{\hb}_t).
\end{split}
\end{equation}
To proceed, define
\begin{equation*}
\bgag^i=\bgam_{i\opt_i}-\max_{t\neq\opt_i}\bgam_{it}~~\textnormal{and}~~\bhbg^i=\bar \hb_{i\opt_i}-\max_{t\neq\opt_i}\bar \hb_{it}.
\end{equation*}
With these, we obtain 
\begin{equation}\label{eqn:al:lem}
\hbm^\top_i\diag{\s_i}\bgam_i-\hbm^\top_i\s_i\s^\top_i\bgam_i\geq\bgag^i\bhbg^i(1-\s_{i\opt_i})\s_{i\opt_i}.
\end{equation}
Note that 
\begin{equation*}
\begin{split}
& \bhbg^i=\min_{t\neq \opt_i}~(\x_{i \opt_i}-\x_{it})^\top\Wm \z_i \geq1,  \\
&\bgag^i=\min_{t\neq \opt_i}~\bgam_{i\opt_i}-\bgam_{it} >0,\\
&\s_{i\opt_i}(1-\s_{i\opt_i}) > 0.    
\end{split}
\end{equation*}
Hence,
\begin{equation}\label{eqn:lower}
\min_{i \in [n]}\left\{ \left(\min_{t\neq \opt_i}~(\x_{i \opt_i}-\x_{it})^\top\Wm \z_i\right) \cdot \left(\min_{t\neq \opt_i}~\bgam_{i\opt_i}-\bgam_{it}\right) \cdot \s_{i\opt_i}(1-\s_{i\opt_i}) \right\}>0.
\end{equation}
It follows from  \eqref{eqn:al:lem} and \eqref{eqn:lower} that 
\begin{equation}\label{eqn:al:lem:2}
\min_{i \in [n]}\left\{\hbm^\top_i\diag{\s_i}\bgam_i-\hbm^\top_i\s_i\s^\top_i\bgam_i \right\}>0.
\end{equation}
Further, by Assumption \ref{assum:loss:prope}, $\ell'_i<0$, $\ell'$ is continuous and the domain is bounded, the maximum is attained and negative, and thus  
\begin{equation}\label{eqn:bound:lprim}
\max_{x} \ell'(x)<0.    
\end{equation}
Hence, using \eqref{eqn:al:lem:2} and \eqref{eqn:bound:lprim} in  \eqref{eqn:grad:prod:p}, we obtain 
\begin{equation}\label{eqn:grad:prod:p:fin}
    \begin{split}
\li\nabla\Lc(\W),\Wm\ri <0.
    \end{split}
\end{equation}


In the scenario that Assumption~\ref{assum:token:supp} holds (all tokens are support), $\hbm_t=\x_{it}^\top\Wm \z_i $ is constant for all $t\geq 2$. Hence, following similar steps as in \eqref{grad def2} completes the proof. 

\end{proof}

\subsubsection{Proof of Theorem ~\ref{diverg:norm:w}}
It follows from Lemma~\ref{lem:grad:descent} that under Assumption \ref{assum:loss:prope}, $\eta \leq 1/L_{\W}$, and for any initialization $\W(0)$, the gradient descent sequence $\W(k+1)=\W(k)-\eta\nabla \mathcal{L}(\W(k))$ satisfies $\lim_{k\rightarrow \infty}
\tf{\nabla\mathcal{L}\left(\W\left(k\right)\right)}^{2}=0$.  

Further,  it follows from Lemma~\ref{global des lem} that $\li\nabla\Lc(\W), \Wm\ri <0$  for all $\W\in\R^{d\times d}$. Hence, for any finite $\W$, $\li\nabla\Lc(\W), \Wm\ri$ cannot be equal to zero.  Therefore, there are no finite critical points $\W$, for which $\nabla \mc{L} (\W)=0$ which contradicts Lemma~\ref{lem:grad:descent}. This
implies that $\left\Vert \W\left(k\right)\right\Vert \rightarrow\infty$. 
\vspace{.1cm}
$\qed$

\subsection{Global convergence under good initial gradient}\label{app B4} To ensure global convergence, we identify an assumption that prevents GD from getting trapped at suboptimal tokens that offer no scoring advantage compared to other choices. To establish a foundation for providing the convergence of GD to the globally optimal solution $\Ws$, we present the following definitions.  For parameters $\mu \in (0,1)$ and $R>0$, consider the following subset of the sphere and its associated cone:
\begin{subequations}\label{eqn:con:nabla0}
\begin{align}
&\Scc_{\mu} (\Ws):=\left\{\W \in \mathbb{R}^{d \times d}~\Big|~  \li(\x_{i\op_i}-\x_\itt)\z_i^\top, \frac{\W}{\tf{\W}}\ri\geq \frac{\mu}{\tf{\Ws}}\quad \textnormal{for all}\quad t\neq \op_i, \quad  i\in[n]\right\},\\
&\conb_{\mu,R}(\Ws):=\left\{  \W\in\Scc_\mu (\Ws) ~\Big|~   \tf{\W}\geq R\right\}.
\end{align}
\end{subequations}
Note that the $\conb_{\mu,R}(\Ws)$ definition is equivalent to the $\conb_{\mu,R}$ definition in \eqref{eqn:con:nabla0:main} with a change of variable $\mu\gets\tf{\Wm}\cdot \mu$.

\begin{lemma}
\label{glocal cond} 
Suppose Assumption~\ref{assum:loss:prope} holds and let $\op=(\op_i)_{i=1}^n$ be the unique globally-optimal indices with $\Wm$ denoting the \ref{eqn:sattnsvm} solution. Define the margin $\Theta=1/\tf{\Ws}$. Let $\s_{i}=\sft{\X_i\W\z_i}$. For any $\mu>0$, there exists a sufficiently large $\RR_\mu=\order{1/\mu}$ (see \eqref{R bound2}) such that:
\begin{enumerate}[label={\textnormal{\textbf{L\arabic*.}}}, wide, labelwidth=!,itemindent=!, labelindent=5pt]
\item \label{lem:gcond:l1} There is no stationary point within  $ \conb_{\mu,\RR_\mu}(\Ws)$, where $\conb_{\mu,\RR_\mu} (\Ws)$ is defined in \eqref{eqn:con:nabla0}. 
\item\label{lem:gcond:l2} For all $\V\in \Scc_{\mu} (\Ws)$ with $\tf{\V}=\tf{\Wm}$  and $\W\in\conb_{\mu,\RR_\mu}(\Wm)$, there exist dataset dependent constants $C,c>0$ such that 
\begin{subequations}\label{zero:g:lbound}
\begin{align}
&C\cdot \frac{1}{n}\sum_{i=1}^n \left(1-\s_{i\op_i}\right) \geq -\Big\langle\nabla\Lc(\W),\V \Big\rangle\geq c\cdot \mu\cdot  \frac{1}{n} \sum_{i=1}^n  \left(1-\s_{i\op_i}\right)>0, \label{zero1:g:bound} \\
& -\li\frac{\V}{\tf{\V}},\frac{\nabla\Lc(\W)}{\tf{\nabla\Lc(\W)}}\ri \geq  \frac{c}{C} \cdot \frac{\Theta}{\bar{A}}>0, \label{zero2:g:bound}\\
&\tf{\nabla\Lc(\W)}\leq \bar{A}C \cdot \frac{1}{n} \sum_{i=1}^n  \left(1-\s_{i\op_i}\right). \label{zero3:g:bound}
\end{align}
\end{subequations}
Here,  $\s_{i\opt_i}=(\sft{\X_i\W \z_{i}})_{\opt_i}$, $\bar{A}=\max_{i\in[n],t,\tau\in[T]}\tn{\x_{it}- \x_{i\tau}}~\tn{\z_i}$, and $\Theta=1/\tf{\Ws}$.
 
\end{enumerate}
\end{lemma}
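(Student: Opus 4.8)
The plan is to establish the three inequalities of part \ref{lem:gcond:l2} first, and then deduce part \ref{lem:gcond:l1} as an immediate corollary (no stationary point, since the inner product $\langle \nabla\Lc(\W),\V\rangle$ is strictly negative for a suitable $\V$). The starting point is the gradient identity already derived in the proof of Lemma~\ref{global des lem}:
\begin{align*}
\li\nabla\Lc(\W),\V\ri= \frac{1}{n}\sum_{i=1}^n \ell'_i \cdot \big(\bar{\hb}_i^\top\diag{\s_i}\bgam_i-\bar{\hb}_i^\top\s_i\s_i^\top\bgam_i\big),
\end{align*}
where now $\bar{\hb}_i=\X_i\V\z_i$, $\s_i=\sft{\X_i\W\z_i}$, $\bgam_i=Y_i\X_i\vb$, and $\ell'_i=\ell'(\bgam_i^\top\s_i)<0$. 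The key quantitative input is Lemma~\ref{lem:q_reduce}, which lets me replace the quadratic form $\bar{\hb}_i^\top\diag{\s_i}\bgam_i-\bar{\hb}_i^\top\s_i\s_i^\top\bgam_i$ by $\sum_{t\neq \op_i}(\bar{\hb}_{i\op_i}-\bar{\hb}_{it})\s_{it}(\bgam_{i\op_i}-\bgam_{it})$ up to an error of order $2\Gamma A(1-\s_{i\op_i})^2$, with $A$ controlled by $\bar{A}\tf{\V}=\bar{A}\tf{\Wm}$ and $\Gamma$ the score spread. Since $\V\in\Scc_\mu(\Wm)$, each term $(\bar{\hb}_{i\op_i}-\bar{\hb}_{it})=(\x_{i\op_i}-\x_{it})^\top\V\z_i\geq \mu$, and because $\op_i$ is the unique optimal index, $\bgam_{i\op_i}-\bgam_{it}\geq \delta$ for some data-dependent gap $\delta>0$; this gives the lower bound $\ge c\mu\cdot\frac1n\sum_i(1-\s_{i\op_i})$ once the error term is absorbed. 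To absorb the error, I choose $\RR_\mu$ large enough that on $\conb_{\mu,\RR_\mu}(\Wm)$ the softmax is nearly saturated at $\op_i$, so $(1-\s_{i\op_i})^2 = (1-\s_{i\op_i})\cdot(1-\s_{i\op_i})$ with the second factor small — concretely $1-\s_{i\op_i}\le C'e^{-c''\mu R}$ type decay, forcing $2\Gamma A(1-\s_{i\op_i})^2 \le \tfrac12 c\mu(1-\s_{i\op_i})$, hence $\RR_\mu=\order{1/\mu}$. The upper bound $C\cdot\frac1n\sum_i(1-\s_{i\op_i})$ follows the same way using $|\ell'_i|\le M_1$ and $|\bar{\hb}_{i\op_i}-\bar{\hb}_{it}|\,|\bgam_{i\op_i}-\bgam_{it}|\le \bar{A}\tf{\Wm}\Gamma$.

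For \eqref{zero3:g:bound}, I bound $\tf{\nabla\Lc(\W)}$ directly from the gradient formula \eqref{grad def new}: $\tf{\nabla\Lc(\W)}\le \frac1n\sum_i |\ell'_i|\,\tn{\X_i^\top\sfp{\hb_i}\bgam_i\z_i^\top}_F$, and the matrix $\sfp{\hb_i}=\diag{\s_i}-\s_i\s_i^\top$ has rows/columns of mass $\order{1-\s_{i\op_i}}$ once $\s_i$ is concentrated on $\op_i$ (off the $\op_i$ coordinate the entries are $\s_{it}$ or $\s_{i\op_i}\s_{it}$, all $\le 1-\s_{i\op_i}$), so $\tn{\sfp{\hb_i}\bgam_i}\le \order{\Gamma(1-\s_{i\op_i})}$; combining with $\tn{\X_i},\tn{\z_i}$ absorbed into $\bar A$ and $|\ell'_i|\le M_1$ yields \eqref{zero3:g:bound}. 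Finally \eqref{zero2:g:bound} is obtained by dividing the lower bound in \eqref{zero1:g:bound} by the product $\tf{\V}\cdot\tf{\nabla\Lc(\W)}$: the numerator is $\ge c\mu\cdot\frac1n\sum_i(1-\s_{i\op_i})$, $\tf{\V}=\tf{\Wm}=1/\Theta$, and the denominator's gradient norm is $\le \bar A C\cdot\frac1n\sum_i(1-\s_{i\op_i})$ by \eqref{zero3:g:bound}, so the sum factors cancel and we are left with $\ge \frac{c}{C}\cdot\frac{\Theta}{\bar A}$ after noting $\mu$ can be folded into $c$ (or kept; either way the bound in the statement drops $\mu$ since we only need a constant lower bound on the alignment). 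Then \ref{lem:gcond:l1}: any stationary point $\W$ in $\conb_{\mu,\RR_\mu}(\Wm)$ would have $\nabla\Lc(\W)=0$, contradicting \eqref{zero1:g:bound} which forces $-\langle\nabla\Lc(\W),\V\rangle>0$ for the particular feasible choice $\V=\tf{\Wm}\cdot\Wm/\tf{\Wm}=\Wm\in\Scc_\mu(\Wm)$ (valid since $\Wm$ satisfies the margin constraints with margin $1\ge\mu$).

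The main obstacle is the bookkeeping around the saturation estimate: I need a clean quantitative statement that $\W\in\conb_{\mu,R}(\Wm)$ implies $1-\s_{i\op_i}\le$ (something exponentially small in $\mu R$), uniformly in $i$, so that the $\order{(1-\s_{i\op_i})^2}$ error from Lemma~\ref{lem:q_reduce} is a genuinely lower-order term relative to the $\order{1-\s_{i\op_i}}$ main term, and to pin down the dependence $\RR_\mu=\order{1/\mu}$ (one logarithmic factor hidden). This requires carefully using that every non-optimal logit gap $(\x_{i\op_i}-\x_{it})^\top\W\z_i\ge \mu\tf{\W}/\tf{\Wm}\ge \mu R/\tf{\Wm}$ on the cone, so $\s_{it}/\s_{i\op_i}\le e^{-\mu R/\tf{\Wm}}$ and hence $1-\s_{i\op_i}\le (T-1)e^{-\mu R/\tf{\Wm}}$; choosing $R$ so that this is $\le$ a fixed small constant over $\Gamma\bar A$ gives the claim. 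Everything else is the routine Cauchy–Schwarz / triangle-inequality estimation that mirrors the proof of Lemma~\ref{global des lem} and the Lipschitz computations of Lemma~\ref{lem:lip}.
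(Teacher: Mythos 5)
Your proposal follows essentially the same route as the paper's proof: the same gradient identity, the same reduction via Lemma~\ref{lem:q_reduce}, the same use of the cone condition to lower-bound the logit gaps by $\mu$ and to saturate the softmax as $1-\s_{i\op_i}\leq Te^{-\mu R\Theta}$, and the same choice $\RR_\mu=\order{1/\mu}$ (up to a log) to absorb the $(1-\s_{i\op_i})^2$ error; your derivations of \eqref{zero3:g:bound} (a direct norm bound on $\sfp{\hb_i}\bgam_i$ rather than the paper's trick of plugging $\V\propto\nabla\Lc(\W)$ into the generic upper bound) and of L1 (via $\V=\Wm$ rather than $\V=\tf{\Wm}\W/\tf{\W}$) are equivalent minor variants. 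Your observation that the natural alignment bound carries an extra factor of $\mu$ that the statement of \eqref{zero2:g:bound} omits is correct and consistent with how the paper actually uses the lemma downstream, where the $\mu$ reappears.
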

\begin{proof} For simplicity let $R=\RR_\mu$, $\W\in\conb_{\mu,R}(\Ws)$ and 
\begin{equation}\label{mu choice2}
\begin{split}
&A=\max_{i\in[n],t,\tau\in[T]} \frac{(\tn{\x_{it}}\vee\tn{\x_{it}-\x_{i\tau}})\cdot\tn{\z_i}}{\Theta}.
\end{split}
\end{equation}
The following inequalities hold for all $\V\in \Scc_{\mu},~\tf{\V}=\tf{\Wm}$ and all $i\in[n], t\neq \op_i$:
\begin{equation}\label{cone-A-eq}
\begin{split}
A\geq(\x_{i\op_i}-\x_{it})^\top \V \z_i &\geq \mu.
\end{split}
\end{equation}
To proceed, we write the gradient correlation following \eqref{grad def} and \eqref{eqn:grad:prod:p}
\begin{align}\label{grad def32}
\li\nabla\Lc(\W),\V\ri&=\frac{1}{n}\sum_{i=1}^n\ell'_i\cdot\li\hb_i,\sfp{\hp_i}\bgam_i\ri,
\end{align}
where we denoted $\ell'_i=\ell'(Y_i\cdot \vb^\top \X_i^\top\sft{\hp_i})$, $\hb_i=\X_i\V \z_{i}$, $\hp_i= \X_i\W \z_{i}$, $\s_i=\sft{\hp_i}$. 

It follows from \eqref{mu choice2} that $A\geq \max_{i\in[n],t\in[T]}\tn{\hb_{it}}$. Using \eqref{cone-A-eq}, we can bound the softmax probabilities $\s_i=\sft{\hp_i}$ as follows, for all $i\in[n]$:
\begin{align}\label{soft prob bound2}
&S_i:= \sum_{\tau\neq \op_i}\s_{i\tau}\leq T e^{-R\mu\Theta}\s_{i\op_i}\leq T e^{-R\mu\Theta}.
\end{align}
Recall scores $\bgam_{it}=Y_i\cdot\vb^\top \x_{it}$. Define the score gaps:
\begin{equation*}
 \bgg_i=\bgam_{i\op_i}-\max_{t\neq\op_i}\bgam_{it},~~~ \bgm_i=\bgam_{i\op_i}-\min_{t\neq\op_i}\bgam_{it},~~~\textnormal{and}~~~\Gamma=\sup_{i\in[n],t,\tau\in[T]}|\bgam_{it}-\bgam_{i\tau}|. 
\end{equation*}
Let us focus on a fixed datapoint $i\in[n]$, assume (without losing generality) $\op_i=1$, and drop subscripts $i$.
Directly applying Lemma \ref{lem:q_reduce}, we obtain
\[
  \big|\hb^\top\diag{\s}\bgam-\hb^\top\s\s^\top\bgam-\sum_{t\geq 2}^T (\hb_1-\hb_t)\s_t(\bgam_1-\bgam_t)\big|\leq 2\Gamma A(1-\s_1)^2.
\]
\noindent To proceed, let us upper/lower bound the gradient correlation. Since $A\geq \hb_1-\hb_t\geq \mu>0$ from \eqref{cone-A-eq}, setting $S:=\sum_{t\neq\op_i}\s_t=1-\s_1$, we find
\begin{equation}
 A\cdot S\cdot \bgm  \geq\sum_{t\neq\op} (\hb_1-\hb_t)\s_t(\bgam_1-\bgam_t)\geq \mu\cdot S\cdot \bgg.\label{aggregate2}
\end{equation}
Next we show that $S=1-\s_1$ dominates $(1-\s_1)^2=S^2$ for large $R$. Specifically, we wish for 
\begin{align}\label{wishfor2}
\mu S \bgg/2\geq 2\Gamma A(1-\s_1)^2\iff S\geq \frac{4}{\mu}\frac{\Gamma A}{\bgg}S^2\iff S\leq \frac{\mu\bgg}{4\Gamma A}.
\end{align}
Using \eqref{soft prob bound2}, what we wish is ensured for all $i\in[n]$, by guaranteeing $Te^{-R\mu\Theta}\leq \frac{\mu\bgg}{4\Gamma A}$. That is, by choosing
\begin{align}\label{R bound2}
R\geq \frac{1}{\mu\Theta}\log\left(\frac{4T\Gamma A}{\mu\bggm}\right),
\end{align}
where $\bggm=\min_{i\in[n]}\bgg_i$ is the global scalar corresponding to the worst case score gap over all inputs. 

With the above choice of $R$, we guaranteed
\[
  2 A(1-\s_1)\cdot \bgm\geq 2A\cdot S\cdot \bgm \geq \sum_{t\neq\op} (\hb_1-\hb_t)\s_t(\bgam_1-\bgam_t)\geq\frac{\mu\cdot S\cdot \bgg}{2}\geq\frac{\mu(1-\s_1) \bgg}{2},
\]
via \eqref{wishfor2} and \eqref{aggregate2}.

Since this holds over all inputs, going back to the gradient correlation \eqref{grad def32} and averaging above over all inputs $i\in[n]$ and plugging back the indices $i$, we obtain the advertised bound 
\begin{align}\label{pbb corr2}
  \frac{2A}{n}\sum_{i\in [n]} -\ell'_i\cdot S_i\cdot \bgm_i\geq -\li\nabla\Lc(\W),\V\ri\geq \frac{\mu}{2n}\sum_{i\in [n]} -\ell'_i\cdot S_i\cdot \bgg_i.
\end{align}
Let $-\ell'_{\min/\max}$ be the min/max values negative loss derivative admits over the ball $[-A,A]$ and note that $\max_{i\in[n]}\bgm_i>0$ and $\min_{i\in[n]}\bgg_i>0$ are dataset dependent constants. Then, we declare the constants $C=-2A\ell'_{\max}\cdot \max_{i\in[n]}\bgm_i>0,c=-(1/2)\ell'_{\min}\cdot \min_{i\in[n]}\bgg_i>0$ to obtain the bound \eqref{zero1:g:bound}. 

The proof of  \eqref{zero2:g:bound} and \eqref{zero3:g:bound} follows similarly as the proof of Lemma \ref{local cond}. 


\end{proof}

The following lemma shows that as $\pi$ approaches zero, the negative gradient of the loss function at $ \W\in \conb_{\mu,R}(\Wm)$  becomes more correlated with the max-margin solution ($\Ws$) than with $\W$ itself.

\begin{lemma}
\label{lem:glocal:corr} 
Suppose Assumption~\ref{assum:loss:prope} holds and let $\op=(\op_i)_{i=1}^n$ be the unique optimal tokens with $ \Wm$ denoting the SVM solution. Fix any $\mu>0$ (per Lemma \ref{glocal cond}). For any choice of $\pi>0$, there exists $R:=R_{\pi} \geq \bar{R}_\mu$ such that, for any $ \W\in \conb_{\mu,R}(\Wm)$, we have
\[
 \li \nabla\Lc(\W), \frac{\W}{\tf{\W}} \ri\geq (1+\pi)\li \nabla\Lc(\W), \frac{\Wm}{\tf{\Wm}}\ri.
\]
Here, $\conb_{\mu,R}(\Wm)$ is the cone defined at  \eqref{eqn:con:nabla0}.
\end{lemma}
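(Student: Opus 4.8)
\textbf{Proof proposal for Lemma~\ref{lem:glocal:corr}.}

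The plan is to leverage the three inequalities assembled in Lemma~\ref{glocal cond}, in particular the two-sided bound \eqref{zero1:g:bound} on $-\langle\nabla\Lc(\W),\V\rangle$ together with the upper bound \eqref{zero3:g:bound} on $\tf{\nabla\Lc(\W)}$, and then to decompose $\W$ into its component along $\Wm$ and an orthogonal remainder. First I would fix $\mu>0$ as furnished by Lemma~\ref{glocal cond} and write, for $\W\in\conb_{\mu,R}(\Wm)$ with $R\geq\RR_\mu$,
\[
\frac{\W}{\tf{\W}}=c_{\W}\cdot\frac{\Wm}{\tf{\Wm}}+\W_\perp,\qquad c_{\W}:=\left\langle\frac{\W}{\tf{\W}},\frac{\Wm}{\tf{\Wm}}\right\rangle,\quad \langle\W_\perp,\Wm\rangle=0,
\]
so $\tf{\W_\perp}^2=1-c_{\W}^2\le 1$ and, since $\W\in\Scc_\mu(\Wm)$, we have $c_{\W}>0$ (indeed $c_{\W}$ is bounded below because membership in $\Scc_\mu$ forces positive correlation with $\Wm$). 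Then
\[
\left\langle\nabla\Lc(\W),\frac{\W}{\tf{\W}}\right\rangle
= c_{\W}\left\langle\nabla\Lc(\W),\frac{\Wm}{\tf{\Wm}}\right\rangle+\left\langle\nabla\Lc(\W),\W_\perp\right\rangle .
\]
Since $\langle\nabla\Lc(\W),\Wm\rangle<0$ by Lemma~\ref{global des lem}, the statement will follow once I show the perturbation term $\langle\nabla\Lc(\W),\W_\perp\rangle$ is negligible relative to $|\langle\nabla\Lc(\W),\Wm/\tf{\Wm}\rangle|$ — more precisely, that their ratio can be made smaller than $\pi\,c_{\W}\cdot(\text{something})$ by taking $R$ large. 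Rearranging,
\[
\left\langle\nabla\Lc(\W),\frac{\W}{\tf{\W}}\right\rangle-(1+\pi)\left\langle\nabla\Lc(\W),\frac{\Wm}{\tf{\Wm}}\right\rangle
=\big(c_{\W}-(1+\pi)\big)\left\langle\nabla\Lc(\W),\frac{\Wm}{\tf{\Wm}}\right\rangle+\left\langle\nabla\Lc(\W),\W_\perp\right\rangle,
\]
and since $c_{\W}\le 1$ the first term is $\ge \pi\,|\langle\nabla\Lc(\W),\Wm/\tf{\Wm}\rangle|$ (the coefficient $c_{\W}-(1+\pi)$ is negative and the inner product is negative, so the product is positive and at least $\pi$ times the magnitude). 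Hence it suffices to show $|\langle\nabla\Lc(\W),\W_\perp\rangle|$ is at most $\pi\,|\langle\nabla\Lc(\W),\Wm/\tf{\Wm}\rangle|$ for $R$ large.

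To do that I would bound both quantities through the softmax-suppression estimate. On one hand, Cauchy--Schwarz and \eqref{zero3:g:bound} give
\[
|\langle\nabla\Lc(\W),\W_\perp\rangle|\le \tf{\nabla\Lc(\W)}\le \bar A\,C\cdot\frac1n\sum_{i=1}^n(1-\s_{i\op_i}).
\]
On the other hand, applying the lower bound in \eqref{zero1:g:bound} with the feasible choice $\V=\tf{\Wm}\cdot\Wm/\tf{\Wm}=\Wm$ (which lies in $\Scc_\mu(\Wm)$ with $\tf{\V}=\tf{\Wm}$, since $\Wm$ itself separates the optimal tokens with margin $\ge 1 \ge \mu$ up to the normalization, as encoded in the definition of $\Scc_\mu$),
\[
-\left\langle\nabla\Lc(\W),\frac{\Wm}{\tf{\Wm}}\right\rangle=\frac{1}{\tf{\Wm}}\big(-\langle\nabla\Lc(\W),\Wm\rangle\big)\ge \frac{c\,\mu}{\tf{\Wm}}\cdot\frac1n\sum_{i=1}^n(1-\s_{i\op_i}) .
\]
Comparing the two, the common factor $\frac1n\sum_i(1-\s_{i\op_i})$ cancels and we get
\[
\frac{|\langle\nabla\Lc(\W),\W_\perp\rangle|}{|\langle\nabla\Lc(\W),\Wm/\tf{\Wm}\rangle|}\le \frac{\bar A\,C\,\tf{\Wm}}{c\,\mu}=:\kappa,
\]
a dataset- and $\mu$-dependent constant. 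This is not yet $\le\pi$. The refinement — and the main obstacle — is that the crude Cauchy--Schwarz bound on $\langle\nabla\Lc(\W),\W_\perp\rangle$ is too lossy; I need the sharper observation, obtained by redoing the computation in \eqref{grad def32}--\eqref{pbb corr2} with $\V$ replaced by the rank/orthogonality-constrained direction $\W_\perp$, that the directional component of $\nabla\Lc(\W)$ orthogonal to $\Wm$ is itself controlled by the \emph{suppressed} softmax mass $S_i\le Te^{-R\mu\Theta}$ rather than just by $(1-\s_{i\op_i})$, and that $(1-\s_{i\op_i})=\sum_{\tau\ne\op_i}\s_{i\tau}=S_i$ too — so the gain has to come from the structure of $\sfp{\hp_i}\bgam_i$ when paired against a matrix orthogonal to $\Wm$, exploiting that $\W\in\conb_{\mu,R}$ makes $\hp_i=\X_i\W\z_i$ nearly constant in the suppressed directions. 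Concretely, I would show $|\langle\nabla\Lc(\W),\W_\perp\rangle|\le \bar A\,C'\cdot\frac1n\sum_i S_i^2$-type bound (using that the leading first-order-in-$S_i$ term of the gradient points along the $\Wm$-related direction $\sum_{t}(\hp_{i,\op_i}-\hp_{it})$ which is aligned with $\Wm$ by near-tightness of constraints, while $\W_\perp$ only sees the $O(S_i^2)$ remainder captured by Lemma~\ref{lem:q_reduce}), whereas the denominator is $\gtrsim \mu\cdot\frac1n\sum_i S_i$; then the ratio is $\lesssim \frac{C'}{\mu}\max_i S_i\le \frac{C'}{\mu}Te^{-R\mu\Theta}$, which is made $\le\pi$ by choosing
\[
R=R_\pi:=\max\left\{\RR_\mu,\ \frac{1}{\mu\Theta}\log\!\Big(\frac{C'\,T}{\pi\,\mu}\Big)\right\}.
\]
I expect the delicate step to be precisely this separation of the gradient into its $\Wm$-aligned first-order part and its $O(S^2)$ orthogonal remainder — i.e. verifying that $\W_\perp$ genuinely annihilates the $O(S)$ term — which is exactly where the cone membership $\W\in\conb_{\mu,R}(\Wm)$ and the all-tokens-are-support flavor of the setup are used; once that estimate is in hand, the rest is the bookkeeping above.
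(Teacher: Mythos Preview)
Your orthogonal-decomposition strategy is natural, but the step you flag as ``delicate'' is in fact a genuine gap. You claim the leading $O(S_i)$ part of $\nabla\Lc(\W)$ is ``aligned with $\Wm$'' so that $\W_\perp$ only picks up the $O(S_i^2)$ remainder from Lemma~\ref{lem:q_reduce}. This is not true: the leading-order gradient (in matrix form) is a data-weighted combination $\sum_i(-\ell'_i)\sum_{t\neq\op_i}\s_{it}(\bgam_{i\op_i}-\bgam_{it})(\x_{i\op_i}-\x_{it})\z_i^\top$ of the constraint matrices, whereas $\Wm$ is the \emph{KKT} combination of only the \emph{active} constraint matrices. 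These two combinations differ in general (the $\s_{it}$-weights depend on $\W$, not on the SVM dual variables), so Frobenius-orthogonality to $\Wm$ does \emph{not} kill the $O(S_i)$ term when you pair with $\W_\perp$. Consequently $|\langle\nabla\Lc(\W),\W_\perp\rangle|$ is genuinely $\Theta(\frac1n\sum_i S_i)$, the ratio stays at your constant $\kappa$, and no choice of $R$ forces it below $\pi$. The ``near-tightness'' and ``all-tokens-are-support'' intuitions do not rescue this: even when every constraint is active, $\W_\perp\perp\Wm$ says nothing about $\langle\W_\perp,(\x_{i\op_i}-\x_{it})\z_i^\top\rangle$ for individual $(i,t)$.

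The paper's argument avoids this obstacle entirely by never decomposing $\W$. Instead it sets $\Wb=\tf{\Wm}\,\W/\tf{\W}$ and compares the two correlations term-by-term after applying Lemma~\ref{lem:q_reduce}: the task reduces to showing $\sum_{t\neq\op_i}(\hb_{i\op_i}-\hb_{it})\s_{it}(\bgam_{i\op_i}-\bgam_{it})\le(1+0.5\pi)\sum_{t\neq\op_i}\s_{it}(\bgam_{i\op_i}-\bgam_{it})$, where $\hb_{it}=\x_{it}^\top\Wb\z_i$. Now the \emph{max-margin property} of $\Wm$ is used directly. If $\tf{\Wb-\Wm}$ is small (Scenario~1), then $\hb_{i\op_i}-\hb_{it}\le 1+0.5\pi$ by Lipschitz continuity. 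If $\tf{\Wb-\Wm}\ge\eps$ (Scenario~2), then since $\Wm$ is the \emph{unique} minimizer, $\Wb$ must violate some margin: $\hb_{i\op_i}-\hb_{i\tau}\le 1-2\nu$ for some $(i,\tau)$. That violating token then carries exponentially more softmax mass than tokens with margin $\ge 1-\nu$, so its sub-unit coefficient dominates the sum, and a cross-input balancing argument (tokens in $\mathcal{I}$ vs.\ $[n]\setminus\mathcal{I}$, each controlled by $e^{-\bar R(1-2\nu)}$ vs.\ $e^{-\bar R(1+\pi/2)}$) finishes. The mechanism is thus \emph{margin suboptimality of $\Wb$}, not gradient alignment --- something your decomposition never invokes.
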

\begin{proof}
Let  $\Wb= \tf{\Wm} \W/\tf{\W}$, $\hb_i=\X_i\Wb \z_{i}$, $\hbm_i= \X_i\Ws \z_{i}$, and $\s_i=\sft{\X_i\W \z_{i}}$. To establish the result, we will prove that, for sufficiently large $R$ and for any $\W\in \conb_{\mu,R}(\Wm)$:
\begin{align}\label{main local cond2}
\nonumber 
\li -\nabla\Lc(\W),\frac{\W}{\tf{\W}}\ri&= -\frac{1}{n}\sum_{i=1}^n\ell'_i \cdot  \li \hb_i, \sfp{\X_i\W \z_{i}}\bgam_i\ri\\
&\leq - \frac{1+\pi}{n}\sum_{i=1}^n\ell'_i \cdot  \li\hbm_i, \sfp{\X_i\W \z_{i}}\bgam_i\ri=(1+\pi)\li-\nabla\Lc(\W), \frac{\ps}{\tf{\Ws}}\ri.
\end{align}
Directly applying Lemma \ref{lem:q_reduce}, for all $\V\in \Scc_\mu$ with $\tf{\V}=\tf{\Wm}$ and $\hp_i=\X_i\V \z_i$, we have found
\begin{align}
  \big|\hp^\top_i\diag{\s_i}\bgam_i-\hp^\top_i\s_i\s^\top_i\bgam_i-\sum_{t\neq\op_i} (\hp_{i1}-\hp_{it})\s_{it}(\bgam_{i1}-\bgam_{it})\big|\leq 2\Gamma A(1-\s_{i1})^2.
\end{align}
Recalling $\hbm_{i1}-\hbm_{it}\geq 1$, we note that $\sum_{t\neq\op_i}\s_{it}(\bgam_{i1}-\bgam_{it})\leq \sum_{t\neq\op_i} (\hbm_{i1}-\hbm_{it})\s_{it}(\bgam_{i1}-\bgam_{it})$. Now plugging in $\hb,\hbm$ in the bound above and assuming $\pi\leq 1$ (w.l.o.g.), \eqref{main local cond2} is implied by the following stronger inequality
\begin{align*}
-\frac{1}{n}&\sum_{i=1}^n\ell'_i \cdot \left(6\Gamma A(1-\s_{i1})^2+ \sum_{t\neq \op_i} (\hb_{i1}-\hb_{it})\s_{it}(\bgam_{i1}-\bgam_{it}) \right)\\
&\leq -\frac{1+\pi}{n}\sum_{i=1}^n\ell'_i  \cdot \sum_{t\neq \op_i} (\hbm_{i1}-\hbm_{it})\s_{it}(\bgam_{i1}-\bgam_{it})\\
&\leq-\frac{1+\pi}{n}\sum_{i=1}^n\ell'_i \cdot \sum_{t\neq \op_i}\s_{it}(\bgam_{i1}-\bgam_{it}).
\end{align*}
First, we claim that $0.5\pi\sum_{t\in \op_i}\s_{it}(\bgam_{i1}-\bgam_{it})\geq 6\Gamma A(1-\s_{i1})^2$ for all $i \in [n]$.  
The proof of this claim directly follows the argument in Lemma~\ref{glocal cond}, 
(namely following \eqref{soft prob bound2}, \eqref{wishfor2}, \eqref{R bound2}) 
we have that $1-\s_{i1}\leq Te^{-R\mu\Theta}$ and $\bgam_{i1}-\bgam_{it}\geq \bggm$ for all $i \in [n]$. This leads to the choice (for $D_0\geq 12$)
\begin{align}
  R\geq R_\pi =\frac{1}{\mu\Theta}\log\left(\frac{D_{0}\cdot T\Gamma A}{\pi\bggm}\right).\label{Rpi choice2}
\end{align}
We shall choose $D_0$ sufficiently large such that $R_{\pi}\geq \bar{R}_{\mu}$, where $\bar{R}_{\mu}$ is defined in Lemma \ref{glocal cond}.

Following this control over the perturbation term $6\Gamma A(1-\s_{i1})^2$, to conclude with the result, what remains is proving the comparison
\begin{align}\label{desired comp2}
-\frac{1}{n} \sum_{i=1}^n\ell'_i \cdot \sum_{t\neq\op_i} (\hb_{i1}-\hb_{it})\s_{it}(\bgam_{i1}-\bgam_{it})\leq - \frac{1+0.5\pi}{n}\sum_{i=1}^n\ell'_i \cdot \sum_{t\neq\op_i}\s_{it}(\bgam_{i1}-\bgam_{it}).
\end{align}
\noindent\textbf{Scenario 1:} $\tf{\Wb-\Wm}\leq \eps=\frac{\pi}{4A\Theta}$ for some $\eps>0$.  In this scenario, for any $ t\neq \op_i$ and $i\in [n ]$, we have
\[
|\hb_{it}-\hbm_{it}|=|\x_{it}^\top (\Wb-\Wm)  \z_{i}|\leq A\Theta\eps=\frac{\pi}{4}.
\]
Consequently, we obtain 
\[
\hb_{i1}-\hb_{it}\leq \hbm_{i1}-\hbm_{it}+2A\Theta\eps= 1+0.5\pi.
\] 
Similarly, $\hb_{i1}-\hb_{it}\geq 1-0.5\pi\geq 0.5$. Since all terms $\hb_{i1}-\hb_{it},\s_{it},\bgam_{i1}-\bgam_{it}$ in \eqref{desired comp2} are nonnegative, we  obtain \eqref{desired comp2}.

\noindent\textbf{Scenario 2:} $\tf{\Wb-\Wm}\geq \eps=\frac{\pi}{4A\Theta}$.  Since $\Wb$ is not max-margin solution, in this scenario, for some $i \in  [n]$, $\nu=\nu(\eps)>0$, and $\tau\neq\op_i$, we have that
\begin{align*}
\hb_{i1}-\hb_{i\tau}\leq 1-2\nu.
\end{align*}
Here $\tau=\arg\max_{\tau\neq\op_i} \x_{i\tau}\Wb \z_i$ denotes the nearest point to $\hb_{i1}$ (along the $\Wb$ direction). Recall that $\s=\sft{  \RR\hb}$, where  $\RR=R\Theta=\tf{\W}/\tf{\Wm}$. To proceed, let $ \underline{\hb}_i:=\min_{t \neq\op_i}\hb_{i1}-\hb_{it}$,
\begin{align*}
\mc{I}:=\left\{ i\in[n]: \underline{\hb}_i \leq 1-2\nu \right\}, \qquad [n]-\mc{I}:=\left\{ i\in[n]:  1-2\nu  <  \underline{\hb}_i\right\}.
\end{align*}
For all $ i \in [n]-\mc{I}$,
\begin{equation}\label{eqn:grad:difff0}
\begin{split}
      \sum_{t\neq \op_i} (\hb_{i1}-\hb_{it})\s_{it}(\bgam_{i1}-\bgam_{it}) &- (1+0.5\pi) \sum_{t\neq \op_i}\s_{it}(\bgam_{i1}-\bgam_{it})\\
      & \leq  \left(2A - (1+0.5\pi)\right)\Gamma\sum_{t\neq \op_i,~\hb_{i1}-\hb_{it} \geq 1+\frac{\pi}{2} } \s_{it} \\
      & \leq  \left(2A - (1+0.5\pi)\right)\Gamma Te^{-\RR(1+\frac{\pi}{2})} \\
      &\leq   2A\Gamma  T e^{-\RR(1+\frac{\pi}{2})}.
\end{split}
\end{equation}


For all $ i \in \mc{I}$, split the tokens into two groups: Let $\Nc_i$ be the group of tokens obeying $ \hb_{i1}-\hb_{it} \leq 1-\nu$ and $\bar\Nc_i:=[T]-\{\op_i\}-\Nc_i$ be the rest of the neighbors. Observe that
\[
\frac{\sum_{t\in\bar{\Nc}_i}\s_{it}}{\sum_{t\neq\op_i}\s_{it}}\leq  T\frac{e^{\nu \RR}}{e^{2\nu\RR}}=Te^{-\RR\nu}.
\]
Using $|\hb_{i1}-\hb_{it}|\leq 2A$ and  $\bggm=\min_{i\in[n]}\bgg_i =\min_{i\in[n]} (\bgam_{i1}-\max_{t\neq\op_i}\bgam_{it})$, observe that 
\[
\sum_{t\in\bar\Nc_i} (\hb_{i1}-\hb_{it})\s_{it}(\bgam_{i1}-\bgam_{it})\leq \frac{2\Gamma A Te^{-\RR\nu}}{\bggm} \sum_{t\neq\opt_i} \s_{it}(\bgam_{i1}-\bgam_{it}).
\]
Thus, 
\begin{align*}
  \sum_{t\neq \op_i} (\hb_{i1}-\hb_{it})\s_{it}(\bgam_{i1}-\bgam_{it})&= \sum_{t\in \Nc_i} (\hb_{i1}-\hb_{it})\s_{it}(\bgam_{i1}-\bgam_{it})+\sum_{t\in\bar\Nc_i} (\hb_{i1}-\hb_{it})\s_{it}(\bgam_{i1}-\bgam_{it})\nonumber\\
  &\leq \sum_{t\in \Nc_i} (1-\nu)\s_{it}(\bgam_{i1}-\bgam_{it})+\frac{2\Gamma A Te^{-\RR\nu}}{\bggm} \sum_{t\neq \op_i} \s_{it}(\bgam_{i1}-\bgam_{it})\\
  &\leq \left(1-\nu+\frac{2\Gamma A Te^{-\RR\nu}}{\bggm}\right)\sum_{t\neq \op_i}\s_{it}(\bgam_{i1}-\bgam_{it})\\
 &\leq \left(1+\frac{2\Gamma A Te^{-\RR\nu}}{\bggm}\right)\sum_{t\neq \op_i}\s_{it}(\bgam_{i1}-\bgam_{it}).
\end{align*}
Hence, choosing 
\begin{align}
R\geq\frac{1}{\nu\Theta}\log\left(\frac{8\Gamma AT}{\bggm\pi}\right)\label{R bound pi}
\end{align}
results in that
\begin{equation}\label{eqn:grad:difff1}
    \begin{split}
     &\sum_{t\neq\op_i} (\hb_{i1}-\hb_{it})\s_{it}(\bgam_{i1}-\bgam_{it})  - \left(1+\frac{\pi}{2}\right) \sum_{t\neq\op_i}\s_{it}(\bgam_{i1}-\bgam_{it}) \\
   &\leq\left(\frac{2\Gamma A Te^{-\RR\nu}}{\bggm}-\frac{\pi}{2}\right)\sum_{t\neq\op_i}\s_{it}(\bgam_{i1}-\bgam_{it})\\
   &\leq -\frac{\pi}{4}\sum_{t\neq\op_i}\s_{it}(\bgam_{i1}-\bgam_{it})\\
   &\leq-\frac{\pi}{4T}\bggm  e^{-\bar{R} (1-2\nu)}.      
    \end{split}
\end{equation}
Here, the last inequality follows from the fact that $\sum_{t\neq\op_i}\s_{it}\geq \max_{t\neq\op_i}\s_{it}\geq\frac{e^{-\bar{R}(1-2\nu)}}{\sum_{t=1}^Te^{-\bar{R}(\hb_{i1}-\hb_{it})}}\geq e^{-\bar{R}(1-2\nu)}/T$.

From Assumption~\ref{assum:loss:prope}, we have $c_{\min}\leq-\ell'\leq c_{\max}$ for some positive constants $c_{\min}$ and $c_{\max}$. It follows from  \eqref{eqn:grad:difff0} and \eqref{eqn:grad:difff1} that 
\begin{align*}
-\frac{1}{n}\sum_{i}^n \ell_i' \cdot&\left(
      \sum_{t\neq\op_i} (\hb_{i1}-\hb_{it})\s_{it}(\bgam_{i1}-\bgam_{it})- \sum_{t\neq\op_i} (1+0.5\pi)\s_{it}(\bgam_{i1}-\bgam_{it})\right)\\
      & \leq    c_{\max}2A\Gamma  T \Gamma e^{-\RR(1 +\frac{\pi}{2})}-\frac{c_{\min}}{nT}\cdot\frac{\pi\bggm}{4}e^{-\bar{R} (1-2\nu)}\\
      & \leq 0.
\end{align*}
Combing with \eqref{R bound pi}, this is guaranteed by 
choosing 
\[
  R\geq \max\left\{\frac{1}{\nu\Theta}\log\left(\frac{8\Gamma AT}{\bggm\pi}\right),\frac{1}{(2\nu+\pi/2)\Theta}\log\left(\frac{8n\Gamma AT^2 c_{\max}}{c_{\min}\bggm\pi}\right)\right\},
\]
where $\nu=\nu(\frac{\pi}{4A\Theta})$ depends only on $\pi$ and global problem variables. 

Combining this with the prior $R$ choice \eqref{Rpi choice2} (by taking maximum), we conclude with the statement.
\end{proof}


\subsubsection{Proof of Theorem~\ref{conv:gd:w:global:nabla0}}\label{app:glob:nab0}
The following theorem is a restatement of Theorem \ref{conv:gd:w:global:nabla0}. Two minor differences are: (1) We state with the change of variable $\mu\gets\tf{\Wm}\cdot \mu$; see discussions below \eqref{eqn:con:nabla0}. (2) We also include \eqref{lem:zglob:l3} in the statement of the theorem.
\begin{theorem}\label{conv:gd:w:global:nabla0:app}
Suppose Assumption~\ref{assum:loss:prope} on the loss function $\ell$ and Assumption \ref{assum:nabla0} on the initial gradient hold.  
\begin{enumerate}[label={\textnormal{\textbf{L\arabic*.}}}, wide, labelwidth=!,itemindent=!, labelindent=5pt]
  \item   \label{lem:zglob:l1}For any $\mu>0$, there exists  $R>0$ such  that   $\conb_{\mu,R}(\Ws)$ defined in \eqref{eqn:con:nabla0} does not contain any  stationary points. 
\item \label{lem:zglob:l2} Fix any  $\mu \in  (0,\min (1,\iota \tf{\Ws}/\tf{\nabla \Lc(0)})$. Consider GD iterations with $\W(0)=0$, $\W(1)=-R\nabla\Lc(0)/\tf{\nabla\Lc(0)}$, and $\W(k+1)=\W(k)-\eta\nabla\Lc(\W(k))$ for $\eta\le 1/L_{\W}$, $k\ge 1$, and $R$ sufficiently large. If all iterates remain within $\conb_{\mu,R}$, then $\lim_{k\rightarrow\infty} \tf{\W(k)}=\infty$ and $\lim_{k\rightarrow\infty}\frac{\W(k)}{\tf{\W(k)}}=\frac{\Wm}{\tf{\Wm}}$.
\item  \label{lem:zglob:l3} Assume $\eta\leq 1/L_{\W}$ and for all $\W \in \conb_{\mu,R}(\Ws)$ with sufficiently large $R$?
\begin{align}\label{assum:extra}
   \min_{i \in [n]}\li(\x_{i\op_i}-\x_\itt)\z_i^\top, \W-\eta\nabla\Lc(\W) \ri \geq     \min_{i \in [n]}\li(\x_{i\op_i}-\x_\itt)\z_i^\top, \W\ri - \frac{2\eta\mu}{\tf{\Wm}^2}\iprod{\nabla\mc{L}(\W)}{\Wm},
\end{align}
then all GD iterations remain within $\conb_{\mu,R}(\Wm)$.
\end{enumerate}
\end{theorem}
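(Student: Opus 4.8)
\textbf{Proof proposal for Theorem~\ref{conv:gd:w:global:nabla0:app}.}

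The plan is to prove the three parts essentially in the order they are stated, treating them as a package: \ref{lem:zglob:l1} is a landscape statement, \ref{lem:zglob:l2} is the main directional-convergence conclusion assuming the iterates stay trapped in $\conb_{\mu,R}(\Ws)$, and \ref{lem:zglob:l3} gives the invariance criterion \eqref{assum:extra} that justifies the trapping hypothesis. The engine for all of this is Lemmas~\ref{glocal cond} and \ref{lem:glocal:corr}, which were set up precisely for the global-optimality analysis over $\conb_{\mu,R}(\Ws)$, so much of the work is translating their conclusions into statements about the GD trajectory.

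\emph{Part \ref{lem:zglob:l1}.} This is immediate from Lemma~\ref{glocal cond}\ref{lem:gcond:l1}: for the given $\mu>0$ choose $R=\RR_\mu=\order{1/\mu}$ as in \eqref{R bound2}, and then $\conb_{\mu,\RR_\mu}(\Ws)$ contains no stationary point, because \eqref{zero1:g:bound} forces $-\langle\nabla\Lc(\W),\V\rangle>0$ for $\V=\Wm$ (up to the norm normalization), hence $\nabla\Lc(\W)\neq 0$ throughout the cone. No further argument is needed.

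\emph{Part \ref{lem:zglob:l2}.} Here I would follow the standard ``cone + correlation'' template used for implicit-bias proofs (as in Theorem~\ref{thm:local:gd} and \cite{tarzanagh2023margin}). First, since Assumption~\ref{assum:nabla0} gives $(\x_\itt-\x_{i\op_i})^\top\nabla\Lc(0)\z_i\geq\iota$ for all $t\neq\op_i$, the point $\W(1)=-R\nabla\Lc(0)/\tf{\nabla\Lc(0)}$ satisfies $\langle(\x_{i\op_i}-\x_\itt)\z_i^\top,\W(1)/\tf{\W(1)}\rangle\geq \iota/\tf{\nabla\Lc(0)}>\mu/\tf{\Wm}$ by the choice $\mu\in(0,\iota\tf{\Wm}/\tf{\nabla\Lc(0)})$, so $\W(1)\in\conb_{\mu,R}(\Ws)$ once $R$ is large enough; this is the base case of the trapping. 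Granting the hypothesis that all iterates remain in $\conb_{\mu,R}(\Ws)$, the norm divergence $\tf{\W(k)}\to\infty$ follows from Lemma~\ref{lem:grad:descent} (descent lemma, $\tf{\nabla\Lc(\W(k))}\to 0$) combined with Part~\ref{lem:zglob:l1} (no stationary points in the cone): a bounded subsequence would have to accumulate at a stationary point inside the cone, a contradiction, exactly as in the proof of Theorem~\ref{diverg:norm:w}. For the directional convergence, I would track the potential $\rho(k):=\langle \W(k),\Wm\rangle/(\tf{\W(k)}\tf{\Wm})$ (the correlation with the max-margin direction) together with $\tf{\W(k)}$. Using the GD update and Lemma~\ref{lem:glocal:corr} — which says that for any target $\pi>0$, once $R\geq R_\pi$, $\langle\nabla\Lc(\W),\W/\tf{\W}\rangle\geq(1+\pi)\langle\nabla\Lc(\W),\Wm/\tf{\Wm}\rangle$ for $\W\in\conb_{\mu,R}(\Ws)$ — one shows that along the trajectory the component of $-\nabla\Lc(\W(k))$ in the $\Wm$ direction dominates its component along $\W(k)$, which makes $\rho(k)$ monotonically (eventually) increase toward $1$. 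Concretely, I would write $-\langle\nabla\Lc(\W(k)),\Wm/\tf{\Wm}\rangle\geq \frac{1}{1+\pi}\tf{\nabla\Lc(\W(k))}\cdot(\text{cosine between }-\nabla\Lc\text{ and }\W(k))$ and use \eqref{zero2:g:bound} to lower bound that cosine by a positive constant $\tfrac{c}{C}\tfrac{\Theta}{\bar A}$; summing the per-step increments of the correlation (telescoping, using $\tf{\W(k)}\to\infty$ so denominators grow) gives $\rho(k)\to 1$, i.e.\ $\W(k)/\tf{\W(k)}\to\Wm/\tf{\Wm}$. The uniqueness of $\Wm$ (Frobenius-norm SVM solution, Theorem~\ref{thm global reg path}) ensures this limit is well-defined. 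The technical care is in making $R$ large enough to simultaneously satisfy \eqref{R bound2}, \eqref{Rpi choice2}, and \eqref{R bound pi} for a suitable schedule of $\pi\to 0$, but this is bookkeeping of the same type already done in Lemma~\ref{lem:glocal:corr}.

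\emph{Part \ref{lem:zglob:l3}.} Here I would show the stated inequality \eqref{assum:extra} is exactly what is needed to propagate membership in $\conb_{\mu,R}(\Ws)$ from $\W(k)$ to $\W(k+1)$. Membership means $\min_{i,t\neq\op_i}\langle(\x_{i\op_i}-\x_\itt)\z_i^\top,\W\rangle\geq \mu\tf{\W}/\tf{\Wm}$ (and $\tf{\W}\geq R$). Given $\W(k)\in\conb_{\mu,R}(\Ws)$, apply \eqref{assum:extra} with $\W=\W(k)$ to get $\min_i\langle(\x_{i\op_i}-\x_\itt)\z_i^\top,\W(k+1)\rangle\geq \mu\tf{\W(k)}/\tf{\Wm}-\tfrac{2\eta\mu}{\tf{\Wm}^2}\langle\nabla\Lc(\W(k)),\Wm\rangle$; since $\langle\nabla\Lc(\W(k)),\Wm\rangle<0$ (Lemma~\ref{global des lem}-style bound, or \eqref{zero1:g:bound}), this is $\geq\mu\tf{\W(k)}/\tf{\Wm}+(\text{positive})$. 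Then I would upper bound $\tf{\W(k+1)}^2=\tf{\W(k)}^2-2\eta\langle\nabla\Lc(\W(k)),\W(k)\rangle+\eta^2\tf{\nabla\Lc(\W(k))}^2\leq \tf{\W(k)}^2-2\eta\langle\nabla\Lc(\W(k)),\W(k)\rangle+\eta\tf{\nabla\Lc(\W(k))}^2$ (using $\eta\leq 1/L_\W$), so $\tf{\W(k+1)}\leq\tf{\W(k)}+\eta\tf{\nabla\Lc(\W(k))}^2/(\dots)$ via $\sqrt{a+b}\leq\sqrt a+b/(2\sqrt a)$; comparing the gain in the constraint margin against the growth in $\tf{\W(k+1)}/\tf{\Wm}$ shows the ratio is preserved, i.e.\ $\W(k+1)\in\conb_{\mu,R}(\Ws)$, and $\tf{\W(k+1)}\geq R$ follows since the norm is nondecreasing along GD on a loss with no stationary points in the cone. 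Induction from the base case $\W(1)\in\conb_{\mu,R}(\Ws)$ (established in Part~\ref{lem:zglob:l2}) completes it.

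\textbf{Main obstacle.} The delicate point is Part~\ref{lem:zglob:l2}: converting the single-step correlation inequality of Lemma~\ref{lem:glocal:corr} into a genuine limit $\W(k)/\tf{\W(k)}\to\Wm/\tf{\Wm}$ requires carefully quantifying how the correlation potential improves per step when the step size is fixed and the gradient is vanishing, and choosing the radius $R$ (equivalently, the ``freezing'' scale after which the analysis kicks in) consistently with the $\pi$-dependent bounds \eqref{Rpi choice2}, \eqref{R bound pi}. This is the same analytical heart as in \cite{tarzanagh2023margin} and the proof of Theorem~\ref{thm:local:gd}, and I would essentially mirror that argument, with $\conb_{\mu,R}(\Ws)$ replacing the cone $\Cc_{\mu,R}(\Wma)$ there; the role of local optimality is played here by Assumption~\ref{assum:nabla0} and by global optimality of $\op$, which is what makes Lemmas~\ref{glocal cond}–\ref{lem:glocal:corr} available.
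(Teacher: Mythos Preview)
Your proposal is correct and follows essentially the same route as the paper's proof: Part~\ref{lem:zglob:l1} directly from Lemma~\ref{glocal cond}; Part~\ref{lem:zglob:l2} via the base case $\W(1)\in\conb_{\mu,R}(\Ws)$ from Assumption~\ref{assum:nabla0}, norm divergence from Lemma~\ref{lem:grad:descent} plus absence of stationary points, and directional convergence by telescoping the correlation inequality of Lemma~\ref{lem:glocal:corr} over successively smaller $\epsilon$ (equivalently $\pi$); Part~\ref{lem:zglob:l3} by induction comparing the margin gain from \eqref{assum:extra} against the norm growth bound, exactly as you outline. The paper organizes these as Steps~1--4 and is more explicit about one point you leave implicit in \ref{lem:zglob:l3}: the norm-growth bound $\tf{\W(k+1)}/\tf{\W(k)}$ is controlled by first invoking the correlation inequality \eqref{eqn:neg:corr:0} to replace $\langle\nabla\Lc(\W(k)),\W(k)/\tf{\W(k)}\rangle$ by $\rho(k)=\tfrac{1}{1-\epsilon}\langle -\nabla\Lc(\W(k)),\Wm/\tf{\Wm}\rangle$, so that both numerator and denominator are expressed in terms of the same quantity $\rho(k)$; the verification that $\eta\le 1/L_{\W}$ suffices then reduces to choosing the free constant $D_0$ in \eqref{Rpi choice2} large enough. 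Other than this bookkeeping, your argument and the paper's coincide.
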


\begin{proof}
Note that \ref{lem:zglob:l1} is a direct corollary of Lemma~\ref{glocal cond}. We proceed with the proof of \ref{lem:zglob:l2} and \ref{lem:zglob:l3}.  
We provide the proof in four steps:\\
\textbf{Step~1: $\conb_{\mu,R^0_\mu}(\Wm)$ construction}.  Let us denote the initialization lower bound as $R^0_\mu:=R$, where $R$ is given in the Theorem~\ref{conv:gd:w:global:nabla0:app}'s statement. Consider an arbitrary value of  $\epsilon \in (0, \mu/2)$ and let $1/(1+\pi)=1-\epsilon$. We additionally denote $R_\eps\gets R_\pi\vee 1/2$ where $R_\pi$ was defined in Lemma~\ref{lem:glocal:corr}. At initialization $\W(0)$, we set $\eps=\mu/2$ to obtain $R^0_\mu= R_{\mu/2}$. 

We proceed to show  $\mu \in  (0,\min (1,\iota \tf{\Ws}/\tf{\nabla \Lc(0)})$.  It follows from  Assumption~\ref{assum:nabla0}  and under zero initialization for GD ($\W(0)=0$) that
$$
\left\langle (\x_{i\op_i}-\x_\itt) \z_i^\top,   - \nabla \mc{L}(\W(0)) \right\rangle =\left\langle (\x_{i\op_i}-\x_\itt) \z_i^\top,   - \nabla \mc{L}(0) \right\rangle\ge \iota  > 0, 
$$
for some positive constant $\iota $. Hence, for any initial step size $\eta (0)>0$ and $\W(1)=-\eta(0) \nabla\Lc(0)$, 
\begin{equation}\label{eqn:decpath:zinit}
\begin{split}
    \li(\x_{i\op_i}-\x_\itt)\z_i^\top, \frac{\W (1)}{\tf{\W(1)}}\ri &=  \frac{\eta (0) }{\tf{\W(1)}}  \left\langle (\x_{i\op_i}-\x_\itt) \z_i^\top,   - \nabla \mc{L}(0) \right\rangle \\
     & \geq  \frac{\iota  \eta (0) }{\tf{\W(1)}}=  \frac{\iota }{ \tf{\nabla \mc{L}(0)}}\\
     &\geq  \frac{\mu}{\tf{\Wm}}.
\end{split}
\end{equation}
Here, the last inequality follows from our choice of $\mu$ in the theorem statement, i.e.
\begin{align}\label{eqn:mu:zero}
 \mu \in \left(0, \min\left(1, \frac{\iota \tf{\Wm}}{\tf{\nabla \mc{L}(0)}}\right)\right).
\end{align}
This $\mu$ choice induces the conic set $\conb_{\mu,R^0_\mu}(\Wm)$ with  $R^0_\mu= R_{\mu/2}$, where $R_{\mu/2}$ was defined in Lemma~\ref{lem:glocal:corr}. 
Now, given the parameter  $ \mu $ satisfying \eqref{eqn:mu:zero}, we can choose $\eta (0)$ such that $\tf{\W (1)} \geq R^0_\mu$ and $\W(1)\in\conb_{\mu, R^0_\mu}(\Wm)$. To achieve this, since $\W(0)=0$, we obtain  
\begin{equation}\label{eqn:stepeta0}
    \eta (0) =\frac{R^0_\mu}{\tf{\nabla \mc{L}(0) }}. 
\end{equation}
Since by our definition, $R^0_\mu \leftarrow R$,  \eqref{eqn:stepeta0} gives $\W(1)$ in the theorem's statement.

\noindent \textbf{Step~2: There are no stationary points within $\conb_{\mu,R_\mu^0}(\Wm)$.} 
This step follows from \ref{lem:zglob:l1}. Specifically, 
we can apply Lemma~\ref{glocal cond} to find that: For all $\V,\W\in  \bar{\Sc}_{\mu}(\Ws)$ with $\tf{\W} \neq 0$ and $\tf{\W} \geq R^0_\mu$,  we have that $-\li\V, \nabla \Lc(\W)\ri$ is strictly positive.
\\
\emph{Gradient correlation holds for large parameter norm.}  
It follows from  Lemma~\ref{lem:glocal:corr} that, there exists $ R_\epsilon\geq \bar{R}_\mu\vee 1/2$ such that all  $ \W \in \conb_{\mu,R_\epsilon}(\Wm)$ satisfy
\begin{align}\label{eqn:neg:corr:0}
\iprod{-\nabla\mc{L}(\W)}      {\frac{\Wm}{\tf{\Wm}}} \geq (1-\epsilon)    \iprod{-\nabla \mc{L}(\W)}{\frac{\W}{\tf{\W}}}.
\end{align}
The following argument applies to a general $\eps\in(0,\mu/2)$. However, at initialization $\W(0)=0$, we have set $\eps=\mu/2$ and defined the initialization radius as $R^0_\mu= R_{\mu/2}$. To proceed, we will prove the main statements \eqref{lem:zglob:l2} and \eqref{lem:zglob:l3} as follows.
\begin{itemize}
\item Proving \ref{lem:zglob:l3}: In \textbf{Step 3}, we will assume Condition \eqref{assum:extra} to prove that gradient iterates remain within $\conb_{\mu,R_\eps}(\Ws)$. Concretely, for any $\epsilon \in (0, \mu/2)$, we will show that after gradient descent enters the conic set $\conb_{\mu,R_\eps}(\Ws)$ for the first time, it will never leave the set under Condition \eqref{assum:extra} of the theorem statement and \eqref{eqn:neg:corr:0}. In what follows, let us denote $k_\eps$ to be the first time gradient descent enters $\conb_{\mu,R_\eps}(\Ws)$. Note that for $\eps\gets\mu/2$, $k_\eps=0$ i.e.~the point of initialization.

\item Proving \ref{lem:zglob:l2}: In \textbf{Step 4}, assuming iterates within $\conb_{\mu,R_\eps}(\Ws)$, we will prove that the norm diverges (as a result such $k_\eps$ is guaranteed to exist) and, additionally, the gradient updates asymptotically aligns with $\Ws$. 
\end{itemize}
%
%

\textbf{Step~3 (Proof of \ref{lem:zglob:l3}): Updates remain inside the cone $\conb_{\mu,R_\eps}(\Ws)$.}   Note that if $\W(k) \in \conb_{\mu,R_\eps}(\Ws)$ for all $k \geq 1$, the required condition in \ref{lem:zglob:l2} holds, and we proceed to \textbf{Step 4}. In this step, we show \ref{lem:zglob:l3}. Specifically, we show that under Condition \eqref{assum:extra} and using  \eqref{eqn:neg:corr:0}, all iterates $\W(k) \in \conb_{\mu,R_\eps}(\Ws)$ remain within $\conb_{\mu,R_\eps}(\Ws)$.

To proceed, by leveraging the results from \textbf{Step 1} and \textbf{Step 2}, we demonstrate that the gradient iterates, with an appropriate constant step size, starting from $\W(k_\eps) \in \conb_{\mu,R_\eps}(\Ws)$, remain within this set. We proceed by induction.  Suppose that the claim holds up to iteration $k \geq k_\eps$. This implies that $ \W(k) \in \conb_{\mu,R_\eps}(\Ws)$. Hence, recalling $\conb_{\mu,R_\eps}(\Ws)$ defined in \eqref{eqn:con:nabla0}, there exists scalar $\mu=\mu(\bal) \in (0,1)$  and $R_\eps$ such that  $\tf{\W(k)}\geq R_\eps$, and
\begin{equation*}
\begin{split}
\left\langle (\x_{i\op_i}-\x_{it})\z_i^\top,\frac{\W(k)}{\tf{\W(k)}} \right\rangle  \geq \mu\Theta,
\end{split}
\end{equation*}
where $\Theta=1/\tf{\Wm}$. 

Let 
\begin{subequations}\label{eqn:rho:def:nabla0}
\begin{align}
\frac{1}{1-\epsilon} \iprod{ {\frac{\Wm}{\tf{\Wm}}}}{-\nabla\mc{L}(\W(k))} =:\rho(k)>0.
\end{align}
\end{subequations}
Using \eqref{assum:extra}, we have 
\begin{equation}\label{eqn:localgd:1:nabla0}
    \begin{split}
   \left\langle (\x_{i\op_i}-\x_\itt) \z_i^\top,    \frac{\W(k+1)}{\tf{\W(k)}} \right\rangle &=   \left\langle (\x_{i\op_i}-\x_\itt) \z_i^\top,    \frac{\W(k)}{\tf{\W(k)}} -\frac{\eta}{\tf{\W(k)}}\nabla \mc{L}(\W(k)) \right\rangle\\   
      & \geq \mu \Theta +\frac{ 2\eta  (1-\epsilon)\mu \Theta \rho(k) }{\tf{\W(k)}}.
    \end{split}
\end{equation}
From Lemma~\ref{glocal cond},   we have $\left\langle \nabla \Lc(\W(k)),\W(k)\right\rangle<0$~ which implies that $\tf{\W(k+1)} \geq \tf{\W(k)}$.  This together with  $R_\eps$ definition and $\tf{\W(k)}\geq 1/2$ implies that  
\begin{align*}
\tf{\W(k+1)}&\leq\frac{1}{{2\tf{\W(k)}}} \left(\tf{\W(k+1)}^2+\tf{\W(k)}^2\right)\\
& = \frac{1}{2\tf{\W(k)}} \left(2\tf{\W(k)}^2-2\eta\left\langle \nabla \Lc(\W(k)),\W(k)\right\rangle+\eta^2\tf{\nabla \Lc(\W(k))}^2\right)\\
       &\leq  \tf{\W(k)}- \frac{\eta}{\tf{\W(k)}}\left\langle \nabla \Lc(\W(k)),\W(k)\right\rangle + \eta^2 \|\nabla \Lc(\W(k))\|_F^2.
\end{align*}
Thus,
\begin{equation}\label{eqn:localgd:2:nabla0}
\begin{split}
  \frac{\tf{\W(k+1)}}{\tf{\W(k)}}& \leq  1- \frac{\eta}{\tf{\W(k)}}
       \left\langle \nabla \Lc(\W(k)),\frac{\W(k)}{\tf{\W(k)}} \right\rangle + \eta^2 \frac{\|\nabla \mc{L}(\W(k))\|_F^2}{\tf{\W(k)}}\\
& \leq 1- \frac{\eta}{(1-\epsilon)\tf{\W(k)}}  \iprod{\nabla\mc{L}(\W(k))}
     {\frac{\Wm}{\tf{\Wm}}}+ \eta^2 \frac{\|\nabla \mc{L}(\W(k))\|_F^2}{\tf{\W(k)}}\\
      & \leq  1 + \frac{\eta \rho(k)}{\tf{\W(k)}} + \frac{\eta^2\|\nabla \mc{L}(\W(k))\|_F^2}{\tf{\W(k)}}=:C_1(\rho(k),\eta).
\end{split}
\end{equation}
Here, the second inequality uses \eqref{eqn:neg:corr:0}. 

Now, it follows from \eqref{eqn:localgd:1:nabla0} and \eqref{eqn:localgd:2:nabla0} that 
\begin{equation}\label{eqn:localgd:3:nabla0}
\begin{split}
\min_{t\neq \op_i,~i\in[n]} ~~  \left\langle  (\x_{i\op_i}-\x_\itt) \z_i^\top, \frac{\W(k+1)}{\tf{\W(k+1)}}\right\rangle   &\geq \frac{1}{C_1({\rho}(k),\eta)} \left(\mu \Theta+\frac{2\eta (1-\epsilon)\mu \Theta  {\rho}(k)}{\tf{\W(k)}}\right)\\
& = \mu \Theta+\frac{\eta\mu \Theta}{C_1({\rho}(k),\eta)} \left(\frac{ \big(2(1-\epsilon) -1 \big)  \rho(k)}{\tf{\W(k)}}
-   \eta \frac{\tf{\nabla \mc{L}(\W(k))}^2 }{\tf{\W(k)}}\right)\\
& = \mu \Theta+\frac{\eta\mu \Theta}{C_1({\rho}(k),\eta)} \left(\frac{ (1-2\epsilon)   \rho(k)}{\tf{\W(k)}}
-   \eta  \frac{\tf{\nabla \mc{L}(\W(k))}^2 }{\tf{\W(k)}}\right)\\
& \geq \mu \Theta,
\end{split}
\end{equation}
 where the last inequality uses our choice of stepsize $\eta\leq 1/L_W$ in Theorem~\ref{conv:gd:w:global:nabla0}'s statement. Specifically, we need $\eta$ to be small to ensure the last inequality. We will guarantee this by choosing a proper $R_\eps$ in Lemma \ref{lem:glocal:corr}. Specifically, Lemma \ref{lem:glocal:corr} leaves the choice of $D_0$ in $R_\eps$ lower bound of \eqref{Rpi choice2} open (it can always be chosen larger). Here, by choosing $D_0\gtrsim 1/L_{\W}$ will ensure $\eta\leq 1/L_{\W}$ works well.
\begin{equation}\label{eqn:zeta:mu:0}
\begin{split}
    \eta &\leq    \big( 1-\mu\big)\mu
 \frac{c}{C}  \frac{\Theta}{\bar{A}}   \frac{1}{\bar{A}C T}    e^{R_\mu^0\Theta/2}\\
 &\leq      \frac{1-2\epsilon  }{1-\epsilon}   \frac{c \mu}{C}   \frac{\Theta}{\bar{A}}    \frac{1}{\bar{A}C T}  e^{R_\mu^0\Theta/2} \\
 &\leq   \big(1-2\epsilon \big)   \frac{\rho(k) } { \|\nabla \mc{L}(\W(k))\|^2_F}.
\end{split}    
\end{equation}
Here, the first inequality follows since $\epsilon \in (0, \mu/2)$ (as seen in \textbf{Step 2}). Also,  $\mu < 1$ implies that $1-\mu > 0$, we obtain $\eta > 0$. The last inequality is obtained from Lemma~\ref{glocal cond}:
\begin{align*}
    \frac{\rho(k) } { \tf{\nabla \mc{L}(\W(k))}} &= - \frac{1}{1-\epsilon} \iprod{ \frac{\nabla\mc{L}(\W(k))}{\tf{\nabla \mc{L}(\W(k))}}}
     {\frac{\Wm}{\tf{\Wm}}}  \geq \frac{1}{1-\epsilon} \cdot \frac{c \mu}{C} \cdot \frac{\Theta}{\bar{A}},\\
         \frac{1} { \tf{\nabla \mc{L}(\W(k))}} &{\geq \frac{1}{\bar{A}C \cdot \frac{1}{n} \sum_{i=1}^n  \left(1-\s_{i\op_i}\right)} \geq     \frac{1}{ \bar{A} C T e^{-R_\mu^{0}\Theta/2}} }
\end{align*}
for some data dependent constrants $c$, $C$, $\bar{A}=\max_{i\in[n],t,\tau\in[T]}\tn{(\x_{it}- \x_{i\tau})}~\tn{\z_i}$, and $\Theta=1/\tf{\Ws}$.

The remainder of the proof of this step is identical to \eqref{eqn:pitoC0}--\eqref{eqn:pitoC02}, with the replacement of $C_0$ by $D_0$ and the tracking of changes. Specifically, Lemma \ref{lem:glocal:corr} leaves the choice of $D_0$ in $R_\eps$ lower bound of \eqref{Rpi choice2} open (it can always be chosen larger). Hence,  for sufficiently large $D_0$, we have
\begin{align}
\eta \leq \frac{1}{L_{\W}}\leq   \big( 1-\mu\big)\mu
 \frac{c}{C}  \frac{\Theta}{\bar{A}}   \frac{1}{\bar{A}C T}    e^{R_\mu^0\Theta/2}.
\end{align}
This implies \eqref{eqn:localgd:3:nabla0} and  $\W(k+1) \in\conb_{\mu,R_\eps}(\Ws)$. 

\noindent\textbf{Step 4 (Proof of \ref{lem:zglob:l2}): $\W(k)$ and $\Wm$ perfectly align over time.} 
By theorem statement (alternatively via \textbf{Step 3}), we have that all iterates remain within the initial conic set i.e.~$\W(k)\in\conb_{\mu,R^0_\mu}(\Ws)$ for all $k\geq 0$. Note that it follows from Lemma~\ref{glocal cond}  that  $\li\nabla\Lc(\W), \Ws/\tf{\Ws}\ri<0$, for any finite $\W \in \conb_{\mu,R^0_\mu}(\Ws)$. Hence, there are no finite critical points $\W \in \conb_{\mu,R^0_\mu}(\Ws)$, for which $\nabla \mc{L} (\W)=0$. Now, based on Lemma~\ref{lem:grad:descent}, which guarantees that $\nabla\Lc(\W(k))\rightarrow 0$, this
implies that $\left\Vert \W\left(k\right)\right\Vert \rightarrow\infty$. Consequently, for any choice of $\eps\in (0,\mu/2)$ there is an iteration $k_\eps$ such that, for all $k\geq k_\eps$, $\W(k)\in\conb_{\mu,R_\eps}(\Ws)$. Once within $\conb_{\mu,R_\eps}(\Ws)$,  multiplying both sides of \eqref{eqn:neg:corr:0} by the stepsize $\eta$ and using the gradient descent update, we get
\begin{equation*}
\begin{split}
     \left\langle \W(k+1)-\W(k),\frac{ \Wm}{\tf{\Wm}} \right\rangle &\geq  (1-\epsilon) \left\langle \W(k+1)-\W(k), \frac{\W(k)}{\tf{\W(k)}}\right\rangle\\
     &= \frac{(1-\epsilon)}{2\tf{\W(k)}}\left(\tf{\W(k+1)}^2- \tf{\W(k)}^2-\tf{\W(k+1)-\W(k)}^2\right) \\
     & \geq (1-\epsilon)\left( \frac{1}{2\tf{\W(k)}} \left(\tf{\W(k+1)}^2- \tf{\W(k)}^2\right)-\tf{\W(k+1)-\W(k)}^2\right) \\
     & \geq (1-\epsilon)\left(\tf{\W(k+1)}- \tf{\W(k)}-\tf{\W(k+1)-\W(k)}^2\right) \\
          & \geq (1-\epsilon)\Big(\tf{\W(k+1)}- \tf{\W(k)}- 2\eta  \left(\mc{L}(\W(k))-\mc{L}(\W(k+1))\right) \Big).
\end{split}
\end{equation*}
Here, the second inequality is obtained from  $\tf{\W(k)}\geq 1/2$; the third inequality follows since  for any $a, b >0$, we have $  (a^2-b^2)/(2b) -  (a-b) \geq 0$; and the last inequality  uses Lemma~\ref{lem:grad:descent}.


Summing the above inequality over $k\geq k_\eps$ gives 


\begin{align*}
      \left\langle\frac{\W(k)}{\tf{\W(k)}}, \frac{\Wm}{\tf{\Wm}} \right\rangle \ge1-\epsilon+ \frac{C(\epsilon,\eta)}{\tf{\W(k)}}, \qquad \W(k)\in\conb_{\mu,R_\eps}(\Ws),   
\end{align*}
where $\mathcal{L}_{\star}\leq\mathcal{L}\left(\W\left(k\right)\right)$ for all $k\geq k_\eps$, and 
\begin{equation*}
C(\epsilon,\eta)= \left\langle \W(k_\eps), \frac{ \Wm}{\tf{\Wm}}\right\rangle-(1-\epsilon)\tf{\W(k_\eps)} -2\eta (1-\epsilon) (\mc{L}(\W(k_\eps))-\mathcal{L}_{\star}).
\end{equation*}
Consequently,
    \begin{align*}
      \liminf_{k\to\infty}\iprod{\frac{\W(k)}{\tf{\W(k)}}}{\frac{\Wm}{\tf{\Wm}}}\ge1-\epsilon, \qquad \W(k)\in\conb_{\mu,R_\eps}(\Ws).  
    \end{align*}
Since  $\epsilon \in (0, \mu/2)$  is arbitrary, this implies $\W(k)/\tf{\W(k)}\to  \Wm/\tf{\Wm}$.
\end{proof}

\section{Local Convergence of Gradient Descent}\label{app local proofs}
To provide a basis for discussing local convergence of GD, we establish a cone centered around $\Wma$  using the following construction. For parameters $\mu \in (0,1)$ and $R>0$, we define $\Cc_{\mu,R}(\Wma)$ as the set of matrices $\W \in\R^{d\times d}$ such that $\tf{\W}\geq R$ and  the correlation coefficient between $\W$ and $\Wma$ is at least $1-\mu$:
\begin{subequations}
\begin{align}\label{eqn:coneofw:r}
\Sc_{\mu}(\Wma)&:= \left\{\W\in\R^{d\times d}~:~\left\langle\frac{\W}{\tf{\W}},\frac{{\Wma}}{\tf{{\Wma}}} \right\rangle \geq 1-\mu\right\}, \\
\Cc_{\mu,R}({\Wma})&:= \Sc_{\mu}(\Wma) \cap  \left\{\W\in\R^{d\times d}~:~\tf{\W}\geq R\right\}.
\end{align}
\end{subequations}
\begin{lemma}
\label{local cond} 
Suppose Assumption~\ref{assum:loss:prope} on the loss function $\ell$ holds, and let $\bal=(\alpha_i)_{i=1}^n$ be locally optimal tokens according to Definition \ref{def loc opt}. Let $ \Wm= \Wm_\bal$ denote the SVM solution obtained via \eqref{eqn:sattnsvm} by applying the Frobenius norm and replacing $(\opt_i)_{i=1}^n$ with $\boldsymbol{\alpha} = (\alpha_i)_{i=1}^n$. 
%
%
There exists a scalar $\mu=\mu(\bal)>0$ such that for sufficiently large $\RR_\mu$:
\begin{enumerate}[label={\textnormal{\textbf{L\arabic*.}}}, wide, labelwidth=!,itemindent=!, labelindent=5pt]
\item \label{lem:cond:l1} There is no stationary point within  $ \Cc_{\mu,\RR_\mu} (\Wm)$.
\item\label{lem:cond:l2} For all $\V\in \Sc_{\mu}(\Wm)$ with $\tf{\V}=\tf{\Wm}$  and $\W\in\Cc_{\mu,\RR_\mu}(\Wm)$, there exist dataset dependent constants $C,c>0$ such that 
\begin{subequations}\label{local:g:lbound}
\begin{align}
&C\cdot \frac{1}{n}\sum_{i=1}^n \left(1-\s_{i\alpha_i}\right) \geq -\Big\langle\nabla\Lc(\W),\V \Big\rangle\geq c\cdot  \frac{1}{n} \sum_{i=1}^n  \left(1-\s_{i\alpha_i}\right)>0, \label{local1:g:bound} \\
&\tf{\nabla\Lc(\W)}\leq \bar{A}C \cdot \frac{1}{n} \sum_{i=1}^n  \left(1-\s_{i\alpha_i}\right), \label{local2:g:bound}\\
& -\li\frac{\V}{\tf{\V}},\frac{\nabla\Lc(\W)}{\tf{\nabla\Lc(\W)}}\ri \geq  \frac{c}{C} \cdot \frac{\Theta}{\bar{A}}>0. \label{local3:g:bound}
\end{align}
\end{subequations}
Here, $\s_{i\alpha_i}= (\sft{\X_i\W \z_{i}})_{\alpha_i}$, $\bar{A}=\max_{i\in[n],t,\tau\in[T]}\tn{(\x_{it}- \x_{i\tau})}~\tn{\z_i}$, and $\Theta=1/\tf{\Ws}$.
\end{enumerate}
\end{lemma}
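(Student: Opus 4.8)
\textbf{Proof plan for Lemma \ref{local cond}.}

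The plan is to adapt the proof of Lemma \ref{glocal cond} to the local setting, where optimal tokens $(\op_i)_{i=1}^n$ are replaced by locally-optimal tokens $\bal=(\alpha_i)_{i=1}^n$, but crucially the non-selected tokens are no longer assumed to have equal scores — instead we invoke local optimality to control the \emph{support tokens} (the \neis $\Tc_i$), which by Definition \ref{def loc opt} have strictly smaller score than $\alpha_i$. First I would fix $\mu = \mu(\bal) > 0$ small enough so that on the cone $\Sc_\mu(\Wma)$ the SVM margin constraints are approximately preserved: for $\V\in\Sc_\mu(\Wma)$ with $\tf{\V}=\tf{\Wma}$, every $\V$-logit gap $(\x_{i\alpha_i}-\x_{it})^\top\V\z_i$ is at least some fixed positive $\underline{\mu}$ for $t\in\Tc_i$ and at least some positive constant for $t\notin\Tc_i\cup\{\alpha_i\}$ (since those latter constraints are strictly slack at $\Wma$), while staying bounded above by $\bar A\,\Theta$. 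This uses continuity of the finitely many bilinear forms on the compact sphere slice and the defining property of $\Wma$ as the \eqref{eqn:sattnsvm} solution with margin $\Theta=1/\tf{\Wma}$.

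Next I would write the gradient correlation $\iprod{\nabla\Lc(\W)}{\V} = \frac1n\sum_i \ell'_i\,\iprod{\hb_i}{\sfp{\hp_i}\bgam_i}$ exactly as in \eqref{grad def32}, with $\hb_i=\X_i\V\z_i$, $\hp_i=\X_i\W\z_i$, $\s_i=\sft{\hp_i}$, and apply Lemma \ref{lem:q_reduce} (with reference index $\alpha_i$ in place of $1$) to get
\[
\Bigl|\hb_i^\top\diag{\s_i}\bgam_i - \hb_i^\top\s_i\s_i^\top\bgam_i - \sum_{t\neq\alpha_i}(\hb_{i\alpha_i}-\hb_{it})\s_{it}(\bgam_{i\alpha_i}-\bgam_{it})\Bigr| \leq 2\Gamma\bar A(1-\s_{i\alpha_i})^2.
\]
The key new point relative to the global case: the first-order term $\sum_{t\neq\alpha_i}(\hb_{i\alpha_i}-\hb_{it})\s_{it}(\bgam_{i\alpha_i}-\bgam_{it})$ splits over $t\in\Tc_i$ and $t\notin\Tc_i$. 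Over $\Tc_i$, local optimality gives $\bgam_{i\alpha_i}-\bgam_{it}>0$ and the $\V$-gap is in $[\underline{\mu},\bar A\Theta]$, so this contributes $\asymp \sum_{t\in\Tc_i}\s_{it}$, positively. Over $t\notin\Tc_i$, the $\W$-margin is strictly greater than $1$, so by the softmax bound these $\s_{it}$ are exponentially dominated by $\sum_{t\in\Tc_i}\s_{it}$ once $\tf{\W}\geq\RR_\mu$ is large enough (choose $\RR_\mu = \order{1/\mu}$ analogous to \eqref{R bound2}), and similarly the quadratic remainder $2\Gamma\bar A(1-\s_{i\alpha_i})^2$ is absorbed. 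This yields, after averaging over $i$ and using $c_{\min}\le -\ell'_i\le c_{\max}$ from Assumption \ref{assum:loss:prope}, the two-sided bound \eqref{local1:g:bound} with $1-\s_{i\alpha_i}$ in place of $1-\s_{i\op_i}$; \ref{lem:cond:l1} follows since the upper bound in \eqref{local1:g:bound} is strictly positive for every finite $\W$, so $\nabla\Lc(\W)\neq 0$ on the cone. Then \eqref{local2:g:bound} comes from bounding $\tf{\nabla\Lc(\W)}\le \frac1n\sum_i |\ell'_i|\,\tn{\X_i^\top\sfp{\hp_i}\bgam_i}\tn{\z_i}$ and noting $\|\sfp{\hp_i}\bgam_i\|\lesssim \Gamma(1-\s_{i\alpha_i})$ times geometric factors, and \eqref{local3:g:bound} is the immediate consequence of dividing \eqref{local1:g:bound} by \eqref{local2:g:bound}.

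The main obstacle is the bookkeeping needed to show that the "bad" directions — both the non-support non-selected tokens $t\notin\Tc_i\cup\{\alpha_i\}$ and the quadratic error term — are genuinely negligible compared to the support contribution $\sum_{t\in\Tc_i}\s_{it}$, uniformly over $\W\in\Cc_{\mu,\RR_\mu}(\Wma)$ and over all $i\in[n]$. This requires choosing $\mu$ small first (to separate the support margin $=1$ from the strictly-slack margins $>1+\delta$ for some $\delta=\delta(\bal)>0$ on the cone), and only then choosing $\RR_\mu$ large in terms of $\mu$, $\delta$, $\Gamma$, $\bar A$, $T$, and the score gaps $\min_{i,t\in\Tc_i}(\bgam_{i\alpha_i}-\bgam_{it})>0$ — exactly the place where Definition \ref{def loc opt}'s strict inequality is indispensable (if some $\Tc_i$ neighbor had equal score, the first-order term could vanish and the argument would collapse). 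Everything else is a routine transcription of the estimates \eqref{soft prob bound2}–\eqref{R bound2} from the proof of Lemma \ref{glocal cond}, and I would simply state that those steps carry over verbatim with $\op\to\bal$ and the global score gap $\bggm$ replaced by the local one.
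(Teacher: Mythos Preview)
Your proposal is correct and follows essentially the same route as the paper: choose $\mu$ small so that on $\Sc_\mu(\Wma)$ the margin inequalities are approximately preserved (support tokens have gap $\approx 1$, non-support tokens have gap $\geq 1+\delta$), apply Lemma \ref{lem:q_reduce}, split the first-order term over $\Tc_i$ and $\Tcb_i$, and choose $\RR_\mu$ large so that both the non-support contribution $Q_i=\sum_{t\in\Tcb_i}\s_{it}$ and the quadratic remainder $(1-\s_{i\alpha_i})^2$ are dominated by $S_i=\sum_{t\in\Tc_i}\s_{it}$. The only cosmetic difference is that the paper obtains \eqref{local2:g:bound} by rerunning the same correlation bound for \emph{arbitrary} $\V$ with $\tf{\V}=\tf{\Wma}$ (where only the upper bound $|\hb_1-\hb_t|\leq \bar A/\Theta$ is available) and then specializing to $\V\propto\nabla\Lc(\W)$, rather than bounding $\tn{\sfp{\hp_i}\bgam_i}$ directly as you suggest; both arguments are valid and yield the same constants.
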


\begin{proof}
Let $R=\RR_\mu$, $(\Tc_i)_{i=1}^n$ be the set of all \neis per Definition \ref{def loc opt}. Let $\Tcb_i=[T]-\Tc_i-\{\alpha_i\}$ be the non-\neis. Let
\begin{equation}\label{mu choice}
\begin{split}
&\Theta=1/\tf{\Wm},\\
&\delta= \frac{1}{2}\min_{i\in[n]}\min_{t\in\Tc_i,\tau\in\Tcb_i}(\x_{it}-\x_{i\tau})^\top \Wm \z_{i},\\
&A=\max_{i\in[n],t\in[T]} \frac{\tf{\x_{it} \z_i^\top}}{\Theta},\\
& \mu\leq \mu(\delta)=\frac{1}{8}\left(\frac{\min(0.5,\delta)}{A}\right)^2.
\end{split}
\end{equation}
%
Since $\Wm$ is the max-margin model ensuring $(\x_{i\alpha_i}-\x_{it})^\top\Wm \z_i\geq 1$, the following inequalities hold for all $\W\in \cone_\mu(\Wm),~\tf{\W}=\tf{\Wm}$ and all $i\in[n], t\in\Tc_i,\tau\in\Tcb_i$:
\begin{equation}\label{cone-non-nei}
\begin{split}
(\x_{it}-\x_{i\tau})^\top \W \z_i&\geq \delta>0,\\
(\x_{i\alpha_i}-\x_{i\tau})^\top \W \z_i&\geq 1+\delta,\\
\frac{3}{2}\geq(\x_{i\alpha_i}-\x_{it})^\top \W \z_i &\geq \frac{1}{2}.
\end{split}
\end{equation}
Here, we used $\tf{\W-\Wm}^2/\tf{\Wm}^2\leq 2\mu$ which implies $\tf{\W-\Wm}\leq \sqrt{2\mu}/\Theta$.

To proceed, we write the gradient correlation following \eqref{grad def} and \eqref{grad def2}
\begin{align}\label{grad def3}
\li\nabla\Lc(\W),\V\ri&=\frac{1}{n}\sum_{i=1}^n\ell'_i\cdot\hb_i^\top\sfp{\hp_i}\bgam_i,
\end{align}
where we denoted $\ell'_i=\ell'(Y_i\cdot \vb^\top \X_i^\top\sft{\hp_i})$, $\hb_i=\X_i\V \z_{i}$, $\hp_i= \X_i\W \z_{i}$, and $\s_i=\sft{\hp_i}$.  

Using \eqref{cone-non-nei}, for all $t\in\Tc_i,\tau\in \Tcb_i$, for all $\W\in \Cc_{\mu,R}(\Wm)$, we have that
\begin{align*}
&\hp_{it}-\hp_{i\tau}\geq R\Theta\delta,\\
&\hp_{i\alpha_i}-\hp_{i\tau}\geq R\Theta(1+\delta),\\
&\hp_{i\alpha_i}-\hp_{it}\geq R\Theta/2.    
\end{align*}
Consequently, we can bound the softmax probabilities $\s_i=\sft{\hp_i}$ over non-\neis as follows: For all $i\in[n]$ and any $t_i\in \Tc_i$
\begin{subequations}
\begin{align}\label{soft prob bound}
&S_i:=\sum_{\tau\in\Tc_i}\s_{i\tau} 
\leq T e^{-R\Theta/2}\s_{i\alpha_i}\leq T e^{-R\Theta/2},\\
&Q_i:=\sum_{\tau\in\Tcb_i}\s_{i\tau} \leq T e^{-R\Theta\delta}\s_{it_i}\leq T e^{-R\Theta\delta}S_i.
\end{align}
\end{subequations}
Recall scores $\bgam_{it}=Y_i\cdot\vb^\top \x_{it}$. Define the score gaps over \neis:
\begin{equation*}
 \bgg_i=\bgam_{i\alpha_i}-\max_{t\in\Tc_i}\bgam_{it}~~~ \textnormal{and}~~~ \bgm_i=\bgam_{i\alpha_i}-\min_{t\in\Tc_i}\bgam_{it}. 
\end{equation*}
It follows from \eqref{mu choice} that 
\begin{align*}
&A=\max_{i\in[n],t\in[T]} \frac{\tf{\x_{it} \z_i^\top}}{\Theta}\geq \max_{i\in[n],t\in[T]}\tn{\hb_{it}}.
\end{align*}
Define the $\bal$-dependent global scalar $\Gamma=\sup_{i\in[n],t,\tau\in[T]}|\bgam_{it}-\bgam_{i\tau}|$.

Let us focus on a fixed datapoint $i\in[n]$, assume (without losing generality) $\alpha_i=1$, and drop subscripts $i$.
Directly applying Lemma \ref{lem:q_reduce}, we obtain
\[
  \big|\hb^\top\diag{\s}\bgam-\hb^\top\s\s^\top\bgam-\sum_{t\geq 2}^T (\hb_1-\hb_t)\s_t(\bgam_1-\bgam_t)\big|\leq 2\Gamma A(1-\s_1)^2.
\]
To proceed, let us decouple the non-\neis within $\sum_{t\geq 2}^T (\hb_1-\hb_t)\s_t(\bgam_1-\bgam_t)$ via
\[
\big|\sum_{t\in\Tcb} (\hb_1-\hb_t)\s_t(\bgam_1-\bgam_t)\big|\leq 2Q\Gamma A.
\]
Aggregating these, we found
\begin{align}
  \big|\hb^\top\diag{\s}\bgam-\hb^\top\s\s^\top\bgam-\sum_{t\in \Tc} (\hb_1-\hb_t)\s_t(\bgam_1-\bgam_t)\big|\leq 2\Gamma A((1-\s_1)^2+Q).\label{aggregate}
\end{align}
To proceed, let us upper/lower bound the gradient correlation.  We use two bounds depending on $\V\in\Sc_{\mu}(\Ws)$ (\textbf{Case 1}) or general $\V\in\R^{d\times d}$ (\textbf{Case 2}).

\noindent$\bullet$ \textbf{Case 1:  $\V\in\Sc_{\mu}(\Ws)$.} Since $1.5\geq \hb_1-\hb_t\geq 0.5$ following \eqref{cone-non-nei}, we find
\[
 1.5\cdot S\cdot \bgm  \geq\sum_{t\in \Tc} (\hb_1-\hb_t)\s_t(\bgam_1-\bgam_t)\geq 0.5\cdot S\cdot \bgg,
\]
where recall the definition of $S$ (having dropped subscripts) in \eqref{soft prob bound}. 

\noindent$\bullet$ \textbf{Case 2: $\Vb\in\R^{d\times d}$ and $\tf{\V}=\tf{\Wm}$.}  Define $\bar{A}=\max_{i\in[n],t,\tau\in[T]}\tn{\x_{it}-\x_{i\tau}}~\tn{\z_i}$. For any $\tf{\V}=\tn{\Ws}$, we use the fact that
$$\tn{\hb_1-\hb_t}\leq \tf{(\x_{it}-\x_{i\tau}) \z_i^\top}\cdot\tf{\V}\leq \frac{\bar{A}}{\Theta}.$$
Note that by definition $ \frac{\bar{A}}{\Theta} \geq 1$. To proceed, we can upper bound
\begin{align}
\frac{\bar{A}}{\Theta}\cdot S\cdot \bgm  \geq\sum_{t\in \Tc} (\hb_1-\hb_t)\s_t(\bgam_1-\bgam_t).\label{wishwish2}
\end{align}

Next we claim that for both cases, $S$ dominates $((1-\s_1)^2+Q)$ for large $R$. Specifically, we wish for 
\begin{align}\label{wishfor}
\frac{S\cdot \bgg}{4}\geq 4\Gamma A\max((1-\s_1)^2,Q)\iff S\geq 16\frac{\Gamma A}{\bgg}\max((1-\s_1)^2,Q).
\end{align}
Now choose $R\geq \delta^{-1}\log(T)/\Theta$  to ensure $Q\leq S$ since $Q\leq Te^{-R\Theta\delta}S$ from \eqref{soft prob bound}. Consequently
\[
(1-\s_1)^2=(Q+S)^2\leq 4S^2\leq 4STe^{-R\Theta/2}.
\]
Combining these, what we wish is ensured by guaranteeing
\begin{align}\label{s bound}
  S\geq 16\frac{\Gamma A}{\bgg}\max(4STe^{-R\Theta/2},Te^{-R\Theta\delta}S).
\end{align}
This in turn is ensured for all inputs $i\in[n]$ by choosing 
\begin{align}\label{R bound}
R\geq \frac{\max(2,\delta^{-1})}{\Theta}\log\left(\frac{64T\Gamma A}{\bggm}\right),
\end{align}
where $\bggm=\min_{i\in[n]}\bgg_i$ is the global scalar which is the worst case score gap over all inputs. 
\\
$\bullet$ \textbf{Case 1: $\V\in\Sc_{\mu}(\Ws)$}. With the above choice of $R$, we guaranteed
\[
  2 (1-\s_1)\cdot \bgm\geq 2\cdot S\cdot \bgm \geq \hb^\top\diag{\s}\bgam-\hb^\top\s\s^\top\bgam\geq\frac{S\cdot \bgg}{4}\geq\frac{(1-\s_1) \bgg}{8}.
\]
via \eqref{wishfor} and \eqref{aggregate}. 

Since this holds over all inputs, going back to the gradient correlation \eqref{grad def3} and averaging above over all inputs $i\in[n]$ and plugging back the indices $i$, we obtain the advertised bound 
\begin{align}\label{pbb corr}
\frac{2}{n}\sum_{i\in [n]} -\ell'_i\cdot S_i\cdot \bgm_i\geq -\li\nabla\Lc(\W),\V\ri\geq \frac{1}{8n}\sum_{i\in [n]} -\ell'_i\cdot S_i\cdot \bgg_i.
\end{align}
Let $-\ell'_{\min/\max}$ be the min/max values negative loss derivative admits over the ball $[-A,A]$ and note that $\max_{i\in[n]}\bgm_i>0$ and $\min_{i\in[n]}\bgg_i>0$ are dataset dependent constants. Then, we declare the constants $C=-2\ell'_{\max}\cdot \max_{i\in[n]}\bgm_i>0,c=-(1/8)\ell'_{\min}\cdot \min_{i\in[n]}\bgg_i>0$ to obtain the bound \eqref{local1:g:bound}. 
\vspace{.2cm}
\\
\noindent$\bullet$ \textbf{Case 2: $\Vb\in\R^{d\times d}$ and $\tf{\V}=\tf{\Wm}$.} Next, we show \eqref{local2:g:bound} and \eqref{local3:g:bound}. For any $\V \in \mathbb{R}^{d \times d}$ satisfying $\tf{\V}=\tf{\Ws}$, using \eqref{wishwish2} and the  choice of $R$ in \eqref{R bound} similarly guarantees 
$$
\frac{2\bar{A}}{\Theta }(1-\s_1) \bgm\geq \hb^\top\diag{\s}\bgam-\hb^\top\s\s^\top\bgam,
$$
for fixed input. Going back to the gradient correlation \eqref{grad def3} and averaging above over all inputs $i\in[n]$, with the same definition of $C>0$, we obtain
\begin{align}
\frac{ \bar{A} C}{  \Theta n}\sum_{i\in [n]} (1-\s_{i\alpha_i})\geq -\li\nabla\Lc(\W),\V\ri.\label{local lamma general upper}
\end{align}
To proceed, since \eqref{local lamma general upper} holds for any $\V\in\R^{d\times d}$, we observe that when setting $\V=\frac{\tf{\Ws}}{\tf{\nabla\Lc(\W)}}\cdot \nabla\Lc(\W)$, this implies that
\[ 
\li\nabla\Lc(\W),\V\ri = \tf{\nabla\Lc(\W)}\cdot \tf{\Ws}\leq \frac{\bar{A} C}{\Theta 
 n}\sum_{i\in [n]} (1-\s_{i\alpha_i}).
\]
Simplifying $\Theta=1/\tf{\Ws}$ on both sides gives \eqref{local2:g:bound}. 
\\
Combining the above inequality with \eqref{pbb corr}, we obtain that for all $\V,\W\in\Sc_{\mu}(\Ws)$
\[ 
-\li\frac{\V}{\tf{\V}},\frac{\nabla\Lc(\W)}{\tf{\nabla\Lc(\W)}}\ri\geq \frac{c \Theta }{C\bar{A}},
\]
which gives \eqref{local3:g:bound}.

\end{proof}

\begin{lemma}
\label{lem:local:corr} 
Suppose Assumption~\ref{assum:loss:prope} on the loss function $\ell$ holds, and let $\bal=(\alpha_i)_{i=1}^n$ be locally optimal tokens according to Definition \ref{def loc opt}. Let $ \Wm= \Wm_\bal$ denote the SVM solution obtained via \eqref{eqn:sattnsvm} by replacing $(\opt_i)_{i=1}^n$ with $\boldsymbol{\alpha} = (\alpha_i)_{i=1}^n$. Let $\mu=\mu(\bal)>0$ and $\bar{R}_{\mu}$ be defined as in Lemma~\ref{local cond}. For any choice of $\pi>0$, there exists $R_\pi \geq \bar{R}_{\mu}$ such that, for any $ \W\in \Cc_{\mu,R_\pi}(\Wm)$, we have
\[
 \li \nabla\Lc(\W), \frac{\W}{\tf{\W}} \ri\geq (1+\pi)\li \nabla\Lc(\W), \frac{\Wm}{\tf{\Wm}}\ri.
\]
\end{lemma}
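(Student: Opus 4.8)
\textbf{Proof plan for Lemma~\ref{lem:local:corr}.}

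The plan is to mirror the structure of the proof of Lemma~\ref{lem:glocal:corr}, adapting it from the globally-optimal indices $\op$ to the locally-optimal indices $\bal$ and from the global cone $\conb_{\mu,R}(\Ws)$ to the local cone $\Cc_{\mu,R}(\Wma)$. Write $\Wb = \tf{\Wma}\W/\tf{\W}$, $\hb_i = \X_i\Wb\z_i$, $\hbm_i = \X_i\Wma\z_i$, and $\s_i = \sft{\X_i\W\z_i}$. As in Lemma~\ref{lem:glocal:corr}, it suffices to prove the stronger pointwise inequality
\begin{align*}
\li -\nabla\Lc(\W),\frac{\W}{\tf{\W}}\ri = -\frac{1}{n}\sum_{i=1}^n \ell'_i\cdot\li\hb_i,\sfp{\X_i\W\z_i}\bgam_i\ri \leq -\frac{1+\pi}{n}\sum_{i=1}^n \ell'_i\cdot\li\hbm_i,\sfp{\X_i\W\z_i}\bgam_i\ri.
\end{align*}
First I would apply Lemma~\ref{lem:q_reduce} to both $\hb_i$ and $\hbm_i$, but now decoupling the three groups of tokens --- $\alpha_i$, the \neis $\Tc_i$, and the non-\neis $\Tcb_i$ --- using the bounds \eqref{cone-non-nei} and \eqref{soft prob bound} from the proof of Lemma~\ref{local cond}. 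The key structural point is that $\hbm_{i\alpha_i}-\hbm_{it}=1$ for $t\in\Tc_i$ and $\geq 1+\delta$ for $t\in\Tcb_i$ (by the max-margin property and $\delta$-separation), so the $\Tcb_i$ contributions are exponentially suppressed in $R$ relative to the $\Tc_i$ contributions. After absorbing a perturbation term of size $O(\Gamma A(1-\s_{i\alpha_i})^2)$ into $0.5\pi\sum_{t\in\Tc_i}\s_{it}(\bgam_{i\alpha_i}-\bgam_{it})$ --- which, exactly as in \eqref{soft prob bound}, \eqref{wishfor2}, \eqref{R bound2}, is achievable by choosing $R\gtrsim (\mu\Theta)^{-1}\log(T\Gamma A/(\pi\bggm))$ with $\bggm=\min_i(\bgam_{i\alpha_i}-\max_{t\in\Tc_i}\bgam_{it})>0$ by local optimality --- the task reduces to the comparison
\begin{align*}
-\frac{1}{n}\sum_{i=1}^n\ell'_i\cdot\sum_{t\in\Tc_i}(\hb_{i1}-\hb_{it})\s_{it}(\bgam_{i1}-\bgam_{it}) \leq -\frac{1+0.5\pi}{n}\sum_{i=1}^n\ell'_i\cdot\sum_{t\in\Tc_i}\s_{it}(\bgam_{i1}-\bgam_{it}).
\end{align*}

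Next I would split into the two scenarios used for \eqref{desired comp2}. In Scenario~1, $\tf{\Wb-\Wma}\leq\eps:=\pi/(4A\Theta)$: then $|\hb_{it}-\hbm_{it}|\leq A\Theta\eps=\pi/4$ for all $t$, hence $\hb_{i1}-\hb_{it}\leq \hbm_{i1}-\hbm_{it}+2A\Theta\eps = 1+0.5\pi$ for $t\in\Tc_i$ (using $\hbm_{i1}-\hbm_{it}=1$), and all three factors in the sum being nonnegative gives the inequality termwise. In Scenario~2, $\tf{\Wb-\Wma}\geq\eps$: since $\Wma$ is the unique max-margin direction over its cone, some input $i$ and some \nei $\tau\in\Tc_i$ satisfies $\hb_{i1}-\hb_{i\tau}\leq 1-2\nu$ for a compactness-derived $\nu=\nu(\eps)>0$; then I would partition the \neis into $\Nc_i=\{t\in\Tc_i: \hb_{i1}-\hb_{it}\leq 1-\nu\}$ and its complement within $\Tc_i$, bound the complement's softmax mass by $Te^{-\RR\nu}$ times the total, and conclude as in \eqref{eqn:grad:difff1}--the last display of the proof of Lemma~\ref{lem:glocal:corr}, choosing $R$ large enough (depending on $\nu$, $\Gamma$, $A$, $\bggm$, $\pi$, $c_{\min}$, $c_{\max}$ from Assumption~\ref{assum:loss:prope}) to make the net contribution nonpositive. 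Taking the maximum of all the lower bounds on $R$ obtained along the way, and ensuring $R_\pi\geq\bar{R}_\mu$, yields the claimed $R_\pi$.

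The main obstacle I anticipate is Scenario~2: unlike the globally-optimal case where $\op_i$ has strictly the highest score among \emph{all} tokens, here local optimality only guarantees $\bgam_{i\alpha_i}>\bgam_{it}$ for $t\in\Tc_i$, so I must be careful that the non-\nei tokens $\Tcb_i$ (which may have higher scores than $\alpha_i$) are genuinely negligible --- this is exactly where the $\delta$-margin separation between \neis and non-\neis, baked into the definition of $\Cc_{\mu,R}(\Wma)$ via \eqref{cone-non-nei} and the choice $\mu\leq\mu(\delta)$, does the work, and I would need to track the $Q_i\leq Te^{-R\Theta\delta}S_i$ bound carefully through the perturbation estimates. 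A secondary subtlety is that the compactness argument producing $\nu(\eps)$ requires $\Wma$ to be the unique minimizer of the Frobenius norm over the feasible set, which holds because the Frobenius norm is strictly convex; I would invoke this uniqueness (already used implicitly in Lemma~\ref{local cond}) to justify that $\tf{\Wb-\Wma}\geq\eps$ forces a strict margin violation at some \nei.
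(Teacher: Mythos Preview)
Your plan is correct and follows essentially the same route as the paper's proof: reuse the decomposition \eqref{aggregate} from Lemma~\ref{local cond} (which already carries the extra $Q_i$ term for non-\neis), absorb the perturbation $6\Gamma A((1-\s_{i1})^2+Q_i)$ into $0.5\pi\sum_{t\in\Tc_i}\s_{it}(\bgam_{i1}-\bgam_{it})$, and then split into the two scenarios exactly as you describe. One small correction: the radius lower bound you quote should scale as $\max(2,\delta^{-1})/\Theta$ (cf.\ \eqref{R bound} and \eqref{R boundC0}) rather than $(\mu\Theta)^{-1}$, since in the local cone the softmax separation of \neis from $\alpha_i$ is governed by the constant $1/2$ and of non-\neis by $\delta$, not by $\mu$ directly---but you already flag the $Q_i\leq Te^{-R\Theta\delta}S_i$ dependence, so this would surface immediately when you fill in the details.
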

\begin{proof}
Let  $R=R_{\pi}$, $\Wb=\tf{\Wm} \W/\tf{\W} $, $\hb_i=\X_i\Wb \z_{i}$, and $\hbm_i= \X_i \Wm \z_{i}$.   To establish the result, we will prove that, for sufficiently large $R$ and for any $\W\in \Cc_{\mu,R}(\Wm)$:
\begin{align}\label{main local cond}
\nonumber 
\li -\nabla\Lc(\W),\frac{\W}{\tf{\W}}\ri&= -\frac{1}{n}\sum_{i=1}^n\ell'_i \cdot  \li \hb_i, \sfp{\X_i\W \z_{i}}\bgam_i\ri\\
&\leq - \frac{1+\pi}{n}\sum_{i=1}^n\ell'_i \cdot  \li\hbm_i, \sfp{\X_i\W \z_{i}}\bgam_i\ri=(1+\pi)\li-\nabla\Lc(\W), \frac{\ps}{\tf{\Ws}}\ri.
\end{align}

Following \eqref{aggregate}, for all $\W\in \Sc_{\mu}(\Wm)$ with $\tf{\W}=\tf{\Wm}$, $\hp=\X\W \z$, and $\s=\sft{\hp}$, we have found
\begin{align}
  \big|\hp^\top_i\diag{\s_i}\bgam_i-\hp^\top_i\s_i\s^\top_i\bgam_i-\sum_{t\in \Tc_i} (\hp_{i1}-\hp_{it})\s_{it}(\bgam_{i1}-\bgam_{it})\big|\leq 2\Gamma A((1-\s_{i1})^2+Q_i), 
\end{align}
where $\Tc_i$ is the set of support indices.

Plugging in $\hb,\hbm$ in the bound above and assuming $\pi\leq 1$ (w.l.o.g.), \eqref{main local cond} is implied by the following stronger inequality
\begin{align*}
-\frac{1}{n}&\sum_{i=1}^n\ell'_i \cdot \left(6\Gamma A((1-\s_{i1})^2+Q_i)+ \sum_{t\in \Tc_i} (\hb_{i1}-\hb_{it})\s_{it}(\bgam_{i1}-\bgam_{it}) \right)\\
&\leq -\frac{1+\pi}{n}\sum_{i=1}^n\ell'_i  \cdot \sum_{t\in \Tc_i} (\hbm_{i1}-\hbm_{it})\s_{it}(\bgam_{i1}-\bgam_{it})\\
&=-\frac{1+\pi}{n}\sum_{i=1}^n\ell'_i \cdot \sum_{t\in \Tc_i}\s_{it}(\bgam_{i1}-\bgam_{it}).
\end{align*}
First, we claim that $0.5\pi\sum_{t\in \Tc_i}\s_{it}(\bgam_{i1}-\bgam_{it})\geq 6\Gamma A((1-\s_{i1})^2+Q_i)$ for all $i \in [n]$.  The proof of this claim directly follows the earlier argument, namely, following \eqref{wishfor}, \eqref{s bound}, and \eqref{R bound}  which leads to the choice 
\begin{equation}\label{R boundC0}
R \ge\frac{\max(2,\delta^{-1})}{\Theta}\log\left(\frac{C_0\cdot T\Gamma A}{\pi\bggm}\right),    
\end{equation}
for some constant $C_0>0$. Using \eqref{R bound}, we choose $C_0 \geq 64 \pi$ to guarantee $R=R_\pi \geq \bar{R}_{\mu}$.

Following this control over the perturbation term $6\Gamma A((1-\s_{i1})^2+Q_i)$, to conclude with the result, what remains is proving the comparison
\begin{align}\label{desired comp}
-\frac{1}{n} \sum_{i=1}^n\ell'_i \cdot \sum_{t\in \Tc_i} (\hb_{i1}-\hb_{it})\s_{it}(\bgam_{i1}-\bgam_{it})\leq - \frac{1+0.5\pi}{n}\sum_{i=1}^n\ell'_i \cdot \sum_{t\in \Tc_i}\s_{it}(\bgam_{i1}-\bgam_{it}).
\end{align}
To proceed, we split the problem into two scenarios. 

\noindent\textbf{Scenario 1:} $\tf{\Wb-\Wm}\leq \eps=\frac{\pi}{4A\Theta}$ for some $\eps>0$.  In this scenario, for any $ t\in \Tc_i$ and $i\in [n ]$, we have
\[
|\hb_{it}-\hbm_{it}|=|\x_{it}^\top (\Wb-\Wm)  \z_{it}|\leq A\Theta\eps=\frac{\pi}{4}.
\]
Consequently, we obtain 
\[
\hb_{i1}-\hb_{it}\leq \hbm_{i1}-\hbm_{it}+2A\Theta\eps= 1+0.5\pi.
\] 
Similarly, $\hb_{i1}-\hb_{it}\geq 1-0.5\pi\geq 0.5$. Since all terms $\hb_{i1}-\hb_{it},\s_{it},\bgam_{i1}-\bgam_{it}$ in \eqref{desired comp} are nonnegative and $(\hb_{i1}-\hb_{it})\s_{it}(\bgam_{i1}-\bgam_{it})\leq (1+0.5\pi)\s_{it}(\bgam_{i1}-\bgam_{it})$, above implies the desired result in \eqref{desired comp}.

\vspace{3pt}
\noindent\textbf{Scenario 2:} $\tf{\Wb-\Wm}\geq \eps=\frac{\pi}{4A\Theta}$.  Since $\Wb$ is not (locally) max-margin, in this scenario, for some $i \in  [n]$, $\nu=\nu(\eps)>0$, and $\tau\in\Tc_i$, we have that
\begin{align*}
\hb_{i1}-\hb_{i\tau}\leq 1-2\nu.
\end{align*}
Here $\tau=\arg\max_{\tau\in\Tc_i} \x_{i\tau}\Wb \z_i$ denotes the nearest point to $\hb_{i1}$ (along the $\Wb$ direction). Note that a non-neighbor $t\in\Tcb_i$ cannot be nearest because $\Wb\in \cone_{\mu}(\ps)$ and \eqref{cone-non-nei} holds. Recall that $\s_i=\sft{\RR\hb_i}$ where $\RR=\tf{\W}\Theta \geq R\Theta$. To proceed, let $ \underline{\hb}_i:=\min_{t \in\mc{T}_i}\hb_{i1}-\hb_{it}$,
\begin{align*}
\mc{I}:=\left\{ i\in[n]: \underline{\hb}_i \leq 1-2\nu \right\}, \qquad [n]-\mc{I}:=\left\{ i\in[n]:  1-2\nu  <  \underline{\hb}_i\right\}.
\end{align*}
For all $ i \in [n]-\mc{I}$,
\begin{equation}\label{eqn:grad:difff2}
\begin{split}
      \sum_{t\in \Tc_i} (\hb_{i1}-\hb_{it})\s_{it}(\bgam_{i1}-\bgam_{it}) &- (1+0.5\pi) \sum_{t\in \Tc_i}\s_{it}(\bgam_{i1}-\bgam_{it})\\
      & \leq  \left(2A - (1+0.5\pi)\right)\Gamma\sum_{t\in \Tc_i,~\hb_{i1}-\hb_{it} \geq 1+\frac{\pi}{2} } \s_{it} \\
      & \leq  \left(2A - (1+0.5\pi)\right)\Gamma Te^{-\RR(1+\frac{\pi}{2})} \\
      &\leq   2A\Gamma  T e^{-\RR(1+\frac{\pi}{2})}.
\end{split}
\end{equation}


For all $ i \in \mc{I}$, split the tokens into two groups: Let $\Nc_i$ be the group of tokens obeying $ \hb_{i1}-\hb_{it} \leq 1-\nu$ and $\Tc_i-\Nc_i$ be the rest of the neighbors. Observe that
\[
\frac{\sum_{t\in \Tc_i-\Nc_i}\s_{it}}{\sum_{t\in\Tc_i}\s_{it}}\leq  T\frac{e^{\nu \RR}}{e^{2\nu\RR}}=Te^{-\RR\nu}.
\]
Using $|\hb_{i1}-\hb_{it}|\leq 2A=2 \max_{i\in[n],t\in[T]}\tn{\kb_{it}}/\Theta$ and  $\bggm=\min_{i\in[n]}\bgg_i =\min_{i\in[n]} (\bgam_{i1}-\max_{t\in\Tc_i}\bgam_{it})$, observe that 
\[
\sum_{t\in\Tc_i-\Nc_i} (\hb_{i1}-\hb_{it})\s_{it}(\bgam_{i1}-\bgam_{it})\leq \frac{2\Gamma A Te^{-\RR\nu}}{\bggm} \sum_{t\in \Tc_i} \s_{it}(\bgam_{i1}-\bgam_{it}).
\]
Thus, 
\begin{align*}
  \sum_{t\in \Tc_i} (\hb_{i1}-\hb_{it})\s_{it}(\bgam_{i1}-\bgam_{it})&= \sum_{t\in \Nc_i} (\hb_{i1}-\hb_{it})\s_{it}(\bgam_{i1}-\bgam_{it})+\sum_{t\in\Tc_i-\Nc_i} (\hb_{i1}-\hb_{it})\s_{it}(\bgam_{i1}-\bgam_{it})\nonumber\\
  &\leq \sum_{t\in \Nc_i} (1-\nu)\s_{it}(\bgam_{i1}-\bgam_{it})+\frac{2\Gamma A Te^{-\RR\nu}}{\bggm} \sum_{t\in \Tc_i} \s_{it}(\bgam_{i1}-\bgam_{it})\\
  &\leq \left(1-\nu+\frac{2\Gamma A Te^{-\RR\nu}}{\bggm}\right)\sum_{t\in \Tc_i}\s_{it}(\bgam_{i1}-\bgam_{it})\\
 &\leq \left(1+\frac{2\Gamma A Te^{-\RR\nu}}{\bggm}\right)\sum_{t\in \Tc_i}\s_{it}(\bgam_{i1}-\bgam_{it}).
\end{align*}
Hence, choosing 
\begin{align}
R\geq\frac{1}{\nu\Theta}\log\left(\frac{8\Gamma AT}{\bggm\pi}\right)\label{R bound pi 1}
\end{align}
results in that
\begin{equation}\label{eqn:grad:difff3}
    \begin{split}
     &\sum_{t\in \Tc_i} (\hb_{i1}-\hb_{it})\s_{it}(\bgam_{i1}-\bgam_{it})  - (1+\frac{\pi}{2}) \sum_{t\in \Tc_i}\s_{it}(\bgam_{i1}-\bgam_{it}) \\
   &\leq\left(\frac{2\Gamma A Te^{-\RR\nu}}{\bggm}-\frac{\pi}{2}\right)\sum_{t\in \Tc_i}\s_{it}(\bgam_{i1}-\bgam_{it})\\
   &\leq -\frac{\pi}{4}\sum_{t\in \Tc_i}\s_{it}(\bgam_{i1}-\bgam_{it})\\
   &\leq-\frac{\pi}{4T}\bggm  e^{-\bar{R} (1-2\nu)}.      
    \end{split}
\end{equation}
Here, the last inequality follows from the fact that $\sum_{t\in \Tc_i}\s_{it}\geq \max_{t\in\Tc_i}s_{it}\geq\frac{e^{-\bar{R}(1-2\nu)}}{\sum_{t=1}^Te^{-\bar{R}(\hb_{i1}-\hb_{it})}}\geq e^{-\bar{R}(1-2\nu)}/T$.

From Assumption~\ref{assum:loss:prope}, we have $c_{\min}\leq-\ell'\leq c_{\max}$ for some positive constants $c_{\min}$ and $c_{\max}$. It follows from  \eqref{eqn:grad:difff2} and \eqref{eqn:grad:difff3} that 
\begin{align*}
-\frac{1}{n}\sum_{i}^n \ell_i' \cdot&\left(
      \sum_{t\in \Tc_i} (\hb_{i1}-_{it})\s_{it}(\bgam_{i1}-\bgam_{it})- \sum_{t\in \Tc_i} (1+0.5\pi)\s_{it}(\bgam_{i1}-\bgam_{it})\right)\\
      & \leq    c_{\max}2A\Gamma  T \Gamma e^{-\RR(1 +\frac{\pi}{2})}-\frac{c_{\min}}{nT}\cdot\frac{\pi\bggm}{4}e^{-\bar{R} (1-2\nu)}\\
      & \leq 0.
\end{align*}
Combing with \eqref{R bound pi 1}, this is guaranteed by 
choosing 
\[
  R\geq \max\left\{\frac{1}{\nu\Theta}\log\left(\frac{8\Gamma AT}{\bggm\pi}\right),\frac{1}{(2\nu+\pi/2)\Theta}\log\left(\frac{8n\Gamma AT^2 c_{\max}}{c_{\min}\bggm\pi}\right)\right\},
\]
where $\nu=\nu(\frac{\pi}{4A\Theta})$ depends only on $\pi$ and global problem variables. 

Combining this with the prior $R$ choice \eqref{R boundC0} (by taking maximum), we conclude with the statement.

\end{proof}

\subsection{Proof of Theorem~\ref{thm:local:gd}}
The proof of this theorem follows the proof of \cite[Theorem 3]{tarzanagh2023margin}. Let us denote the initialization lower bound as $R^0_\mu:=R$, where $R$ is given in the Theorem~\ref{thm:local:gd}'s statement. Consider an arbitrary value of  $\epsilon \in (0, \mu/2)$ and let $1/(1+\pi)=1-\epsilon$. We additionally denote $R_\eps\gets R_\pi\vee 1/2$ where $R_\pi$ was defined in Lemma~\ref{lem:local:corr}.  At initialization $\W(0)$, we set $\eps=\mu/2$ to obtain $R^0_\mu= R_{\mu/2}$, and provide the proof in four steps:
\\
\textbf{Step~1: There are no stationary points within $\Cc_{\mu,R^0_\mu}(\Ws)$.} We begin by proving that there are no stationary points within  $\Cc_{\mu,R^0_\mu}(\Ws)$. Let $(\Tc_i)_{i=1}^n$ denote  the sets of \neis as defined in Definition \ref{def loc opt}. We define $\Tcb_i=[T]-\Tc_i-\{\alpha_i\}$ as the tokens that are non-\neis. Additionally, let $\mu$ be defined as in \eqref{mu choice}. Then, since $R^0_\mu\geq \RR_\mu$ per Lemma \ref{lem:local:corr}, we can apply Lemma~\ref{local cond} to find that: For all $\V,\W\in  \Sc_{\mu}(\Ws)$ with $\tf{\W} \neq 0$ and $\tf{\W} \geq R^0_\mu$,  we have that $- \li\V, \nabla \Lc(\W)\ri$ is strictly positive.
\\
\textbf{Step~2:}  It follows from Lemma~\ref{lem:local:corr} that, there exists $ R_\epsilon\geq \bar{R}_\mu\vee 1/2$ such that all  $ \W \in \Cc_{\mu,R_\epsilon}(\Wm)$ satisfy
\begin{align}\label{eqn:neg:corr:local}
\iprod{-\nabla\mc{L}(\W)}      {\frac{\Wm}{\tf{\Wm}}} \geq (1-\epsilon)    \iprod{-\nabla \mc{L}(\W)}{\frac{\W}{\tf{\W}}}.
\end{align}
The argument below applies to a general $\eps\in(0,\mu/2)$. However, at initialization $\W(0)$, we set $\eps=\mu/2$ and, recalling above, initialization lower bound was defined as $R^0_\mu:= R_{\mu/2}$. To proceed, for any $\epsilon \in (0, \mu/2)$, we will show that after gradient descent enters the conic set $\Cc_{\mu,R_\eps}(\Ws)$ for the first time, it will never leave the set. Let $t_\eps$ be the first time gradient descent enters $\Cc_{\mu,R_\eps}(\Ws)$. In \textbf{Step 4}, we will prove that such $t_\eps$ is guaranteed to exist. Additionally, for $\eps\gets\mu/2$, note that $t_\eps=0$ i.e.~the point of initialization.
\\
\textbf{Step~3: Updates remain inside the cone $\Cc_{\mu,R_\eps}(\Ws)$.} 
By leveraging the results from \textbf{Step 1} and \textbf{Step 2}, we demonstrate that the gradient iterates, with an appropriate constant step size, starting from $\W(k_\eps) \in \Cc_{\mu,R_\eps}(\Ws)$, remain within this cone. 

We proceed by induction. Suppose that the claim holds up to iteration $k \geq k_\eps$. This implies that $ \W(k) \in \Cc_{\mu,R_\eps}(\Ws)$. Hence, recalling cone definition, there exists scalar $\mu=\mu(\bal) \in (0,1)$  and $R $ such that  $\tf{\W(k)}\geq R$, and
\begin{equation*}
\begin{split}
\left\langle \frac{\W(k)}{\tf{\W(k)}},\frac{\Wm}{\tf{\Wm}} \right\rangle  \geq 1-\mu. 
\end{split}
\end{equation*}
For all $k\geq 1$, let
\begin{align}\label{eqn:rho:def}
\rho (k) := - \frac{1}{1-\epsilon} \iprod{\nabla\mc{L}(\W(k))}
     {\frac{\Wm}{\tf{\Wm}}}.
\end{align}
Note that $\rho (k) >0$ due to \textbf{Step 1}. This together with the gradient descent update rule gives 
\begin{subequations}
\begin{equation}\label{eqn:localgd:1}
\begin{split}
     \left\langle \frac{\W(k+1)}{\tf{\W(k)}},\frac{\Wm}{\tf{\Wm}} \right\rangle  &=    \left\langle \frac{\W(k)}{\tf{\W(k)}} -\frac{\eta}{\tf{\W(k)}}\nabla \mc{L}(\W(k)), \frac{\Wm}{\tf{\Wm}} \right\rangle\\
      &\ge 1-\mu- \frac{\eta}{\tf{\W(k)}}\iprod{\nabla \mc{L}(\W(k))} {\frac{\Wm}{\tf{\Wm}}} \\
      & \geq 1-\mu +\frac{\eta \rho (k)  (1-\epsilon)}{\tf{\W(k)}}.
\end{split}
\end{equation}
Note that from Lemma~\ref{local cond}, we have $\left\langle \nabla \Lc(\W(k)),\W(k)\right\rangle<0$~ which implies that $\tf{\W(k+1)} \geq \tf{\W(k)}$.  This together with  $R_\eps$ definition and $\tf{\W(k)}\geq 1/2$ implies that
\begin{align*}
\tf{\W(k+1)}&\leq\frac{1}{{2\tf{\W(k)}}} \left(\tf{\W(k+1)}^2+\tf{\W(k)}^2\right)\\
& = \frac{1}{2\tf{\W(k)}} \left(2\tf{\W(k)}^2-2\eta\left\langle \nabla \Lc(\W(k)),\W(k)\right\rangle+\eta^2\tf{\nabla \Lc(\W(k))}^2\right)\\
       &\leq  \tf{\W(k)}- \frac{\eta}{\tf{\W(k)}}\left\langle \nabla \Lc(\W(k)),\W(k)\right\rangle + \eta^2 \tf{\nabla \Lc(\W(k))}^2, 
\end{align*}
which gives
\begin{equation}\label{eqn:localgd:2}
\begin{split}
  \frac{\tf{\W(k+1)}}{\tf{\W(k)}}& \leq  1- \frac{\eta}{\tf{\W(k)}}
       \left\langle \nabla \Lc(\W(k)),\frac{\W(k)}{\tf{\W(k)}} \right\rangle + \eta^2 \frac{\|\nabla \mc{L}(\W(k))\|^2}{\tf{\W(k)}}\\
& \leq 1- \frac{\eta}{(1-\epsilon)\tf{\W(k)}}  \iprod{\nabla\mc{L}(\W(k))}
     {\frac{\Wm}{\tf{\Wm}}}+ \eta^2 \frac{\|\nabla \mc{L}(\W(k))\|^2}{\tf{\W(k)}}\\
      & \leq  1 + \frac{\eta \rho (k) }{\tf{\W(k)}} + \frac{\eta^2\|\nabla \mc{L}(\W(k))\|^2}{\tf{\W(k)}}=:C_1(\rho (k) ,\eta).
\end{split}
\end{equation}
\end{subequations}
Here, the second inequality follows from \eqref{eqn:neg:corr:local} and \eqref{eqn:rho:def}.

Now, it follows from \eqref{eqn:localgd:1} and \eqref{eqn:localgd:2} that   
\begin{equation}\label{eqn:wt+1:cone}
\begin{split}
\left\langle \frac{\W(k+1)}{\|\W(k+1)\|},\frac{\Ws}{\|\Ws\|} \right\rangle   &\geq \frac{1}{C_1(\rho(k),\eta)} \left(1-\mu +\frac{\eta \rho(k)(1-\epsilon)}{\tf{\W(k)}}\right)\\
& = 1-\mu+ \frac{1}{C_1(\rho(k),\eta)} \left((1-\mu)(1-C_1(\rho(k),\eta)) +\frac{\eta \rho(k)(1-\epsilon)}{\tf{\W(k)}}\right)\\
& = 1-\mu+ \frac{\eta}{C_1(\rho(k),\eta)} \left((\mu-1)(\frac{ \rho(k)}{\tf{\W(k)}} + \frac{\eta\|\nabla \mc{L}(\W(k))\|^2}{\tf{\W(k)}}) +\frac{ \rho(k)(1-\epsilon)}{\tf{\W(k)}}\right)\\
& = 1-\mu+\frac{\eta}{C_1(\rho(k),\eta)} \left(\frac{\rho(k)(\mu -\epsilon)}{\tf{\W(k)}}
-   \eta (1-\mu) \frac{\|\nabla \mc{L}(\W(k))\|^2 }{\tf{\W(k)}}\right)\\
& \geq 1-\mu, 
\end{split}
\end{equation}
where the last inequality uses our choice of stepsize $\eta\leq 1/L_W$ in Theorem~\ref{thm:local:gd}'s statement. Specifically, we need $\eta$ to be small to ensure the last inequality. We will guarantee this by choosing a proper $R_\eps$ in Lemma \ref{lem:local:corr}. Specifically, Lemma \ref{lem:local:corr} leaves the choice of $C_0$ in $R_\eps$ lower bound of \eqref{R boundC0} open (it can always be chosen larger). Here, by choosing $C_0\gtrsim 1/L_{\W}$ will ensure $\eta\leq 1/L_W$ works well.
\begin{align}\label{eqn:zeta:mu}
\nonumber
\eta &\leq \frac{\mu}{2(1-\mu)(1-\frac{\mu}{2})}
 \frac{c}{C}  \frac{\Theta}{\bar{A}}   \frac{1}{\bar{A}C T}    e^{R_\mu^0\Theta/2} \\
& \leq \frac{\mu-\epsilon}{1-\mu} \cdot  \frac{1}{1-\epsilon} \cdot \frac{c}{C} \cdot \frac{\Theta}{\bar{A}} \cdot  \frac{1}{\bar{A}C T}  e^{R_\mu^0\Theta/2} \leq \frac{(\mu-\epsilon)}{1-\mu}  \frac{\rho(k) } { \|\nabla \mc{L}(\W(k))\|^2_F}.
\end{align}
Here, the first inequality uses our choice of $\epsilon \in (0, \mu/2)$ (see \textbf{Step 2}), and the last inequality is obtained from Lemma~\ref{local cond} since
\begin{align*}
    \frac{\rho(k) } { \tf{\nabla \mc{L}(\W(k))}} &= - \frac{1}{1-\epsilon} \iprod{ \frac{\nabla\mc{L}(\W(k))}{\tf{\nabla \mc{L}(\W(k))}}}
     {\frac{\Wm}{\tf{\Wm}}}  \geq \frac{1}{1-\epsilon} \cdot \frac{c}{C} \cdot \frac{\Theta}{\bar{A}},\\
         \frac{1} { \tf{\nabla \mc{L}(\W(k))}} &\geq \frac{1}{\bar{A}C \cdot \frac{1}{n} \sum_{i=1}^n  \left(1-\s_{i\alpha_i}\right)}  \geq    \frac{1}{ \bar{A} C T e^{-R_\mu^{0}\Theta/2}} 
\end{align*}
for some data dependent constrants $c$ and $C$, $\bar{A}=\max_{i\in[n],t,\tau\in[T]}\tn{(\x_{it}- \x_{i\tau})}~\tn{\z_i}$, and $\Theta=1/\tf{\Ws}$.

Next, we will demonstrate that the choice of $\eta$ in \eqref{eqn:zeta:mu} does indeed meet our step size condition as stated in the theorem, i.e., $\eta \leq 1/L_{\W}$. Recall that $1/(1+\pi)=1-\epsilon$, which implies that $\pi =\epsilon/(1-\epsilon)$. Combining this with \eqref{R boundC0}, we obtain:
\begin{align}\label{eqn:pitoC0}
R_\pi &\geq\frac{\max(2,\delta^{-1})}{\Theta}\log\left(\frac{C_0 T\Gamma A}{\pi\bggm}\right), \quad \textnormal{where} \quad C_0 \geq 64 \pi.\\
& \Rightarrow R_\epsilon \geq\frac{\max(2,\delta^{-1})}{\Theta}\log\left(\frac{ (1-\epsilon)C_0 T\Gamma A}{\epsilon\bggm}\right),\quad  \textnormal{where}   \quad C_0 \geq 64  \frac{\epsilon}{1-\epsilon}.
\end{align}
On the other hand, at the initialization, we have  $\epsilon=\mu/2$ which implies that 
\begin{align}\label{eqn:rmu:c0}
  R_{\mu}^0 \geq \frac{\max(2,\delta^{-1})}{\Theta}\log\left(\frac{ (2-\mu)C_0 T\Gamma A}{\mu\bggm}\right),  \quad  \textnormal{where} \quad  C_0 \geq 64  \frac{\mu}{2(1-\frac{\mu}{2})}.
\end{align}
In the following, we will determine  a lower bound on  $C_0$ such that our step size condition in Theorem~\ref{thm:local:gd}'s statement, i.e., $\eta \leq 1/L_{\W}$, is satisfied. Note that for the choice of $\eta$ in \eqref{eqn:zeta:mu} to meet the condition $\eta \leq 1/L_{\W}$, the following condition must hold:
\begin{equation}
 \frac{1}{L_{\W}}\leq 
\frac{\mu}{ (2-\mu)} \frac{1}{C_2T} e^{R_\mu^0\Theta/2}  \Rightarrow R_\mu^0 \geq  \frac{2}{\Theta} \log  \left(\frac{1}{L_{\W}}   \frac{2-\mu}{\mu} C_2 T\right).
\end{equation}
where
$C_2 = (1-\mu)  \frac{  \bar{A}^2 C^2 }{ \Theta c}$.

This together with \eqref{eqn:rmu:c0} implies that 
\begin{align}\label{eqn:pitoC02}
\frac{C_0 \Gamma A}{\bggm}  & \geq (1-\mu) \frac{C_2}{L_{\W}}   \Rightarrow  C_0 \geq  \max \left( \frac{(1-\mu)C_2}{L_{\W}}    \frac{\bggm }{\Gamma A},  \frac{64\mu}{2-\mu} \right). 
\end{align}
Therefore, with this lower bound on $C_0$, the step size bound in \eqref{eqn:zeta:mu} is sufficiently large to ensure that $\eta \leq 1/L_{\W}$ guarantees \eqref{eqn:wt+1:cone}. 

Hence,  it follows from \eqref{eqn:wt+1:cone} that $\W(k+1) \in \Cc_{\mu,R_\eps}(\Ws)$.
\\
\textbf{Step 4: The correlation of $\W(k)$ and $\Wm$ increases over $k$.} 
From Step 3, we have that all iterates remain within the initial conic set i.e.~$\W(k)\in\Cc_{\mu,R^0_\mu}(\Ws)$ for all $k\geq 0$. Note that it follows from Lemma~\ref{local cond} that  $\li\nabla\Lc(\W), \Ws/\tf{\Ws}\ri<0$, for any finite $\W \in \Cc_{\mu,R^0_\mu}(\Ws)$. Hence, there are no finite critical points $\W \in \Cc_{\mu,R^0_\mu}(\Ws)$, for which $\nabla \mc{L} (\W)=0$. Now, based on Lemma~\ref{lem:grad:descent}, which guarantees that $\nabla\Lc(\W(k))\rightarrow 0$, this
implies that $\left\Vert \W\left(t\right)\right\Vert_F\rightarrow\infty$. Consequently, for any choice of $\eps\in (0,\mu/2)$ there is an iteration $k_\eps$ such that, for all $k\geq k_\eps$, $\W(k)\in\Cc_{\mu,R_\eps}(\Ws)$. Once within $\Cc_{\mu,R_\eps}(\Ws)$, multiplying both sides \eqref{eqn:neg:corr:local} by the stepsize $\eta$ and using the gradient descent update, we get
\begin{equation*}
\begin{split}
     \left\langle \W(k+1)-\W(k),\frac{ \Wm}{\tf{\Wm}} \right\rangle &\geq  (1-\epsilon) \left\langle \W(k+1)-\W(k), \frac{\W(k)}{\tf{\W(k)}}\right\rangle\\
     &= \frac{(1-\epsilon)}{2\tf{\W(k)}}\left(\tf{\W(k+1)}^2- \tf{\W(k)}^2-\tf{\W(k+1)-\W(k)}^2\right) \\
     & \geq (1-\epsilon)\left( \frac{1}{2\tf{\W(k)}} \left(\tf{\W(k+1)}^2- \tf{\W(k)}^2\right)-\tf{\W(k+1)-\W(k)}^2\right) \\
     & \geq (1-\epsilon)\left(\tf{\W(k+1)}- \tf{\W(k)}-\tf{\W(k+1)-\W(k)}^2\right) \\
          & \geq (1-\epsilon)\Big(\tf{\W(k+1)}- \tf{\W(k)}- 2\eta  \left(\mc{L}(\W(k))-\mc{L}(\W(k+1))\right) \Big).
\end{split}
\end{equation*}
Here, the second inequality is obtained from  $\tf{\W(k)}\geq 1/2$; the third inequality follows since  for any $a, b >0$, we have $  (a^2-b^2)/(2b) -  (a-b) \geq 0$; and the last inequality  uses Lemma~\ref{lem:grad:descent}.


Summing the above inequality over $k\geq k_\eps$ gives 
\begin{align*}
      \left\langle\frac{\W(k)}{\tf{\W(k)}}, \frac{\Wm}{\tf{\Wm}} \right\rangle \ge1-\epsilon+ \frac{C(\epsilon,\eta)}{\tf{\W(k)}},    \qquad \W(k)\in\Cc_{\mu,R_\eps}(\Ws),   
\end{align*}
where $\mathcal{L}_{\star}\leq\mathcal{L}\left(\W\left(k\right)\right)$ for all $k\geq 0$, and 
\begin{equation*}
C(\epsilon,\eta)= \left\langle \W(k_\eps), \frac{ \Wm}{\tf{\Wm}}\right\rangle-(1-\epsilon)\tf{\W(k_\eps)} -2\eta (1-\epsilon) (\mc{L}(\W(k_\eps))-\mathcal{L}_{\star}).
\end{equation*}
Consequently, as $k\rightarrow\infty$
    \begin{align*}
      \liminf_{k\to\infty}\iprod{\frac{\W(k)}{\tf{\W(k)}}}{\frac{\Wm}{\tf{\Wm}}}\ge1-\epsilon, \qquad \W(k)\in\Cc_{\mu,R_\eps}(\Ws).
    \end{align*}
Since  $\epsilon \in (0, \mu/2)$  is arbitrary, we get $\W(k)/\tf{\W(k)}\to  \Wm/\tf{\Wm}$.
 $\qed$

\section{Convergence of Regularization Path for Sequence-to-Sequence Setting}\label{sec multioutput}



In this section, we provide proofs for the regularization path analysis. We first provide a more general formulation of the optimization problem that allows for regressing multiple token outputs. To distinguish from \eqref{eqn:erm:w}\&\eqref{eqn:erm:kq}, let us call this more general version Sequence Empirical Risk Minimization (SERM).


\noindent\textbf{Problem definition:} 
Rather than a single input sequence $\X$, let us allow for two input sequences $\X\in\R^{T\times d}$ and $\Z\in\R^{K\times d}$ with $(\x_t)_{t=1}^T$ and $(\z_k)_{k=1}^K$. The cross-attention admits $\X,\Z$ and outputs $K$ tokens. We will also allow for $K$ separate prediction heads $(h_k)_{k=1}^K$ for individual cross attention outputs which strictly generalizes the setting where we used single prediction head $h(\cdot)$. Denote the training labels associated to each token as $\Y=(Y_\ik)_{i=1}^K$. Given $n$ samples $(\Y_i,\X_i,\Z_i)_{i=1}^n$, for a decreasing loss function $\ell(\cdot)$, minimize the empirical risk by the prediction of first attention output either univariate ($\W\in\R^{d\times d}$) or bivariate ($\Kb,\Qb\in\R^{d\times m}$) fashion:
\begin{align}
\Lc(\W)&=\frac{1}{n}\sum_{i=1}^n\sum_{k=1}^K \ell(Y_\ik\cdot h_k( \X_i^\top \sft{\X_i\W\z_\ik})),\label{serm-w}\tag{SERM-W}\\
\Lc(\Kb,\Qb)&=\frac{1}{n}\sum_{i=1}^n \ell(Y_\ik\cdot h_k( \X_i^\top \sft{\X_i\Kb\Qb^\top\z_\ik})).\label{serm-kq}\tag{SERM-KQ}
\end{align}
In order to recover the single-output self-attention model, we can simply set $K=1$ and $\z_{i1}\gets\x_{i1}$ and $h_k\gets h$. To proceed, we introduce the more general version of the \eqref{eqn:sattnsvm} problem, which we refer to as \emph{Sequential Cross-Attention SVM} to preserve consistent phrasing. Suppose $\Kb,\Qb\in\R^{d\times m}$ with $m\leq d$ and let $\Rcm$ denote the set of rank-$m$ matrices in $\R^{d\times d}$. Given indices $\bal=(\aik)_{\ik=(1,1)}^{(n,K)}$, consider the SVM with $\dm$-norm constraint
\begin{equation}\tag{S\name}
 \Wm_{\dm,\bal}\in\arg\min_{\W\in\Rcm}\|\W\|_{\diamond}
 \quad \text{s.t.}   \quad \min_{t\neq \aik}(\x_{i\aik}-\x_\itt)^\top\W\z_\ik\geq 1\quad\forall\quad i\in[n],k\in[K]\label{seqattnsvm}.
\end{equation}
When solution is non-unique, we denote the solution set by $\Wcs(\bal)$. In what follows, we denote $\F_\ikt:=\x_\itt\z_{\ik}^\top$ and given $\bal$, we denote $\xa_\ik=\x_{i\alpha_\ik}$ and $\Fa_\ik:=\x_{i\alpha_\ik}\z_{\ik}^\top$. With this notation, we can equivalently write
\begin{equation}\tag{S\name'}
 \Wm_{\dm,\bal}\in\arg\min_{\W\in\Rcm}\|\W\|_{\diamond}
 \quad \text{s.t.} \quad \min_{t\neq \aik}\inn{\Fa_\ik-\F_\ikt,\W}\geq 1\quad\forall~ i\in[n],k\in[K]\label{seqattnsvm2}.
\end{equation}

\begin{definition}[\Neis and Locally-Optimal Indices]\label{seq loc opt} Fix token indices $\bal=(\alpha_\ik)\ikix$ for which \eqref{seqattnsvm} is feasible to obtain $\Wma:= \Wm_{\dm,\bal}$. Define token scores as
\[
\bgam_\ikt=Y_\ik\cdot h_k(\x_\itt),\quad \bga_\ik:=\bgam_{ik\alpha_\ik}=Y_\ik\cdot h_k(\xa_\ik).
\]
Consider tokens $\Tc_\ik\subset[T]$ such that $\inn{\Fa_\ik-\F_\ikt,\Wma}=1$ for all $t\in\Tc_\ik$. $\Tc_\ik$ is allowed to be an empty set. We refer to $\Tc_\ik$ as \neis of $\Fa_\ik=\xa_\ik\z_\ik^\top$ and define its complement $\Tcb_\ik=[T]-\Tc_\ik-\{\al_\ik\}$.  Additionally, token indices $\bal=(\alpha_\ik)\ikix$ are called locally-optimal if for all $i\in[n],k\in[K]$ and  $t\in\Tc_\ik$, token scores obey $\bga_\ik>\bgam_\ikt$. Associated $\Wma$ is called a locally-optimal direction. Finally, let $\op_\ik\in\arg\max_{t\in[T]}\bgam_\ikt$ be the optimal indices and define the associated $\Ws(\op)$ to be a globally-optimal direction.
\end{definition}

\begin{lemma}[Mapping regularization path of $(\Kb,\Qb)$ to $\W$]\label{kqw mapping} Let $\Kb,\Qb\in\R^{d\times m}$ and consider regularization path solutions of \eqref{serm-w} and \eqref{serm-kq}
\begin{align}
&\Wb_R\in\underset{\W\in\Rcm:\tnuc{\W}\leq R}{\arg\min}\Lc(\W)\label{Wpath}\\
&\Kbb_R,\Qbb_R\in\underset{\tf{\Kb}^2+\tf{\Qb}^2\leq 2R}{\arg\min}\Lc(\Kb,\Qb).\label{KQpath}
\end{align}
For all $R\geq 0$, there is a one-to-one map between the set of solutions $\Wb_R$ of \eqref{Wpath} and $\Kbb_R\Qbb_R^\top$ of \eqref{KQpath}.
\end{lemma}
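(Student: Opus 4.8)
The whole statement is essentially a change-of-variables bookkeeping: the loss in \eqref{serm-kq} depends on $(\Kb,\Qb)$ only through the product $\W:=\Kb\Qb^\top$, i.e.\ $\Lc(\Kb,\Qb)=\Lc(\Kb\Qb^\top)$ with $\Lc(\cdot)$ as in \eqref{serm-w}. So \eqref{KQpath} is exactly the problem of minimizing $\Lc(\W)$ over the image of the ball $\{\tf{\Kb}^2+\tf{\Qb}^2\le 2R,\ \Kb,\Qb\in\R^{d\times m}\}$ under the map $(\Kb,\Qb)\mapsto\Kb\Qb^\top$. The plan is: (i) identify this image exactly with the feasible set of \eqref{Wpath}, and then (ii) transfer optimality back and forth.

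For step (i) I would invoke the standard max-margin matrix factorization identity: for any $\W\in\R^{d\times d}$,
\[
\min_{\Kb,\Qb\in\R^{d\times m},\ \Kb\Qb^\top=\W}\ \tfrac12\big(\tf{\Kb}^2+\tf{\Qb}^2\big)=
\begin{cases}\tnuc{\W}, & \rank{\W}\le m,\\[2pt] +\infty, & \text{otherwise.}\end{cases}
\]
The lower bound is $\tf{\Kb}^2+\tf{\Qb}^2\ge 2\tf{\Kb}\tf{\Qb}\ge 2\tnuc{\Kb\Qb^\top}$, where the last step uses the classical bound $\tnuc{\mtx A\mtx B}\le\tf{\mtx A}\tf{\mtx B}$ (seen via $\tnuc{\mtx M}=\sup_{\|\mtx Z\|\le 1}\langle \mtx M,\mtx Z\rangle$). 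Attainment: take a compact SVD $\W=\Ub_r\bSi_r\Vb_r^\top$ with $r=\rank{\W}\le m$ and set $\Kb=[\,\Ub_r\bSi_r^{1/2}\mid \mtx 0\,]$, $\Qb=[\,\Vb_r\bSi_r^{1/2}\mid \mtx 0\,]\in\R^{d\times m}$; then $\Kb\Qb^\top=\W$ and $\tf{\Kb}^2=\tf{\Qb}^2=\tr{\bSi_r}=\tnuc{\W}$. When $\rank{\W}>m$ there is no inner-dimension-$m$ factorization, hence the value $+\infty$. Consequently a matrix $\W$ lies in the image of the \eqref{KQpath} feasible set iff $\rank{\W}\le m$ and $\tnuc{\W}\le R$, i.e.\ iff $\W\in\Rcm$ with $\tnuc{\W}\le R$ — precisely the feasible set of \eqref{Wpath}.

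For step (ii): since $\Lc(\Kb,\Qb)=\Lc(\Kb\Qb^\top)$ and the two feasible sets correspond as above, the optimal values of \eqref{KQpath} and \eqref{Wpath} coincide. If $(\Kbb_R,\Qbb_R)$ solves \eqref{KQpath}, then $\Kbb_R\Qbb_R^\top$ is feasible for \eqref{Wpath} and attains its optimal value, hence solves \eqref{Wpath}. Conversely, given any solution $\Wb_R$ of \eqref{Wpath}, the attainment construction of step (i) furnishes a preimage $(\Kb,\Qb)$ feasible for \eqref{KQpath} with $\Kb\Qb^\top=\Wb_R$ and $\Lc(\Kb,\Qb)=\Lc(\Wb_R)=$ (optimal value), so $(\Kb,\Qb)$ solves \eqref{KQpath}. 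Therefore $(\Kb,\Qb)\mapsto\Kb\Qb^\top$ carries the solution set of \eqref{KQpath} \emph{onto} the solution set of \eqref{Wpath}, and since both are thereby identified with the same collection of matrices $\{\Wb_R\}=\{\Kbb_R\Qbb_R^\top\}$, the identity realizes the asserted one-to-one map (the case $R=0$ being the trivial $\W=0$).

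The only point requiring care — and the ``main obstacle'' in spirit — is keeping the rank constraint straight: one must phrase the correspondence against the \emph{rank-$\le m$–restricted} nuclear-norm ball $\{\W\in\Rcm:\tnuc{\W}\le R\}$ rather than the full nuclear-norm ball, which is exactly what the $+\infty$ branch of the variational identity enforces; and one needs the equality case of $\tnuc{\mtx A\mtx B}\le\tf{\mtx A}\tf{\mtx B}$, which is the classical fact underlying max-margin matrix factorization. Everything else is routine.
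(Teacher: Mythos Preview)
Your proposal is correct and follows essentially the same approach as the paper's proof: both directions rest on the SVD-based balanced factorization $\Kb=\Ub\bSi^{1/2}$, $\Qb=\Vb\bSi^{1/2}$ to go from $\W$ to $(\Kb,\Qb)$, and on the nuclear-norm inequality $\tnuc{\Kb\Qb^\top}\le\tfrac12(\tf{\Kb}^2+\tf{\Qb}^2)$ to go back. Your write-up is slightly more explicit in packaging these two ingredients as the variational identity $\min_{\Kb\Qb^\top=\W}\tfrac12(\tf{\Kb}^2+\tf{\Qb}^2)=\tnuc{\W}$ over $\Rcm$ and in highlighting the rank-$\le m$ constraint, but the underlying argument is the same.
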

\begin{proof} To prove the mapping, first fix a $\Wb_R$ solution with rank $m$, set $\Lc_F=\Lc(\Wb_R)$ and show the existence of $\Kb,\Qb$ with $\Kb\Qb^\top=\Wb_R$ feasible for \eqref{KQpath} and $\Lc(\Kb,\Qb)\leq \Lc_F$. Use the singular value decomposition $\Wb_R=\Ub\bSi\Vb^\top$ with $\bSi\in\R^{m\times m}$ being diagonal matrix of singular values. Set $\Kb=\Ub\sqrt{\bSi}$ and $\Qb=\Vb\sqrt{\bSi}$. Observe that $\Kb\Qb^\top=\W$ and
\[
\tf{\Kb}^2=\tf{\Qb}^2=\sum_{i=1}^m\sqrt{\bSi_{ii}}^2=\tnuc{\Wb_R}\leq R.
\]
Thus, $\Kb,\Qb$ achieves $\Lc(\Kb,\Qb)=\Lc_F$. Conversely, given $\Kbb_R,\Qbb_R$ with $\Lc_\st=\Lc(\Kbb_R,\Qbb_R)$, $\W=\Kbb_R\Qbb_R^\top$ obeys $\Lc(\W)=\Lc_\st$ and, using the standard nuclear norm inequality, we have
\[
\tnuc{\W}=\tnuc{\Kbb_R\Qbb_R^\top}\leq \frac{1}{2}(\tf{\Kbb_R}^2+\tf{\Qbb_R}^2)=R.
\]
This shows $\W$ is feasible for \eqref{Wpath}. Combining the two findings above, we find that optimal costs are equal ($\Lc_\st=\Lc_F$) and for any $(\Kbb_R,\Qbb_R)$ solution there exists a $\Wb_R$ solution and vice versa.
\end{proof}


To proceed with our analysis, let us define the set of optimal solutions $\Wm_{\bal}:=\Ws_{\dm,\bal}$ to \eqref{seqattnsvm}. Let us denote this set by $\Wcs_{\bal}:=\Wcs_\dm(\bal)$. Note that, if the $\dm$-norm is not strongly-convex, $\Wcs_{\bal}$ may not be singleton. 


\subsection{Local regularization path and proof of Theorem \ref{local RP thm1}}\label{app:local:RP:thm1}

We first recall \emph{local regularization path} which solves the $\dm$-norm-constrained problem over a conic slice $\Cc$, namely $\Wb(R)=\min_{\td{\W}\leq R,\W\in\Cc}\Lc(\W)$. We will show that proper local RP directionally converge to locally-optimal directions. Setting the cone to be the rank-$m$ manifold $\Rcm$ (and more specifically the set of all matrices $\R^{d\times d}$), this will also establish the convergence of global RP to the globally-optimal direction.

Our cone definition $\con{\bal}$ is induced by a token selection $\bal=(\al_\ik)\ikix$ and has a simple interpretation: It prioritizes tokens with lower score than $\bal$ over tokens with high-score than $\bal$. This way lower score tokens create a barrier for $\bal$ and prevents optimization to move towards higher score tokens.
\begin{definition}[Low\&High Score Tokens and Separating Cone]\label{HL cone def} Given $\al\in[T]$, input sequence $\X$ with label $Y$, $h(\cdot):\R^d\rightarrow\R$, and score $\bgam_t=Y\cdot h(\x_t)$ for all $t\in[T]$, define the low and high score tokens as
\[
\low=\left\{t\in[T]\bgl \bgam_t<\bgam_\al\},\quad \high=\{t\in[T]-\{\alpha\}\bgl \bgam_t\geq \bgam_\al \right\}.
\]
For input $\X_\ik$ and index $\alpha_\ik$, we use the shorthand notations $\lowi,\higi$. Finally define $\con{\bal}$ as
\begin{align}
\con{\bal}=\left\{\W\in\Rcm\bgl \min_{i\in[n]}\max_{t\in\lowi}\min_{\tau\in\higi} \inn{\F_\ikt-\F_\iktt,\W}\geq \eps\tf{\W} \right\}.\label{cone alpha eq}
\end{align}
\end{definition}

\begin{lemma}\label{lemma cone} Consider the cone definition of \eqref{cone alpha eq} and suppose an SVM solution $\Wma$ exists. If indices $\bal$ are locally-optimal, $\Wma\in \con{\bal}$ for all sufficiently small $\eps>0$. Otherwise, $\Wma\not\in \con{\bal}$ for all $\eps>0$. Additionally, suppose optimal indices $\op_\ik\in\arg\max_{t\in[T]}\bgam_\ikt$ are unique. Then, $\con{\opt}=\Rcm$.
\end{lemma}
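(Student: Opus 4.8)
\textbf{Proof plan for Lemma \ref{lemma cone}.}

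The plan is to unpack the three claims of the lemma directly from the SVM feasibility and the definitions of \neis and locally-optimal directions. First I would set up notation: for a locally-optimal choice $\bal$, the SVM solution $\Wma$ satisfies $\inn{\Fa_\ik-\F_\ikt,\Wma}\geq 1$ for all $t\neq\al_\ik$, with equality exactly on the \neis $\Tc_\ik$, and strict inequality on $\Tcb_\ik=[T]-\Tc_\ik-\{\al_\ik\}$. Also recall that local optimality means $\bga_\ik>\bgam_\ikt$ for every $t\in\Tc_\ik$; equivalently, the \neis all lie in $\lowi$, and $\higi\subseteq\Tcb_\ik$. The quantity controlling membership in $\con{\bal}$ is $\inn{\F_\ikt-\F_\iktt,\Wma}=\inn{(\Fa_\ik-\F_\iktt)-(\Fa_\ik-\F_\ikt),\Wma}$ for $t\in\lowi$, $\tau\in\higi$.

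For the first claim (locally-optimal $\Rightarrow \Wma\in\con{\bal}$ for small $\eps$): I would pick, for each $(i,k)$, the specific witness $t=\al_\ik$ in the inner $\max_{t\in\lowi}$ — note $\al_\ik$ itself has the highest score among $\{\al_\ik\}\cup\Tc_\ik$, and since local optimality forces $\higi\subseteq\Tcb_\ik$, we have $\al_\ik\in\lowi$ is not quite right; rather $\al_\ik\notin\lowi$ by definition of $\low$ being strictly-below-$\bga$. So instead I would argue: take any $t\in\Tc_\ik$ (if $\Tc_\ik=\emptyset$ the min over $\lowi$ still has other candidates, or the constraint is vacuous in a limiting sense — I'd handle the empty case separately by using $\eps$ small enough that some low-score token works). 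For $t\in\Tc_\ik\subseteq\lowi$ and $\tau\in\higi\subseteq\Tcb_\ik$, compute $\inn{\F_\ikt-\F_\iktt,\Wma}=\inn{\Fa_\ik-\F_\iktt,\Wma}-\inn{\Fa_\ik-\F_\ikt,\Wma}\geq (1+\delta_\ik)-1=\delta_\ik>0$, where $\delta_\ik>0$ is the slack on $\Tcb_\ik$. Taking $\eps\leq \min_{i,k}\delta_\ik/\tf{\Wma}$ (or $\td{\Wma}$ for general norm) puts $\Wma$ in $\con{\bal}$. The empty-$\lowi$ and empty-$\higi$ edge cases need a sentence each: if $\higi=\emptyset$ the inner $\min$ is vacuous (constraint trivially satisfied); if $\lowi=\emptyset$ then local optimality fails unless $\Tc_\ik=\emptyset$, in which case... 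I need to be careful here, so this is one place to be precise.

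For the second claim (not locally-optimal $\Rightarrow \Wma\notin\con{\bal}$ for any $\eps>0$): if $\bal$ is not locally-optimal, there is some $(i,k)$ and some $t_0\in\Tc_\ik$ with $\bgam_\ikt[t_0]\geq\bga_\ik$, i.e.\ $t_0\in\higi$. Then for every $t\in\lowi$, choosing $\tau=t_0$ in the inner $\min$ gives $\inn{\F_\ikt-\F_{ikt_0},\Wma}=\inn{\Fa_\ik-\F_{ikt_0},\Wma}-\inn{\Fa_\ik-\F_\ikt,\Wma}\leq 1 - 1 = 0$ (using $\inn{\Fa_\ik-\F_{ikt_0},\Wma}=1$ since $t_0\in\Tc_\ik$, and $\inn{\Fa_\ik-\F_\ikt,\Wma}\geq1$). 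Hence $\min_\tau\leq 0$ for each $t$, so $\max_{t\in\lowi}\min_\tau\leq 0<\eps\tf{\Wma}$, violating membership. The main subtlety I anticipate is the degenerate case $\lowi=\emptyset$ for that bad $(i,k)$ — then the $\max$ over an empty set is $-\infty$ by convention, which still violates the constraint, so I would state that convention explicitly.

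For the third claim ($\bal\gets\op$ unique $\Rightarrow\con{\op}=\Rcm$): when $\op_\ik$ is the unique argmax of the score, every other token has strictly smaller score, so $\higi=\emptyset$ for all $(i,k)$. Then the inner $\min_{\tau\in\higi}$ is over the empty set, conventionally $+\infty$, so the defining inequality $\min_i\max_{t\in\lowi}\min_\tau(\cdots)\geq\eps\tf{\W}$ holds for every $\W\in\Rcm$ (and for every $\eps$). Thus $\con{\op}=\Rcm$. I expect the only real obstacle is being careful and consistent about the conventions for max/min over empty index sets — I would fix these conventions at the start of the proof (empty $\max=-\infty$, empty $\min=+\infty$) and then the three claims follow by the short computations above.
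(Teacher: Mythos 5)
Your proposal is correct and follows essentially the same route as the paper's proof: for the first claim you use a support index $t\in\Tc_\ik\subseteq\lowi$ as the witness for the outer max together with the strict slack on $\Tcb_\ik\supseteq\higi$ to get a positive gap, for the converse you use a support index lying in $\higi$ to force the max--min to be nonpositive for every $\eps>0$, and for the last claim you observe that $\higi=\emptyset$ renders the cone constraint vacuous. The empty-index-set edge cases you flag are likewise left implicit in the paper's own argument, so they do not constitute a gap relative to it.
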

\begin{proof} Suppose $\bal$ is locally optimal. Observe that, thanks to local optimality, $\Wma$ obeys
\[
\min_{t\in\Tc_\ik}\inn{\F_\ikt,\Wma}>\max_{\tau\not\in\Tc_\ik\cup\{\al_\ik\}}\inn{\F_\iktt,\Wma},
\]
for all $i\in[n]$. Next, observe that $\Tc_\ik\subseteq \lowi$ and $\higi\subseteq\Tcb_\ik=[T]-\Tc_\ik-\{\al_\ik\}$. Thus, the inequality \eqref{cone alpha eq} holds for small enough $\eps>0$.

Conversely, suppose $\bal$ is not locally-optimal. Fix \nei $t\in\Tc_\ik$ with $t\in \higi$. Since $t\in\Tc_\ik$, observe that
\[
\inn{\F_\ikt,\Wma}\geq \max_{\tau\neq \al_\ik}\inn{\F_\iktt,\Wma}.
\]
In other words, for this $i\in[n]$, we found
\[
\max_{\tau\in\lowi} \inn{\F_\iktt-\F_\ikt,\Wma}\leq 0,
\]
violating \eqref{cone alpha eq} definition for any $\eps>0$. To show the final claim, observe that, setting $\bal:=\op$, we have that $\higi=\emptyset$ for all $i\in[n]$ as $\op_\ik$ are unique optimal indices. Thus, there is no constraint enforced on the cone definition in \eqref{cone alpha eq} making it equal to the rank-$m$ manifold $\Rcm$.
\end{proof}

Our main assumption regarding prediction head is a monotonicity condition which is a strict generalization of linearity: We ask for $h$ to preserve the order of token scores under convex combinations.
\begin{assumption}[$h$ preserves the top score]\label{ass cvx seq} The functions $(h_k)_{k=1}^K$ are $L_h$-Lipschitz in Euclidean distance. Given $\bal=(\alpha_\ik)\ikix$, there exists a scalar $c:=c_\bal>0$ such that for all $i\in[n],k\in[K]$ the following holds: Consider any convex combination 
\[ \x(\s)=\sum_{t\in \lowi\cup\{\alpha_\ik\}}s_t\cdot\x_\itt\quad\text{where}\quad \sum_{t\in \lowi\cup\{\alpha_\ik\}}s_t=1,~s_t\geq 0.
\] 
We have that $Y_\ik\cdot h_k(\x(\s))\leq \bga_\ik-c(1-s_{\alpha_\ik})$ where $\bga_\ik=Y_\ik\cdot h_k(\xa_{\ik})$ is the score of $\alpha_\ik$.
\end{assumption}
This condition states that \textbf{convex combinations of tokens with scores lower than $\alpha_\ik$ cannot achieve a score higher than $\alpha_\ik$}. Here, $1-s_{\alpha_\ik}$ term denotes the total share of non-optimal tokens. We require this condition to hold over the training dataset rather than the full domain $\R^d$. Crucially, it is a strict generalization of the linearity assumption: Any linear $h_k$ satisfies Assumption \ref{ass cvx seq} by setting $c_\bal>0$ to be the difference between the score of $\alpha_\ik$ and the largest score within $\lowi$ i.e.
\begin{align}
c_\bal:=\min_{i\in[n],k\in[K]}\{\bga_\ik-\max_{t\in\lowi} \bgam_\ikt\}>0.\label{c choice}
\end{align}
This can be seen by writing $h_k(\x(\s))=\sum_{t\in \lowi\cup\{\alpha_\ik\}}s_t\cdot \bgam_\ikt=\bga_\ik+\sum_{t\in \lowi}s_t\cdot (\bgam_\ikt-\bga_\ik)\leq \bga_\ik-c_\bal(1-s_{\alpha_\ik})$.
To provide a nonlinear example, consider the setting all labels are $Y_\ik=1$ and $h$ is an arbitrary convex function. Thanks to convexity, we can write $h(\x(\s))\leq \sum_{t=1}^Ts_t\cdot h(\x_t)=\sum_{t\in \lowi\cup\{\alpha_\ik\}}s_t\cdot \bgam_\ikt$. Thus, we can use the same choice of $c_\bal$ in \eqref{c choice}. 

We remark that, in Section \ref{sec:multi}, we derive formulae describing general inductive bias of attention without enforcing any assumption on $h$. These formulae allow for arbitrary output tokens generated by the transformer model trained by gradient descent. This includes the setting where gradient descent selects and composes multiple tokens from each sequence rather than a single token $\alpha_\ik$.

The following result is our main theorem regarding the convergence of regularization path to locally-optimal directions when restricted over the cone $\con{\bal}$.
\begin{theorem} [Convergence of Local Regularization Path]\label{local RP thm} Suppose \eqref{seqattnsvm} is feasible and $\bal=(\al_\ik)\ikix$ are locally-optimal token indices. Suppose Assumptions \ref{assum:loss:prope}\&\ref{ass cvx seq} hold. Recall $\con{\bal}$ of \eqref{cone alpha eq} and consider the norm-constrained cone \[
\Ccd:=\con{\bal}\bigcap\{\W\bgl \td{\W}\geq R_0\}.
\]
Define the conic regularization path $\wrb{R}=\min_{\Ccd,\td{\W}\leq R}\Lc(\W)$. Let $\Wcs_{\bal}$ be its set of minima and $\xdm>0$ be the associated margin i.e.~$\xdm=1/\td{\Wcs_{\bal}}$. For any sufficiently small $\eps>0$ and sufficiently large $R_0= \order{1/\eps}>0$, $\lim_{R\rightarrow\infty} \dist{\frac{\wrb{R}}{R\xdm},\Wcs_{\bal}}=0$. Additionally, suppose optimal indices $\op=(\op_\ik)\ikix$ are unique and set $\bal\gets\op$. Then, the same RP convergence guarantee holds with $\Ccd=\Rcm$.
\end{theorem}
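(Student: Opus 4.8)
\textbf{Proof plan for Theorem \ref{local RP thm}.}

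The plan is to show that the conic regularization path $\wrb{R}$ directionally converges to the solution set $\Wcs_{\bal}$ of \eqref{seqattnsvm} by a sandwiching argument on the loss $\Lc(\wrb{R})$ between a lower bound (valid for every feasible matrix of norm $\leq R$ inside the cone) and an upper bound (achieved by a scaled SVM solution). First I would establish a version of Lemma \ref{lem min risk} adapted to the local, sequence-to-sequence setting: using Assumption \ref{ass cvx seq} (monotonicity-preserving head) in place of linearity, I would argue that the output token of every attention head is a convex combination $\x_{\ik}(\s_\ik)$ of the input tokens, and the composite score $Y_\ik h_k(\x_{\ik}(\s_\ik))$ is controlled by $\bga_\ik - c_\bal(1 - s_{ik\al_\ik})$ provided the softmax weight concentrates on low-score tokens plus $\al_\ik$. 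The barrier structure of the cone $\con{\bal}$ — which by Lemma \ref{lemma cone} contains $\Wma$ precisely when $\bal$ is locally optimal — ensures that along the path the softmax mass leaking to high-score tokens (those in $\higi$) is exponentially suppressed relative to the low-score tokens, so the effective competition at each head is only among $\lowi \cup \{\al_\ik\}$, where Assumption \ref{ass cvx seq} applies. This yields $\Lc(\W) > \Lc_\star := \frac1n\sum_{i,k}\ell(\bga_\ik)$ for all finite $\W$ in the cone, and $\lim_{R\to\infty}\Lc(R\cdot\Wma) = \Lc_\star$ by saturating the softmax.

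Next I would run the standard regularization-path argument (as in \cite{rosset2003margin,tarzanagh2023margin}): for each $R$, compare $\Lc(\wrb{R})$ with $\Lc\!\left(\tfrac{R}{\td{\Wma}}\Wma\right)$. Since $\wrb{R}$ is the minimizer over $\Ccd \cap \{\td{\W}\leq R\}$ and (for $R$ large enough) the scaled SVM solution $\tfrac{R}{\td{\Wma}}\Wma$ lies in this set — here I use that $\Wma \in \con{\bal}$ for small $\eps$ and that $R_0 = \order{1/\eps}$ is chosen large enough that $\td{\tfrac{R}{\td{\Wma}}\Wma}\geq R_0$ — we get $\Lc(\wrb{R}) \leq \Lc\!\left(\tfrac{R}{\td{\Wma}}\Wma\right)$. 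Then I would quantify the gap: if $\wrb{R}/R$ stays bounded away from the feasible cone of \eqref{seqattnsvm} (i.e.\ some margin constraint $\inn{\Fa_\ik - \F_\ikt, \wrb{R}/R}$ is bounded below $\xdm$), the exponential-tail behavior of $\ell$ forces $\Lc(\wrb{R})$ to exceed the upper bound for large $R$, a contradiction. This pins down $\td{\wrb{R}/R} \to \xdm$ and that every limit point of $\wrb{R}/(R\xdm)$ is feasible and norm-minimal, hence in $\Wcs_{\bal}$; a compactness/continuity argument upgrades this to $\dist{\wrb{R}/(R\xdm), \Wcs_{\bal}} \to 0$. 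For the final sentence, setting $\bal \gets \op$ with unique optimal indices, Lemma \ref{lemma cone} gives $\con{\opt} = \Rcm$, so the cone constraint is vacuous and the local RP becomes the global RP over rank-$\leq m$ matrices; the same argument applies verbatim, recovering Theorem \ref{thm global reg path}'s nuclear-norm (or Frobenius, for $m=d$) bias as a special case.

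The main obstacle I anticipate is making the "barrier" argument fully rigorous and uniform: I need to show that along any sequence of matrices in $\con{\bal}$ with diverging norm, the softmax probability assigned to high-score tokens $\tau \in \higi$ decays fast enough (relative to the probabilities on $\lowi$) that it cannot rescue the loss below $\Lc_\star$, while simultaneously the probabilities on genuine support neighbors $\Tc_\ik$ behave correctly. This requires carefully tracking the three-way split of $[T]$ into $\{\al_\ik\}$, $\lowi$, $\higi$ and their interaction with $\Tc_\ik, \Tcb_\ik$, and choosing $\eps$ small and $R_0 = \order{1/\eps}$ in tandem so that the cone is simultaneously (i) large enough to contain $\Wma$ and (ii) restrictive enough that the barrier estimate holds. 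This is the technical heart; the rest is bookkeeping analogous to the existing local/global RP proofs in \cite{tarzanagh2023margin}, which is why I would defer the full details to a unified lemma and cite it for both Theorem \ref{thm global reg path} and Theorem \ref{local RP thm1}.
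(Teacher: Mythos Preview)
Your proposal is correct and mirrors the paper's two-step proof: (1) use the cone's barrier estimate $Q^{ik}\leq Te^{-\epsd\td{\W}}P^{ik}$ together with Assumption~\ref{ass cvx seq} to show $\Lc(\W)>\Lc_\star$ for all $\W\in\Ccd$ (hence $\td{\wrb{R}}\to\infty$), and (2) derive a contradiction by comparing $\Lc(\wrb{R})$ with $\Lc(R\Wsb)$ when some margin constraint of $\wrb{R}/(R\xdm)$ is $\delta$-violated. One small correction: the exponential rates driving the contradiction in Step~2 come from the \emph{softmax} probabilities (scaling like $e^{-R\xdm}$ versus $e^{-(1-\delta)R\xdm}$), not from any exponential-tail assumption on $\ell$; the paper only uses that $-\ell'$ is bounded between two positive constants on the relevant compact score interval.
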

\begin{proof} We will prove that $\wrb{R}$ is the optimal direction and also $\td{\wrb{R}}\rightarrow \infty$. Define the absolute constant 
\[
\cdm=\min_{\td{\W}=1}\tf{\W}.
\]
This guarantees that for any $\W$ we have $\tf{\W}\geq \cdm\td{\W}$. Also denote $\epsd=\cdm\eps$. Let us first determine the $\eps$ parameter: Fix $\Wma\in\Wcs_{\bal}$. For general $\bal$, we can choose any $\eps>0$ that is sufficiently small to guarantee $\Wma\in\con{\bal}$ based on Lemma \ref{lemma cone}. For $\bal=\op$, our analysis will entirely avoid using $\eps$, specifically, observe that $\con{\bal}=\Rcm$ based on Lemma \ref{lemma cone}.
    
\noindent\textbf{Step 1:} Let us first prove that $\wrb{R}$ achieves the optimal risk as $R\rightarrow\infty$ -- rather than problem having finite optima. Define norm-normalized $\Wsb=\xdm\Wma$. Note that $\Wma$ separates tokens $\bal$ from rest of the tokens for each $i,k\in[n]\times [K]$. Thus, we have that
\begin{align}
\lim_{R\rightarrow\infty}\Lc(\wrb{R})\leq\lim_{R\rightarrow\infty}\Lc(R\cdot\Wsb):=\Lc_\star= \frac{1}{n}\sum_{i=1}^n\sum_{k=1}^K\ell(\bga_\ik).\label{asymp loss}
\end{align}
On the other hand, for any choice of $\W\in \con{\bal}$, set $\x^{\W}_\ik=\sum_{t=1}^T \sft{\X_i\W\z_\ik}_t\x_t$. Set softmax probabilities $\sik=\sft{\X_i\W\z_\ik}$. Recalling $\lowi,\higi$ definitions, we can decompose the attention features as
\begin{align}
\x^{\W}_\ik=\sik_{\al_\ik}\xa_\ik+\sum_{t\in\lowi}\sik_t\x_\itt+\sum_{\tau\in\higi}\sik_\tau\x_{\ittt}.
\end{align}
When $\bal=\op$, note that we simply have $\higi=\emptyset$. This will be important for setting $R_0=0$ and $\Ccd=\Rcm$ in the proof for $\op$ indices.

Set $\bgg_\ikt=\bgam_\ikt-\bga_\ik=Y_\ik\cdot (h_k(\x_\itt)-h_k(\xa_\ik))$. Building on $L_h$-Lipschitzness of the prediction head $h_k(\cdot)$, we define
\begin{align}
&B:=\max_{i\in[n],k\in[K]}\max_{t,\tau\in[T]}L_h\cdot \tn{\x_\itt-\x_\ittt}\geq |\bgg_\ikt|.\label{BB eq}
\end{align}
Define $P^\ik:=\sum_{t\in\lowi}\sik_t$, $Q^\ik:=\sum_{t\in\higi}\sik_t$, and $\bgam^{\W}_\ik=Y_\ik\cdot h_k(\x^{\W}_\ik)$. Also set temporary variables $\x'=(\sik_{\al_\ik}+Q^\ik)\xa_\ik+\sum_{t\in\lowi}\sik_t\x_\itt$ and $\bgam'=Y_\ik\cdot h_k(\x^{\W}_\ik)$. 
Using Assumption \ref{ass cvx seq} on $\x'$ and noticing $P^\ik=1-\sik_{\al_\ik}-Q^\ik$, observe that
\[ 
|\bgam^{\W}_\ik-\bgam'|\leq BQ^\ik\quad\text{and}\quad \bgam'\leq \bga_\ik-c_\bal P^\ik. 
\] 
Recall from \eqref{c choice} that, when $h_k$ are linear functions, $c_\bal$ can be chosen as 
\begin{align*}
c_\bal:=\min_{i\in[n],k\in[K]}\min_{t\in\lowi}-\bgg_\ikt>0.
\end{align*}

To summarize, applying Assumption \ref{ass cvx seq}, we obtain the following score inequalities
\begin{align}\label{score decomp}
&\bgam^{\W}_\ik\leq \bga_\ik-c_\bal P^\ik+BQ^\ik,\\
&|\bgam^{\W}_\ik-\bga_\ik|\leq L_h\tn{\x^{\W}_\ik-\xa_\ik}\leq L_h \sum_{t\neq \al_\ik}\sik_t\tn{\x_\ikt-\xa_\ik}\leq B(1-\sik_{\al_\ik}).\label{lip score gap}
\end{align}
We will use the $\bgam^{\W}_\ik-\bga_\ik$ term in \eqref{score decomp} to evaluate $\W$ against the reference loss \eqref{asymp loss}. Let $\abik=\X_i\W\z_\ik$. Now since $\W\in \con{\bal}$, there exists $t\in \lowi$ obeying $\abik_t-\max_{\tau\in\higi} \abik_\tau\geq \eps \tf{\W}\geq \epsd\td{\W}$. Denote $D^\ik:=(\sum_{t\in [T]}e^{\abik_t})^{-1}$ to be the softmax denominator i.e.~sum of exponentials. We find that,
\begin{align}
Q^\ik=\sum_{\tau\in\higi}\sik_\tau=D^\ik\sum_{\tau\in\higi}e^{\abik_\tau}\leq D^\ik Te^{\abik_t-\eps\tf{\W}}\leq Te^{-\epsd\td{\W}}P^\ik.\label{qikeq}
\end{align}
Consequently, the score difference obeys
\[
\bgam^{\W}_\ik-\bga_\ik\leq BQ^\ik-c_\bal P^\ik\leq (BTe^{-\epsd\td{\W}}-c_\bal)P^\ik.
\]
Above, the right hand side is strictly negative as soon as $\td{\W}\geq R_0:=\frac{1}{\epsd}\log\frac{BT}{c_\bal}$. Note that, this condition applies to all $(i,k)\in[n]\times [K]$ pairs uniformly for the same $R_0$. Consequently, for any $\td{\W}\geq R_0$, for all $i,k$ and $\W\in \con{\bal}$, we have that $\bgam^{\W}_\ik<\bga_\ik$. Additionally, when $\bal=\op$, note that $Q^\ik=0$ since $\higi=\emptyset$. Thus, $R_0=0$ suffices to ensure $\bgam^{\W}_\ik<\bga_\ik$. Using the strictly-decreasing nature of $\ell$, we conclude with the fact that for all (finite) $\W\in \con{\bal}$, 
\[
\Lc(\W)=\frac{1}{n}\sum_{i=1}^n\sum_{k=1}^K\ell(\bgam^{\W}_\ik)> \Lc_\st=\frac{1}{n}\sum_{i=1}^n\sum_{k=1}^K\ell(\bga_\ik),
\]
which implies $\td{\wrb{R}}\rightarrow\infty$.

\noindent\textbf{Step 2:} To proceed, we show that $\wrb{R}$ converges in direction to $\Wcs_{\bal}$. Suppose this is not the case i.e.~convergence fails. We will obtain a contradiction by showing that $\Wsb_R=R\cdot\Wsb$ achieves a strictly superior loss compared to $\wrb{R}$. Also define the normalized parameter $\wrt{R}=\frac{\wrb{R}}{R\xdm}$ and $\W'=\frac{\wrb{R}}{\td{\wrb{R}}\xdm}$. Note that $\wrt{R}$ is obtained by scaling down $\W'$ since $\td{\wrb{R}}\leq R$ and $\W'$ obeys $\td{\W'}=\td{\Wma}$.

Since $\wrt{R}$ fails to converge to $\Wcs_{\bal}$, for some $\delta>0$, there exists arbitrarily large $R>0$ such that $\dist{\wrt{R},\Wcs_{\bal}}\geq \delta$. This translates to the suboptimality in terms of margin constraints as follows: First, distance with respect to the $\dm$-norm obeys $\distd{\wrt{R},\Wcs_{\bal}}\geq \delta$ for some updated $\delta\gets \cdm\delta$. Secondly, using triangle inequality,
\[ 
\text{This implies that either~~~}\td{\wrt{R}}\leq \td{\Wma}-\delta/2\text{~~~or~~~}\distd{\W',\Wcs_{\bal}}\geq \delta/2.
\]
In either scenario, $\wrt{R}$ strictly violates one of the margin constraints of \eqref{seqattnsvm}: If $\td{\wrt{R}}\leq \td{\Wma}-\delta/2$, then, since the optimal SVM objective is $\td{\Wma}$, there exists a constraint $(i,k)$ for which $\inn{\Fa_{\ik}-\F_{\ikt},\wrt{R}}\leq 1-\frac{\delta}{2\td{\Wma}}$. If $\distd{\W',\Wcs_{\bal}}\geq \delta/2$, then, $\W'$ has same SVM objective but it is strictly bounded away from the solution set. Thus, for some $\eps:=\eps(\delta)>0$, $\W'$ and its scaled down version $\wrt{R}$ strictly violate an SVM constraint achieving margin $\leq 1-\eps$. Without losing generality, suppose $\wrt{R}$ violates the first constraint. Thus, for a properly updated $\delta>0$ (that is function of the initial $\delta>0$) and for $(i,k)=(1,1)$ and some \nei $\tau\in \Tc_\oo$,
\begin{align}
\inn{\Fa_{\oo}-\F_{\oo t},\wrt{R}}\leq 1-\delta.\label{margin violate}
\end{align}
Now, we will argue that this will lead to a contradiction by proving $\Lc(\Wsb_R)<\Lc(\wrb{R})$ for sufficiently large $R$.

To obtain the result, we establish a refined softmax probability control as in Step 1 by studying distance to $\Lc_\star$. Following \eqref{score decomp}, denote the score function at $\wrb{R}$ via $\bgam^R_\ik=\bgam_\ik^{\wrb{R}}$ as shorthand notation. Similarly, let $\sir_\ik=\sft{\abr_\ik}$ with $\abr_\ik=\X_i\wrb{R}\z_\ik$. Set the corresponding notation for the reference parameter $\Wsb_R$ as $\bgam^\st_\ik,\s^\st_\ik,\ab^\st_\ik$. 

Critically, recall the above inequalities \eqref{qikeq} that applies to both $\W\in\{\wrb{R},\Wsb_R\}\subset\con{\bal}$ for an index $(i,k)$ and \nei $t\in\Tc_\ik$
\begin{align}
\nonumber 
Q^\ik&=\sum_{\tau\in\higi}\s_\iktt=D^\ik\sum_{\tau\in\higi}e^{\ab_{\iktt}} \\
&\leq D^\ik Te^{\ab_\ikt-\epsd\td{\W}}\leq Te^{-\epsd\td{\W}}P^\ik\leq Te^{-\epsd\td{\W}}(1-\s_{\ik\al_\ik}), \label{qik bound}
\end{align}
where $P^\ik=\sum_{\tau\in\lowi}\s_{\iktt}$ and $P^\ik+Q^\ik= 1-\s_{\ik\al_\ik}$.

Note that, setting $R_0\geq \order{1/\epsd}=\order{1/\eps}$, we guarantee that, for any $(i,k)\in[n]\times [K]$
\begin{align}
P^\ik\geq Q^\ik\implies P^\ik \geq 0.5(1-\s_{\ik\al_\ik}). \label{pik bound}
\end{align}
Additionally, when $\bal=\op$, note that $Q^\ik=0$ since $\higi=\emptyset$. Thus, $R_0=0$ suffices to ensure \eqref{pik bound}.

To proceed, recall that $R\geq \td{\wrb{R}}\geq R_0$ by definition since $\wrb{R}\in \Ccd$ and recall $\xdm:=1/\td{\Wma}$. Equipped with these, we note the following softmax inequalities on the selected tokens $\al_\ik$
\begin{align}
&\s^\st_{\ik\al_\ik}\geq \frac{1}{1+Te^{-R\xdm}}\geq 1-Te^{-R\xdm}\quad \text{for all}\quad (i,k)\in[n]\times [K], \label{salpha bounds}\\
&s^R_{\ik\al_\ik}\leq \frac{1}{1+e^{-(1-\delta)\td{\wrb{R}}\xdm}}\leq \frac{1}{1+e^{-(1-\delta)R\xdm}}\quad\text{for}\quad (i,k)=(1,1).\nn
\end{align}
The former inequality is thanks to $\Wma$ achieving $\geq 1$ margins on all tokens $[T]-\al_\ik$ and the latter arises from the $\delta$-margin violation of $\wrb{R}$ at $(i,k)=(1,1)$ i.e.~Eq.~\eqref{margin violate}. Since $\ell$ is strictly decreasing with Lipschitz gradient and the scores are upper/lower bounded by an absolute constant (as tokens are bounded, $(h_k)_{k=1}^K$ are Lipschitz, and both are fixed), we know that $\cop\geq -\ell'(\bgam_\ik^{\W})\geq \cdn$ for some constants $\cop>\cdn>0$. Thus, following Eq.~\eqref{BB eq} and the score decomposition \eqref{score decomp}, and using \eqref{qik bound},\eqref{pik bound},\eqref{salpha bounds} we can write
\begin{align}
\nonumber
\Lc(\wrb{R})-\Lc_\star&\geq \frac{1}{n}[\ell(\bgam_\oo^{\wrb{R}})-\ell(\bga_\oo)]\geq \frac{\cdn}{n}(\bga_{\oo}-\bgam_\oo^{\wrb{R}})\\
&\geq \frac{\cdn}{n}(c_\bal P^\oo_{\wrb{R}}-BQ^\oo_{\wrb{R}})\label{q11 eq}\\
&\geq \frac{\cdn}{n}(1-\s_{\oo\al_\oo}^R)(0.5c_\bal -BTe^{-\epsd \td{\wrb{R}}}) \nonumber\\
\nonumber
&\geq \frac{\cdn}{n}\frac{1}{1+e^{(1-\delta)R\xdm}}(0.5c_\bal-BTe^{-\epsd R_0}).
\end{align}
Above, recalling the choice $R_0\geq \order{1/\epsd}=\order{1/\eps}$, $R\geq R_0$ implies $BTe^{-\epsd R_0}\leq c_\bal/4$ to obtain
\begin{align}
\Lc(\wrb{R})-\Lc_\star\geq \frac{\cdn\cdot c_\bal}{4n}\frac{1}{1+e^{(1-\delta)R\xdm}}.\label{ineq prl}
\end{align}
Additionally when $\bal=\op$, since $Q^\oo_{\wrb{R}}=0$ in \eqref{q11 eq}, the bound above holds with $R_0=0$ by directly using \eqref{q11 eq}.

Conversely, we upper bound the difference between $\Lc(\Wsb_R)$ and $\Lc_\star$ as follows. Define the worst-case loss difference for $\wrb{R}$ as $(i',k')=\arg\max_{i\in[n],k\in[K]}[\ell(\bgam_\ik^\st)-\ell(\bga_\ik)]$. Using \eqref{lip score gap}\&\eqref{salpha bounds}, we write
\begin{equation}
\begin{aligned}
\Lc(\Wsb_R)-\Lc_\star&\leq \max_{i\in[n],k\in[K]}[\ell(\bgam_\ik^\st)-\ell(\bga_\ik)]\leq \cop\cdot(\bga_{i'k'}-\bgam^\st_{i'k'})\\
&\leq \cop\cdot(1-\s_{i'k'\al_{i'k'}}^\st)B\\
&\leq \cop\cdot Te^{-R\xdm}B.\label{desired Wmm bound}
\end{aligned}
\end{equation}
Combining the last inequality and \eqref{ineq prl}, we conclude that $\Lc(\Wsb_R)<\Lc(\wrb{R})$ whenever
\[
\cop T\cdot e^{-R\xdm}B<\frac{\cdn\cdot c_\bal }{4n}\frac{1}{1+e^{(1-\delta)R\xdm}}\iff \frac{e^{R\xdm}}{1+e^{(1-\delta)R\xdm}}> \frac{4\cop Tn B}{\cdn c_\bal }.
\]
The left hand-side inequality holds for all sufficiently large $R$: Specifically, as soon as $R$ obeys $R>\frac{1}{\delta\xdm}\log(\frac{8\cop Tn B}{\cdn c_\bal})$. This completes the proof of the theorem via contradiction as we obtained $\Lc(\wrb{R})>\Lc(\Wsb_R)$.
\end{proof}

\subsection{Global regularization path}\label{app:global:RP:thm}

The following result is a direct corollary of Theorem \ref{local RP thm}. Namely, we simply restate the final line of this theorem that applies to optimal tokens.
\begin{corollary}[Global Convergence of Regularization Path]\label{cor gm} Suppose Assumptions \ref{assum:loss:prope}\&\ref{ass cvx seq} hold and the optimal indices $\op_\ik=\arg\max_{t\in[T]}\bgam_\ikt$ are unique. Consider the global regularization path $\Wb_{\dm,R}=\min_{\W\in\Rcm,\td{\W}\leq R}\Lc(\W)$. Let $\Wcb^\svm_\dm$ be the non-empty solution set of \eqref{seqattnsvm} with $\bal\gets\op$ normalized to have unit $\dm$-norm. Then
\[
\lim_{R\rightarrow\infty}\dist{\frac{\Wb_{\dm,R}}{R},\Wcb^\svm_\dm}
\]
\end{corollary}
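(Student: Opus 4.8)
The plan is to obtain this statement as an immediate specialization of Theorem \ref{local RP thm}, rather than re-running a regularization-path argument from scratch. First I would set $\bal \gets \op$, the per-token optimal indices $\op_\ik = \arg\max_{t\in[T]}\bgam_\ikt$, which are unique by hypothesis, and verify that the hypotheses of Theorem \ref{local RP thm} are met: Assumptions \ref{assum:loss:prope} and \ref{ass cvx seq} are assumed, and feasibility of \eqref{seqattnsvm} with $\bal\gets\op$ is exactly the non-emptiness of $\Wcb^\svm_\dm$ that the corollary postulates (and which holds automatically under mild over-parameterization by Theorem \ref{separation thm}).

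Next I would invoke Lemma \ref{lemma cone}. Since each $\op_\ik$ is the \emph{unique} maximizer of the token scores, the high-score set $\higi$ of Definition \ref{HL cone def} is empty for every $i\in[n], k\in[K]$, so the separating cone $\con{\opt}$ imposes no constraint and equals the full rank-$m$ manifold $\Rcm$. This is precisely the situation covered by the addendum of Theorem \ref{local RP thm}: one may take $R_0 = 0$ and $\Ccd = \Rcm$, so that the ``local'' regularization path $\wrb{R} = \min_{\Ccd,\,\td{\W}\le R}\Lc(\W)$ coincides with the global regularization path $\Wb_{\dm,R} = \min_{\W\in\Rcm,\,\td{\W}\le R}\Lc(\W)$ appearing in the corollary.

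Applying Theorem \ref{local RP thm} then yields $\lim_{R\to\infty}\dist{\Wb_{\dm,R}/(R\xdm),\,\Wcs_\op} = 0$, where $\xdm = 1/\td{\Wma}$ is the (finite, strictly positive) $\dm$-SVM margin and $\Wcs_\op$ is the \emph{unnormalized} solution set of \eqref{seqattnsvm}. The remaining step is normalization bookkeeping: by definition $\Wcb^\svm_\dm = \xdm\,\Wcs_\op$ is that solution set rescaled to unit $\dm$-norm, and $\dist{\xdm\,x,\,\xdm\,S} = \xdm\,\dist{x,\,S}$ for any fixed scalar $\xdm>0$, so multiplying the limit above through by the constant $\xdm$ gives $\lim_{R\to\infty}\dist{\Wb_{\dm,R}/R,\,\Wcb^\svm_\dm} = 0$, which is the claim.

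I do not expect a genuine obstacle here, since the substantive work — the divergence $\td{\Wb_{\dm,R}}\to\infty$ and the margin-violation contradiction — is already carried out inside Theorem \ref{local RP thm}. The one place that requires care is the uniqueness hypothesis on $\op$: without it, $\higi$ can be nonempty (ties at the top score), the cone need not reduce to $\Rcm$, the identification of the local path with the global path breaks, and one can no longer conclude convergence to the global max-margin direction. I would also note explicitly that $\xdm$ is finite and nonzero precisely because \eqref{seqattnsvm} is feasible and $\td{\cdot}$ is a genuine norm, which legitimizes the rescaling step.
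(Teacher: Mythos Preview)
Your proposal is correct and matches the paper's approach exactly: the paper states that this corollary is obtained by ``simply restat[ing] the final line'' of Theorem \ref{local RP thm} applied to optimal tokens, and you spell out precisely that specialization together with the normalization bookkeeping needed to pass from $\Wb_{\dm,R}/(R\xdm)\to\Wcs_\op$ to $\Wb_{\dm,R}/R\to\Wcb^\svm_\dm$.
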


The next corollary directly targets application to \eqref{serm-w} and \eqref{serm-kq}. This corollary is also a strict generalization of Theorem \ref{thm global reg path}. Specifically, we immediately recover Theorem \ref{thm global reg path} by specializing this to the single-output setting $K\gets 1$ and full-dimensional parameterization $m\gets d$.
\begin{corollary}\label{cor global reg path} Suppose Assumptions \ref{assum:loss:prope}\&\ref{ass cvx seq} hold and the optimal indices $\op_\ik=\arg\max_{t\in[T]}\bgam_\ikt$ are unique. Consider the regularization paths associated to \eqref{serm-w} and \eqref{serm-kq}:
\begin{align}
&\Wb_R=\underset{\W\in\Rcm,\tf{\W}\leq R}{\arg\min}\Lc(\W)\quad\text{and}\quad \Kbb_R,\Qbb_R=\underset{\tf{\Kb}^2+\tf{\Qb}^2\leq 2R}{\arg\min}\Lc(\Kb,\Qb)
\end{align}
Suppose \eqref{seqattnsvm} is feasible for $\bal\gets\op$. Let $\Ws$ be the unique solution of \eqref{seqattnsvm} with Frobenius norm and $\Wc^{\svm}_\star$ be the solution set of \eqref{seqattnsvm} with nuclear norm and cost function $\tnuc{\Wc^{\svm}_\star}$. We have that
\[
\lim_{R\rightarrow\infty} \frac{\Wb_R}{R}=\frac{\Ws}{\tf{\Ws}},\quad\lim_{R\rightarrow\infty} \dist{\frac{\Qbb_R\Kbb_R^\top}{R},\frac{\Wcs_\star}{\tnuc{\Wc^{\svm}_\star}}}=0.
\]
\end{corollary}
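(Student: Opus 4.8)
\textbf{Proof plan for Corollary~\ref{cor global reg path}.}
The plan is to obtain this result as a direct specialization of Corollary~\ref{cor gm} (which is itself a restatement of Theorem~\ref{local RP thm}) combined with the parameterization-mapping Lemma~\ref{kqw mapping}. First I would handle the $\W$-parameterization. Since the optimal indices $\op_\ik$ are unique, Assumption~\ref{ass cvx seq} holds automatically for the linear heads $h_k(\x)=\vb^\top\x$ (with the constant $c_\bal$ given by \eqref{c choice}), and \eqref{seqattnsvm} is feasible for $\bal\gets\op$ by hypothesis. Applying Corollary~\ref{cor gm} with $\dm$ equal to the Frobenius norm, the global regularization path $\Wb_R = \arg\min_{\W\in\Rcm,\tf{\W}\le R}\Lc(\W)$ satisfies $\lim_{R\to\infty}\dist{\Wb_R/R,\,\Wcb^\svm_{F}}=0$, where $\Wcb^\svm_F$ is the set of unit-Frobenius-norm minimizers of \eqref{seqattnsvm}. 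Because the Frobenius norm is strongly convex and the feasible set of \eqref{seqattnsvm} is a nonempty polyhedron, the minimizer $\Ws$ is unique; hence $\Wcb^\svm_F=\{\Ws/\tf{\Ws}\}$ and the distance statement collapses to $\lim_{R\to\infty}\Wb_R/R = \Ws/\tf{\Ws}$.

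Next I would treat the $(\Kb,\Qb)$-parameterization. The key observation is Lemma~\ref{kqw mapping}: for every $R\ge 0$ there is a one-to-one correspondence between solutions $\Wb_R$ of the $\tnuc{\cdot}$-constrained path $\arg\min_{\W\in\Rcm,\tnuc{\W}\le R}\Lc(\W)$ and products $\Kbb_R\Qbb_R^\top$ of solutions of the factored path $\arg\min_{\tf{\Kb}^2+\tf{\Qb}^2\le 2R}\Lc(\Kb,\Qb)$. Therefore the directional limit of $\Kbb_R\Qbb_R^\top/R$ coincides with that of the nuclear-norm-constrained path over rank-$m$ matrices. Now apply Corollary~\ref{cor gm} a second time, this time with $\dm$ equal to the nuclear norm: $\lim_{R\to\infty}\dist{\Wb_R/R,\,\Wcb^\svm_{\tnuc{\cdot}}}=0$, where $\Wcb^\svm_{\tnuc{\cdot}}$ is the set of minimizers of \eqref{seqattnsvm} (with rank-$m$ constraint and nuclear-norm objective) normalized to unit nuclear norm. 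Unnormalizing by the optimal cost $C_\st:=\tnuc{\Wcs_\star}$ and combining with the mapping gives exactly $\lim_{R\to\infty}\dist{\Kbb_R\Qbb_R^\top/R,\,\Wcs_\star/C_\st}=0$, which is the advertised claim.

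The few remaining details are routine: one must check that the hypotheses of Corollary~\ref{cor gm} are met in the single-output specialization $K\gets 1$ (which amounts to setting $\z_{i1}\gets\x_{i1}$, $h_k\gets h$, and noting $\Lc^{\texttt{seq}}$ reduces to \eqref{eqn:erm:w}); that the solution set $\Wcs_\star$ of the nuclear-norm problem is nonempty (feasibility of \eqref{seqattnsvm} plus coercivity of $\tnuc{\cdot}$ on the feasible set, which is closed); and that scaling the normalized limit set by the constant $C_\st>0$ commutes with the $\dist{\cdot,\cdot}$ statement (a continuity/homogeneity argument). I do not anticipate a genuine obstacle here — the substance is already carried by Theorem~\ref{local RP thm} and Lemma~\ref{kqw mapping}. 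If anything, the one point deserving care is the uniqueness of $\Ws$ for the Frobenius case: one should note that strong convexity of $\tf{\cdot}^2$ on the polyhedral feasible region forces a unique minimizer even though the rank-$m$ manifold $\Rcm$ is nonconvex, because Lemma~\ref{lem:rank} guarantees the unconstrained minimizer already has rank at most $n\le m$ and thus lies in $\Rcm$, so the rank constraint is inactive and the problem is effectively convex.
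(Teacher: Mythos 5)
Your proposal matches the paper's own proof essentially verbatim: apply Corollary~\ref{cor gm} once with $\dm=F$ (using strong convexity for uniqueness of $\Ws$) and once with $\dm=\star$, and use Lemma~\ref{kqw mapping} to identify $\Kbb_R\Qbb_R^\top$ with the nuclear-norm-constrained path. The extra care you flag about the rank-$m$ manifold being nonconvex is a reasonable refinement the paper glosses over, but it does not change the argument.
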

\begin{proof} We directly apply Corollary \ref{cor gm} with $\dm=F$ and $\dm=\star$ respectively. To obtain the result on $\Wb_R$, we note that $\Ws$ is unique because Frobenius norm-squared is strongly convex. To obtain the result on $(\Qbb_R,\Kbb_R)$, we use Lemma \ref{kqw mapping} and observe that
\[
\Wb_{\st,R}:=\Qbb_R\Kbb_R^\top\in\underset{\W\in\Rcm,\tnuc{\W}\leq R}{\arg\min}\Lc(\W).
\]
We then apply Corollary \ref{cor gm} with $\dm=\star$ to conclude with the convergence of the path $ \Wb_{\st,R}$.
\end{proof}

\subsubsection{Proof of Theorem \ref{thm global reg path}}
Corollary \ref{cor global reg path} already proves Theorem \ref{thm global reg path} through the more general Theorem \ref{local RP thm}. Below, we provide a self contained proof of Theorem \ref{thm global reg path} for clarity.

\begin{proof} Throughout $\dm$ denotes either Frobenius norm or nuclear norm. We will prove that $\wrb{R}$ asymptotically aligns with the set of globally-optimal directions and also $\td{\wrb{R}}\rightarrow \infty$. $\Rcm\subseteq\R^{d\times d}$ denote the manifold of rank $\leq$$m$ matrices.
\\
\noindent\textbf{Step 1:} Let us first prove that $\wrb{R}$ achieves the optimal risk as $R\rightarrow\infty$ -- rather than problem having finite optima. Define $\xdm=1/\td{\Wm}$ and norm-normalized $\Wsb=\xdm\Wm$. Note that $\Wm$ separates tokens $\op$ from rest of the tokens for each $i \in[n]$. Thus, we have that
\begin{align}
\lim_{R\rightarrow\infty}\Lc(\wrb{R})\leq\lim_{R\rightarrow\infty}\Lc(R\cdot\Wsb):=\Lc_\star= \frac{1}{n}\sum_{i=1}^n \ell(\bgam^{\op}_i).\label{glob:asymp loss}
\end{align}
On the other hand, for any $\W\in \Rcm$, define the softmax probabilities $\s^{(i)}=\sft{\X_i\W\z_i}$ and attention features $\x^{\W}_i=\sum_{t=1}^T \s^{(i)}_t\x_t$. Decompose $\x^{\W}_i$ as 
$
\x^{\W}_i=\s^{(i)}_{\op_i}\x_{i\opt_i}+\sum_{t\neq \op_i}\s^{(i)}_t\x_\itt.
$ Set $\bgg_\itt=\bgam^{\op}_i-\bgam_\itt=Y_i\cdot \vb^\top(\x_{i\opt_i}-\x_\itt)>0$, and define
\begin{align}
&B:=\max_{i\in[n]}\max_{t,\tau\in[T]}\tn{\vb}\cdot \tn{\x_\itt-\x_\ittt}\geq \bgg_\itt.\label{glob:BB eq}
\end{align}
Define $c_\op=\min_{i\in[n],t\neq\op_i}\bgg_\itt>0$ and $\bgam^{\W}_i=Y_i\cdot \vb^\top\x^{\W}_i$. We obtain the following score inequalities
\begin{align}\label{glob:score decomp}
&\bgam^{\W}_i\leq \bgam^{\op}_i-c_\op (1-\s^{(i)}_{\op_i})<\bgam^{\op}_i,\\
&|\bgam^{\W}_i-\bgam^{\op}_i|\leq \tn{\vb}\cdot\tn{\x^{\W}_i-\xa_i}\leq \tn{\vb} \sum_{t\neq \op_i}\s^{(i)}_t\tn{\x_\itt-\xa_i}\leq B (1-\s^{(i)}_{\op_i}).\nonumber
\end{align}
We will use the $\bgam^{\W}_i-\bgam^{\op}_i$ term in \eqref{glob:score decomp} to evaluate $\W$ against the reference loss $\Lc_\star$ of \eqref{glob:asymp loss}. 
Using the strictly-decreasing nature of $\ell$, we conclude with the fact that for all (finite) $\W\in \Rcm$, 
\[
\Lc(\W)=\frac{1}{n}\sum_{i=1}^n \ell(\bgam^{\W}_i)> \Lc_\st=\frac{1}{n}\sum_{i=1}^n \ell(\bgam^{\op}_i),
\]
which implies $\td{\wrb{R}}\rightarrow\infty$ together with \eqref{glob:asymp loss}.

\noindent\textbf{Step 2:} To proceed, we show that $\wrb{R}$ converges in direction to $\Wcs$, which denotes the set of SVM minima. Suppose this is not the case and~convergence fails. We will obtain a contradiction by showing that $\Wsb_R=R\cdot\Wsb$ achieves a strictly superior loss compared to $\wrb{R}$. Let us introduce the normalized parameters $\wrt{R}=\frac{\wrb{R}}{R\xdm}$ and $\W'=\frac{\wrb{R}}{\td{\wrb{R}}\xdm}$. Note that $\wrt{R}$ is obtained by scaling down $\W'$ since $\td{\wrb{R}}\leq R$ and $\W'$ obeys $\td{\W'}=\td{\Wm}$.
Since $\wrt{R}$ fails to converge to $\Wcs$, for some $\delta>0$, there exists arbitrarily large $R>0$ such that $\dist{\wrt{R},\Wcs}\geq \delta$. This translates to the suboptimality in terms of the margin constraints as follows: First, since nuclear norm dominates Frobenius, distance with respect to the $\dm$-norm obeys $\distd{\wrt{R},\Wcs}\geq \delta$. Secondly, using triangle inequality,
\[ 
\text{this implies that either~~~}\td{\wrt{R}}\leq \td{\Wm}-\delta/2\text{~~~or~~~}\distd{\W',\Wcs}\geq \delta/2.
\]
In either scenario, $\wrt{R}$ strictly violates one of the margin constraints of \eqref{seqattnsvm}: If $\td{\wrt{R}}\leq \td{\Wm}-\delta/2$, then, since the optimal SVM objective is $\td{\Wm}$, there exists a constraint $i,t\neq\op_i$ for which $\inn{(\x^\op_i-\x_\itt)\z_i^\top,\wrt{R}}\leq 1-\frac{\delta}{2\td{\Wm}}$. If $\distd{\W',\Wcs}\geq \delta/2$, then, $\W'$ has the same SVM objective but it is strictly bounded away from the solution set. Thus, for some $\eps:=\eps(\delta)>0$, $\W'$ and its scaled down version $\wrt{R}$ strictly violate an SVM constraint achieving margin $\leq 1-\eps$. Without losing generality, suppose $\wrt{R}$ violates the first constraint $i=1$. Thus, for a properly updated $\delta>0$ (that is function of the initial $\delta>0$) and for $i=1$ and some \nei $\tau\in \Tc_1$,
\begin{align}
\inn{ (\x^\op_1-\x_{1t})\z_1^\top,\wrt{R}}\leq 1-\delta.\label{margin violate:glob}
\end{align}
Now, we will argue that this leads to a contradiction by proving $\Lc(\Wsb_R)<\Lc(\wrb{R})$ for sufficiently large $R$.

To obtain the result, we establish a refined softmax probability control as in Step 1 by studying distance to $\Lc_\star$. Following \eqref{glob:score decomp}, denote the score function at $\wrb{R}$ via $\bgam^R_i:=\bgam_i^{\wrb{R}}$. Similarly, let $\sir_i=\sft{\abr_i}$ with $\abr_i=\X_i\wrb{R}\z_i$. Set the corresponding notation for the reference parameter $\Wsb_R$ as $\bgam^\st_i,\s^\st_i,\ab^\st_i$.  Recall that $R\geq \td{\wrb{R}}$ and $\xdm:=1/\td{\Wm}$. We note the following softmax inequalities 
\begin{align}
&\s^\st_{i\op_i}\geq \frac{1}{1+Te^{-R\xdm}}\geq 1-Te^{-R\xdm}\quad \text{for all}\quad i\in[n], \label{glob:salpha bounds}\\
&s^R_{i\op_i}\leq \frac{1}{1+e^{-(1-\delta)\td{\wrb{R}}\xdm}}\leq \frac{1}{1+e^{-(1-\delta)R\xdm}}\quad\text{for}\quad i=1.\nn
\end{align}
The former inequality is thanks to $\Wm$ achieving $\geq$$1$ margins on all tokens $[T]-\op_i$ and the latter arises from the $\delta$-margin violation of $\wrb{R}$ at $i=1$ i.e.~Eq.~\eqref{margin violate:glob}. Since $\ell$ is strictly decreasing with Lipschitz derivative and the scores are upper/lower bounded by an absolute constant (as tokens are bounded and fixed), we have that $\cop\geq -\ell'(\bgam_i^{\W})\geq \cdn$ for some constants $\cop>\cdn>0$. Thus, following Eq.~\eqref{glob:BB eq}, the score decomposition \eqref{glob:score decomp}, and \eqref{glob:salpha bounds} we can write
\begin{align}\label{glob:ineq prl}
\Lc(\wrb{R})-\Lc_\star&\geq \frac{1}{n}[\ell(\bgam_1^{\wrb{R}})-\ell(\bgam^{\op}_1)]\geq \frac{\cdn}{n}(\bgam^{\op}_{1}-\bgam_1^{\wrb{R}})\\
\nonumber
&\geq \frac{\cdn}{n}c_\op (1-\s^R_{1\op_1}).
\\
\nonumber 
&\geq \frac{\cdn c_\op}{n}\frac{1}{1+e^{(1-\delta)R\xdm}}.
\end{align}
Conversely, we upper bound the difference between $\Lc(\Wsb_R)$ and $\Lc_\star$ as follows. Define the worst-case loss difference for $\wrb{R}$ as $j=\arg\max_{i\in[n]}[\ell(\bgam_i^\st)-\ell(\bgam^{\op}_i)]$. Using \eqref{glob:score decomp}\&\eqref{glob:salpha bounds}, we write
\begin{equation*}
\begin{aligned}
\Lc(\Wsb_R)-\Lc_\star&\leq \max_{i\in[n]}[\ell(\bgam_i^\st)-\ell(\bgam^{\op}_i)]\leq \cop\cdot(\bgam^{\op}_{j}-\bgam^\st_{j})\\
&\leq \cop\cdot(1-\s^\st_{j\op_j})B\\
&\leq \cop\cdot Te^{-R\xdm}B.
\end{aligned}
\end{equation*}
Combining the last inequality and \eqref{glob:ineq prl}, we conclude that $\Lc(\Wsb_R)<\Lc(\wrb{R})$ whenever
\[
\cop T\cdot e^{-R\xdm}B<\frac{\cdn\cdot c_\op }{n}\frac{1}{1+e^{(1-\delta)R\xdm}}\iff \frac{e^{R\xdm}}{1+e^{(1-\delta)R\xdm}}> \frac{\cop Tn B}{\cdn c_\op }.
\]
The left hand-side inequality holds for all sufficiently large $R$: Specifically, as soon as $R$ obeys $R>\frac{1}{\delta\xdm}\log(\frac{2\cop Tn B}{\cdn c_\op})$. This completes the proof of the theorem by contradiction since we obtained $\Lc(\wrb{R})>\Lc(\Wsb_R)$.
\end{proof}

\section{Supporting Experiments}\label{app supp exp}
\begin{figure}[t]
    \centering
    \begin{minipage}{.37\textwidth}
        \begin{tikzpicture}
        \node at (0,0) {\includegraphics[height=.6\columnwidth, trim={1.2cm 1.4cm 0 -1cm}, clip]{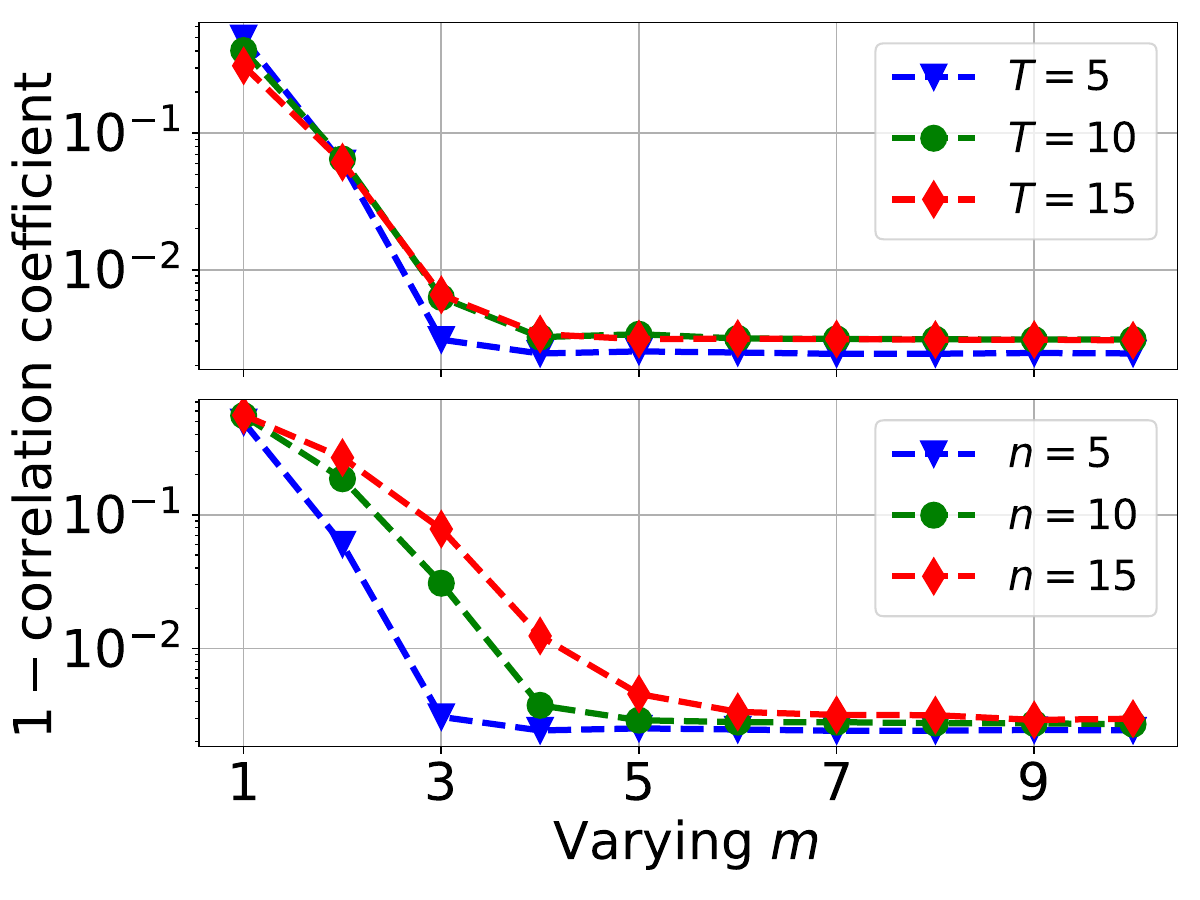}};
        \node at (0,-2.) {\small{{Varying $m$}}};
        \node[rotate=90] at (-2.7,0) {\small{$1-$correlation coefficient}};
        \end{tikzpicture}
        \vspace{10pt}
        \caption{Convergence behavior of GD when training attention weights $(\Kb,\Qb)\in\R^{d\times m}$ with random data and varying $m$. The misalignment between attention SVM and GD, $1-\corr{\Ws_{\star,\bal},\Kb\Qb^\top}$, is studied. $\Ws_{\star,\bal}$ is from \eqref{eqn:sattnsvmst} with GD tokens $\bal$ and $m=d$. Subfigures with fixed $n=5$ and $T=5$ show that as $m$ approaches or exceeds $n$, $\Kb\Qb^\top$ aligns more with $\Ws_{\star,\bal}$.  }
        \label{fig rank m}
    \end{minipage}
    \hspace{5pt}
    \begin{minipage}{.6\textwidth}
    \subfigure[Evolution of correlation under varying $d$]{
        \begin{tikzpicture}
        \node at (0,0) {\includegraphics[height=.33\columnwidth, trim={1.3cm 1.3cm 0 0}, clip]{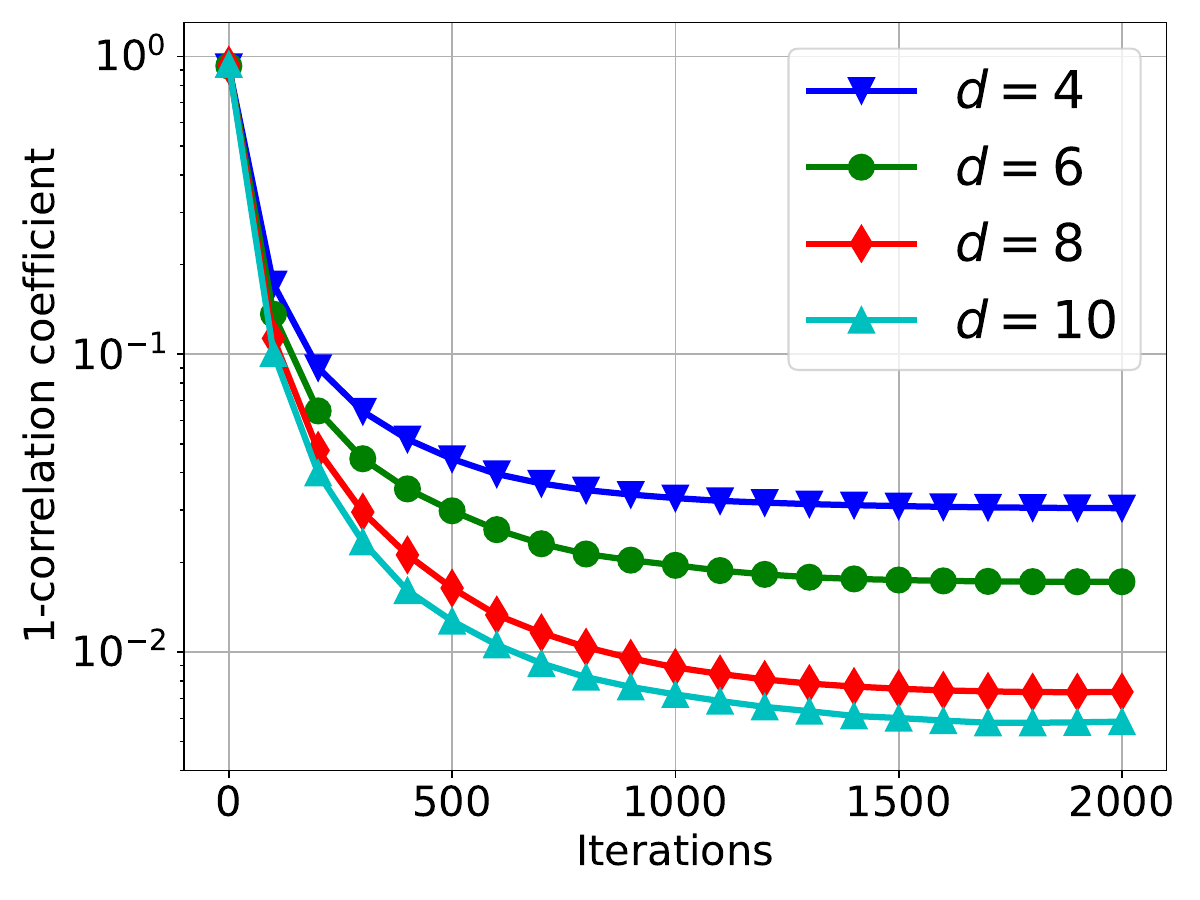}};
        \node at (0.2,-1.9) {\small{Iterations}};
        \node[rotate=90] at (-2.55,0) {\small{$1-$correlation coefficient}};
        \end{tikzpicture}
        \label{fig nn itr}
    }
    \hspace{-10pt}
    \subfigure[$\Gamma$ vs correlation coefficient]{
        \begin{tikzpicture}
        \node at (0,0) {\includegraphics[height=.33\columnwidth, trim={1.3cm 1.3cm 0 0}, clip]{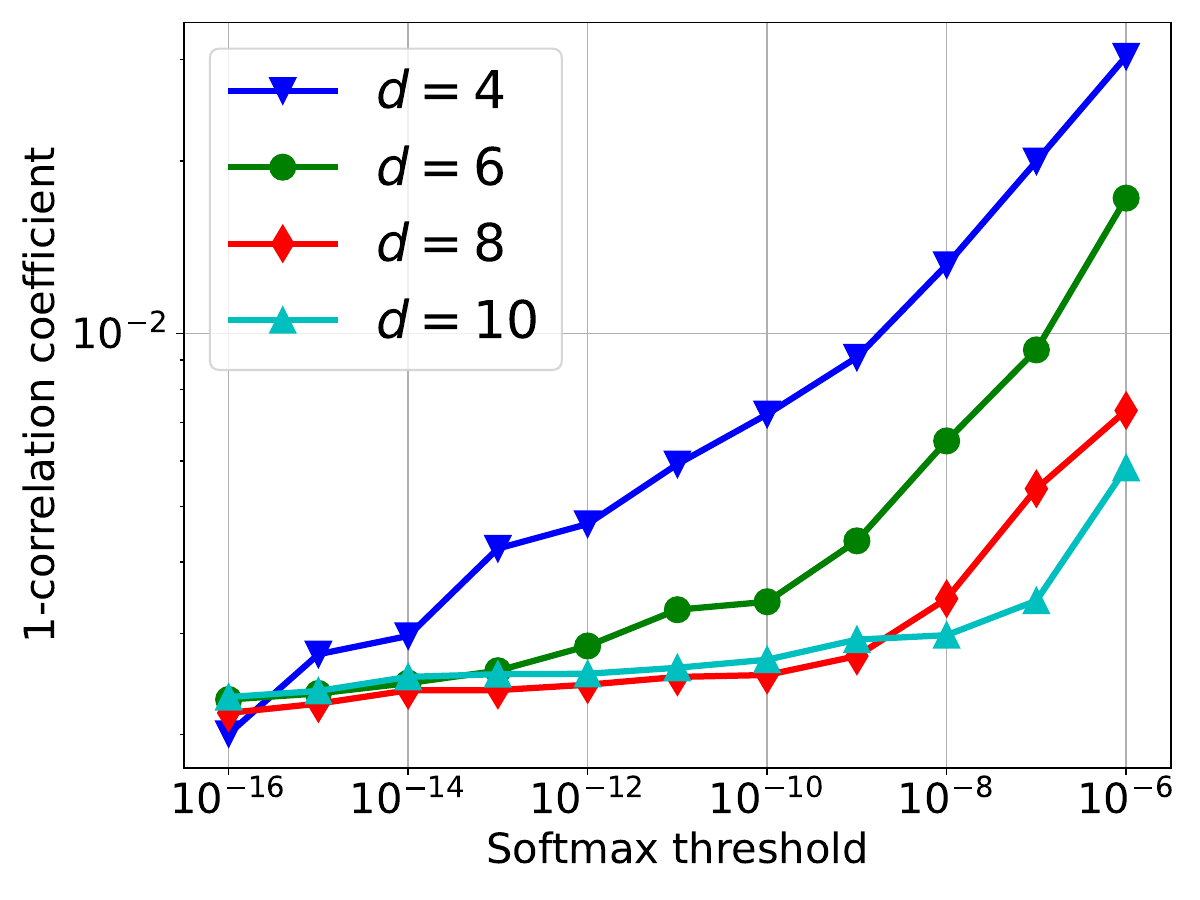}};
        \node at (0.2,-1.9){\small{Masked token threshold ($\Gamma$)}};
        \end{tikzpicture}
        \label{fig nn sfx zero}
    }
    \vspace{-10pt}
    \caption{Behavior of GD  with nonlinear nonconvex prediction head and multi-token compositions. \textbf{(a)}: Blue, green, red and teal curves represent the evolution of $1-\corr{\W,\W^{\rfn}}$ for $d=4,6,8$ and $10$ respectively, which have been displayed in Figure~\ref{fig nn diff d}(upper). \textbf{(b)}: Over the $500$ random instances as discussed in Figure~\ref{fig nn diff d}, we filter different instances by constructing masked set with tokens whose softmax output $<\Gamma$ and vary $\Gamma$ from $10^{-16}$ to $10^{-6}$. The corresponding results of $1-\corr{\W,\W^\rfn}$ are displayed in blue, green, red and teal curves.} 
    \label{fig nn main}
    \end{minipage}
\end{figure}

In this section, we introduce implementation details and additional experiments. Code is available at
\begin{center}
\url{https://github.com/umich-sota/TF-as-SVM}
\end{center}
We create a 1-layer self-attention using \texttt{PyTorch}, training it with the SGD optimizer and a learning rate of $\eta=0.1$. We apply normalized gradient descent to ensure divergence of attention weights. The attention weight $\W$ is then updated through
\[
\W(k+1)=\W(k)-\eta\frac{\nabla\Lc(\W(k))}{\|\nabla\Lc(\W(k))\|_F}.
\]
In the setting of $(\Kb,\Qb)$-parameterization, we noted that with extended training iterations, the norm of the combined parameter $\Kb\Qb^\top$ consistently rises, despite the gradient being treated as zero due to computational limitations. To tackle this issue, we introduce a minor regularization penalty to the loss function, ensuring that the norms of $\Kb$ and $\Qb$ remain within reasonable bounds. This adjustment involves
\[
\widetilde\Lc(\Kb,\Qb)=\Lc(\Kb,\Qb)+\lambda(\|\Kb\|^2_F+\|\Qb\|^2_F).
\]
Here, we set $\lambda$ to be the the smallest representable number, e.g. computed as $1+\lambda\neq1$ in \texttt{Python}, which is around $2.22\times10^{-16}$. Therefore, $\Kb,\Qb$ parameters are updated as follows.
\[
\Kb(k+1)=\Kb(k)-\eta\frac{\nabla\widetilde\Lc_\Kb(\Kb(k),\Qb(k))}{\|\nabla\widetilde\Lc_\Kb(\Kb(k),\Qb(k))\|_F}, \qquad \Qb(k+1)=\Qb(k)-\eta\frac{\nabla\widetilde\Lc_\Qb(\Kb(k),\Qb(k))}{\|\nabla\widetilde\Lc_\Qb(\Kb(k),\Qb(k))\|_F}.
\]
$\bullet$ As observed in previous work \cite{tarzanagh2023margin}, and due to the exponential expression of softmax nonlinearity and computation limitation, \texttt{PyTorch} has no guarantee to select optimal tokens when the score gap is too small. Therefore in Figures~\ref{fig overparam W bar}, \ref{fig overparam bar} and \ref{fig overparam}, we generate random tokens making sure that $\min_{i\in[n],t\neq\op_i}\bgam_{i\op_i}-\bgam_{it}\geq\underline\gamma$ and we choose $\underline\gamma=0.1$ in our experiments.
\paragraph{Rank sensitivity of $(\Kb,\Qb)$-parameterization (Figure~\ref{fig rank m}).} In Figure \ref{fig rank} and Lemma~\ref{lem:rank}, we have both theoretically and empirically established that the rank of the SVM solution, denoted as $\Ws$ in \eqref{eqn:sattnsvm} or $\Ws_\st$ in \eqref{eqn:sattnsvmst}, is at most rank $\max(n,d)$. Now, moving to Figure~\ref{fig rank m}, we delve into GD performance across various dimensions of $\Kb,\Qb\in\R^{d\times m}$ while keeping $d=20$ fixed and varying $m$ from $1$ to $10$. In the upper subfigure, we maintain a constant $n=5$ and vary $T$ within $\{5,10,15\}$, while in the lower subfigure, $T$ is fixed at $5$ and $n$ changes within $\{5,10,15\}$. Results are depicted using blue, green, and red dashed curves, with both $y$-axes representing $1-\corr{\W,\Ws_{\st,\bal}}$, where $\W$ represents the GD solution and $\Ws_{\st,\bal}$ is obtained from \eqref{eqn:sattnsvmst} by employing token indices $\bal$ selected via GD and setting the rank limit to $m=d$. Observing both subfigures, we note that a larger $n$ necessitates a larger $m$ for attention weights $\Kb\Qb^\top$ to accurately converge to the SVM solution (Figure~\ref{fig rank m}(lower)). Meanwhile, performances remain consistent across varying $T$ values (Figure \ref{fig rank m}(upper)). This observation further validates Lemma \ref{lem:rank}. Furthermore, the results demonstrate that $\W$ converges directionally towards $\Ws_{\st,\bal}$ as long as $m\gtrsim n$, thereby confirming the assertion in our Theorem~\ref{thm:local:gd}.  
%
%
\paragraph{Behavior of GD  with nonlinear nonconvex prediction head and multi-token compositions (Figure~\ref{fig nn main}).} To better investigate how correlation changes with data dimension $d$, we collect the solid curves in Figure~\ref{fig nn diff d}(upper) and construct as Figure~\ref{fig nn itr}. Moreover, Figure \ref{fig nn sfx zero} displays the average correlation of instances (refer to scatters in Figure~\ref{fig nn diff d} (lower)), considering masked tokens with softmax probability $<\Gamma$. Both findings highlight that higher $d$ enhances alignment. For $d\geq8$ or $\Gamma\leq10^{-9}$, the GD solution $\W$ achieves a correlation of $>0.99$ with the SVM-equivalence $\W^{\rfn}$, defined in Section~\ref{sec:multi}.
\begin{figure}[t]
    \centering
    \hspace{-10pt}
    \subfigure[$\tau$ and $\lambda$ parameters relationship]{
        \begin{tikzpicture}
        \node at (0,0) {\includegraphics[height=.22\columnwidth, trim={1cm 1.3cm 0 0}, clip]{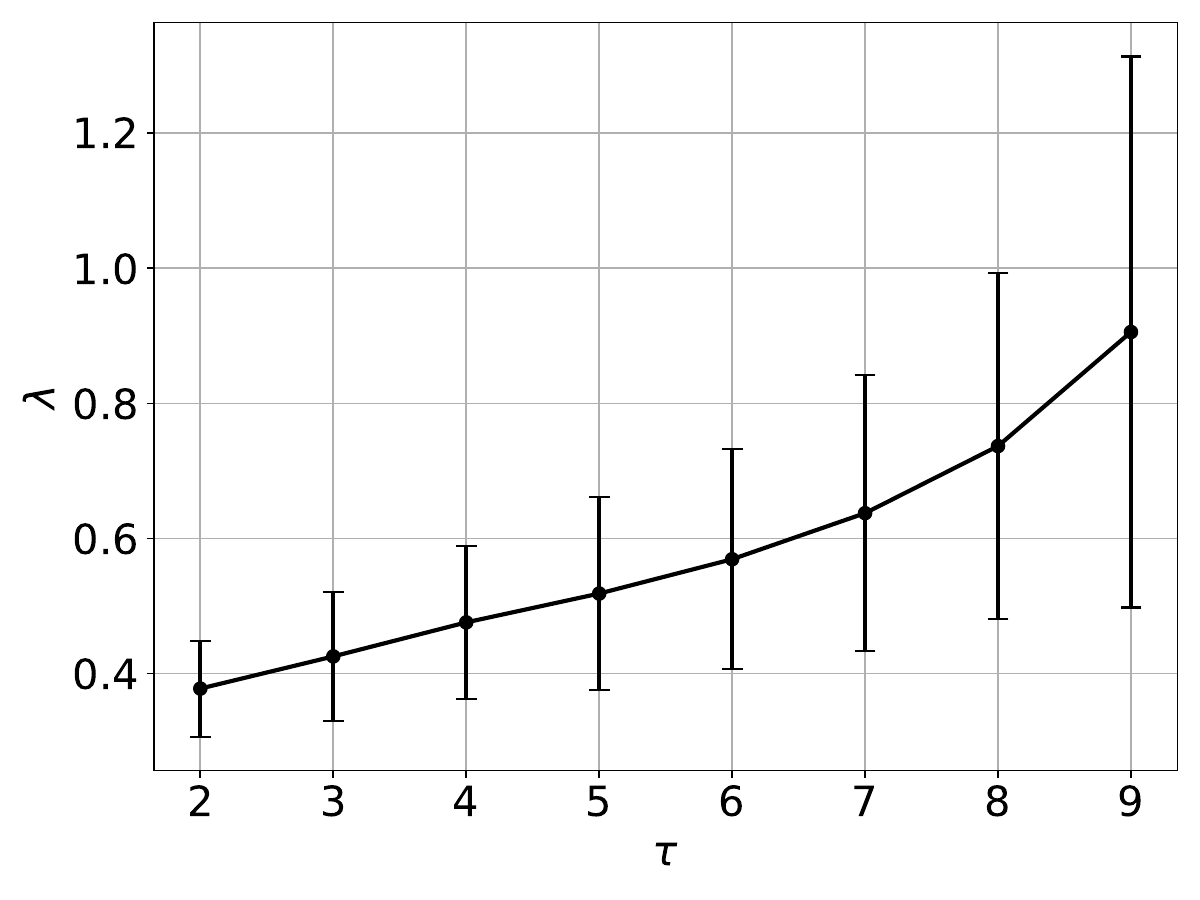}};
        \node at (0,-2.) {\small{$\tau$}};
        \node[rotate=90] at (-2.65,0) {\small{$\lambda$}};
        \end{tikzpicture}
        \label{fig tau lambda}
    }
    \hspace{-10pt}
    \subfigure[ $\tau$ and  \# of selected tokens relationship]{
        \begin{tikzpicture}
        \node at (0,0) {\includegraphics[height=.22\columnwidth, trim={1cm 1.3cm 0 0}, clip]{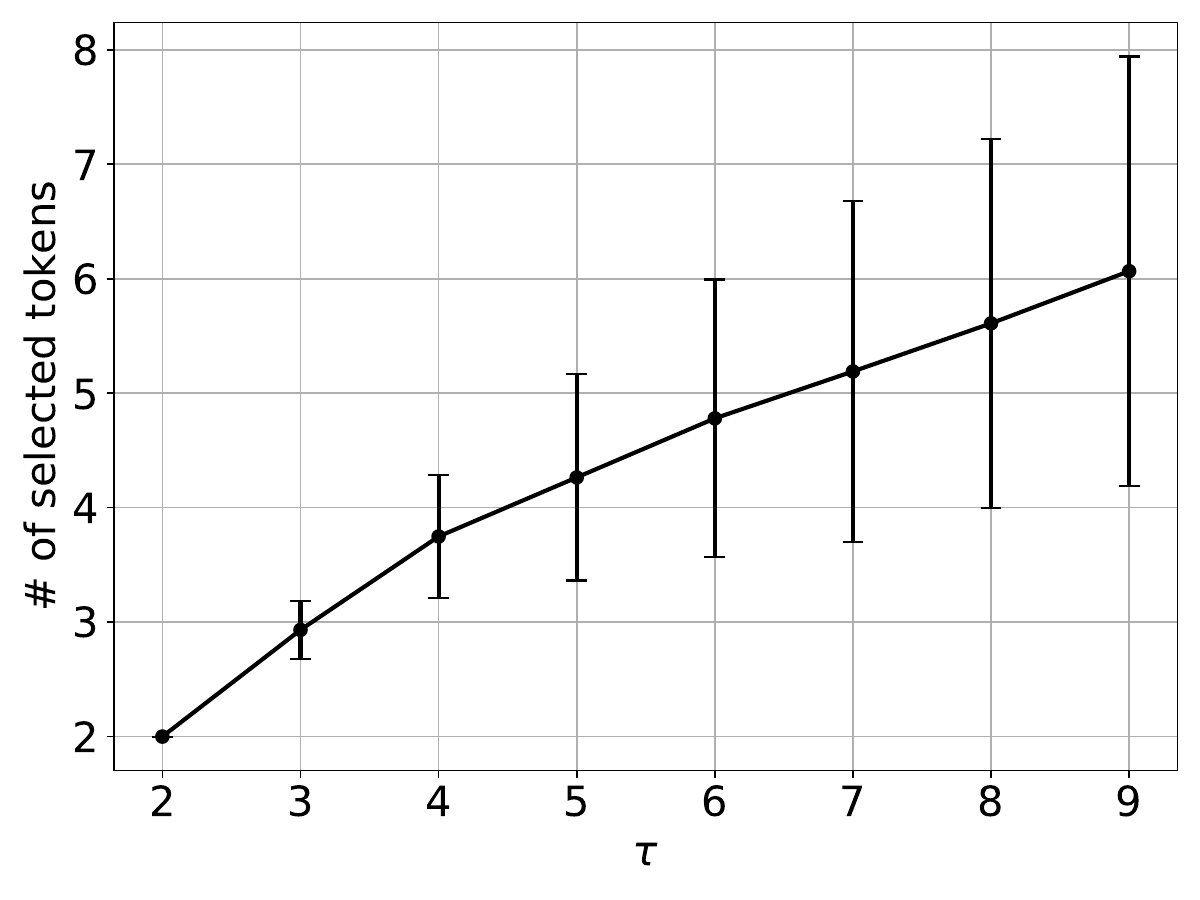}};
        \node at (0,-2.) {\small{$\tau$}};
        \node[rotate=90] at (-2.7,0) {\small{\# of selected tokens}};
        \end{tikzpicture}
        \label{fig tau ns}
    }
    \hspace{-10pt}
    \subfigure[Distribution of \# selected tokens over varying $\tau$]{
        \begin{tikzpicture}
        \node at (0,0) {\includegraphics[height=.22\columnwidth, trim={1.3cm 1.3cm 0 0}, clip]{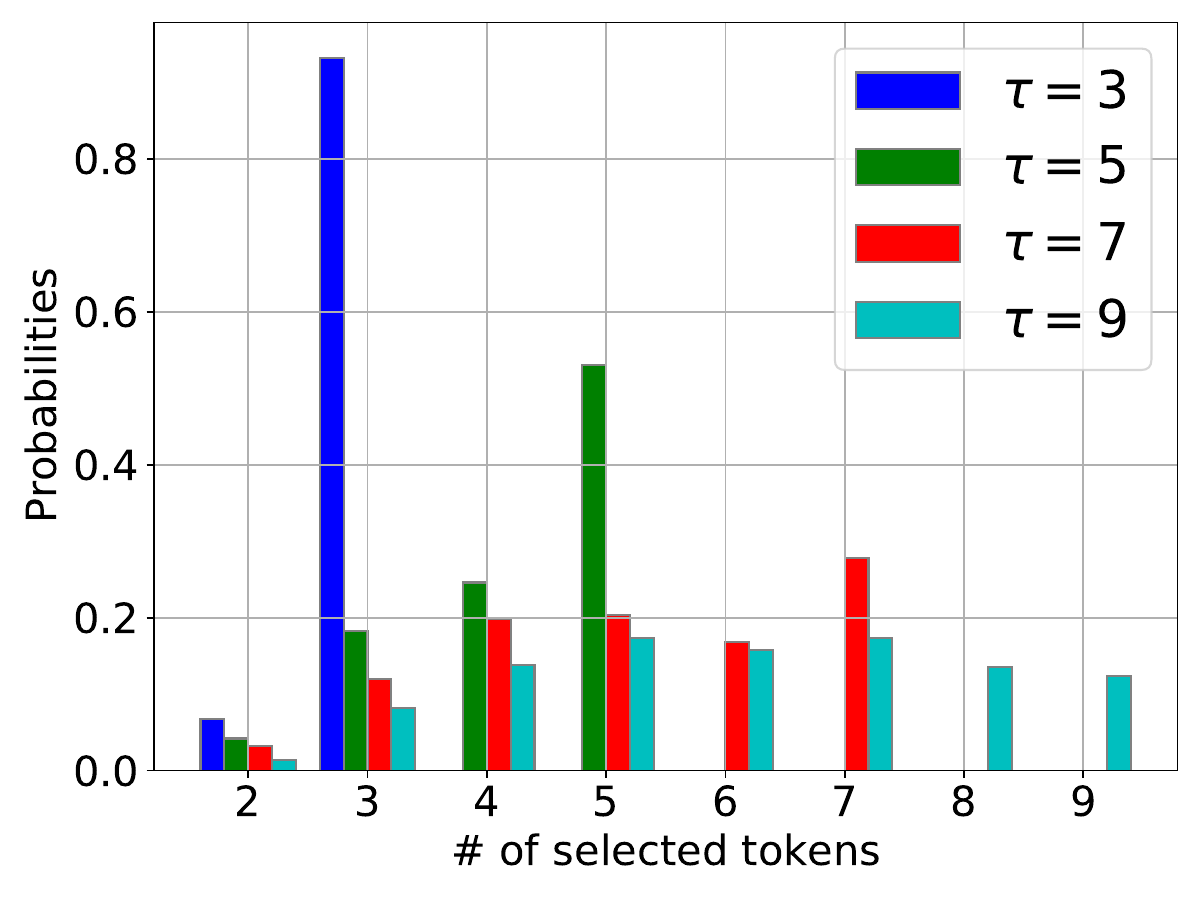}};
        \node[rotate=90] at (-2.7,0) {\small{Probabilities}};
        \node at (0,-2.){\small{\# of selected tokens}};
        \end{tikzpicture}
        \label{fig tau prob}
    }
    \caption{ Behavior of GD when selecting multiple tokens.} 
    \label{fig multi tau}
\end{figure}
\paragraph{Investigation of Lemma~\ref{example dataset} over different $\tau$ selections (Figure~\ref{fig multi tau}).}   Consider the setting of Section~\ref{sec when} and Lemma~\ref{example dataset}.  Figure~\ref{fig multi corrs} explores the influence of $\lambda$ on the count of tokens selected by GD-derived attention weights. As $\lambda$ increases, the likelihood of selecting more tokens also increases. Shifting focus to Figure~\ref{fig multi tau}, we examine the effect of $\tau$. For each outcome, we generate random $\lambda$ values, retaining pairs $(\lambda,\X)$ satisfying $\tau$ constraints, with averages derived from $100$ successful trials. The results indicate a positive correlation among $\tau$, $\lambda$, and the number of selected tokens. Moreover, Figure~\ref{fig tau prob} provides a precise distribution of selected token counts across various $\tau$ values (specifically $\tau\in\{3,5,7,9\}$). The findings confirm that the number of selected tokens remains within the limit of $\tau$, thus validating the assertion made in Lemma~\ref{example dataset}.

\end{document}